\DeclareMathOperator*{\argmin}{arg\,min}
\newcommand{\Rmnum}[1]{\expandafter\@slowromancap\romannumeral #1@}
\newtheorem{theorem}{Theorem}%[chapter]
\newtheorem*{theorem*}{Theorem}
\newtheorem{lemma}{Lemma}%[chapter]
\newtheorem*{lemma*}{Lemma}
\newtheorem{claim}{Claim}%[chapter]
\newtheorem{corollary}{Corollary}%[chapter]
\newtheorem*{cor*}{Corollary}
\newtheorem{remark}{Remark}%[chapter]
\newtheorem{fact}{Fact}%[chapter]
\newcommand{\namedref}[2]{\hyperref[#2]{#1~\ref*{#2}}}
\definecolor{darkred}{rgb}{0.5, 0, 0} 
\definecolor{darkblue}{rgb}{0,0,0.5} 
\newcommand{\bc}{\ensuremath{{\bf c}}\xspace}
\newcommand{\bd}{\ensuremath{{\bf d}}\xspace}
\newcommand{\bg}{\ensuremath{{\bf g}}\xspace}
\newcommand{\bh}{\ensuremath{{\bf h}}\xspace}
\newcommand{\bx}{\ensuremath{{\bf x}}\xspace}
\newcommand{\by}{\ensuremath{{\bf y}}\xspace}
\newcommand{\bz}{\ensuremath{{\bf z}}\xspace}
\newcommand{\bw}{\ensuremath{{\bf w}}\xspace}
\newcommand{\bu}{\ensuremath{{\bf u}}\xspace}
\newcommand{\bA}{\ensuremath{{\bf A}}\xspace}
\newcommand{\bX}{\ensuremath{{\bf X}}\xspace}
\newcommand{\bG}{\ensuremath{{\bf G}}\xspace}
\newcommand{\calO}{\ensuremath{\mathcal{O}}\xspace}
\newcommand{\R}{\ensuremath{\mathbb{R}}\xspace}
\renewcommand{\paragraph}[1]{\smallskip\noindent{\bf #1}~}
\newcommand{\calC}{\mathcal{C}}
\newcommand{\bbR}{\mathbb{R}}
\newcommand{\cred}[1]{{\color{red} #1}}
\newtheorem*{claim*}{Claim}
\newtheorem*{corollary*}{Corollary}
\newtheorem{proposition}{Proposition}
\title{QuPeD: Quantized Personalization via Distillation with Applications to Federated Learning}
\author{
	Kaan Ozkara
	\and
	Navjot Singh
	\and
	Deepesh Data
	\and
	Suhas Diggavi
}
\date{University of California, Los Angeles, USA\\
	kaan@ucla.edu, navjotsingh@ucla.edu, deepesh.data@gmail.com, suhas@ee.ucla.edu
}
\begin{document}
	
	\maketitle

	%%------------------------------------------------------------------
	%% ABSTRACT
	\begin{abstract}
	Traditionally, federated learning (FL) aims to train a single global model while collaboratively using multiple clients and a server. Two natural challenges that FL algorithms face are heterogeneity in data across clients and collaboration of clients with {\em diverse resources}. In this work, we introduce a \textit{quantized} and \textit{personalized} FL algorithm QuPeD that facilitates collective (personalized model compression) training via \textit{knowledge distillation} (KD)  among clients who have access to heterogeneous data and resources. For personalization, we allow clients to learn \textit{compressed personalized models} with different quantization parameters and model dimensions/structures. Towards this, first we propose an algorithm for learning quantized models through a relaxed optimization problem, where quantization values are also optimized over. When each client participating in the (federated) learning process has different requirements for the compressed model (both in model dimension and precision), we formulate a compressed personalization framework by introducing knowledge distillation loss for local client objectives collaborating through a global model. We develop an alternating proximal gradient update for solving this compressed personalization problem, and analyze its convergence properties. Numerically, we validate that QuPeD outperforms competing personalized FL methods, FedAvg, and local training of clients in various heterogeneous settings.
\end{abstract}
	%%------------------------------------------------------------------
	
	%%------------------------------------------------------------------
	%% INTRODUCTION
	\section{Introduction} \label{sec:intro}

Federated Learning (FL) is a learning procedure where the aim is to utilize vast amount of data residing in numerous (in millions) edge devices (clients) to train machine learning models without collecting clients' data \cite{mcmahan2017communicationefficient}. Formally, if there are $n$ clients and $f_i$ denotes the local loss function at client $i$, then traditional FL learns a single global model by minimizing % the following objective: % function: 
\begin{align} \label{fed1}
	\argmin_{\bw\in\R^d} \Big(f(\bw) := \frac{1}{n} \sum_{i=1}^n f_i(\bw)\Big).
\end{align}
It has been realized lately that a {\em single} model may not provide good performance to all the clients in settings where data is distributed {\em heterogeneously}.
This leads to the need for personalized learning, where each client wants to learn its own model  \cite{fallah2020personalized,dinh2020personalized}. Since a locally learned client model may not generalize well due to insufficient data, in personalized FL process, clients maintain personalized models locally and utilize other clients' data via a global model.
{\em Resource diversity} among clients, which is inherent to FL as the participating edge devices may vary widely in terms of resources, is often overlooked in personalized FL literature. This resource diversity may necessitate clients to learn personalized models with {\em different} as well as {\em different dimension/architecture}. Systematically studying both these resource heterogeneity together with data heterogeneity in personalized FL is the primary objective of this paper. 
%In FL scenarios, model compression can become critical as it would let devices with different resource constraints utilize complex models.

In this work, we propose a model compression framework\footnote{Model compression (MC) allows inference time deployment of a compressed model. Though MC is a generic term comprising different methods, we will focus on its quantization (number of bits per model parameter) aspect.} for personalized FL via knowledge distillation (KD) \cite{hinton2015distilling} that addresses both data and resource heterogeneity in a unified manner. Our framework allows collaboration among clients with different resource requirements both in terms of {\em precision} as well as {\em model dimension/structure}, for learning personalized quantized models (PQMs). 
Motivated by FL, where edge devices are resource constrained when actively used ({\em e.g.} when several applications are actively running on a battery powered smartphone) and available for training when not in use ({\em e.g.}, while charging and on wi-fi), we do training in full precision for learning compressed models to be deployed for {\em inference time}.
%Our goal is to obtain compressed models for {\em inference time} when the clients are resource constrained and models are frequently used ({\em e.g.}, using smartphones in the daytime), and our training is in full precision ({\em e.g.}, training using smartphones while charging in the night).
%\footnote{Assuming more resources are available for training and not for inference time is highly practical in FL with edge devices ({\em e.g.}, smartphones), where model training is typically done when the phone is charging and is idle (so more power/memory is available), whereas, inference is more frequent and is done when phones are actively used and have other functions. Since relatively less power/memory is available during inference time, the deployed model should respect stricter resource constraints.}.
For efficient model compression, we learn the quantization parameters for each client by including quantization levels in the optimization problem itself.
First, we investigate our approach in a {\em centralized} setup, by formulating a relaxed optimization problem and minimizing it through alternating proximal gradient steps, inspired by \cite{Bolte13}.
To extend this to FL for learning PQMs with {\em different} dimensions/architectures, we employ our centralized algorithm locally at clients and introduce KD loss for collaboration of personalized and global models. Although there exist empirical works where KD is used in personalized FL \cite{li2019fedmd}, we formalize it as an optimization problem, solve it using alternating proximal updates, and analyze its convergence. \\

\textbf{Contributions.}
Our contributions can be summarized as follows:
\begin{itemize}[leftmargin=*]
	\item In the centralized case, we propose a novel relaxed optimization problem that enables optimization over quantization values (centers) as well as model parameters. We use alternating proximal updates to minimize the objective and analyze its convergence properties.  %\\\vspace{-0.55cm}
%	 We use alternating proximal updates to minimize the objective and analyze its convergence properties.  %\\\vspace{-0.55cm}

	\item More importantly, our work is the first to formulate a personalized FL optimization problem where clients may have different model dimensions and precision requirements for their personalized models. 
	Our proposed scheme combines alternating proximal updates with knowledge distillation.
%	To optimize this problem we propose an algorithm that combines alternating proximal gradient updates with knowledge distillation.  %\\\vspace{-0.55cm}
	
	\item For optimizing a non-convex objective, in the centralized setup, we recover the standard convergence rate of $\calO(\nicefrac{1}{T})$ % from \cite{Bolte13}, 
	(despite optimizing over quantization centers), and for federated setting, we recover the standard convergence rate of $\calO(\nicefrac{1}{\sqrt{T}})$ % from \cite{dinh2020personalized}, 
	(despite learning PQMs with different precisions/dimensions). 
	%We observe that aggressiveness of quantization has a scaling effect on the convergence rate. 
	%In the federated setting, our convergence bound has an error term that depends on the heterogeneity in data;\footnote{This is commonly observed in FL algorithms with heterogeneous data \cite{fallah2020personalized,dinh2020personalized}.} furthermore, we observe that convergence rate depends on multiplication of two terms: one characterizing client's local model smoothness and the other data heterogeneity with respect to overall data distribution.
	In the federated setting, our convergence bound has an error term that depends on multiplication of two terms averaged over clients: one characterizing client's local model smoothness and the other data heterogeneity with respect to overall data distribution.\footnote{ An error term depending on data heterogeneity is commonly observed in personalized FL algorithms \cite{fallah2020personalized,dinh2020personalized}.}
	
	\item  We perform image classification experiments on multiple datasets in various resource and data heterogeneity settings, and compare performance of QuPeD against Per-FedAvg \cite{fallah2020personalized}, pFedMe \cite{dinh2020personalized}, QuPeL \cite{ozkara2021qupel}, FedAvg \cite{mcmahan2017communicationefficient},  and local training of clients. %We observe that QuPeD outperforms competing methods. 
%We observe that even with aggressive 2 bit quantization, QuPeD outperforms other {\em full-precision} personalized FL methods on the CIFAR-$10$ dataset.
We observe that QuPeD in full precision outperforms all these methods on all the datasets that we considered for our experiments; and even with aggressive 2-bit quantization it outperforms these methods in {\em full precision} on CIFAR-10.
\end{itemize}
Our work should not be confused with works in distributed/federated learning, where models/gradients are compressed for {\em communication efficiency} \cite{basu2019qsparse,karimireddy2019error}. We also achieve communication efficiency through local iterations, but the main goal of our work is personalized quantization for inference. \\ %to obtain personalized quantized models for inference, that are suited for each client's resources. %Although it is our not main goal, we still achieve communication efficiency through local iterations.

%\subsection{Related Work} 
\textbf{Related work.} To the best of our knowledge, this is the first work in personalized federated learning where the aim is to learn quantized and personalized models potentially having different dimensions/structures for inference. Our work can be seen in the intersection of {\em personalized federated learning} and {\em learning quantized models}; we also employ {\em knowledge distillation} for collaboration. \\

\iffalse
\textit{Personalized federated learning:}
As mentioned before, personalized FL is used in heterogeneous scenarios when a single global model falls short for the learning task. % In such cases, each client would locally retain a personalized model throughout the training phase and collaborate with other clients through a global model. 
Recent work adopted different approaches for learning personalized models: 
{\sf(i)} Combine global and local models throughout the training \cite{deng2020adaptive,mansour2020approaches,hanzely2020federated};
{\sf (ii)} first learn a global model and then personalize it locally by updating it using clients' local data \cite{fallah2020personalized}; 
{\sf (iii)} consider multiple global models to collaborate among only those clients that share similar personalized models \cite{zhang2021personalized}; and
{\sf (iv)} augment the traditional FL  objective via a penalty term that enables collaboration between global and personalized models \cite{hanzely2020federated,hanzely2020lower,dinh2020personalized}.
\fi
\textit{Personalized federated learning:}
%As mentioned before, personalized FL is used in heterogeneous scenarios when a single global model falls short for the learning task.
Recent works adopted different approaches for learning personalized models: 
{\sf(i)} Combine global and local models throughout the training \cite{deng2020adaptive,mansour2020approaches,hanzely2020federated};
{\sf (ii)} first learn a global model and then personalize it locally \cite{fallah2020personalized,pmlr-v139-acar21a}; 
{\sf (iii)} consider multiple global models to collaborate among only those clients that share similar personalized models \cite{zhang2021personalized,mansour2020approaches,ghosh2020efficient,smith2017federated}; 
{\sf (iv)} augment the traditional FL  objective via a penalty term that enables collaboration between global and personalized models \cite{hanzely2020federated,hanzely2020lower,dinh2020personalized}.

%\cblue{In an another approach of clustered FL \cite{sattler2019clustered}, clients with similar data distributions collaborate to learn personalized models \cite{mansour2020approaches}.}
%This collaboration can be in different ways. One way is through combining the global and local model parameters to obtain individual personalized models \cite{deng2020adaptive,mansour2020approaches,hanzely2020federated}. 
%Zhang et al.~\cite{zhang2021personalized} considered multiple global models to collaborate among only the clients that share similar personalized models. Fallah et al.~\cite{fallah2020personalized} aimed to find personalized models for heterogeneous clients through performing few updates on clients' local data after learning the common model. %; they studied this problem in the Model-Agnostic Meta-Learning framework \cite{Finn_maml17}. 
%In \cite{hanzely2020federated,hanzely2020lower,dinh2020personalized}, authors augment the traditional FL  objective via a penalty term that enables collaboration between global and personalized models.
%\TODO{Not clear: In Clustered Federated Learning \cite{sattler2019clustered} approaches allow clients with similar data distributions to collaborate \cite{mansour2020approaches}.}

\textit{Learning quantized models:}
%Training quantized neural networks has been a topic of great interest in the last few years, and extensive research resulted in training quantized networks with precision of as low as 1-bit without significant loss in performance; see \cite{Bin_survey,comprehensive_survey} for surveys.
%\cred{ There are two main approaches in obtaining quantized models. Firstly, one can simply train a model and then do a post-training quantization that is agnostic to training procedure; see for example \cite{BannerNS19}. The downside of doing a post-training quantization is that the loss minimization problem for quantization is not related to the empirical loss function. Consequently, there is no guarantee that one will obtain a compressed model that has good performance. As opposed to post-training quantization, the aim of learning quantized models is to learn the quantization itself during the training \cite{courbariaux2016binarized,courbariaux2015binaryconnect}.} 
There are two kinds of approaches for training quantized networks that are of our interest. The first one approximates the hard quantization function by using a soft surrogate \cite{Yang_2019_CVPR, gong2019differentiable,louizos2018relaxed,dbouk2020dbq}, while the other one iteratively projects the model parameters onto the fixed set of centers \cite{bai2018proxquant,BinaryRelax, leng2018extremely, hou2017loss}. Each approach has its own limitation; see Section~\ref{subsec:centralized-motivation} for a discussion. While the initial focus in learning quantized networks was on achieving good empirical performance, there are some works that analyzed convergence properties \cite{li2017training,BinaryRelax,bai2018proxquant}, but only in the centralized case. 
%\cite{li2017training} provided the first convergence guarantee by analyzing the algorithm proposed in \cite{courbariaux2015binaryconnect} using convexity assumptions. % and proposing a stochastic rounding scheme which resulted in a better convergence guarantee. 
%Later, \cite{BinaryRelax} showed convergence results for non-convex functions but under an orthogonality assumption between the quantized and unquantized weights. 
Among these, \cite{bai2018proxquant} analyzed convergence for a relaxed/regularized loss function using proximal updates.

\textit{Knowledge distillation (KD):} KD \cite{hinton2015distilling} is a framework for transfer learning that is generally used to train a small student network using the soft labels generated by a deep teacher network. It can also be used to train two or more networks mutually by switching teacher and students in each iteration \cite{zhang2018deep}. KD has been employed in FL settings as an alternative to simple aggregation which is not feasible when clients have models with different dimensions \cite{lin2020ensemble}. \cite{li2019fedmd} used KD in personalized FL by assuming existence of a public dataset.
%, \cred{and \cite{shen2020federated} extended framework of \cite{zhang2018deep} to personalized FL}.
\cite{polino2018model} used KD in combination with quantization in a centralized case for model compression; in contrast, we do not use KD for model compression but for collaboration between personalized and global model. Unlike the above works which are empirical, our paper is the first to formalize an optimization problem for personalized FL training with KD and analyze its convergence properties. Our proposed scheme yields personalized client models with different precision/dimension through local alternating proximal updates; see Section~\ref{subsec:KD-motivate} for details.    \\

\textbf{Paper organization:} In Section~\ref{sec:problem}, we formulate the optimization problem to be minimized. In Sections~\ref{sec:centralized} and~\ref{sec:personalized}, we describe our algorithms along-with the main convergence results for the centralized and personalized settings, respectively. Section~\ref{sec:experiments} provides extensive numerical results. In Sections~\ref{sec:proof_thm1} and \ref{sec:proof_thm2} we provide the proofs for convergence results of centralized and personalized settings. Omitted proofs/details and experimental results are in appendices.

	%%------------------------------------------------------------------
	
	%%------------------------------------------------------------------
	%% PROBLEM SETUP
	%\newpage
	\section{Problem Formulation} \label{sec:problem}
	%\input{problem_setup.tex}
	%We first state an ideal problem for model compressed learning, then formulate our relaxed optimization problem for model compression; and finally, introduce our model quantized and personalized objective function.
%\TODO{Write a preamble.}
%As motivated in Section~\ref{sec:intro}, in FL settings where clients with heterogeneous data also have diverse resources, 
Our goal in this paper is for clients to collaboratively learn personalized quantized models (with potentially different precision and model sizes/types). To this end, below, we first state our final objective function that we will end up optimizing in this paper for learning personalized quantized models, and then in the rest of this section we will describe the genesis of this objective.

Recall from \eqref{fed1}, in the traditional FL setting, the local loss function at client $i$ is denoted by $f_i$. For personalized compressed model training, we define the following augmented loss function at client $i$:
\begin{equation}\label{qpfl}
	\begin{aligned} 
		F_i(\bx_i,\bc_i,\bw) &:= (1-\lambda_p)\left(f_i(\bx_i)+f_i(\widetilde{Q}_{\bc_i}(\bx_i))\right)+\lambda R(\bx_i,\bc_i) \\
		&\hspace{5cm} + \lambda_p\left( f^{KD}_i(\bx_i,\bw)+f^{KD}_i(\widetilde{Q}_{\bc_i}(\bx_i),\bw)\right).  
	\end{aligned}
\end{equation}
Here, $\bw \in \mathbb{R}^d$ denotes the global model, $\bx_i \in \mathbb{R}^{d_i}$ denotes the personalized model of dimension $d_i$ at client $i$, $\bc_i \in \mathbb{R}^{m_i}$ denotes the model quantization centers (where $m_i$ is the number of centers), %, with $\log m_i$ representing the number of bits per parameter.
$\widetilde{Q}_{\bc_i}$ denotes the soft-quantization function with respect to (w.r.t.) the set of centers $\bc_i$, $R(\bx_i,\bc_i)$ denotes the distance function, $f^{KD}_i$ denotes the knowledge distillation (KD) loss \cite{hinton2015distilling} between the two input models on client $i$'s dataset, $\lambda$ is a design parameter for enforcing quantization (large $\lambda$ forces weights to be close to respective centers), and $\lambda_p$ controls the weighted average of regular loss and KD loss functions (higher $\lambda_p$ can be used when client data is limited% and thus a locally trained client model is prone to overfitting
). We will formally define the undefined quantities later in this section.
Consequently, our main objective becomes:
\begin{align} \label{per1}
	\min_{\substack{\bw\in\R^d,\{\bx_i\in\R^{d_i}, \bc_i\in\R^{m_i}:i=1,\hdots,n\}}} \Big(F\big(\bw,\{\bx_i\},\{\bc_i\}\big) := \frac{1}{n}\sum_{i=1}^n F_i(\bx_i,\bc_i,\bw)\Big). %\sum_{i=1}^n F_i(\bx_i,\bc_i,\bw)
\end{align}
%where minimization is taken over $\bx_i\in\R^{d_i},\bc_i\in\R^{m_i}$ for $i\in[n]$, and $\bw\in\R^d$.

Thus, our formulation allows different clients to have personalized models $\bx_1,\hdots,\bx_n$ with different dimensions $d_1,\hdots,d_n$ and architectures, different number of quantization levels $m_1,\hdots,m_n$ (larger the $m_i$, higher the precision), and different quantization values in those quantization levels. 
Note that there are two layers of personalization, first is due to data heterogeneity, which is reflected in clients learning different models $\bx_1,\hdots,\bx_n$, and second is due to resource diversity, which is reflected in clients learning models with different sizes, both in terms in dimension as well as precision.
In Section~\ref{subsec:centralized-motivation}, we motivate how we came up with the first three terms in \eqref{qpfl}, which are in fact about a centralized setting because the function $f_i$ and the parameters involved, i.e., $\bx_i,\bc_i$, are local to client $i$; and then, in Section~\ref{subsec:KD-motivate}, we motivate the use of the last two terms containing $f^{KD}_{i}$ in \eqref{qpfl}.

%
%\cred{
%\paragraph{For related work...}
%\begin{itemize}
%\item From Alistarh et al. ICLR 2018: The idea of optimizing
%the locations of quantization points during the learning process, which we use in differentiable quantization, has been used previously in Lan et al. (2014); Koren \& Sill (2011); Zhang et al. (2017),
%although in the different context of matrix completion and recommender systems
%\end{itemize}
%}

%In the traditional federated learning setting we have the following objective function, where the goal is to learn a single global model $\bw\in\R^d$:
%\begin{align} \label{fed1}
%	\min_{\bw\in\R^d} \left(f(\bw) := \frac{1}{n} \sum_{i=1}^n f_i(\bw)\right).
%\end{align}
%In \eqref{fed1}, the $f_i$'s are the local loss functions at the clients.
%\newpage
\subsection{Model Compression in the Centralized Setup}\label{subsec:centralized-motivation}

%\subsection{Ideal Problem}
%As mentioned in Section~\ref{sec:intro}, we 
% where $f$ is the training loss of the neural network, $\bx\in\mathbb{R}^d$ is a vector denoting model weights, and  $\bc\in\mathbb{R}^m$ is a vector denoting quantization levels (centers). 

%\cred{In the rest of this section, we will motivate this and progressively arrive at this formulation in the centralized setup. Note that, as a result of minimizing the relaxed optimization problem we obtain a set of $\bx_i, \bc_i$ for each client; obtained parameters $\bx_i$ are concentrated around $\bc_i's$ due to regularization. Consequently, $\bx_i$ can be quantized using $\bc_i$ without significant loss. Therefore, for inference time deployment what we do is to hard quantize $\bx_i's$ using the respective $\bc_i$.}

Consider a setting where an objective function $f:\R^{d+m}\to\R$ (which could be a neural network loss function) is optimized over both the quantization centers $\bc\in\R^m$ and the assignment of model parameters (or weights) $\bx\in\R^d$ to those centers. There are two ways to approach this problem, and we describe these approaches, their limitations, and the possible resolutions below. \\
%These pave our way to defining the first three terms in our objective function \eqref{qpfl}.

\textbf{Approach 1.} A natural approach is to explicitly put a constraint that weights belong to the set of centers, which suggests solving the following problem: $\min_{\bx,\bc}  f(\bx) + \delta_{\bc}(\bx)$,
\iffalse
The first approach suggests the following optimization problem: 
%Then an ideal optimization problem would be:
%\begin{align} 
%\begin{array}{rl}\label{opt1}
%\displaystyle
minimize $f(\bx)$ (over $\bx\in\R^d,\bc\in\R^m$) subject to $x_j\in\{c_1,\hdots,c_m\}$ for all $j\in[d]$,
%\end{array}
%$\end{align}
where the constraint ensures that every component $x_j$ is in the set of centers.
This can be equivalently written in a more succinct form as:
\fi
%\begin{align}\label{opt2}
%	%\begin{array}{rl}\label{opt2}
%	%\displaystyle
%	\min_{\bx,\bc}  f(\bx) + \delta_{\bc}(\bx),
%	%\end{array}
%\end{align}
where $\delta_{\bc}$ denotes the indicator function for $\bc\in\R^m$, and for any $\bc\in\R^m,\bx\in\bbR^d$, define $\delta_{\bc}(\bx):=0$ if $\forall$ $j\in[d]$, $x_j =c$ for some $c\in\{c_1,\hdots,c_m\}$, otherwise, define $\delta_{\bc}(\bx):=\infty$. However, the discontinuity of $\delta_{\bc}(\bx)$ makes minimize this objective challenging. To mitigate this, like recent works~\cite{bai2018proxquant,BinaryRelax}, we can approximate $\delta_{\bc}(\bx)$ using a distance function $R(\bx,\bc)$ that is continuous everywhere ({\em e.g.}, the $\ell_1$-distance, $R(\bx,\bc):=\min\{ \frac{1}{2} \|\bz-\bx\|_1:z_i \in \{c_1,\cdots,c_m\}, \forall i \}$).\footnote{\cite{bai2018proxquant} and \cite{BinaryRelax} proposed to approximate the indicator function $\delta_{\bc}(\bx)$ using a distance function $R_\bc(\bx)$, where $\bc$ is fixed, and unlike ours, it is not a variable that the loss function is optimized over.} This suggests solving: %\vspace{-0.1cm}
\begin{align}
	\min_{\bx,\bc} f(\bx) + \lambda R(\bx,\bc). \label{opt4}
\end{align}
%The above relaxation may still be problematic as it does not relate the centers to the objective loss. 
The centers are optimized to be close to the mean or median (depending on $R$) of the weights; however, there is no guarantee that this will help minimizing objective $f$. %\cblue{In the second relaxed problem \eqref{opt5}, the partial gradient w.r.t.\ $\bc$ is due to the neural network loss itself.} 
We believe that modeling the direct effect that centers have on the loss is crucial for a complete quantized training (see Appendix~\ref{appendix:experiments} for empirical verification of this fact), and our second approach is based on this idea. \\

\iffalse
This naturally suggests the following optimization problem: 
%Then an ideal optimization problem would be:
\begin{align} 
	%\begin{array}{rl}\label{opt1}
	%\displaystyle
	\min_{\bx,\bc}\ &\ f(\bx) \notag \\ 
	\text { subject to } &\ x_j \in \left\{ c_1, \dots, c_m  \right\}, \ \forall j\in \left\{ 1, \dots, d  \right\}, \label{opt1}
	%\end{array}
\end{align}
where the constraint ensures that every component $x_j$ is in the set of centers.

The above problem defined in \eqref{opt1} is equivalent to
\begin{align}\label{opt2}
	%\begin{array}{rl}\label{opt2}
	%\displaystyle
	\min_{\bx,\bc}  f(\bx) + \delta_{\bc}(\bx),
	%\end{array}
\end{align}
where $\delta_{\bc}$ is the indicator function of the set of centers $\bc$ and is defined as follows: 
for any $\bc\in\R^m$ and $\bx\in\bbR^d$, define $\delta_{\bc}(\bx)=0$ if for every $j\in[d]$, $x_j =c$ for some $c\in\{c_1,\hdots,c_m\}$; and $\infty$, otherwise. 
\fi

\textbf{Approach 2.} We can embed the quantization function into the loss function itself, thus solving the problem: $\min_{\bx,\bc} \big( h(\bx,\bc):=f(Q_{\bc}(\bx)) \big)$,
%The second approach suggests the following problem:
%\begin{align}\label{opt3}
%	%\begin{array}{rl}\label{opt3}
%	\min_{\bx,\bc} \big( h(\bx,\bc):=f(Q(\bx,\bc)) \big),
%	%\end{array}
%\end{align}
where %$Q(\bx,\bc)$ is a mapping from centers $\bc$ and full precision weights to quantized weights. In this work, we will consider 
%$Q:\mathbb{R}^{d+m}\rightarrow \mathbb{R}^d$ is a quantization function that 
%maps individual weights to the closest centers, which,
%$Q:\mathbb{R}^{d+m}\rightarrow \mathbb{R}^d$ as a scalar quantizer that maps individual weights to their closest centers. 
%In other words, 
for every $\bx\in\mathbb{R}^d, \bc\in\mathbb{R}^m$, the (hard) quantization function is defined as $Q_{\bc}(\bx)_i := c_k$, where $k=\argmin_{j\in[m]}\{|x_i-c_j|\}$, which maps individual weights to the closest centers.
%For notational convenience, we will denote $Q(\bx,\bc)$ by $Q_{\bc}(\bx)$ in the rest of the paper. 
Note that %the hard quantization function 
$Q_{\bc}(\bx)$ is actually a staircase function for which the derivative w.r.t.\ $\bx$ is 0 almost everywhere, which discourages the use of gradient-based methods to optimize the above objective.
To overcome this, similar to ~\cite{Yang_2019_CVPR,gong2019differentiable}, we can use a {\em soft} quantization function $\widetilde{Q}_\bc(\bx)$ that is differentiable everywhere with derivative not necessarily 0. For example, element-wise {\em sigmoid} or {\em tanh} functions, used by \cite{Yang_2019_CVPR} and \cite{gong2019differentiable}, respectively.\footnote{In their setup, the quantization centers are fixed. In contrast, we are also optimizing over these centers.}
%In both cases, there is a parameter $P$ that controls how closely $\widetilde{Q}_\bc(\bx)$ approximates the staircase function $Q_\bc(\bx)$.  In general, as $P$ increases, $\widetilde{Q}_\bc(\bx)$ starts to become a staircase-like function, and for low $P$ (i.e., $P \rightarrow 1$), $\widetilde{Q}_\bc(\bx)$ resembles the identity function that maps $\widetilde{Q}_\bc(\bx)_i$ to $x_i$; see \cite[Figure 2]{Yang_2019_CVPR}. 
%
%An advantage of using $\widetilde{Q}_\bc(\bx)$ instead of $Q_\bc(\bx)$ that is not exploited by previous works is that it enables the use of Lipschitz properties in the convergence analysis. In fact, we compute the Lipschitz parameters for a specific soft quantization function based on {\em sigmoid} in Appendix~\ref{appendix:Soft Quantization}. 
This suggests the following relaxation: %\vspace{-0.1cm}
\begin{align}
	\min_{\bx,\bc} \big(h(\bx,\bc):=f(\widetilde{Q}_\bc(\bx))\big). \label{opt5}
\end{align}
Though we can observe the effect of centers on neural network loss in \eqref{opt5} \footnote{Non-relaxed version of the optimization problem for the first time formalizes the heuristic updates that are employed for quantization values in works such as \cite{han2016deep,polino2018model}. In particular gradient descent on centers using $\nabla_{\bc}f(Q_{\bc}(x))$ is equivalent to the updates in \cite{han2016deep,polino2018model}.}; however, the gradient w.r.t.\ $\bx$ is heavily dependent on the choice of $\widetilde{Q}_\bc$ and optimizing over $\bx$ might deviate too much from optimizing the neural network loss function. For instance, in the limiting case when %$P \rightarrow \infty$, i.e., 
$\widetilde{Q}_\bc(\bx) \rightarrow Q_\bc(\bx)$, gradient w.r.t.\ $\bx$ is $0$ almost everywhere; hence, every point becomes a first order stationary point.\\

\textbf{Our proposed objective for model quantization.}
Our aim is to come up with an objective function that would not diminish the significance of both $\bx$ and $\bc$ in the overall procedure. To leverage the benefits of both, we combine both optimization problems \eqref{opt4} and \eqref{opt5} into one problem: %\vspace{-0.1cm}
\begin{equation}\label{mainopt}
	%\begin{array}{rl}\label{mainopt}
	%\displaystyle
	\min_{\bx,\bc} \big(F_{\lambda}(\bx,\bc) := f(\bx)  + f(\widetilde{Q}_\bc(\bx))+ \lambda R(\bx,\bc)\big).
	%\end{array}
\end{equation}
Here, the first term preserves the connection of $\bx$ to neural network loss function, and the second term enables the optimization of centers w.r.t.\ the neural network training loss itself. As a result, we obtain an objective function that is continuous everywhere, and for which we can use Lipschitz tools in the convergence analysis -- which previous works did not exploit. In fact, we show the existence of Lipschitz parameters for a specific soft quantization function $\widetilde{Q}_\bc(\bx)$ based on {\em sigmoid} in Appendix~\ref{appendix:Preliminaries}. 

\begin{remark} \label{remark:centralized_compare}
	It is important to note that with this new objective function, we are able to optimize not only over weights but also over centers. This allows us to theoretically analyze how the movements of the centers affect the convergence. As far as we know, this has not been the case in the literature of quantized neural network training. 
	%This objective function also formalizes the type of quantization level updates that is done in \cite{zhu2017trained,han2016deep}, as $P \rightarrow \infty$, the gradient of $f(\widetilde{Q}_\bc(\bx))$ w.r.t.\ $c$ gives the heuristic updates in the aforementioned works. 
	Moreover, we observe numerically that optimizing over centers improves performance of the network; see Appendix~\ref{appendix:experiments}.
\end{remark}

\subsection{Towards Personalized Quantized Federated Learning: Knowledge Distillation}\label{subsec:KD-motivate}
%As noted in Section~\ref{sec:intro} 
Note that the objective function defined in \eqref{mainopt} can be used for learning a quantized model locally at any client. %Now, depending on the resource constraints at different clients, 
There are multiple ways to extend that objective for learning personalized quantized models (PQMs) via collaboration. For example, when all clients want to learn personalized models with the {\em same} dimension (but with different quantization levels), then one natural approach is to add an $\ell_2$ penalty term in the objective that would prevent local models from drifting away from the global model and from simply fitting to local data. This approach, in fact, has been adopted in \cite{dinh2020personalized,hanzely2020federated} for learning personalized models and in \cite{li2020federated} for heterogeneous FL, though not quantized ones. In our previous work QuPeL \cite{ozkara2021qupel} we analyzed a quantized approach for learning PQMs (having the {\em same} dimension). In Section~\ref{sec:experiments}, we demonstrate that QuPeD (for the same task but using KD as opposed to the $\ell_2$ penalty) outperforms QuPeL. \\
%
%There are many ways to introduce personalization in FL. One such approach is to penalize the deviation between local models and the global model by augmenting the local minimization problem. In principle, this would prevent local models from drifting away from global model and simply fitting to local data. \cred{For instance, \cite{dinh2020personalized,hanzely2020federated,li2020federated} proposed using $\ell_2$ distance as a penalty function}. 
%

In this paper, since we allow clients to learn PQMs with potentially {\em different} dimensions, the above approach of adding a $\ell_2$ penalty term in the objective is not feasible. Observe that, the purpose of incorporating a $\ell_2$ penalty in the objective is to ensure that the personalized models do not have significantly different output class scores compared to the global model which is trained using the data generated at all clients; this does not, however, require the global model to have the same dimension as that of local models and can be satisfied by augmenting the local objective \eqref{mainopt} with a certain {\em knowledge distillation} (KD) loss. %KD loss has been used in personalized FL in recent works \cite{shen2020federated,li2019fedmd,lin2020ensemble}, but their setting is different from ours; see the related work in Section~\ref{sec:intro} for more details. 
In our setting, since clients' goal is to learn personalized models with different dimensions that may also have {\em different} quantization levels, we augment the local objective \eqref{mainopt} with two separate KD losses: $f^{KD}_{i}(\bx_i,\bw)$ and $f^{KD}_{i}(\widetilde{Q}_{\bc_i}(\bx_i),\bw)$, where the first one ensures that the behavior of $\bx_i\in\R^{d_i}$ is not very different from that of $\bw\in\R^d$, and the second one ensures the same for the quantized version of $\bx_i$ and $\bw$.
Formally, we define them using KL divergence as follows: $f^{KD}_{i}(\bx_i,\bw):= D_{KL}(s^{w}_{i}(\bw)\|s_i(\bx_i))$ and $f^{KD}_{i}(\widetilde{Q}_{\bc_i}(\bx_i),\bw):=D_{KL}(s^{w}_{i}(\bw)\|s_i(\widetilde{Q}_{\bc_i}(\bx_i)))$, where $s^w_i$ and $s_i$ denote functions whose inputs are global and personalized models, respectively -- and data samples implicitly -- and outputs are the softmax classification probabilities of the network. \\% \cred{Since $D_{KL}(\cdot,\cdot)$ is not a symmetric function, we use $\bw$ as the teacher network and $\bx_i$'s or $\widetilde{Q}_{\bc_i}(\bx_i)$'s as the student networks}. 

We need to train $\bx_i$ and $ \bw$ mutually. Identifying the limitations of existing approaches for theoretical analysis (as mentioned in related work in Section~\ref{sec:intro}), we use reverse KL updates ({\em i.e.}, taking gradient steps w.r.t.\ the first parameter in $D_{KL}(\cdot,\cdot)$) to train the teacher network $\bw$ from the student network $\bx_i$. This type of update can be shown to converge and also empirically outperforms \cite{shen2020federated} (see Section~\ref{sec:experiments}). We want to emphasize that though there are works \cite{li2019fedmd,lin2020ensemble} that have used KD in FL and studied its performance (only empirically), ours is the first work that carefully formalizes it as an optimization problem (that also incorporate quantization) which is necessary to analyze convergence properties.

	\section{Centralized Model Quantization Training} \label{sec:centralized}

In this section, we propose a centralized training scheme (Algorithm \ref{algo:centralized}) for minimizing \eqref{mainopt} by optimizing over $\bx \in \mathbb{R}^d$ (the model parameters) and $\bc \in \mathbb{R}^m$ (quantization values/centers). During training, we keep $\bx$ full precision and learn the optimal quantization parameters $\bc$. The learned quantization values are then used to hard-quantize the personalized models to get quantized models for deployment in a memory-constrained setting.\\
		\begin{algorithm}[h]
	\caption{Centralized Model Quantization}
	{\bf Input:} Regularization parameter $\lambda$; initialize the full precision model $\bx^{0}$ and quantization centers $\bc^{0}$; a penalty function enforcing quantization $R(\bx,\bc)$; a soft quantizer $\widetilde{Q}_{\bc}(\bx)$; and learning rates $\eta_1,\eta_2$.\\
	\begin{algorithmic}[1] 	\label{algo:centralized}
		\FOR{$t=0$ \textbf{to} $T-1$}
		\STATE Compute {$\bg^{t} =  \nabla_{\bx^{t}} f(\bx^{t})+ \nabla_{\bx^{t}}  f(\widetilde{Q}_{\bc^{t}}(\bx^{t}))$}
		\STATE $\bx^{t+1}=\text{prox}_{\eta_1 \lambda R_{\bc^{t}}}(\bx^{t}-\eta_1 \bg^{t} )$ 
		\STATE Compute $\bh^{t} = \nabla_{\bc^{t}} f(\widetilde{Q}_{\bc^{t}}(\bx^{t+1})) $
		\STATE $\bc^{t+1}=\text{prox}_{\eta_2 \lambda R_{\bx^{t+1}}}(\bc^{t}-\eta_2 \bh^{t} )$ \\
		\ENDFOR
		%		\STATE $\hat{\bx}^{T} = Q_{\bc^{T}}(\bx^{T})$ -- $\forall i\in[d]$, map $x^T_i$ to the nearest $c^T_j$. \\
	\end{algorithmic}
	{\bf Output:} Quantized model $\hat{\bx}^{T}= Q_{\bc^{T}}(\bx^{T})$ %where component $i\in[d]$ of $\hat{\bx}^T$ is defined as $\hat{x}^T_i=\arg\min_{c\in\bc^T}\{|\hat{x}^T_i-c|\}$. % 
	%	i.e., for $i\in[d]$, map $x^T_i$ to the nearest $c^T_j$, the $j$'th component of $\bc^T$.
\end{algorithm}

%%%%%% DO WE NEED TO DESCRIBE PROX?
%Before we begin with the description of our algorithm we define the proximal mapping. Let $g:\mathbb{R}^n \rightarrow \mathbb{R}$. Given $\bx\in\mathbb{R}^n$ and $t > 0$, the proximal map of $g$ is defined as:
%\begin{align}
%	\text{prox}_{tg}(\bx):={\argmin_{\bu\in\mathbb{R}^n} }\left\{ tg(\bu) + \frac{1}{2}\|\bu-\bx\|^2 \right\} = {\argmin_{\bu\in\mathbb{R}^n} }\left\{ g(\bu) + \frac{1}{2t}\|\bu-\bx\|^2 \right\}
%\end{align}
%Particularly, if $g(\bx)=\delta_A(\bx)$ is the indicator function of $A$, the proximal map reduces to the projection:
%\begin{align}
%	\text{prox}_{tg}(\bx)=\text{proj}_A(\bx)=\argmin_{\bu \in A}\|\bu-\bx\|^2
%\end{align}
%In our algorithm $g$ will correspond to $\lambda R(\bx,\bc)$. Note that as $\lambda \rightarrow \infty$ we have $\lambda R(\bx,\bc)\rightarrow \delta_{\calC}(\bx)$. As a result, $\text{prox}_{\eta \lambda R}$ can be seen as a soft projection.

%We now formally present the steps of Algorithm \ref{algo:centralized} when running for $T$ iterations and learning rate \TODO{initialize learning rate}, and some regularization coefficient $\lambda$ for quantization. 
\textbf{Description of the algorithm.}
%\subsection{Description of the Algorithm}
We optimize \eqref{mainopt} through alternating proximal gradient descent. 
The model parameters and the quantization vector are initialized to random vectors $\bx^{0}$ and $\bc^{0}$.
% We have two learning rates $\eta_1,\eta_2$ for updating $\bx^t,\bc^t$, respectively.
The objective in \eqref{mainopt} is composed of two parts: the loss function $f(\bx) + f(\tilde{Q}_{\bc}(\bx))$ and a quantization inducing term $R(\bx,\bc)$, which we control by a regularization coefficient $\lambda$.
At each $t$, we compute gradient $\bg^{t}$ of the loss function w.r.t.\ $\bx^{t}$ (line 2), and then take the gradient step followed by $\mathrm{prox}$ step for updating $\bx^t$ to $\bx^{t+1}$ (line 3). 
For the centers, we similarly take a gradient step and follow it by a $\mathrm{prox}$ step for updating $\bc^t$ to $\bc^{t+1}$ (line 4-5).
%For the centers, first we compute the gradient $\bh^t$ of the loss function w.r.t.\ $\bc^{t}$ (line 4), and then take the gradient step followed by the $\mathrm{prox}$ step for updating $\bc^t$ to $\bc^{t+1}$ (line 5). 
%
These update steps ensure that we learn the model parameters and quantization vector tied together through proximal\footnote{As a short notation, we use  $\text{prox}_{\eta_1\lambda R_{\bc^{t}}}$ to denote $\text{prox}_{\eta_1\lambda R(\cdot,\bc^{t})}$, and $\text{prox}_{\eta_2\lambda R_{\bx^{t+1}}}$ for $\text{prox}_{\eta_2\lambda R(\bx^{t+1},\cdot)}$.} 
mapping of the regularization function $R$. Finally, we quantize the full-precision model $\bx^{T}$ using $Q_{\bc^T}$ (line 7). \\ % as: $Q_{\bc^T}(\bx^T)_i = c^T_k$, where $k=\argmin_{j\in[m]}\{|x^T_i-c^T_j|\}$. 

%%%%%%%%%%%%%%%%ASSUMPTIONS
\textbf{Assumptions.} We make the following assumptions on $f$:\\

{\bf A.1} %\textit{(Finite lower bound of $f$):} 
$f(\bx)>-\infty, \forall\bx\in\R^d$, which implies $F_\lambda(\bx,\bc) > -\infty$ for any $\bx\in\R^d,\bc\in\R^m,\lambda\in\R$.\\

{\bf A.2} %\textit{(Smoothness of $f$):} 
$f$ is $L$-smooth, i.e., for all $\bx,\by \in \mathbb{R}^d$, we have $f(\by) \leq f(\bx) + \left\langle\nabla f(\bx), \by-\bx\right\rangle + \frac{L}{2}\|\bx-\by\|^2$.\\

{\bf A.3} $f$ has bounded gradients, %\textit{(Bounded gradients of $f$):} 
$\|\nabla f(\bx)\|_2 \leq G<\infty,\forall\bx\in\R^d$.\\

%%%%%%%%%%%%%%%%

{\bf A.4} \textit{(Smoothness of the soft quantizer):} We assume that $\widetilde{Q}_{\bc}(\bx)$ is $l_{Q_1}$-Lipschitz and $L_{Q_1}$-smooth w.r.t.\ $\bx$, i.e., for $\bc\in\R^m$: $\forall$ $\bx,\by \in \mathbb{R}^d$:
%\begin{align*}
	$\|\widetilde{Q}_{\bc}(\bx) - \widetilde{Q}_{\bc}(\by) \| \leq l_{Q_1} \|\bx-\by\|$ and 
	$\|\nabla_{\bx} \widetilde{Q}_{\bc}(\bx) - \nabla_{\by} \widetilde{Q}_{\bc}(\by) \|  \leq L_{Q_1} \|\bx-\by\|$. 
%\end{align*}
We also assume $\widetilde{Q}_{\bc}(\bx)$ is $l_{Q_2}$-Lipschitz and $L_{Q_2}$-smooth w.r.t.\ $\bc$, i.e., for $\bx\in\R^d$: $\forall$ $\bc,\bd \in \mathbb{R}^m$:  
%\begin{align*}
	$\|\widetilde{Q}_{\bc}(\bx) - \widetilde{Q}_{\bd}(\bx) \|  \leq l_{Q_2} \|\bc-\bd\|$ and % \  , \quad
	$\|\nabla_{\bc} \widetilde{Q}_{\bc}(\bx) - \nabla_{\bd} \widetilde{Q}_{\bd}(\bx) \|  \leq L_{Q_2} \|\bc-\bd\|$.\\
%\end{align*}

{\bf A.5} \textit{(Bound on partial gradients of soft quantizer):} There exists constants $G_{Q_1},G_{Q_2} <\infty $ such that:
%\begin{align*}
	$\|\nabla_{\bx} \widetilde{Q}_{\bc}(\bx) \|_F = \|\nabla \widetilde{Q}_{\bc}(\bx)_{1:d,:} \|_F \leq G_{Q_1}$ and %, \quad 
	$\|\nabla_{\bc} \widetilde{Q}_{\bc}(\bx) \|_F = \|\nabla \widetilde{Q}_{\bc}(\bx)_{d+1:d+m,:} \|_F \leq G_{Q_2}$,
%\end{align*}
where $\bX_{p:q,:}$ denotes sub-matrix of $\bX$ with rows between $p$ and $q$, and $\|\cdot\|_F$ is the Frobenius norm. \\

\textbf{Notes on Assumptions.} \textbf{A.1} and \textbf{A.2} are standard assumptions for convergence analysis of smooth objectives; and \textbf{A.3} is commonly used for non-convex optimization, {\em e.g.}, for personalized FL in \cite{fallah2020personalized}. \textbf{A.4} and \textbf{A.5} are assumed to make the composite function $f(\widetilde{Q}_{\bc}(\bx))$ smooth. The choice of $\widetilde{Q}_{\bc}(\bx)$ (see Appendix~\ref{appendix:Preliminaries}) naturally satisfies \textbf{A.4} and \textbf{A.5}.\\

\textbf{Convergence result.} Now we state our main convergence result (proved in Section~\ref{sec:proof_thm1}) for minimizing $F_{\lambda}(\bx,\bc)$ in \eqref{mainopt} w.r.t.\ $(\bx,\bc)\in\R^{d+m}$ via Algorithm~\ref{algo:centralized}. This provides first-order guarantees for convergence of $(\bx,\bc)$ to a stationary point and recovers the $\mathcal{O}\left(\nicefrac{1}{T}\right)$ convergence rate of \cite{Bolte13,bai2018proxquant}.
\begin{theorem}\label{thm:centralized}
	Consider running Algorithm~\ref{algo:centralized} for $T$ iterations for minimizing \eqref{mainopt} with $\eta_1=\nicefrac{1}{2(L+GL_{Q_1}+G_{Q_1}Ll_{Q_1})}$ and $\eta_2=\nicefrac{1}{2(GL_{Q_2}+G_{Q_2}Ll_{Q_2})}$. For any $t\in[T]$, define $\bG^{t} := [\nabla_{\bx^{t+1}} F_{\lambda}\left(\bx^{t+1},\bc^{t}\right)^T, \nabla_{\bc^{t+1}} F_{\lambda}\left(\bx^{t+1},\bc^{t+1}\right)^T]^T$. Then, under {\bf A.1-A.5} and for $L_{\min}=\min\{\frac{1}{\eta_1},\frac{1}{\eta_2}\}$, 
	$L_{\max} = \max\{\frac{1}{\eta_1},\frac{1}{\eta_2}\}$, we have:
	\begin{align*}%\label{centralized_convergence}
	\frac{1}{T}\sum_{t=0}^{T-1}\|\bG^{t}\|^2_{2}   = \mathcal{O} \big(\frac{L_{\max}^2\left(F_{\lambda}\left(\bx^{0},\bc^0\right){-}F_{\lambda}(\bx^{T},\bc^T)\right)}{L_{\min}T}\big).
	\end{align*}
%	where %$L_{\min}/L_{\max}=\min/\max\{\frac{1}{\eta_1},\frac{1}{\eta_2}\}$. 
%	$L_{\min}=\min(\frac{1}{\eta_1},\frac{1}{\eta_2})$, 
%	$L_{\max} = \max(\frac{1}{\eta_1},\frac{1}{\eta_2})$.
\end{theorem}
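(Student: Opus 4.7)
The strategy is the standard three-step analysis of alternating proximal gradient on a non-convex composite objective: (i) establish blockwise Lipschitz smoothness of the smooth part of $F_\lambda$, (ii) derive sufficient-decrease inequalities from the proximal update, (iii) bound the partial gradients at the new iterate in terms of the step lengths $\|\bx^{t+1}-\bx^t\|$ and $\|\bc^{t+1}-\bc^t\|$ via the optimality condition of the $\mathrm{prox}$, and then telescope.

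First, let $H(\bx,\bc):=f(\bx)+f(\widetilde Q_\bc(\bx))$, so $F_\lambda=H+\lambda R$. I will show that $H(\cdot,\bc)$ is $L_1$-smooth with $L_1 = L + GL_{Q_1}+G_{Q_1}Ll_{Q_1}$ and $H(\bx,\cdot)$ is $L_2$-smooth with $L_2 = GL_{Q_2}+G_{Q_2}Ll_{Q_2}$, matching the step sizes $\eta_1=1/(2L_1)$ and $\eta_2=1/(2L_2)$ in the theorem. The derivation is a clean chain-rule/triangle-inequality calculation: writing $\nabla_\bx f(\widetilde Q_\bc(\bx))=[\nabla_\bx\widetilde Q_\bc(\bx)]^\top\nabla f(\widetilde Q_\bc(\bx))$, adding and subtracting $[\nabla_\bx\widetilde Q_\bc(\bx)]^\top\nabla f(\widetilde Q_\bc(\by))$, and applying \textbf{A.2}, \textbf{A.3}, \textbf{A.4}, \textbf{A.5} gives the Lipschitz constant $GL_{Q_1}+G_{Q_1}Ll_{Q_1}$ for $f\circ\widetilde Q_\bc$ in $\bx$; adding the $L$ from the bare $f(\bx)$ term yields $L_1$. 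The analogous argument in $\bc$ (with no bare-$f$ contribution) yields $L_2$.

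Next, the standard proximal-gradient descent lemma applied blockwise gives, for $\eta_i=1/(2L_i)$,
\begin{equation*}
F_\lambda(\bx^{t+1},\bc^t)\leq F_\lambda(\bx^t,\bc^t)-\tfrac{L_1}{2}\|\bx^{t+1}-\bx^t\|^2,\qquad F_\lambda(\bx^{t+1},\bc^{t+1})\leq F_\lambda(\bx^{t+1},\bc^t)-\tfrac{L_2}{2}\|\bc^{t+1}-\bc^t\|^2,
\end{equation*}
derived by combining the smoothness inequality with the definition of $\bx^{t+1}$ (resp.\ $\bc^{t+1}$) as the minimizer of the proximal surrogate. Summing these two and telescoping over $t=0,\dots,T-1$ yields
$\sum_{t=0}^{T-1}\bigl(\|\bx^{t+1}-\bx^t\|^2+\|\bc^{t+1}-\bc^t\|^2\bigr)\leq \tfrac{4}{L_{\min}}\bigl(F_\lambda(\bx^0,\bc^0)-F_\lambda(\bx^T,\bc^T)\bigr).$

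For the gradient bound, the first-order optimality condition of the $\mathrm{prox}$ step in line 3 gives $-\tfrac{1}{\eta_1}(\bx^{t+1}-\bx^t)-\nabla_\bx H(\bx^t,\bc^t)\in\lambda\,\partial_\bx R(\bx^{t+1},\bc^t)$, and adding $\nabla_\bx H(\bx^{t+1},\bc^t)$ to both sides expresses $\nabla_{\bx^{t+1}}F_\lambda(\bx^{t+1},\bc^t)$ as the sum of two terms; the triangle inequality plus $L_1$-smoothness of $H$ in $\bx$ gives $\|\nabla_{\bx^{t+1}}F_\lambda(\bx^{t+1},\bc^t)\|\leq (L_1+1/\eta_1)\|\bx^{t+1}-\bx^t\|$. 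The symmetric argument on line 5 gives $\|\nabla_{\bc^{t+1}}F_\lambda(\bx^{t+1},\bc^{t+1})\|\leq (L_2+1/\eta_2)\|\bc^{t+1}-\bc^t\|$. Since $L_i+1/\eta_i=\tfrac{3}{2}(1/\eta_i)\leq \tfrac{3}{2}L_{\max}$, we get $\|\bG^t\|^2\leq \tfrac{9}{4}L_{\max}^2\bigl(\|\bx^{t+1}-\bx^t\|^2+\|\bc^{t+1}-\bc^t\|^2\bigr)$. Combining with the telescoped descent bound and dividing by $T$ gives the claimed $\mathcal O\!\left(L_{\max}^2(F_\lambda(\bx^0,\bc^0)-F_\lambda(\bx^T,\bc^T))/(L_{\min}T)\right)$ rate.

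The only step that requires genuine care is the blockwise smoothness computation in paragraph one, because $\widetilde Q_\bc(\bx)$ is not a scalar function and one must correctly track operator-norm vs.\ Frobenius-norm bounds through the chain rule, which is precisely why \textbf{A.4}–\textbf{A.5} are phrased the way they are; the remainder is routine proximal-gradient bookkeeping in the spirit of \cite{Bolte13,bai2018proxquant}.
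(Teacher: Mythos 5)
Your proposal is correct and follows essentially the same route as the paper's proof: blockwise smoothness of $f(\bx)+f(\widetilde Q_\bc(\bx))$ via the chain rule with \textbf{A.2}--\textbf{A.5}, sufficient decrease from the argmin characterization of each prox step, a gradient bound of the form $(L_i+1/\eta_i)\|\cdot^{t+1}-\cdot^{t}\|$ from the first-order optimality conditions, and telescoping. The only differences are immaterial constant factors arising from whether $L_{\min},L_{\max}$ are taken as $\min/\max\{1/\eta_i\}$ (as in the theorem statement, which you follow) or as $\min/\max$ of the smoothness constants themselves (as in the paper's proof); both yield the stated $\mathcal{O}$ bound.
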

%Note that $\Vert \bG^{t} \Vert_2^2$ corresponds to the sum of gradient norms of the objective w.r.t.\ the centers and the model. Theorem \ref{thm:centralized} shows that the running average of this norm goes to zero with $T$. Thus, this provides a weak notion of convergence of parameters $\bx$ and $\bc$ to stationary points when run for a large value of $T$.
%We give an outline for a proof for Theorem \ref{thm:centralized} in Section \ref{sec:proof_outlines} and 
%We provide a proof for Theorem~\ref{thm:centralized} .

Theorem~\ref{thm:centralized} is proved in Section~\ref{sec:proof_thm1}.
\begin{remark}
	In Theorem~\ref{thm:centralized}, we see that gradient norm decays without any constant error terms. The convergence rate depends on Lipschitz smoothness constants of $f$ and $f(\widetilde{Q}_{\bc}(.))$ through $L_{\max}$ and $L_{\min}$. Choosing a smoother $\widetilde{Q}_{\bc}(.)$ would speed up convergence; however, if chosen too small, this could result in an accuracy loss when hard-quantizing the parameters at the end of the algorithm.
\end{remark}

\begin{remark}[Number of centers and convergence] \label{remark:thm1-centers}
The number of quantization levels $m$ has a direct effect on convergence through the soft quantization function $\widetilde{Q}_{\bc}({\bx})$ and Lipschitz constants. Note that as $m \to \infty$, we have $\widetilde{Q}_{\bc}({\bx}) \to \bx, \forall \bx$. In this case, $\widetilde{Q}_{\bc}({\bx})$ is $0$-smooth and $1$-Lipschitz w.r.t.\ all parameters. As a result, we would have $L_{\min}=L$ and $L_{\max}=2L$. %As the aggressiveness of quantization increases, the ratio $\nicefrac{L^2_{\max}}{L_{\min}}$ would also increase, and consequently, affects the convergence rate. 
%	As the aggressiveness of quantization increases, 
	Note that the ratio $\nicefrac{L^2_{\max}}{L_{\min}}$ increases as the quantization becomes more aggressive, and consequently, aggressive quantization has a scaling effect on convergence rate.
	%Hence, the way we do quantization has a scaling effect on the convergence rate. 
\end{remark}

	%%------------------------------------------------------------------
	
	%%------------------------------------------------------------------
	%% ALGORITHMS
	\section{Personalized Quantization for FL via Knowledge Distillation} \label{sec:personalized}

We now consider the FL setting where we aim to learn quantized and personalized models for each client with different precision and model dimensions in heterogeneous data setting.
%in heterogeneous setting. This comprises of both statistical and systems heterogeneity where each clients may have different data distributions as well as different resource constraints. 
Our proposed method QuPeD (Algorithm \ref{algo:personalized}),  % to learn different sized models with different precision based on client's memory constraints. 
utilizes the centralized scheme of Algorithm~\ref{algo:centralized} locally at each client to minimize \eqref{per1} over $\left(\{\bx_i,\bc_i\}_{i=1}^{n},\bw\right)$. Here, $\bx_i,\bc_i$, denote the model parameters and the quantization vector (centers) for client $i$, and $\bw$ denotes the global model that facilitates collaboration among clients which is encouraged through the knowledge distillation (KD) loss in the local objectives \eqref{qpfl}. \\ %Thus, QuPeD learns personalized models while simultaneously addressing data and resource heterogeneity in FL.

			\begin{algorithm}[h]
		\caption{QuPeD: Quantized Personalization via Distillation}
		{\bf Input:} Regularization parameters $\lambda,\lambda_p$; synchronization gap $\tau$; for client $i\in[n]$, initialize full precision personalized models $\bx_i^{0}$, quantization centers $\bc_i^{0}$, local model $\bw_i^0$, learning rates $\eta^{(i)}_1,\eta^{(i)}_2,\eta_3$; quantization enforcing penalty function $R(\bx,\bc)$; soft quantizer $\widetilde{Q}_{\bc}(\bx)$; number of clients to be sampled $K$. \\
		\vspace{-0.3cm}
		\begin{algorithmic}[1] \label{algo:personalized}
			\FOR{$t=0$ \textbf{to} $T-1$}
			\IF{$\tau$ divides $t$}
			%		\STATE $\bw^{t+1} = \bw^{t}$
			%		\ELSE
			\STATE \textbf{On Server do:}\\
			Choose a subset of clients $\mathcal{K}_t \subseteq [n]$ with size $K$
			\STATE Broadcast $\bw^{t}$ to all \textbf{Clients}
			\STATE \textbf{On Clients} $i\in\mathcal{K}_t$ \textbf{to} $n$ (in parallel) \textbf{do}:
			\STATE Receive $\bw^{t}$ from \textbf{Server}; set $\bw_i^{t} = \bw^{t}$
			\ENDIF	
			\STATE \textbf{On Clients} $i\in\mathcal{K}_t$ \textbf{to} $n$ (in parallel) \textbf{do}:
			
			\STATE Compute $\bg_{i}^{t} := (1-\lambda_p)(\nabla_{\bx_{i}^{t}} f_i(\bx_{i}^{t}) + \nabla_{\bx_{i}^{t}} f_i(\widetilde{Q}_{\bc_{i}^{t}}(\bx_{i}^{t}))) +  \lambda_p(\nabla_{\bx_{i}^{t}} f^{KD}_i(\bx_{i}^{t}, \bw_{i}^{t})+\nabla_{\bx_{i}^{t}} f^{KD}_i(\widetilde{Q}_{\bc_{i}^{t}}(\bx_{i}^{t}), \bw_{i}^{t})) $
			\STATE $\bx_{i}^{t+1}=\text{prox}_{\eta^{(i)}_1 \lambda R_{\bc_{i}^{t}}}(\bx_{i}^{t} - \eta^{(i)}_1 \bg_{i}^{t}  )$\\
			\STATE Compute $\bh_i^{t} := (1-\lambda_p)\nabla_{\bc_{i}^{t}} f_i(\widetilde{Q}_{\bc_{i}^{t}}(\bx_{i}^{t+1})) + \lambda_p\nabla_{\bc_{i}^{t}} f^{KD}_i(\widetilde{Q}_{\bc_{i}^{t}}(\bx_{i}^{t+1}), \bw_{i}^{t})) $
			\STATE $\bc_{i}^{t+1}=\text{prox}_{\eta^{(i)}_2 \lambda R_{\bx_{i}^{t+1}}}(\bc_{i}^{t}-\eta^{(i)}_2 \bh_i^{t}  )$\\
			\STATE $\bw_{i}^{t+1} = %w_{i,t}-\eta_3 \nabla_{w_{t}} F_i(x_{i,t+1},c_{i,t+1},w_{i,t})=
			\bw_{i}^{t}-\eta_3\lambda_p(\nabla_{\bw_{i}^{t}} f^{KD}_i(\bx_{i}^{t+1}, \bw_{i}^{t})+\nabla_{\bw_{i}^{t}} f^{KD}_i(\widetilde{Q}_{\bc_{i}^{t+1}}(\bx_{i}^{t+1}), \bw_{i}^{t})) $\\
			\IF{$\tau$ divides $t+1$}
			\STATE Clients send $\bw_{i}^{t}$ to \textbf{Server}
			\STATE Server receives $\{\bw_i^{t}\}$; computes $\bw^{t+1} = \frac{1}{K} \sum_{i\in\mathcal{K}_t}^n \bw_i^{t}$
			\ENDIF
			\ENDFOR
			\STATE $\hat{\bx}_{i}^{T} = Q_{\bc_{i}^{T}}(\bx_{i}^{T})$ for all $i \in [n]$
		\end{algorithmic}
		{\bf Output:} Quantized personalized models $\{\hat{\bx}_i^{T}\}_{i=1}^n$
	\end{algorithm}

\textbf{Description of the algorithm.}
%\subsection{Description of the Algorithm}
%We now formally discuss the update steps of QuPeD when run for $T$ iterations. 
Since clients perform local iterations, apart from maintaining $\bx_i^t,\bc_i^t$ at each client $i\in[n]$, it also maintains a model $\bw_i^t$ that helps in utilizing other clients' data via collaboration.
We call $\{\bw_i^t\}_{i=1}^n$ local copies of the global model at clients at time $t$. Client $i$ updates $\bw_i^t$ in between communication rounds based on its local data and synchronizes that with the server 
which aggregates them to update the global model.
Note that the local objective in \eqref{qpfl} can be split into the weighted average of loss functions $(1-\lambda_p)(f_i(\bx_i) + f_i(\widetilde{Q}_{\bc_i}(\bx_i)))+ \lambda_p (f^{KD}_i(\bx_i,\bw_i) + f^{KD}_i(\widetilde{Q}_{\bc_i}(\bx_i),\bw_i))$ and the term enforcing 
quantization $\lambda R(\bx_i,\bc_i)$. 
%To optimize over this objective, at each time step $t \in [T]$, Each client $i \in [n]$ maintains three parameters: model $\bx_i^{(t)}$, quantization vector $\bc_i^{(t)}$ and a copy of the global model $\bw_i^{(t)}$ (as we perform local iterations). 
At any iteration $t$ that is not a communication round (line 3), 
client $i$ first computes the gradient $\bg_i^t$ of the loss function w.r.t.\ $\bx_i^{t}$ (line 4) and then takes a gradient step followed by the proximal step using $R$ (line 5) to update from $\bx_i^t$ to $\bx_i^{t+1}$. Then it computes the gradient $\bh_i^t$ of the loss function w.r.t.\ $\bc_i^{t}$ (line 6) and updates the centers followed by the proximal step (line 7). 
Finally, % the local copy of the global model $\bw_i^t$ is updated by taking a gradient step using 
it updates $\bw_i^{t}$ to $\bw_i^{t+1}$ by taking a gradient step of the loss function at $\bw_i^{t}$ (line 8).
%it computes the gradient of the loss function w.r.t.\ $\bw_i^{(t)}$ to $\bw_i^{t+1}$ (line 8). 
Thus, the local training of $\bx_i^t,\bc_i^t$ also incorporates knowledge from other clients' data through $\bw_i^{t}$.
When $t$ is divisible by $\tau$, clients upload $\{\bw_i^{t}\}$ to the server (line 10) which aggregates them (line 15) and broadcasts the updated global model (line 16). 
At the end of training, clients learn their personalized models $\{\bx_i^{T}\}_{i=1}^{n}$ and quantization centers $\{\bc_i^{T}\}_{i=1}^{n}$. Finally, client $i$ quantizes $\bx_i^{T}$ using $Q_{\bc_i^T}$ (line 19). \\
%, forming the compressed model $\hat{\bx}_i^{(T)}$ (line 19) which can be deployed in resource constrained environment for inference.
%The update steps in line 5 and line 7 ensure that the local personal model and the centers are simultaneously updated while being tied together by the quantization inducing term $R$ due to the proximal step. 
%Furthermore, the local training of these parameters also incorporates knowledge from other nodes participating in the training through the parameter $w_i^{(t)}$. 
%In the communication round (when $t$ is divisible by $\tau$), the clients upload $\{\bw_i^{(t)}\}$ to the server (line 10) which aggregates them (line 15) and broadcasts the updated global model to all clients (line 16). 

%\subsection{Theoretical results}
%\subsection{Convergence Result}
\textbf{Assumptions.}
%We now discuss the convergence rate for QuPeD for general smooth objectives.
In addition to assumptions {\bf A.1 - A.5} (with {\bf A.3} and {\bf A.5} modified to have client specific gradient bounds \{$G^{(i)}, G^{(i)}_{Q_1}, G^{(i)}_{Q_2}$\}  as they have different model dimensions\footnote{We keep smoothness constants to be the same across clients for notational simplicity, however, our result can easily be extended to that case.}), we assume:\\ 

%for bounding the heterogeneity in data and for making the loss function smooth: \\
%, first to bound heterogeneity in the local datasets across all clients and secondly to have a smooth augmented loss function.
%
%\textbf{Assumptions:}
%Some of these are extensions of assumptions made in Section~\ref{subsec:main-1} to case of multiple workers:
%
{\bf A.6} \textit{(Bounded diversity):} At any $t \in \{0,\cdots,T-1\}$ and any client $i\in[n]$, the variance of the local gradient (at client $i$) w.r.t. the global gradient is bounded, i.e., there exists $\kappa_i<\infty$, such that for every $\{\bx_i^{t+1}\in\R^d,\bc_i^{t+1}\in\R^{m_i}:i\in[n]\}$ and $\bw^t\in\R^d$ generated according to Algorithm~\ref{algo:personalized}, we have:
%\begin{align*}
$\| \nabla_{\bw^t} F_i(\bx_{i}^{t+1},\bc_{i}^{t+1},\bw^t) - \frac{1}{n} \sum_{j=1}^n \nabla_{\bw^t} F_j(\bx_{j}^{t+1},\bc_{j}^{t+1},\bw^{t})\|^2 \leq \kappa_i$.
%\end{align*}
This assumption is equivalent to the bounded diversity assumption in \cite{dinh2020personalized,fallah2020personalized}; see Appendix~\ref{appendix:Preliminaries}.\\

{\bf A.7} %\cred{This can be a corollary to \textbf{A.1} and \textbf{A.4}} 
\textit{(Smoothness of $f^{KD}$):} We assume $f_i^{KD}(\bx,\bw)$ is $L_{D_1}$-smooth w.r.t.\ $\bx$, $L_{D_2}$-smooth w.r.t.\ $\bw$ for all $i \in [n]$; as a result it is $L_D$-smooth w.r.t.\ $[\bx,\bw]$ where $L_D=\max\{L_{D_1},L_{D_2}\}$. Furthermore, we assume  $f_i^{KD}(\widetilde{Q}_\bc(\bx),\bw)$ is $L_{DQ_1}$-smooth w.r.t.\ $\bx$, $L_{DQ_2}$-smooth w.r.t.\ $\bc$ and $L_{DQ_3}$-smooth w.r.t.$\bw$ for all $i \in [n]$; as a result it is $L_{DQ}$-smooth w.r.t.\ $[\bx, \bc, \bw]$ where $L_{DQ}=\max\{L_{DQ_1},L_{DQ_2},L_{DQ_3}\}$. This assumption holds as a corollary of Assumptions \textbf{A.1}-\textbf{A.5} (see Appendix~\ref{appendix:Preliminaries} for details).\\

%where $\text{Let } \nabla_{\bw^t} F(\{\bx_{i}^{t}\}^n_i,\{\bc_{i}^{t}\}^n_i,\bw^{t}) = \frac{1}{n} \sum_{i=1}^n \nabla_{\bw^{t}} F_i(\bx_{i}^{t},\bc_{i}^{t},\bw^{t} )$. 
%Define $\kappa:=\frac{1}{n}\sum_{i=1}^n \kappa_i$.
\textbf{Convergence result.} In Theorem~\ref{thm:personalized} we present the convergence result when there is full client participation, i.e. $K=n$, in Section~\ref{sec:proof_thm2} we discuss the modification in convergence result under client sampling. The following result (proved in Section~\ref{sec:proof_thm2}) achieves a rate of $\mathcal{O}\left(\nicefrac{1}{\sqrt{T}}\right)$ for finding a stationary point 
within an error that depends on the data heterogeneity, matching result in \cite{dinh2020personalized}:
%Now we state our convergence result of Algorithm~\ref{algo:personalized} (proved in \cred{Appendix~\ref{sec:proof_personalized}}). which provides a first-order convergence guarantee for optimizing \eqref{per1}.
%The following result (proved in Appendix~\ref{appendix:proof of theorem 2}) provides a first-order convergence guarantee of Algorithm~\ref{algo:personalized} for optimizing \eqref{per1}. It achieves a rate of $\mathcal{O}\left(\nicefrac{1}{\sqrt{T}}\right)$ for finding an approximate stationary point 
%within an error that depends on the data heterogeneity -- matches the result in \cite{dinh2020personalized}. 
%Since the objective function in \eqref{per1} is (non-convex) smooth, as in Theorem~\ref{thm:centralized}, in the following theorem also we provide the first-order convergence guarantee.
\begin{theorem}\label{thm:personalized}
	%	Let $\bG_{i}^{t} := [\nabla_{\bx_{i}^{t+1}} F_i(\bx_{i}^{t+1},\bc_{i}^{t},\bw^{t})]^T, \\ \nabla_{\bc_{i}^{t+1}} F_i(\bx_{i}^{t+1},\bc_{i}^{t+1},\bw^{t})^T, \nabla_{\bw^{t}} F_i(\bx_{i}^{t+1},\bc_{i}^{t+1},\bw^{t})^T]^T$ be the concatenated vector of gradients for any node $i \in [n]$. 
	Under assumptions {\bf A.1-A.7}, consider running Algorithm~\ref{algo:personalized} for $T$ iterations for minimizing \eqref{per1} with $\tau \leq \sqrt{T}$, $\eta^{(i)}_1=\nicefrac{1}{2(\lambda_p(2+L_{D_1}+L_{DQ_1})+(1-\lambda_p)(L+G^{(i)}L_{Q_1}+G^{(i)}_{Q_1}Ll_{Q_1}))}$, $\eta_3 = \nicefrac{1}{4(\lambda_pL_{w}\sqrt{C_L}\sqrt{T})}$, and $\eta^{(i)}_2 = \nicefrac{1}{2(\lambda_p(1+L_{DQ_2})+(1-\lambda_p)(G^{(i)}L_{Q_2}+G^{(i)}_{Q_2}Ll_{Q_2}))}$, where $L_{w}=L_{D_2}+L_{DQ_3}$.
	Let  $\bG_{i}^{t} {:=} [\nabla_{\bx_{i}^{t{}+1}} F_i(\bx_{i}^{t{+}1},\bc_{i}^{t}{,}\bw^{t})^T,\\  {\nabla}_{\bc_{i}^{t{+}1}} F_i(\bx_{i}^{t+1},\bc_{i}^{t+1}{,}\bw^{t})^T, \nabla_{\bw^{t}} F_i(\bx_{i}^{t+1},\bc_{i}^{t+1}{,}\bw^{t})^{T}]^T$. Then
	\begin{align*}
		\frac{1}{T}\sum_{t=0}^{T-1}\frac{1}{n}\sum_{i=1}^{n} \left\|\bG_{i}^{t}\right\|^2 &= \mathcal{O} \bigg(\frac{\tau^2\overline{\kappa}+\overline{\Delta}_F}{\sqrt{T}} + \tau^2\overline{\kappa}\left(\frac{C_1}{T}+\frac{C_2}{T^{\frac{3}{2}}}\right)+ \overline{\kappa}\bigg),
	\end{align*}
for some constants $C_1,C_2$, where $\overline{\Delta}_F=\frac{1}{n}\sum_{i=1}^{n} (L^{(i)}_{\max})^2\left(F_i(\bx^{0}_{i},\bc^{0}_{i},\bw^{0}_{i})-F_i(\bx^T_{i},\bc^T_{i},\bw^{T}_{i})\right)$, $L^{(i)}_{\max} = \max\{\sqrt{\nicefrac{1}{18}},(\frac{\lambda_p}{3}(2+2L_{DQ_2}+L_{DQ}) +(1-\lambda_p)( G^{(i)}L_{Q_2}+G^{(i)}_{Q_2}Ll_{Q_2})), (\frac{\lambda_p}{3}(4+2L_{D_1}+2L_{DQ_1}+L_D+L_{DQ})+(1-\lambda_p)(L+G^{(i)}L_{Q_1}+G^{(i)}_{Q_1}Ll_{Q_1}))\}$, $C_{L}= 1+\frac{\frac{1}{n}\sum_{i=1}^{n}(L^{(i)}_{\max})^2}{(\min_i\{L^{(i)}_{\max}\})^2}$, and $\overline{\kappa}=\frac{1}{n}\sum_{i=1}^n(L^{(i)}_{\max})^2\kappa_i$. %$C_{L}= 1+\frac{\frac{1}{n}\sum_{i=1}^{n}(L^{(i)}_{\max})^2}{(\min_i\{L^{(i)}_{\max}\})^2}$, $\overline{\kappa} = \frac{1}{n}\sum_{i=1}^n(L^{(i)}_{\max})^2\kappa_i$  and $\Lambda = \frac{L^2_{D}+2L^2_{DQ}}{\sqrt{C_L}L_w}\tau^2\overline{\kappa}+\sqrt{C_L}\lambda_pL_{w} \frac{1}{n}\sum_{i=1}^{n} (L^{(i)}_{\max})^2\Delta^{(i)}_F$ with $\Delta^{(i)}_{F}=F_i(\bx^{0}_{i},\bc^{0}_{i},\bw^{0}_{i})-F_i(\bx^T_{i},\bc^T_{i},\bw^{T}_{i})$. 
\end{theorem}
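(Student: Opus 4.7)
\textbf{Proof plan for Theorem~\ref{thm:personalized}.}

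The plan is to derive a per-iteration descent inequality for the local objective $F_i$ at each client, sum over $t\in\{0,\dots,T-1\}$, and average over clients. The key is that $F_i(\bx_i,\bc_i,\bw)$ decomposes into a $(1-\lambda_p)$-weighted quantization-aware loss (same structure as the centralized objective \eqref{mainopt}) plus $\lambda_p$-weighted KD terms that couple $\bx_i,\bc_i$ with the global model. So for the $(\bx_i,\bc_i)$ substeps I would reuse the machinery of Theorem~\ref{thm:centralized} almost verbatim: apply the composite-smoothness bound for the proximal gradient step to get a descent on $F_i$ of the form $F_i(\bx_i^{t+1},\bc_i^t,\bw_i^t)-F_i(\bx_i^t,\bc_i^t,\bw_i^t) \le -\tfrac{1}{2L_x^{(i)}}\|\nabla_{\bx_i^{t+1}} F_i(\bx_i^{t+1},\bc_i^t,\bw_i^t)\|^2$ (with the standard three-point inequality for prox), and analogously for the $\bc_i$ step, where $L_x^{(i)},L_c^{(i)}$ are the effective smoothness constants assembled from A.2, A.4, A.5 (for $f_i\circ\widetilde Q$) and A.7 (for $f_i^{KD}\circ \widetilde Q$). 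The chosen learning rates $\eta_1^{(i)},\eta_2^{(i)}$ are exactly the inverses of twice these composite smoothness constants, reproducing the centralized calculation.

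For the $\bw$-block, because each client runs $\tau$ plain gradient steps on its local copy $\bw_i^t$ before synchronization, I would mimic the standard FedAvg/local-SGD analysis. First, use the $L_w$-smoothness of the KD component (A.7) to bound $F_i(\bx_i^{t+1},\bc_i^{t+1},\bw_i^{t+1})-F_i(\bx_i^{t+1},\bc_i^{t+1},\bw_i^t)\le -\tfrac{\eta_3}{2}\lambda_p^2\|\nabla_{\bw_i^t} F_i\|^2+\tfrac{\eta_3^2 \lambda_p^2 L_w}{2}\|\nabla_{\bw_i^t} F_i\|^2$. Next, relate the local gradient at $\bw_i^t$ to the aggregated gradient at the last synchronization point $\bw^{t_0}$: the gap decomposes into (i) a smoothness-times-drift term $L_w\|\bw_i^t-\bw^{t_0}\|$ and (ii) the diversity term controlled by A.6. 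The client drift $\|\bw_i^t-\bw^{t_0}\|$ is in turn bounded by unrolling the last $\le\tau$ gradient steps, yielding an $O(\eta_3^2\tau^2\overline\kappa)$ contribution via A.6 together with a gradient-magnitude bound that follows from A.3/A.7. Choosing $\eta_3=\Theta(1/\sqrt{T})$ makes the quadratic-in-$\eta_3$ error terms shrink as $1/\sqrt{T}$ and as $1/T$, $1/T^{3/2}$, which is exactly the source of the $C_1/T$ and $C_2/T^{3/2}$ terms in the statement.

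Putting the two analyses together, sum the three descent inequalities for the $\bx_i,\bc_i,\bw_i$ substeps, telescope over $t$ (the initial-minus-final values of $F_i$ produce $\overline\Delta_F$), and average over $i$; the unavoidable residual $\overline\kappa$ term arises because bounded diversity injects a non-vanishing noise floor when each client descends on its own $F_i$ rather than the average. The averaging step that exchanges $\bw_i^t\to\bw^t$ in the gradient of the descent inequality is handled by Jensen's inequality and the smoothness of $F_i$ in $\bw$, and the $(L^{(i)}_{\max})^2$ reweighting in $\overline\Delta_F$ and $\overline\kappa$ comes from expressing everything in the same units as $\|\bG_i^t\|^2$ (which collects all three partial gradients).

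The main obstacle is the bookkeeping between the proximal substeps and the gradient substep while respecting the $\tau$-periodic synchronization. Concretely, I must carefully (a) pick the right anchor point inside each $\tau$-block to take smoothness expansions (the standard choice is the last synchronized $\bw^{t_0}$), (b) ensure that the drift bound for $\|\bw_i^t-\bw^{t_0}\|$ does not depend recursively on the very gradient norms we are trying to bound (this is handled by the small $\eta_3 = \Theta(1/\sqrt{T})$), and (c) re-combine the three descent contributions so that the coefficients match the definition of $L^{(i)}_{\max}$ in the theorem statement, so that the left-hand side is the unweighted $\tfrac{1}{nT}\sum_{t,i}\|\bG_i^t\|^2$. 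Everything else (the choice of step sizes, the use of A.1 to guarantee $F_i$ is bounded below so telescoping is legitimate, and the final rearrangement) is a direct extension of the centralized proof combined with local-SGD arithmetic.
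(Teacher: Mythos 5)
Your plan follows essentially the same route as the paper's proof: block-wise sufficient decrease from composite smoothness for the $\bx_i$, $\bc_i$, and $\bw$ updates, first-order optimality of the prox steps to convert $\|\bx_i^{t+1}-\bx_i^t\|^2$ and $\|\bc_i^{t+1}-\bc_i^t\|^2$ into the partial-gradient norms appearing in $\bG_i^t$, a local-SGD drift lemma driven by {\bf A.6} that produces the $\tau^2\eta_3^2\overline{\kappa}$ terms, and telescoping with $\eta_3=\Theta(1/\sqrt{T})$. The only notable difference is cosmetic: the paper anchors the whole analysis on the virtual average $\bw^t=\frac{1}{n}\sum_{i=1}^n\bw_i^t$ (defined at every $t$, not just at synchronization) and pays explicit $\|\bw_i^t-\bw^t\|^2$ penalties inside each block's descent inequality --- absorbed later by the drift lemma --- rather than invoking Jensen's inequality, which would not apply to the non-convex $F_i$.
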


Theorem~\ref{thm:personalized} is proved in Section~\ref{sec:proof_thm2}.

\begin{remark}[Resource and data heterogeneity.] 
	Firstly, our observation from Remark~\ref{remark:thm1-centers} holds here as well. Aggressive quantization has a scaling effect on all the terms through $L^{(i)}_{\max}$. Now the interesting question is: how does having different model structures across clients affect the convergence rate of Theorem \ref{thm:personalized}? Note that in Assumptions {\bf A.3, A.5}, we assume clients have different client-specific gradient bounds; this results in client specific $L^{(i)}_{\max}$, and consequently $\overline{\kappa}$, which couples resource and data heterogeneity across clients. Here we make an important {\em first} observation regarding the coupled effect of data and resource heterogeneity on the convergence rate. Suppose data distributions are fixed across clients (i.e., $\kappa_i$'s are fixed) and we need to choose models for each client in the federated ecosystem. Then, for a fast convergence, for the clients that have local data that is not a representative of the general distribution (large $\kappa_i$), it is critical to choose models with small smoothness parameter (e.g., choosing a less aggressive quantization); whereas, clients with data that is representative of the overall data distribution (small $\kappa_i$) can tolerate having a less smooth model.
	%Note that in the assumptions \textbf{A.1} and \textbf{A.7} we assume one smoothness Lipschitz continuity constant applies to all clients for notational simplicity, this is equivalent to choosing the less smooth model to represent all the other models. Thus, our convergence rate is only affected by the worst model in terms of smoothness. And again this effect scales the convergence rate and the error term.}
\end{remark}
	%%------------------------------------------------------------------
	
	%%------------------------------------------------------------------
	%% EXPERIMENTS
	\section{Experiments} \label{sec:experiments}
	In this section, we first compare numerical results for our underlying model quantization scheme (Algorithm \ref{algo:centralized}) in a centralized case against related works \cite{BinaryRelax,bai2018proxquant}. Here, both \cite{BinaryRelax,bai2018proxquant} considers proximal algorithms with $\ell_2$ and $\ell_1$ penalty respectively without optimizing the quantization levels; moreover, both methods are restricted to 1 bit quantization. For a major part of the section, we then compare QuPeD (Algorithm \ref{algo:personalized}) against other personalization schemes \cite{dinh2020personalized,shen2020federated,fallah2020personalized, ozkara2021qupel} for data heterogeneous clients and demonstrate its effectiveness in resource heterogeneous environments. \\

%
%\newlength{\oldintextsep}
%\setlength{\oldintextsep}{\intextsep}
%

%%% CENTRALIZED
\subsection{Centralized Training}  
We compare Algorithm \ref{algo:centralized} with \cite{BinaryRelax,bai2018proxquant} for ResNet-20 and ResNet-32 \cite{he2015deep} models trained on CIFAR-10 \cite{cifar10} dataset. 

\begin{table}[h]
%\begin{wraptable}[7]{R}{0.55\textwidth}
	%	\begin{minipage}{0.55\textwidth}
	%	\vspace{-10pt}
	%\begin{wraptable}[15]{r}{6.7cm}
	\centering
	\begin{tabular}{lccl} 
		Method	& ResNet-20 & ResNet-32 \\ \midrule
		Full Precision (FP) & $92.05 $ & $92.95$ \\ 
		ProxQuant \cite{bai2018proxquant} (1bit)  & $90.69 $ & $91.55$\\
		BinaryRelax \cite{BinaryRelax} (1bit) & $87.82 $ & $90.65$\\
		Algorithm\ref{algo:centralized} (1bit) & $91.17$ & $92.20$\\ 
		Algorithm\ref{algo:centralized} (2bits) & $91.45$ & $92.47$
	\end{tabular} 
	\caption{Test accuracy (in \%) on CIFAR-10.} 
	\label{tab:Table1}
	%\end{wraptable}
	%	\end{minipage}
	%\vspace{-15pt}
%\end{wraptable}
\end{table}
While both \cite{BinaryRelax,bai2018proxquant} are limited to binary quantization, \cite{bai2018proxquant} can be seen as a specific case of our centralized method where the centers do not get updated. Specifically, previous works did not optimize over centers as we did in \eqref{mainopt}. 
From Table ~\ref{tab:Table1}, we see that updating centers (Algorithm~\ref{algo:centralized}) significantly improves the performance ($0.48\%$ increase in test accuracy). Allowing quantization with 2 bits instead of 1bit for Algorithm \ref{algo:centralized} further increases the test accuracy. Our algorithm allows us to employ any number of bits for quantization. \\
%%%%%%%%%%%%%%%%%%%

%
%\begin{wraptable}[5]{r}{6.7cm}
%	\begin{tabular}{lccl} 
%		Method	& ResNet-20 & ResNet-32 \\ \midrule
%		Full Precision & $92.05 $ & $92.95$ \\ 
%		ProxQuant(1bit)  & $90.69 $ & $91.55$\\
%		BinaryRelax(1bit) & $87.82 $ & $90.65$\\
%		Algorithm\ref{algo:centralized}(1bit) & $91.17$ & $92.20$\\ 
%		Algorithm\ref{algo:centralized}(2bits) & $91.45$ & $92.47$
%	\end{tabular} 
%	\caption{Test accuracy (in \%) on CIFAR-10.} 
%	\label{tab:Table1}
%\end{wraptable}
%
%
%

{\bf Personalized Training:} %In this section, we compare performance of QuPeD with competing methods when data is distributed heterogeneously across clients and examine QuPeD's robustness under resource heterogeneity.
%\textbf{Experimental Setup.}  
We consider an image classification task on FEMNIST \cite{caldas2018leaf} and CIFAR-10 \cite{cifar10} datasets. We consider two CNN architectures: {\sf (i)} CNN1 (used in \cite{mcmahan2017communicationefficient}): has 2 convolutional and 3 fully connected layers, {\sf (ii)} CNN2: this is CNN1 with an additional convolutional layer with $32$ filters and $5\times5$ kernel size. For CIFAR-10 we choose a batch size of $25$. For FEMNIST, we choose variable batch sizes to have $60$ iterations for all clients per epoch. We train each algorithm for $250$ epochs on CIFAR-10 and $30$ epochs on FEMNIST.
For quantized training, as standard practice \cite{Bin_survey}, we let the first and last layers of networks to be in full precision.
% Following \cite{bai2018proxquant,BinaryRelax} we allocate a certain number of epochs towards the end of training for a fine-tuning phase: In the beginning of this phase we hard quantize the network and then continue to update the centers as well as batch normalization layers. 
We use last $50$ epochs on CIFAR-10, and $5$ epochs on FEMNIST for the fine-tuning phase. 

% In quantized training procedures we let the first and last layer of networks to be in full precision as in the literature \cite{Bin_survey}. Following \cite{bai2018proxquant,BinaryRelax} we allocate a certain number of epochs towards the end of training for a fine-tuning phase: In the beginning of this phase we hard quantize the network and then continue to update the centers as well as batch normalization layers. We use last 50 epochs on CIFAR-10, 5 epochs on FEMNIST for fine-tuning. The implementation details for QuPeD and additional experiments on another dataset (MNIST) are given in \TODO{Appendx??}.

\emph{Data Heterogeneity (DH)}: We consider $n=50$ clients for CIFAR-10 and $n=66$ for FEMNIST. To simulate data heterogeneity on CIFAR-10, similar to \cite{mcmahan2017communicationefficient}, we allow each client to have access to data samples from only 4 randomly chosen classes. Thus, each client has $1000$ training samples and $200$ test samples. On FEMNIST, we use a subset of $198$ writers from the dataset and distribute the data so that each client has access to data samples written by $3$ randomly chosen writers. The number of training samples per client varies between $203$-$336$ and test samples per client varies between $25$-$40$. Test samples are sampled from the same class/writer that training samples are sampled from, in parallel with previous works in heterogeneous FL.

% moreover, we also report results in another type of heterogeneity where each class has 80\% of samples from 4 classes and 20\% of samples uniformly from other classes (this is similar to practical Non-IID setting in \cite{huang2021personalized}, see Appendix \ref{appendix:experiments} for results)

% Having presented the results for centralized case, we move on the federated setting.
%\setlength\intextsep{5pt}
\begin{figure}[t]
	\centering
	\begin{minipage}[b]{0.55\linewidth}
		\captionof{table}{Test accuracy (in \%) for CNN1 model at all clients.\textsuperscript{\ref{footnote:Table FP}}}
		\begin{tabular}{lccl} \toprule 
			Method   & FEMNIST & CIFAR-10\\ \midrule
			FedAvg (FP)  & $94.92 \pm 0.04 $& $61.40 \pm 0.29$\\ 
			Local Training (FP)  & $94.86 \pm 0.93 $& $71.57 \pm 0.28 $\\ 
			Local Training (2 Bits)   & $93.95 \pm 0.23 $& $70.87 \pm 0.15 $\\
			Local Training (1 Bit)   &$93.00 \pm 0.50 $& $69.05 \pm 0.13 $ \\
			\textbf{QuPeD} (FP)  &$\mathbf{97.31} \pm 0.12 $& $\mathbf{75.06} \pm 0.40 $\\
			\textbf{QuPeD} (2 Bits)  &$96.73 \pm 0.27 $& $74.58 \pm 0.44$\\ 
			\textbf{QuPeD} (1 Bit) &$95.15 \pm 0.21 $  & $71.20 \pm 0.33$\\
			QuPeL (2 Bits)  & $96.10 \pm 0.14 $& $73.52 \pm 0.51$\\ 
			QuPeL (1 Bits)  & $94.06 \pm 0.28 $& $71.01 \pm 0.32$\\ 
			pFedMe (FP) \cite{dinh2020personalized} & $96.60 \pm 0.37 $& $73.66 \pm 0.65$ \\
			Per-FedAvg (FP) \cite{fallah2020personalized}  &$97.16 \pm 0.21 $& $74.15 \pm 0.41$ \\
			Federated ML (FP) \cite{shen2020federated} & $96.32 \pm 0.32 $ & $74.34 \pm 0.30$ 
		\end{tabular}
		\label{tab:Table CNN1}
	\end{minipage}
	%\hfill
	\begin{minipage}[b]{0.4\linewidth}
		\centering
		\includegraphics[scale=0.32]{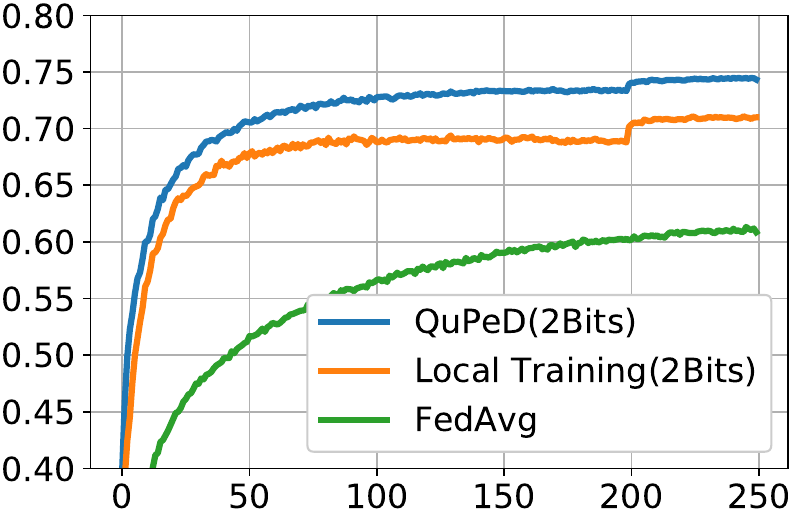}
		\includegraphics[scale=0.32]{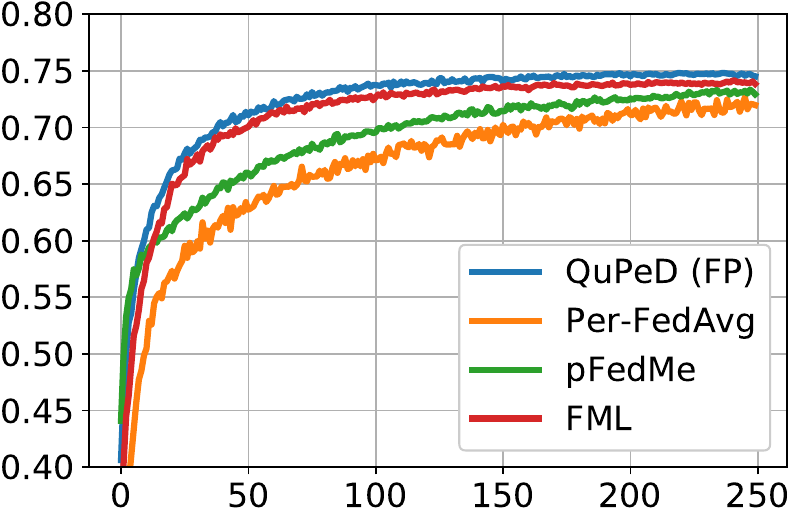}
		\caption{Test Acc. vs epoch (CIFAR-10)}
		\label{fig:baselines}
	\end{minipage}
%	\vspace{-15pt}
\end{figure}

\footnotetext{Here QuPeD (FP) corresponds to changing alternating proximal gradient updates with SGD update on model parameters in Algorithm~\ref{algo:personalized}, Local Training (FP) corresponds to SGD updates without communication.\label{footnote:Table FP}}

\setlength\intextsep{5pt}
\begin{figure}[h]
	\centering
	\captionof{table}{Test accuracy (in \%) for CNN1 model at all clients, with client sampling.}
	\begin{tabular}{lccl} \toprule 
		Method   & MNIST ($\frac{K}{n}=0.1$) & FEMNIST($\frac{K}{n}=\frac{1}{3}$) \\ \midrule
		FedAvg (FP) &  $92.87 \pm 0.05 $&  $91.30 \pm 0.43$\\ 
		\textbf{QuPeD} (FP)  &$\mathbf{98.17} \pm 0.32$  &$\mathbf{94.93} \pm 0.25$\\
		\textbf{QuPeD} (2 Bits) &$98.01 \pm 0.15$ &$ 94.56  \pm 0.18$\\ 
		\textbf{QuPeD} (1 Bit)  &$97.58 \pm 0.23$  &$ 92.52  \pm 0.64$ \\
		pFedMe (FP) \cite{dinh2020personalized} & $97.79 \pm 0.03$ & $ 93.70  \pm 0.39$\\
		Per-FedAvg (FP) \cite{fallah2020personalized} &$95.80 \pm 0.29$  & $92.10  \pm 0.22$\\
		Federated ML (FP) \cite{shen2020federated} & $98.03 \pm 0.31 $ &$92.73 \pm 0.36 $ 
	\end{tabular}
	\label{tab:Table client sampling}
\end{figure}

\emph{Resource Heterogeneity (RH)}: To simulate resource heterogeneity for QuPeD, we consider 4 settings: {\sf (i)} half of the clients have CNN1 in full precision (FP) and the other half CNN2 in FP, {\sf (ii)} half of the clients have CNN1 in 2 bits and the other half CNN2 in FP, {\sf (iii)} half of the clients have CNN1 in 2 bits and the other in FP, {\sf (iv)} half of the clients have CNN1 in 2 bits and the other half CNN2 in 2 bits.

%%%%% UNCOMMENT THIS
%\setlength\intextsep{0pt}
\begin{table}[h]
	\centering
	\caption{Test accuracy (in \%) on FEMNIST and CIFAR-10 for heterogeneous resource distribution among clients.}
	\begin{tabular}{ccccc} \toprule 
		Resource Heterogeneity& \multicolumn{2}{c}{FEMNIST} & \multicolumn{2}{c}{CIFAR-10}  \\ \midrule
		& Local Training & QuPeD & Local Training & QuPeD \\ \cmidrule(r){2-3}\cmidrule(l){4-5}
		CNN1(FP) + CNN2(FP)  & $93.41 \pm 0.82$&  $97.44 \pm 0.14$ & $72.81 \pm 0.03$&  $75.50 \pm 0.25$\\
		CNN1(2 Bits)+CNN2(FP) & $92.70 \pm 1.09$ & $97.01 \pm 0.05$ & $72.42 \pm 0.17$ & $75.08 \pm 0.18$  \\
		CNN1(2 Bits)+CNN1(FP) & $93.56 \pm 0.38$& $96.96 \pm 0.15$ & $71.23 \pm 0.08$ & $74.84 \pm 0.30$  \\ 
		CNN1(2 Bits)+CNN2(2 Bits) & $91.11 \pm 0.23$&$96.64 \pm 0.31$ & $72.15 \pm 0.47$&  $74.64 \pm 0.27$
	\end{tabular}
	\label{tab:Table Heterogenous}
	%\vspace{-10pt}
\end{table}

\emph{Results (DH):}  We compare QuPeD against FedAvg \cite{mcmahan2017communicationefficient}, local training of clients (without any collaboration), and personalized FL methods: pFedMe \cite{dinh2020personalized}, Per-FedAvg \cite{fallah2020personalized}, Federated Mutual Learning \cite{shen2020federated}, and  QuPeL \cite{ozkara2021qupel}. For all methods, if applicable, we set $\tau = 10$ local iterations, use learning rate decay $0.99$ and use weight decay of $10^{-4}$; we fine tune the initial learning rate for each method independently, see Appendix~\ref{appendix:experiments} for details. The results are provided in Table~\ref{tab:Table CNN1} with full client participation ($K=n$), plotted in Figure \ref{fig:baselines} for CIFAR-10, and in Table~\ref{tab:Table client sampling} with client sampling  where we state average results over $3$ runs; all clients train CNN1 (see Appendix~\ref{appendix:experiments} for CNN2) and quantization values are indicated in parenthesis. Thus, we only consider model personalization for data heterogeneity. In Table~\ref{tab:Table CNN1}, we observe that full precision QuPeD consistently outperforms all other methods for both datasets. Furthermore, we observe QuPeD with 2-bit quantization is the second best performing method on CIFAR-10 (after QuPeD (FP)) and third best performing method on FEMNIST despite the loss due to quantization. Hence, QuPeD is highly effective for quantized training in data heterogeneous settings. Since QuPeD outperforms QuPeL, we can also (empirically) claim that considering KD loss to encourage collaboration is superior to $\ell_2$ distance loss. Lastly, we observe from Table~\ref{tab:Table client sampling} that QuPeD continues to outperform other methods under client sampling.

%\vspace{-0.3cm}
\emph{Results (DH+RH)}: We now discuss personalized FL setting with both data and resource heterogeneity across clients. Note that since FedAvg, pFedMe, and Per-FedAvg cannot work in settings where clients have different model dimensions, we only provide comparisons of QuPeD with local training (no collaboration) to demonstrate its effectiveness. The results are given in Table~\ref{tab:Table Heterogenous}. We observe that QuPeD (collaborative training) significantly outperforms local training in all cases (about $3.5\%$ or higher on FEMNIST and $2.5\%$ or higher on CIFAR-10) and works remarkably well even in cases where clients have quantized models without any significant loss in performance. 
%Thus QuPeD is highly effective for collaborative training of quantized models in resource heterogeneous settings.
%We also provide results for MNIST dataset in Appendix~\ref{appendix:experiments}

	%%------------------------------------------------------------------	

	%%------------------------------------------------------------------
	%% PROOF THM1 
	\section{Proof of Theorem \ref{thm:centralized}} \label{sec:proof_thm1}
	This proof consists of two parts. First we show the sufficient decrease property by sequentially using Lipschitz properties for each update step in Algorithm~\ref{algo:centralized}. For each variable $\bx$ and $\bc$ we find the decrease inequalities and then combine them to obtain an overall sufficient decrease. Then we bound the norm of the gradient using optimality conditions of the proximal updates in Algorithm~\ref{algo:centralized}. Using sufficient decrease and bound on the gradient we arrive at the result. We leave some of the derivations and proof of the claims to Appendix~\ref{appendix:proof of theorem 1}.

\textbf{Alternating updates.} Remember that for the Algorithm~\ref{algo:centralized} we have the following alternating updates:
\begin{align*}
	\bx^{t+1} &= \text{prox}_{\eta_1\lambda R_{\bc^{t}}}(\bx^{t} - \eta_1 \nabla f(\bx^{t})-\eta_1 \nabla_{\bx^{t}} f(\widetilde{Q}_{\bc^{t}}(\bx^{t})) )\\
	\bc^{t+1} &= \text{prox}_{\eta_2\lambda R_{\bx^{t+1}}}(\bc^{t} - \eta_2 \nabla_{\bc^{t}} f(\widetilde{Q}_{\bc^{t}}(\bx^{t+1})))
\end{align*}
These translate to following optimization problems for $\bx$ and $\bc$ respectively (see end of the section for derivation):
\begin{align}
	\bx^{t+1} &=\underset{\bx \in \mathbb{R}^{d}}{\arg \min }\left\{ \left\langle \bx-\bx^{t}, \nabla_{\bx^{t}} f\left(\bx^{t}\right) \right\rangle+\left\langle \bx-\bx^{t}, \nabla_{\bx^{t}} f(\widetilde{Q}_{\bc^{t}}(\bx^{t}))\right\rangle+\frac{1}{2 \eta_1}\left\|\bx-\bx^{t}\right\|_{2}^{2}+\lambda R(\bx,\bc^{t})\right\} \label{thm1:optimization prob for x}\\
	\bc^{t+1}&= \underset{\bc \in \mathbb{R}^{m}}{\arg \min }\left\{\left\langle \bc-\bc^{t}, \nabla_{\bc^{t}} f(\widetilde{Q}_{\bc^{t}}(\bx^{t+1}))\right\rangle+\frac{1}{2 \eta_2}\left\|\bc-\bc^{t}\right\|_{2}^{2}+\lambda R(\bx^{t+1},\bc)\right\} \label{thm1:optimization prob for c}
\end{align}
\subsection{Sufficient Decrease}
This section is divided into two, first we will show sufficient decrease property with respect to $\bx$, then we will show sufficient decrease property with respect to $\bc$. 
\subsubsection{Sufficient Decrease Due to $\bx$}
\begin{claim}\label{claim: lqxsmooth}
	$f(\bx)+ f(\widetilde{Q}_\bc(\bx))$ is $(L+GL_{Q_1}+G_{Q_1}LL_{Q_1})$-smooth with respect to $\bx$.
\end{claim}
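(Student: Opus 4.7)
The plan is to split the claim into two pieces via the triangle inequality for smoothness constants: $f(\bx)$ contributes $L$ directly from assumption \textbf{A.2}, so it suffices to show that the composition $\bx \mapsto f(\widetilde{Q}_{\bc}(\bx))$ is $(GL_{Q_1}+G_{Q_1}Ll_{Q_1})$-smooth in $\bx$ (with $\bc$ held fixed). Smoothness of a sum is bounded by the sum of smoothness constants, so adding these two gives the claimed constant. (I am reading the trailing ``$LL_{Q_1}$'' in the statement as a typo for the dimensionally consistent ``$Ll_{Q_1}$'' that appears in the step size in Theorem~\ref{thm:centralized}.)

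The core step is to bound $\|\nabla_{\bx} f(\widetilde{Q}_{\bc}(\bx)) - \nabla_{\by} f(\widetilde{Q}_{\bc}(\by))\|$ using the chain rule
\[
\nabla_{\bx} f(\widetilde{Q}_{\bc}(\bx)) = \bigl(\nabla_{\bx}\widetilde{Q}_{\bc}(\bx)\bigr)^{T}\,\nabla f(\widetilde{Q}_{\bc}(\bx)),
\]
and the add-and-subtract trick with the intermediate term $(\nabla_{\bx}\widetilde{Q}_{\bc}(\bx))^{T}\nabla f(\widetilde{Q}_{\bc}(\by))$. The triangle inequality then splits the difference into (i) a piece in which the Jacobian is fixed at $\bx$ but the inner gradient of $f$ varies, and (ii) a piece in which the inner gradient $\nabla f(\widetilde{Q}_{\bc}(\by))$ is fixed but the Jacobian varies between $\bx$ and $\by$.

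For piece (i), I would bound the Jacobian factor in Frobenius norm by $G_{Q_1}$ using \textbf{A.5}, and then use the $L$-smoothness of $f$ from \textbf{A.2} together with the $l_{Q_1}$-Lipschitzness of $\widetilde{Q}_{\bc}$ in $\bx$ from \textbf{A.4} to get
\[
\bigl\|\nabla f(\widetilde{Q}_{\bc}(\bx))-\nabla f(\widetilde{Q}_{\bc}(\by))\bigr\| \le L\,\|\widetilde{Q}_{\bc}(\bx)-\widetilde{Q}_{\bc}(\by)\| \le L\,l_{Q_1}\|\bx-\by\|,
\]
yielding the factor $G_{Q_1}Ll_{Q_1}$. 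For piece (ii), I would bound $\|\nabla f(\widetilde{Q}_{\bc}(\by))\|$ by $G$ via \textbf{A.3}, and the difference of Jacobians in Frobenius norm by $L_{Q_1}\|\bx-\by\|$ via the smoothness part of \textbf{A.4}, giving $GL_{Q_1}$. Summing these two contributions yields smoothness constant $GL_{Q_1}+G_{Q_1}Ll_{Q_1}$ for the composition.

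The main obstacle is purely bookkeeping: keeping track of operator versus Frobenius norms for the Jacobian, ensuring the $(\cdot)^{T}\nabla f$ products are bounded by $\|\cdot\|_F$ times $\|\nabla f\|_2$ (which follows from $\|Av\|\le\|A\|_F\|v\|$), and matching the constants in \textbf{A.4}/\textbf{A.5} (where the partial-Jacobian $\nabla_{\bx}\widetilde{Q}_{\bc}(\bx)$ is identified as the appropriate block of the full Jacobian). Once the norm choice is fixed consistently, the bound drops out directly from \textbf{A.2}--\textbf{A.5}, and combining with the $L$-smoothness of $f(\bx)$ concludes the claim.
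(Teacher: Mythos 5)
Your proposal is correct and follows essentially the same route as the paper: the paper also reduces the claim to the $L$-smoothness of $f$ plus a chain-rule/add-and-subtract bound showing $f(\widetilde{Q}_\bc(\cdot))$ is $(GL_{Q_1}+G_{Q_1}Ll_{Q_1})$-smooth (its Claim on Lipschitz relations), using exactly the $\|\bA\bv\|\le\|\bA\|_F\|\bv\|$ bookkeeping you describe, with only the cosmetic difference of which intermediate cross term is inserted. Your reading of the trailing $LL_{Q_1}$ as a typo for $Ll_{Q_1}$ is also consistent with the constant the paper actually derives and uses in the step size.
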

Using Claim \ref{claim: lqxsmooth} we have,
\begin{align}
	F_\lambda(\bx^{t+1},\bc^t)\ +\ & (\frac{1}{2\eta_1}-\frac{L+GL_{Q_1}+G_{Q_1}LL_{Q_1}}{2})\|\bx^{t+1}-\bx^t\|^2 \notag \\ 
	&= f(\bx^{t+1})+f(\widetilde{Q}_{\bc^t}(\bx^{t+1}))+\lambda R(\bx^{t+1},\bc^t)\\
	& \hspace{1cm}+ (\frac{1}{2\eta_1}-\frac{L+GL_{Q_1}+G_{Q_1}LL_{Q_1}}{2})\|\bx^{t+1}-\bx^t\|^2 \notag \\ 
	&\leq f(\bx^t)+f(\widetilde{Q}_{\bc^t}(\bx^t))+\lambda R(\bx^{t+1},\bc^t)+\left\langle \nabla f(\bx^t), \bx^{t+1}-\bx^t\right\rangle  \notag \\
	&\hspace{1cm} + \left\langle \nabla_{\bx^t} f(\widetilde{Q}_{\bc^t}(\bx^t)), \bx^{t+1}-\bx^t\right\rangle + \frac{1}{2\eta_1}\|\bx^{t+1}-\bx^t\|^2\label{thm1:first-part-interim1}
\end{align}
\begin{claim}\label{claim:quantization lower bound 1}
	Let 
	\begin{align*}
		A(\bx^{t+1}) &:= \lambda R(\bx^{t+1},\bc^{t})+\left\langle \nabla f(\bx^{t}), \bx^{t+1}-\bx^{t}\right\rangle 
		+ \left\langle \nabla_{\bx^{t}} f(\widetilde{Q}_{\bc^{t}}(\bx^{t})), \bx^{t+1}-\bx^{t}\right\rangle \notag \\
		& \hspace{1cm}+ \frac{1}{2\eta_1}\|\bx^{t+1}-\bx^{t}\|^2 \notag \\
		A(\bx^{t}) &:= \lambda R(\bx^{t},\bc^{t}).
	\end{align*} 
	Then $A(\bx^{t+1})\leq A(\bx^{t})$.
\end{claim}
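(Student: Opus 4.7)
The plan is to observe that the two expressions $A(\bx^{t+1})$ and $A(\bx^{t})$ are in fact the same objective functional from the proximal subproblem \eqref{thm1:optimization prob for x}, evaluated at two different points: the minimizer $\bx^{t+1}$, and the ``previous'' iterate $\bx^{t}$. The inequality then follows immediately from optimality of $\bx^{t+1}$.

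More concretely, define
\[
\Phi(\bx) := \left\langle \bx-\bx^{t}, \nabla f(\bx^{t})\right\rangle + \left\langle \bx-\bx^{t}, \nabla_{\bx^{t}} f(\widetilde{Q}_{\bc^{t}}(\bx^{t}))\right\rangle + \frac{1}{2\eta_1}\|\bx-\bx^{t}\|_{2}^{2} + \lambda R(\bx,\bc^{t}).
\]
By \eqref{thm1:optimization prob for x}, $\bx^{t+1} = \arg\min_{\bx\in\R^d}\Phi(\bx)$, so in particular $\Phi(\bx^{t+1}) \leq \Phi(\bx^{t})$. Matching terms, I would observe that $\Phi(\bx^{t+1}) = A(\bx^{t+1})$ by definition, while at $\bx = \bx^{t}$ all three of the inner-product and squared-norm terms vanish, leaving $\Phi(\bx^{t}) = \lambda R(\bx^{t},\bc^{t}) = A(\bx^{t})$. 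Chaining these gives $A(\bx^{t+1}) \leq A(\bx^{t})$.

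There is essentially no obstacle here; the only point worth double-checking is that the derivation leading to \eqref{thm1:optimization prob for x} is correct, i.e. that the definition of $\text{prox}_{\eta_1\lambda R_{\bc^{t}}}$ applied to $\bx^{t} - \eta_1 \bg^{t}$ (with $\bg^{t} = \nabla f(\bx^{t}) + \nabla_{\bx^{t}} f(\widetilde{Q}_{\bc^{t}}(\bx^{t}))$) really expands, after discarding the $\bx$-independent term $\tfrac{\eta_1}{2}\|\bg^{t}\|^2$, into $\Phi$. This is a one-line algebraic check that the paper has already deferred to the end of Section~\ref{sec:proof_thm1}, so I would simply refer to that derivation and not redo it in the proof of the claim itself.
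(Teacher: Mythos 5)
Your argument is exactly the paper's: Appendix~\ref{appendix:proof of theorem 1} likewise defines $A(\bx)$ as the objective inside the $\arg\min$ of \eqref{thm1:optimization prob for x} (equivalently \eqref{app:quantized argmin1}) and invokes optimality of $\bx^{t+1}$, noting that the linear and quadratic terms vanish at $\bx=\bx^{t}$. Your proposal is correct and requires no changes.
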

Now we use Claim~\ref{claim:quantization lower bound 1} and get,
\begin{align*} 
	& f\left(\bx^{t}\right)+\left\langle \bx^{t+1}-\bx^{t}, \nabla f\left(\bx^{t}\right)\right\rangle + \frac{1}{2 \eta_1}\left\|\bx^{t+1}-\bx^{t}\right\|_{2}^{2} +\lambda R\left(\bx^{t+1}, \bc^{t}\right) + f(\widetilde{Q}_{\bc^{t}}(\bx^{t})) \\
	& \hspace{2cm}+\left\langle \nabla_{\bx^{t}} f(\widetilde{Q}_{\bc^{t}}(\bx^{t})), \bx^{t+1}-\bx^{t} \right\rangle 
	\leq f\left(\bx^{t}\right)+f(\widetilde{Q}_{\bc^{t}}(\bx^{t}))+\lambda R\left(\bx^{t},\bc^{t}\right) \\
	& \hspace{7cm}= F_{\lambda}\left(\bx^{t},\bc^{t}\right).
\end{align*}
Using \eqref{thm1:first-part-interim1} we have,
\begin{align*}
	F_\lambda(\bx^{t+1},\bc^t)+(\frac{1}{2\eta_1}-\frac{L+GL_{Q_1}+G_{Q_1}LL_{Q_1}}{2})\|\bx^{t+1}-\bx^t\|^2 \leq  F_{\lambda}\left(\bx^{t},\bc^{t}\right).
\end{align*}
Now, we choose $\eta_1 = \frac{1}{2(L+GL_{Q_1}+G_{Q_1}Ll_{Q_1})}$ and obtain the decrease property for $\bx$:
\begin{align} \label{thm1:sufficient decrease 1}
	F_{\lambda}\left(\bx^{t+1},\bc^{t}\right) + \frac{L+GL_{Q_1}+G_{Q_1}Ll_{Q_1}}{2}\|\bx^{t+1}-\bx^{t}\|^2 \leq	F_{\lambda}\left(\bx^{t},\bc^{t}\right). 
\end{align}
\subsubsection{Sufficient Decrease Due to $\bc$}
From Claim~\ref{claim: lqcclaim} we have $f(\widetilde{Q}_\bc(\bx))$ is $(GL_{Q_2}+G_{Q_2}LL_{Q_2})$-smooth with respect to $\bc$.
Using Claim~\ref{claim: lqcclaim},
\begin{align}
	&F_\lambda(\bx^{t+1},\bc^{t+1}) +(\frac{1}{2\eta_2}-\frac{GL_{Q_2}+G_{Q_2}LL_{Q_2}}{2})\|\bc^{t+1}-\bc^t\|^2 \notag \\
	&\quad= f(\bx^{t+1})+f(\widetilde{Q}_{\bc^{t+1}}(\bx^{t+1}))+\lambda R(\bx^{t+1},\bc^{t+1})  + (\frac{1}{2\eta_2}-\frac{GL_{Q_2}+G_{Q_2}LL_{Q_2}}{2})\|\bc^{t+1}-\bc^t\|^2 \notag \\ 
	&\quad\leq f(\bx^{t+1})+f(\widetilde{Q}_{\bc^t}(\bx^{t+1}))+\lambda R(\bx^{t+1},\bc^{t+1}) + \left\langle \nabla_{\bc^t} f(\widetilde{Q}_{\bc^t}(\bx^{t+1})), \bc^{t+1}-\bc^t\right\rangle \notag \\
	& \hspace{2cm}+ \frac{1}{2\eta_2}\|\bc^{t+1}-\bc^t\|^2\label{thm1:first-part-interim2}
\end{align}
Now we state the counterpart of Claim~\ref{claim:quantization lower bound 1} for $\bc$.
\begin{claim}\label{claim:quantization lower bound 2}
	Let 
	\begin{align*}
		B(\bc^{t+1}) &:= \lambda R(\bx^{t+1},\bc^{t+1}) + \left\langle \nabla_{\bc^{t}} f(\widetilde{Q}_{\bc^{t}}(\bx^{t+1})), \bc^{t+1}-\bc^{t}\right\rangle \notag + \frac{1}{2\eta_1}\|\bc^{t+1}-\bc^{t}\|^2 \notag \\
		B(\bc^{t}) &:= \lambda R(\bx^{t+1},\bc^{t}).
	\end{align*} 
	Then $B(\bc^{t+1})\leq B(\bc^{t})$.
\end{claim}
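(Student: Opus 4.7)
The claim is the exact analogue of Claim~\ref{claim:quantization lower bound 1} but with the roles of $\bx$ and $\bc$ swapped, so I would mimic that proof line-by-line, using the optimization problem \eqref{thm1:optimization prob for c} that defines $\bc^{t+1}$. My plan is to observe that $B(\bc^{t+1})$ is, up to the constant term $\lambda R(\bx^{t+1}, \bc^t)$ (which doesn't depend on $\bc$), exactly the objective of \eqref{thm1:optimization prob for c} evaluated at $\bc^{t+1}$, while $B(\bc^t)$ is that same objective evaluated at the feasible point $\bc = \bc^t$ (where the inner product and quadratic terms vanish). Since $\bc^{t+1}$ is the argmin, the desired inequality follows immediately.

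More concretely, the first step is to rewrite both $B(\bc^{t+1})$ and $B(\bc^t)$ in the form
\[
\Phi(\bc) := \left\langle \bc - \bc^{t}, \nabla_{\bc^{t}} f(\widetilde{Q}_{\bc^{t}}(\bx^{t+1}))\right\rangle + \frac{1}{2\eta_2}\left\|\bc - \bc^{t}\right\|^{2} + \lambda R(\bx^{t+1}, \bc),
\]
noting that $\Phi(\bc^{t+1}) = B(\bc^{t+1})$ (after correcting the apparent typo $\eta_1 \to \eta_2$ in the claim statement) and $\Phi(\bc^{t}) = \lambda R(\bx^{t+1}, \bc^{t}) = B(\bc^{t})$. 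The second step is to invoke the definition of $\bc^{t+1}$ from \eqref{thm1:optimization prob for c}, which gives $\bc^{t+1} = \arg\min_{\bc} \Phi(\bc)$, and therefore $\Phi(\bc^{t+1}) \leq \Phi(\bc^{t})$. Combining these two steps yields $B(\bc^{t+1}) \leq B(\bc^{t})$.

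There is essentially no obstacle here: the argument is a direct application of the first-order/optimality viewpoint for the proximal mapping, identical in structure to Claim~\ref{claim:quantization lower bound 1}. The only point requiring mild care is ensuring that the minimization in \eqref{thm1:optimization prob for c} is well-defined (i.e., that a minimizer exists), which is guaranteed because $R(\bx^{t+1}, \cdot)$ is a proper function and the quadratic term $\frac{1}{2\eta_2}\|\bc - \bc^t\|^2$ makes $\Phi$ coercive in $\bc$; this is already implicit in the algorithm's use of $\mathrm{prox}_{\eta_2 \lambda R_{\bx^{t+1}}}$. With Claim~\ref{claim:quantization lower bound 2} in hand, one then combines it with the smoothness bound from Claim~\ref{claim: lqcclaim} applied to \eqref{thm1:first-part-interim2}, in exact parallel with the $\bx$-update argument, to obtain the sufficient decrease inequality for the $\bc$-step that plugs into the remainder of the proof of Theorem~\ref{thm:centralized}.
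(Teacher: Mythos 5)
Your proof is correct and is essentially identical to the paper's own argument: both identify $B(\bc^{t+1})$ and $B(\bc^{t})$ as the objective of \eqref{thm1:optimization prob for c} evaluated at $\bc^{t+1}$ and at the feasible point $\bc^{t}$ respectively, and conclude by optimality of $\bc^{t+1}$. Your observation that the $\frac{1}{2\eta_1}$ in the claim statement should read $\frac{1}{2\eta_2}$ is also right.
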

Now using Claim~\ref{claim:quantization lower bound 2},
\begin{align} \nonumber
	f(\bx^{t+1}) + \eta_2\left\|\bc^{t+1}-\bc^{t}\right\|_{2}^{2}+\lambda R(\bx^{t+1}, \bc^{t+1}) + f(\widetilde{Q}_{\bc^{t}}(\bx^{t+1}))+\left\langle \bc^{t+1}-\bc^{t}, \nabla_{\bc^{t}} f(\widetilde{Q}_{\bc^{t}}(\bx^{t+1}))\right\rangle \nonumber \\ \nonumber
	\leq f\left(\bx^{t+1}\right)+f(\widetilde{Q}_{\bc^{t}}(\bx^{t+1}))+\lambda R\left(\bx^{t+1},\bc^{t}\right) = F_{\lambda}\left(\bx^{t+1},\bc^{t}\right)
\end{align}
Setting $\eta_2 = \frac{1}{2(GL_{Q_2}+G_{Q_2}Ll_{Q_2})}$ and using the bound in \eqref{thm1:first-part-interim2}, we obtain the sufficient decrease for $\bc$:
\begin{align} \label{thm1:sufficient decrease 2}
	F_{\lambda}\left(\bx^{t+1},\bc^{t+1}\right) + \frac{GL_{Q_2}+G_{Q_2}Ll_{Q_2}}{2}\|\bc^{t+1}-\bc^{t}\|^2 \leq
	F_{\lambda}\left(\bx^{t+1},\bc^{t}\right)
\end{align}
%\textbf{Overall Decrease.} 
\subsubsection{Overall Decrease} 
Summing the bounds in \eqref{thm1:sufficient decrease 1} and \eqref{thm1:sufficient decrease 2}, we have the overall decrease property:
\begin{align} \label{thm1:overall decrease}
	&F_{\lambda}(\bx^{t+1},\bc^{t+1}) + \frac{L+GL_{Q_1}+G_{Q_1}Ll_{Q_1}}{2}\|\bx^{t+1}-\bx^{t}\|^2 +\frac{GL_{Q_2}+G_{Q_2}Ll_{Q_2}}{2}\|\bc^{t+1}-\bc^{t}\|^2 \notag \\
	&\hspace{9cm} \leq 
	F_{\lambda}\left(\bx^{t},\bc^{t}\right) 
\end{align}
Let us define $L_{\min}=\min\{L+GL_{Q_1}+G_{Q_1}Ll_{Q_1}, GL_{Q_2}+G_{Q_2}Ll_{Q_2}\}$, and  $\bz^t = (\bx^{t},\bc^{t})$. Then from \eqref{thm1:overall decrease}:
\begin{align*}
	& F_{\lambda}\left(\bz^{t+1}\right) + \frac{L_{\min}}{2}(\|\bz^{t+1}-\bz^t\|^2) = F_{\lambda}\left(\bx^{t+1},\bc^{t+1}\right) + \frac{L_{\min}}{2}(\|\bx^{t+1}-\bx^{t}\|^2+\|\bc^{t+1}-\bc^{t}\|^2) \\
	& \hspace{9cm} \leq
	F_{\lambda}\left(\bx^{t},\bc^{t}\right) = 	F_{\lambda}(\bz^t)
\end{align*}

Telescoping the above bound for $t=0, \ldots, T-1,$ and dividing by $T$:
\begin{align} \label{proximity1}
	\frac{1}{T}\sum_{t=0}^{T-1}(\left\|\bz^{t+1}-\bz^{t}\right\|_{2}^{2})\leq \frac{2\left(F_{\lambda}\left(\bz^{0}\right)-F_{\lambda}\left(\bz^{T}\right)\right)}{L_{\min}T} 
\end{align}

\subsection{Bound on the Gradient}

%%\leq \frac{2\left(F_{\lambda}\left(\bz_{0}\right)-F_{\star}\right)}{L_{\min}}
%Therefore we have the proximity guarantee
%\begin{align}
%\min _{0 \leq t \leq T-1}(\left\|\bz_{t+1}-\bz_{t}\right\|_{2}^{2}) \leq \frac{2\left(F_{\lambda}\left(\bz_0\right)-F_{\star}\right)}{L_{F1} T}
%\end{align}

We now find the first order stationarity guarantee. Taking the derivative of \eqref{thm1:optimization prob for x} with respect to $\bx$ at $\bx=\bx^{t+1}$ and setting it to 0 gives us the first order optimality condition:
\begin{align} \label{thm1:foc1}
	\nabla f(\bx^{t})+\nabla_{\bx^{t}} f(\widetilde{Q}_{\bc^{t}}(\bx^{t}))+\frac{1}{\eta_1}\left(\bx^{t+1}-\bx^{t}\right)+\lambda \nabla_{\bx^{t+1}} R\left(\bx^{t+1},\bc^{t}\right)=0
\end{align}
Combining the above equality and Claim~\ref{claim: lqxsmooth}:

\begin{align*}
	\left\|\nabla_{\bx^{t+1}} F_{\lambda}(\bx^{t+1},\bc^{t})\right\|_{2} &= \left\|\nabla f(\bx^{t+1})+\nabla_{\bx^{t+1}} f(\widetilde{Q}_{\bc^{t}}(\bx^{t+1}))+\lambda \nabla_{\bx^{t+1}} R(\bx^{t+1},\bc^{t})\right\|_{2} \\
	&\stackrel{\text{(a)}}{=} \Big\|\frac{1}{\eta}\left(\bx^{t}-\bx^{t+1}\right)+\nabla f(\bx^{t+1})-\nabla f(\bx^{t}) + \nabla_{\bx^{t+1}} f(\widetilde{Q}_{\bc^{t}}(\bx^{t+1}))\\
	& \hspace{2cm} - \nabla_{\bx^{t}} f(\widetilde{Q}_{\bc^{t}}(\bx^{t}))\Big\|_{2} \\ 
	&\leq (\frac{1}{\eta_1}+L+GL_{Q_1}+G_{Q_1}Ll_{Q_1})\left\|\bx^{t+1}-\bx^{t}\right\|_{2} \\
	&\stackrel{\text{(b)}}{=} 3(L+GL_{Q_1}+G_{Q_1}Ll_{Q_1})\left\|\bx^{t+1}-\bx^{t}\right\|_{2} \\
	& \leq 3 (L+GL_{Q_1}+G_{Q_1}Ll_{Q_1})\left\|\bz_{t+1}-\bz_{t}\right\|_{2}
\end{align*}

where (a) is from \eqref{thm1:foc1} and (b) is because we chose $\eta_1 = \frac{1}{2(L+GL_{Q_1}+G_{Q_1}Ll_{Q_1})}$. First order optimality condition in \eqref{thm1:optimization prob for c} for $\bc^{t+1}$ gives:
\begin{align*}
	\nabla_{\bc^{t+1}} f(\widetilde{Q}_{\bc^{t+1}}(\bx^{t+1}))+\frac{1}{\eta_2}(\bc^{t+1}-\bc^{t})+\lambda \nabla_{\bc^{t+1}} R(\bx^{t+1},\bc^{t+1})=0
\end{align*}

Combining the above equality and Claim~\ref{claim: lqcclaim}:

\begin{align*}
	\left\|\nabla_{\bc^{t+1}} F_{\lambda}(\bx^{t+1},\bc^{t+1})\right\|_{2} &=\left\|\nabla_{\bc^{t+1}} f(\widetilde{Q}_{\bc^{t+1}}(\bx^{t+1}))+\lambda \nabla_{\bc^{t+1}} R(\bx^{t+1},\bc^{t+1})\right\|_{2} \\
	&= \left\|\frac{1}{\eta_2}\left(\bc^{t}-\bc^{t+1}\right)+ \nabla_{\bc^{t+1}} f(\widetilde{Q}_{\bc^{t+1}}(\bx^{t+1})) - \nabla_{\bc^{t}} f(\widetilde{Q}_{\bc^{t}}(\bx^{t+1}))\right\|_{2} \\ 
	&\leq (\frac{1}{\eta_2}+GL_{Q_2}+G_{Q_2}Ll_{Q_2})\left\|\bc^{t+1}-\bc^{t}\right\|_{2} \\
	&\stackrel{\text{(a)}}{=}3 (GL_{Q_2}+G_{Q_2}Ll_{Q_2})\left\|\bc^{t+1}-\bc^{t}\right\|_{2} \\
	& \leq 3 (GL_{Q_2}+G_{Q_2}Ll_{Q_2})\left\|\bz^{t+1}-\bz^{t}\right\|_{2}
\end{align*}

where (a) is because we set $\eta_2 = \frac{1}{2(GL_{Q_2}+G_{Q_2}Ll_{Q_2})}$. Then:
\begin{align*}
	\left\|[\nabla_{\bx^{t+1}} F_{\lambda}(\bx^{t+1},\bc^{t})^T,\nabla_{\bc^{t+1}} F_{\lambda}(\bx^{t+1},\bc^{t+1})^T]^T\right\|^2_{2} &=  \left\|\nabla_\bx F_{\lambda}\left(\bx^{t+1},\bc^{t}\right)\right\|^2_{2} \\
	&\hspace{1cm}+ \left\|\nabla_\bc F_{\lambda}\left(\bx^{t+1},\bc^{t+1}\right)\right\|^2_{2} \\ 
	& \hspace{-4cm}\leq 3^2(G(L_{Q_1}+L_{Q_2})+L(1+G_{Q_1}l_{Q_1}+G_{Q_2}l_{Q_2}))^2\|\bz^{t+1}-\bz^{t}\|^2
\end{align*}

Letting $L_{\max} = \max\{L+GL_{Q_1}+G_{Q_1}Ll_{Q_1},GL_{Q_2}+G_{Q_2}Ll_{Q_2}\}$ we have:

\begin{align*}
	\left\|[\nabla_{\bx^{t+1}} F_{\lambda}(\bx^{t+1},\bc^{t})^T,\nabla_{\bc^{t+1}} F_{\lambda}(\bx^{t+1},\bc^{t+1})^T]^T\right\|_{2}^2 \leq 9L_{\max}^2\left\|\bz^{t+1}-\bz^{t}\right\|^2_{2}
\end{align*}

Summing over all time points and dividing by $T$:

\begin{align*}
	\frac{1}{T}\sum_{t=0}^{T-1}\left\|[\nabla_{\bx^{t+1}} F_{\lambda}(\bx^{t+1},\bc^{t})^T,\nabla_{\bc^{t+1}} F_{\lambda}(\bx^{t+1},\bc^{t+1})^T]^T\right\|^2_{2} &\leq \frac{1}{T}\sum_{t=0}^{T-1}9L_{\max}^2(\left\|\bz^{t+1}-\bz^{t}\right\|_{2}) \\ & \leq \frac{18 L_{\max}^2\left(F_{\lambda}\left(\bz^{0}\right)-F_{\lambda}(\bz^{T})\right)}{L_{\min}T},
\end{align*}
where in the last inequality we use \eqref{proximity1}. This concludes the proof of Theorem~\ref{thm:centralized}.
	%%------------------------------------------------------------------
	
	%%------------------------------------------------------------------
	%% PROOF THM2 
	\section{Proof of Theorem \ref{thm:personalized}} \label{sec:proof_thm2}
		In this part, different than Section~\ref{sec:centralized}, we have an additional update due to local iterations. The key is to integrate the local iterations into our alternating update scheme. To do this, we utilize Assumptions \textbf{A.6} and \textbf{A.7}. This proof consists of two parts. First, we show the sufficient decrease property by sequentially using and combining Lipschitz properties for each update step in Algorithm~\ref{algo:personalized}. Then, we bound the norm of the gradient using optimality conditions of the proximal updates in Algorithm~\ref{algo:personalized}. Then, by combining the sufficient decrease results and bounds on partial gradients we will derive our result. We defer proofs of the claims and some derivation details to Appendix~\ref{appendix:proof of theorem 2}. In this analysis we take $\bw^t = \frac{1}{n}\sum_{i=1}^n \bw^t_{i}$, so that $\bw^t$ is defined for every time point.

\textbf{Alternating updates.} Let us first restate the alternating updates for $\bx_i$ and $\bc_i$:
\begin{align*}
	\bx^{t+1}_{i} &= \text{prox}_{\eta_1 \lambda R_{\bc^t_{i}}}\Big(\bx^t_{i}-(1-\lambda_p)\eta_1 \nabla f_i(\bx^t_{i})-(1-\lambda_p) \eta_1 \nabla_{\bx^t_{i}} f_i(\widetilde{Q}_{\bc^t_{i}}(\bx^t_{i}))\\
	& \hspace{2cm}-\eta_1 \lambda_p\nabla_{\bx_{i}^{t}} f^{KD}_i(\bx_{i}^{t}, \bw_{i}^{t}) 
	-\eta_1 \lambda_p\nabla_{\bx_{i}^{t}} f^{KD}_i(\widetilde{Q}_{\bc^t_{i}}(\bx^t_{i}), \bw_{i}^{t})\Big) \\
	\bc^{t+1}_{i} &= \text{prox}_{\eta_2 \lambda R_{\bx^{t+1}_{i}}}\left(\bc^t_{i}-(1-\lambda_p)\eta_2 \nabla_{\bc^t_{i}} f_i(\widetilde{Q}_{\bc^t_{i}}(\bx^{t+1}_{i}))-\eta_2 \lambda_p\nabla_{\bc_{i}^{t}} f^{KD}_i(\widetilde{Q}_{\bc^t_{i}}(\bx^{t+1}_{i}), \bw_{i}^{t})\right)
\end{align*}
The alternating updates are equivalent to solving the following two optimization problems.
\begin{align}
	\bx^{t+1}_{i}&= \underset{\bx \in \mathbb{R}^{d}}{\arg \min }\Big\{(1-\lambda_p)\left\langle \bx-\bx^t_{i}, \nabla f_i\left(\bx^t_{i}\right)\right\rangle+(1-\lambda_p)\left\langle \bx-\bx^t_{i}, \nabla_{\bx^t_{i}} f_i(\widetilde{Q}_{\bc^t_{i}}(\bx^t_{i}))\right\rangle  \notag \\
	& \hspace{2cm}+\left\langle \bx-\bx^t_{i}, \lambda_p\nabla_{\bx_{i}^{t}} f^{KD}_i(\bx_{i}^{t}, \bw_{i}^{t})\right\rangle  + \left\langle \bx-\bx^t_{i}, \lambda_p\nabla_{\bx_{i}^{t}} f^{KD}_i(\widetilde{Q}_{\bc^t_{i}}(\bx^t_{i}), \bw_{i}^{t})\right\rangle \notag \\
	&\hspace{2cm} +\frac{1}{2 \eta_1}\left\|\bx-\bx^t_{i}\right\|_{2}^{2}+\lambda R(\bx,\bc^t_{i})\Big\} \label{thm2:lower-bounding-interim1}\\
	\bc^{t+1}_{i}&= \underset{\bc \in \mathbb{R}^{m}}{\arg \min }\Big\{\left\langle \bc-\bc^t_{i}, (1-\lambda_p)\nabla_{\bc^t_{i}} f_i(\widetilde{Q}_{\bc^t_{i}}(\bx^{t+1}_{i}))\right\rangle + \left\langle \bc-\bc^t_{i}, \lambda_p\nabla_{\bc_{i}^{t}} f^{KD}_i(\widetilde{Q}_{\bc^t_{i}}(\bx^{t+1}_{i}), \bw_{i}^{t})\right\rangle \notag \\
	& \hspace{2cm}+\frac{1}{2 \eta_2}\left\|\bc-\bc^t_{i}\right\|_{2}^{2}+\lambda R(\bx^{t+1}_{i},\bc)\Big\} \label{thm2:lower-bounding-interim2}
\end{align}
%Let us state some useful facts.
%\begin{fact}
Note that the update on $\bw^t$ from Algorithm~\ref{algo:personalized} can be written as:
\begin{align*}
	\bw^{t+1} = \bw^{t} - \eta_3 \bg^{t}, \quad \text{ where }\quad \bg^{t} =\frac{1}{n} \sum_{i=1}^n \nabla_{\bw^{t}_{i}} F_i(\bx^{t+1}_{i},\bc^{t+1}_{i},\bw^{t}_{i}).
	%	\bw^{t+1} = \bw^{t} - \eta_3 \frac{1}{n} \sum_{i=1}^n \nabla_{\bw^{t}_{i}} F_i(\bx^{t+1}_{i},\bc^{t+1}_{i},\bw^{t}_{i})
\end{align*}
%\end{fact}
%We will denote $\bg^{t} =\frac{1}{n} \sum_{i=1}^n \nabla_{\bw^{t}_{i}} F_i(\bx^{t+1}_{i},\bc^{t+1}_{i},\bw^{t}_{i})$. Then $\bw^{t+1} = \bw^{t} - \eta_3 \bg^{t}$.
In the convergence analysis we require smoothness of the local functions $F_i$ w.r.t.\ the global parameter $\bw$. Recall the definition of $F_i(\bx_i,\bc_i,\bw) = (1-\lambda_p)\left(f_i(\bx_i)+f_i(\widetilde{Q}_{\bc_i}(\bx_i))\right)+\lambda R(\bx_i,\bc_i) + \lambda_p \left(f^{KD}_i(\bx_i,\bw)+f^{KD}_i(\widetilde{Q}_{\bc_i}(\bx_i),\bw)\right)$ from \eqref{qpfl}. It follows that from Assumption {\bf A.7} that $F_i$ is $(\lambda_p(L_{D_2}+L_{DQ_3}))$-smooth with respect to $\bw$:
Now let us move on with the proof. %First we will show the sufficient decrease properties.
\subsection{Sufficient Decrease} \label{thm2:subsection sufficient decrease}
We will divide this part into three and obtain sufficient decrease properties for each variable: $\bx_i,\bc_i,\bw$.
\subsubsection{Sufficient Decrease Due to $\bx_i$}
We begin with a useful claim.
\begin{claim} \label{thm2:claim lqxsmoothness}
	$(1-\lambda_p)(f_i(\bx)+ f_i(\widetilde{Q}_\bc(\bx)))+\lambda_p (f^{KD}_i(\bx,\bw)+f^{KD}_i(\widetilde{Q}_\bc(\bx),\bw))$ is $(\lambda_p(L_{D_1}+L_{DQ_1})+(1-\lambda_p)(L+G^{(i)}L_{Q_1}+G^{(i)}_{Q_1}Ll_{Q_1}))$-smooth with respect to $\bx$.
\end{claim}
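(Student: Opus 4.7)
The plan is to decompose the function into four summands and bound the Lipschitz constant of each gradient separately, then use the fact that the smoothness constant of a sum is at most the sum of smoothness constants. Writing
\begin{align*}
H_i(\bx) &:= (1-\lambda_p)\,f_i(\bx) + (1-\lambda_p)\,f_i(\widetilde{Q}_{\bc}(\bx)) + \lambda_p\,f_i^{KD}(\bx,\bw) + \lambda_p\,f_i^{KD}(\widetilde{Q}_{\bc}(\bx),\bw),
\end{align*}
the first term has smoothness $(1-\lambda_p)L$ by Assumption \textbf{A.2}, and the third and fourth terms have smoothness $\lambda_p L_{D_1}$ and $\lambda_p L_{DQ_1}$ respectively by Assumption \textbf{A.7}. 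The only nontrivial work is to show that the second term $f_i \circ \widetilde{Q}_{\bc}$ is $\big(G^{(i)}L_{Q_1}+G^{(i)}_{Q_1}L\,l_{Q_1}\big)$-smooth in $\bx$, which parallels Claim~\ref{claim: lqxsmooth} from the centralized setting but with the client-specific gradient bound $G^{(i)}$ replacing $G$ (and similarly $G^{(i)}_{Q_1}$ for $G_{Q_1}$).

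To establish smoothness of $f_i(\widetilde{Q}_{\bc}(\bx))$ w.r.t.\ $\bx$, I would use the chain rule to write $\nabla_\bx f_i(\widetilde{Q}_\bc(\bx)) = \big(\nabla_\bx \widetilde{Q}_\bc(\bx)\big)\,\nabla f_i(\widetilde{Q}_\bc(\bx))$, and then bound
\begin{align*}
\big\| \nabla_\bx f_i(\widetilde{Q}_\bc(\bx)) - \nabla_\by f_i(\widetilde{Q}_\bc(\by)) \big\|
\end{align*}
by adding and subtracting the mixed term $\big(\nabla_\bx \widetilde{Q}_\bc(\bx)\big)\,\nabla f_i(\widetilde{Q}_\bc(\by))$ and applying the triangle inequality. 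The first resulting piece uses the bound $\|\nabla_\bx \widetilde{Q}_\bc(\bx)\|_F \le G^{(i)}_{Q_1}$ from (the client-specific version of) \textbf{A.5} together with $L$-smoothness of $f_i$ and $l_{Q_1}$-Lipschitzness of $\widetilde{Q}_\bc$ to give $G^{(i)}_{Q_1}L\,l_{Q_1}\|\bx-\by\|$; the second piece uses the bound $\|\nabla f_i\|\le G^{(i)}$ from (the client-specific version of) \textbf{A.3} and $L_{Q_1}$-smoothness of $\widetilde{Q}_\bc$ to give $G^{(i)}L_{Q_1}\|\bx-\by\|$. This is essentially the computation already done in Appendix~\ref{appendix:proof of theorem 1} for Claim~\ref{claim: lqxsmooth}, just with client-specific constants, so I would simply invoke that claim (as modified by the convention noted in the assumption paragraph of Section~\ref{sec:personalized}) rather than redoing the calculation.

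Summing the four smoothness constants with their respective weights $(1-\lambda_p)$ and $\lambda_p$ yields the claimed constant
\begin{align*}
\lambda_p(L_{D_1}+L_{DQ_1}) + (1-\lambda_p)\bigl(L + G^{(i)}L_{Q_1} + G^{(i)}_{Q_1}L\,l_{Q_1}\bigr).
\end{align*}
The main (minor) obstacle is bookkeeping: one must be careful that Assumption \textbf{A.7} already gives smoothness of $f_i^{KD}(\widetilde{Q}_\bc(\bx),\bw)$ in $\bx$ directly (so we do not need to redo the chain-rule argument for that term), and that the client-specific constants $G^{(i)}, G^{(i)}_{Q_1}$ carry through correctly from the modified \textbf{A.3}, \textbf{A.5} stated in Section~\ref{sec:personalized}. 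Beyond that, the proof is a routine additive combination.
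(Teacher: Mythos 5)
Your proposal is correct and follows essentially the same route as the paper: decompose into the four weighted summands, obtain the smoothness of $f_i(\widetilde{Q}_\bc(\bx))$ via the chain-rule argument of Claim~\ref{claim: lqxclaim} with client-specific constants $G^{(i)},G^{(i)}_{Q_1}$, read off the smoothness of the two KD terms from Assumption \textbf{A.7}, and add the constants with weights $(1-\lambda_p)$ and $\lambda_p$. The only cosmetic point is that the detailed composite-smoothness computation lives in Claim~\ref{claim: lqxclaim} of Appendix~\ref{appendix:Preliminaries} rather than in the proof of Claim~\ref{claim: lqxsmooth} itself, but the substance is identical.
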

From Claim~\ref{thm2:claim lqxsmoothness} ,after some algebra (deferred to Appendix~\ref{appendix:proof of theorem 2}), we have:
\begin{align}
	&F_i(\bx^{t+1}_{i},\bc^t_{i},\bw^{t}) +(\frac{1}{2\eta_1}-\frac{\lambda_p(L_{D_1}+L_{DQ_1})+(1-\lambda_p)(L+G^{(i)}L_{Q_1}+G^{(i)}_{Q_1}Ll_{Q_1})}{2})\|\bx^{t+1}_{i}-\bx^t_{i}\|^2 \nonumber \\	
	& \hspace{0.5cm} \leq (1-\lambda_p)\left(f_i(\bx^t_{i})+f_i(\widetilde{Q}_{\bc^t_{i}}(\bx^t_{i}))\right)+\lambda_p\left(f^{KD}_i (\bx^{t}_{i},\bw^{t})+f^{KD}_i (\widetilde{Q}_{\bc^t_{i}}(\bx^{t}_{i}),\bw^{t})\right) \notag \\ 
	&\hspace{1cm} +(1-\lambda_p)\Big\langle \nabla f_i(\bx^t_{i}), \bx^{t+1}_{i}-\bx^t_{i}\Big\rangle +(1-\lambda_p)\left\langle \nabla_{\bx^t_{i}} f_i(\widetilde{Q}_{\bc^t_{i}}(\bx^t_{i})), \bx^{t+1}_{i}-\bx^t_{i}\right\rangle \nonumber \\
	&\hspace{1cm} +\lambda_p\left\langle \nabla_{\bx^t_i} f^{KD}_i(\bx^t_{i},\bw^t_{i}), \bx^{t+1}_{i}-\bx^t_{i}\right\rangle + \lambda_p\left\langle \nabla_{\bx^t_i} f^{KD}_i((\widetilde{Q}_{\bc^t_{i}}(\bx^t_{i}),\bw^t_{i}), \bx^{t+1}_{i}-\bx^t_{i}\right\rangle  \nonumber \\
	& \hspace{1cm}  + \frac{\lambda_p}{2}\|\nabla_{\bx^t_i} f^{KD}_i(\widetilde{Q}_{\bc^t_{i}}(\bx^t_{i}),\bw^t)-\nabla_{\bx^t_i} f^{KD}_i(\widetilde{Q}_{\bc^t_{i}}(\bx^t_{i}),\bw^t_{i})\|^2 + \lambda_p\|\bx^{t+1}_{i}-\bx^t_{i}\|^2 \notag  \\
	& \hspace{1cm} + \frac{\lambda_p}{2}\|\nabla_{\bx^t_i} f^{KD}_i(\bx^t_{i},\bw^t)-\nabla_{\bx^t_i} f^{KD}_i(\bx^t_{i},\bw^t_{i})\|^2 + \frac{1}{2\eta_1}\|\bx^{t+1}_{i}-\bx^t_{i}\|^2 +\lambda R(\bx^{t+1}_{i},\bc^t_{i}).
	\label{thm2:first-part-interim1}
\end{align}
Where we used: 
\begin{align}	
	&\left\langle\lambda_p(\nabla_{\bx^t_i} f^{KD}_i(\bx^t_{i},\bw^t)-\nabla_{\bx^t_i} f^{KD}_i(\bx^t_{i},\bw^t_{i})), \bx^{t+1}_{i}-\bx^t_{i}\right\rangle \notag \\ 
	& \hspace{4cm}= \left\langle\sqrt{\lambda_p}(\nabla_{\bx^t_i} f^{KD}_i(\bx^t_{i},\bw^t)-\nabla_{\bx^t_i} f^{KD}_i(\bx^t_{i},\bw^t_{i})), \sqrt{\lambda_p}(\bx^{t+1}_{i}-\bx^t_{i})\right\rangle \nonumber \\
	& \hspace{4cm}\leq \frac{\lambda_p}{2}\|\nabla_{\bx^t_i} f^{KD}_i(\bx^t_{i},\bw^t)-\nabla_{\bx^t_i} f^{KD}_i(\bx^t_{i},\bw^t_{i})\|^2 + \frac{\lambda_p}{2}\|\bx^{t+1}_{i}-\bx^t_{i}\|^2 \nonumber
\end{align}
and similarly,
\begin{align}	
	&\left\langle\lambda_p(\nabla_{\bx^t_i} f^{KD}_i(\widetilde{Q}_{\bc^t_{i}}(\bx^t_{i}),\bw^t)-\nabla_{\bx^t_i} f^{KD}_i(\widetilde{Q}_{\bc^t_{i}}(\bx^t_{i}),\bw^t_{i})), \bx^{t+1}_{i}-\bx^t_{i}\right\rangle \notag \\
	& \hspace{5cm} \leq \frac{\lambda_p}{2}\|\nabla_{\bx^t_i} f^{KD}_i(\widetilde{Q}_{\bc^t_{i}}(\bx^t_{i}),\bw^t)-\nabla_{\bx^t_i} f^{KD}_i(\widetilde{Q}_{\bc^t_{i}}(\bx^t_{i}),\bw^t_{i})\|^2 \nonumber \\
	& \hspace{6cm}+ \frac{\lambda_p}{2}\|\bx^{t+1}_{i}-\bx^t_{i}\|^2 \nonumber
\end{align}

\begin{claim}\label{thm2:claim:lower-bound1}
	Let 
	\begin{align*}
		A(\bx^{t+1}_{i}) &:= (1-\lambda_p)\left\langle \nabla f_i(\bx^t_{i}), \bx^{t+1}_{i}-\bx^t_{i}\right\rangle 
		+(1-\lambda_p) \left\langle \nabla_{\bx^t_{i}} f_i(\widetilde{Q}_{\bc^t_{i}}(\bx^t_{i})), \bx^{t+1}_{i}-\bx^t_{i}\right\rangle \notag \\
		&\hspace{1cm} +\left\langle \lambda_p(\nabla_{\bx^t_i} f^{KD}_i(\bx^t_{i},\bw^t_{i})), \bx^{t+1}_{i}-\bx^t_{i}\right\rangle + \left\langle \lambda_p(\nabla_{\bx^t_i} f^{KD}_i(\widetilde{Q}_{\bc^t_{i}}(\bx^t_{i}),\bw^t_{i})), \bx^{t+1}_{i}-\bx^t_{i}\right\rangle\\
		& \hspace{1cm} + \lambda R(\bx^{t+1}_{i},\bc^t_{i})+ \frac{1}{2\eta_1}\|\bx^{t+1}_{i}-\bx^t_{i}\|^2  \notag \\
		A(\bx^t_{i}) &:= \lambda R(\bx^t_{i},\bc^t_{i}).
	\end{align*} 
	Then $A(\bx^{t+1}_{i})\leq A(\bx^t_{i})$.
	%\begin{align}
	%&\lambda R(\bx^{t+1}_{i},\bc^t_{i})+\left\langle \nabla f_i(\bx^t_{i}), \bx^{t+1}_{i}-\bx^t_{i}\right\rangle 
	%+ \left\langle \nabla_x f_i(\widetilde{Q}_{\bc^t_{i}}(\bx^t_{i})), \bx^{t+1}_{i}-\bx^t_{i}\right\rangle \notag \\
	%&\hspace{3cm} +\left\langle \lambda_p(\bx^t_{i}-\bw^{t}_{i}), \bx^{t+1}_{i}-\bx^t_{i}\right\rangle + \frac{1}{2\eta_1}\|\bx^{t+1}_{i}-\bx^t_{i}\|^2 \leq \lambda R(\bx^t_{i},\bc^t_{i})
	%\end{align}
\end{claim}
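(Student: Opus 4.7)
The plan is to prove Claim~\ref{thm2:claim:lower-bound1} by exactly mirroring the argument used for Claim~\ref{claim:quantization lower bound 1} in the centralized case: exploit the fact that $\bx_i^{t+1}$ is defined as the \emph{minimizer} of the optimization problem displayed in \eqref{thm2:lower-bounding-interim1}, and then simply compare the value of that objective at $\bx_i^{t+1}$ against its value at the feasible point $\bx = \bx_i^t$.

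\medskip

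More concretely, first I would introduce the shorthand
\begin{align*}
\Phi(\bx) &:= (1-\lambda_p)\langle \bx - \bx_i^t, \nabla f_i(\bx_i^t)\rangle + (1-\lambda_p)\langle \bx - \bx_i^t, \nabla_{\bx_i^t} f_i(\widetilde{Q}_{\bc_i^t}(\bx_i^t))\rangle \\
&\quad + \lambda_p\langle \bx - \bx_i^t, \nabla_{\bx_i^t} f_i^{KD}(\bx_i^t,\bw_i^t)\rangle + \lambda_p\langle \bx - \bx_i^t, \nabla_{\bx_i^t} f_i^{KD}(\widetilde{Q}_{\bc_i^t}(\bx_i^t),\bw_i^t)\rangle \\
&\quad + \frac{1}{2\eta_1}\|\bx - \bx_i^t\|^2 + \lambda R(\bx,\bc_i^t),
\end{align*}
so that the alternating update \eqref{thm2:lower-bounding-interim1} reads $\bx_i^{t+1} = \argmin_{\bx \in \R^d} \Phi(\bx)$. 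In particular, $\Phi(\bx_i^{t+1}) \le \Phi(\bx_i^t)$.

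\medskip

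Next, I would observe two identifications. On the one hand, inspection of the definition of $A(\bx_i^{t+1})$ in the claim statement shows that $A(\bx_i^{t+1}) = \Phi(\bx_i^{t+1})$, since the four inner products and the quadratic proximity term match verbatim, and the regularizer $\lambda R(\bx_i^{t+1},\bc_i^t)$ is present in both expressions. On the other hand, evaluating $\Phi$ at the feasible point $\bx = \bx_i^t$, all four inner products vanish (each contains $\bx - \bx_i^t$ linearly) and the quadratic $\tfrac{1}{2\eta_1}\|\bx_i^t - \bx_i^t\|^2$ is zero, leaving only $\Phi(\bx_i^t) = \lambda R(\bx_i^t,\bc_i^t) = A(\bx_i^t)$.

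\medskip

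Chaining these observations gives $A(\bx_i^{t+1}) = \Phi(\bx_i^{t+1}) \le \Phi(\bx_i^t) = A(\bx_i^t)$, which is exactly the stated claim. There is no real obstacle here: this is a direct optimality-of-argmin argument identical in spirit to Claim~\ref{claim:quantization lower bound 1}, the only bookkeeping difference being that now both the empirical loss gradients (weighted by $1-\lambda_p$) and the KD-loss gradients (weighted by $\lambda_p$) appear together, and one must verify that the formula for $A(\bx_i^{t+1})$ pairs the KD gradients with $\bw_i^t$ (the local copy at time $t$, as used in \eqref{thm2:lower-bounding-interim1}) rather than with the virtual average $\bw^t$. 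The discrepancy between $\bw_i^t$ and $\bw^t$ is exactly what motivates the $\pm$ decomposition and Young's inequality steps already carried out in \eqref{thm2:first-part-interim1}, so no further work on that front is needed inside this claim.
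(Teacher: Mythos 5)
Your proposal is correct and is essentially identical to the paper's own proof: the paper likewise observes that $A(\bx)$ is the objective inside the $\arg\min$ in \eqref{thm2:lower-bounding-interim1}, so optimality of $\bx_i^{t+1}$ gives $A(\bx_i^{t+1})\leq A(\bx_i^t)$, with $A(\bx_i^t)$ reducing to $\lambda R(\bx_i^t,\bc_i^t)$ since the linear and quadratic terms vanish at $\bx=\bx_i^t$. Your added remark about pairing the KD gradients with $\bw_i^t$ rather than $\bw^t$ is accurate and consistent with how the paper uses the claim.
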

Using the inequality from Claim~\ref{thm2:claim:lower-bound1} in \eqref{thm2:first-part-interim1} gives
\begin{align}
	&F_i(\bx^{t+1}_{i},\bc^t_{i},\bw^{t}){+} (\frac{1}{2\eta_1}-\frac{\lambda_p(L_{D_1}+L_{DQ_1})+(1-\lambda_p)(L+G^{(i)}L_{Q_1}+G^{(i)}_{Q_1}Ll_{Q_1})}{2})\|\bx^{t+1}_{i}-\bx^t_{i}\|^2 \notag \\
	&\stackrel{\text{(a)}}{\leq} (1-\lambda_p)\left(f_i(\bx^t_{i})+f_i(\widetilde{Q}_{\bc^t_{i}}(\bx^t_{i}))\right)+\lambda_p\left(f^{KD}_i(\bx^t_i,\bw^t)+f^{KD}_i(\widetilde{Q}_{\bc^t_{i}}(\bx^t_{i}),\bw^t)\right) + A(\bx^{t+1}_{i}) \nonumber \\
	& \hspace{1 cm}+ \frac{\lambda_p}{2}\|\nabla_{\bx^t_i} f^{KD}_i(\bx^t_{i},\bw^t)-\nabla_{\bx^t_i} f^{KD}_i(\bx^t_{i},\bw^t_{i})\|^2 + \frac{\lambda_p}{2}\|\nabla_{\bx^t_i} f^{KD}_i(\widetilde{Q}_{\bc^t_{i}}(\bx^t_{i}),\bw^t) \notag \\
	& \hspace{1cm} -\nabla_{\bx^t_i} f^{KD}_i(\widetilde{Q}_{\bc^t_{i}}(\bx^t_{i}),\bw^t_{i})\|^2 + \lambda_p\|\bx^{t+1}_{i}-\bx^t_{i}\|^2 \notag \\
	%&\leq f_i(\bx^t_{i})+f_i(\widetilde{Q}_{\bc^t_{i}}(\bx^t_{i}))+\frac{\lambda_p}{2}\|\bx^t_{i}-\bw^{t}\|^2 + R(x,\bc^t_{i}) + \left\langle\lambda_p(\bw^{t}_{i}-\bw^{t}), \bx^{t+1}_{i}-\bx^t_{i}\right\rangle \notag \\
	&\stackrel{\text{(b)}}{\leq} (1-\lambda_p)\left(f_i(\bx^t_{i})+f_i(\widetilde{Q}_{\bc^t_{i}}(\bx^t_{i}))\right)+\lambda_p\left(f^{KD}_i(\bx^t_i,\bw^t)+f^{KD}_i(\widetilde{Q}_{\bc^t_{i}}(\bx^t_{i}),\bw^t)\right) + \lambda R(\bx^t_{i},\bc^t_{i}) \nonumber \\
	& \hspace{1 cm}+ \frac{\lambda_p}{2}\|\nabla f^{KD}_i(\bx^t_{i},\bw^t)-\nabla f^{KD}_i(\bx^t_{i},\bw^t_{i})\|^2 + \frac{\lambda_p}{2}\|\nabla f^{KD}_i(\widetilde{Q}_{\bc^t_{i}}(\bx^t_{i}),\bw^t) \notag \\
	& \hspace{1cm} -\nabla f^{KD}_i(\widetilde{Q}_{\bc^t_{i}}(\bx^t_{i}),\bw^t_{i})\|^2 + \lambda_p\|\bx^{t+1}_{i}-\bx^t_{i}\|^2 \notag \\
	&\stackrel{\text{(c)}}{\leq} (1-\lambda_p)\left(f_i(\bx^t_{i})+f_i(\widetilde{Q}_{\bc^t_{i}}(\bx^t_{i}))\right)+\lambda_p\left(f^{KD}_i(\bx^t_i,\bw^t)+f^{KD}_i(\widetilde{Q}_{\bc^t_{i}}(\bx^t_{i}),\bw^t)\right) \nonumber \\
	& \hspace{1 cm} +\lambda R(\bx^{t}_{i},\bc^{t}_{i}) + \frac{\lambda_p(L^2_{D}+L^2_{DQ})}{2}\|\bw^t-\bw^t_{i}\|^2 +\lambda_p\|\bx^{t+1}_{i}-\bx^t_{i}\|^2 \notag \\
	&= F_i(\bx^t_{i},\bc^t_{i},\bw^{t}) + \frac{\lambda_p(L^2_{D}+L^2_{DQ})}{2}\|\bw^{t}_{i}-\bw^{t}\|^2+\lambda_p\|\bx^{t+1}_{i}-\bx^t_{i}\|^2. \label{thm2:first-part-interim3}
\end{align}
To obtain (a), we substituted the value of $A(\bx^{t+1}_{i})$ from Claim~\ref{thm2:claim:lower-bound1} into \eqref{thm2:first-part-interim1}. In (b), we used $A(\bx^{t+1}_{i})\leq \lambda R(\bx^t_{i},\bc^t_{i})$, $\|\nabla_{\bx^t_{i}} f^{KD}_i(\bx^t_{i},\bw^t)-\nabla_{\bx^t_{i}} f^{KD}_i(\bx^t_{i},\bw^t_{i})\|^2 \leq \|\nabla f^{KD}_i(\bx^t_{i},\bw^t)-\nabla f^{KD}_i(\bx^t_{i},\bw^t_{i})\|^2$ and the fact that  $\|\nabla_{\bx^t_{i}} f^{KD}_i(\widetilde{Q}_{\bc^t_{i}}(\bx^t_{i})),\bw^t)-\nabla_{\bx^t_{i}} f^{KD}_i(\widetilde{Q}_{\bc^t_{i}}(\bx^t_{i})),\bw^t_{i})\|^2 \leq \|\nabla f^{KD}_i(\bx^t_{i},\bw^t)-\nabla f^{KD}_i(\bx^t_{i},\bw^t_{i})\|^2$. And in (c) we used the assumption that $f^{KD}_i(\bx,\bw)$ is $L_D$-smooth and $f^{KD}_i(\widetilde{Q}_{\bc}(\bx),\bw)$ is $L_{DQ}$-smooth.

Substituting $\eta_1=\frac{1}{2(\lambda_p(2+L_{D_1}+L_{DQ_1})+(1-\lambda_p)(L+G^{(i)}L_{Q_1}+G^{(i)}_{Q_1}Ll_{Q_1}))}$ in \eqref{thm2:first-part-interim3} gives:
\begin{align} \label{thm2:dec1}
	&F_i(\bx^{t+1}_{i},\bc^t_{i},\bw^{t})
	+(\frac{\lambda_p(2+L_{D_1}+L_{DQ_1})+(1-\lambda_p)(L+G^{(i)}L_{Q_1}+G^{(i)}_{Q_1}Ll_{Q_1})}{2})\|\bx^{t+1}_{i}-\bx^t_{i}\|^2 \notag \\
	& \hspace{6cm}\leq F_i(\bx^t_{i},\bc^t_{i},\bw^{t})
	+ \frac{\lambda_p(L^2_D+L^2_{DQ})}{2}\|\bw^{t}_{i}-\bw^{t}\|^2.
\end{align}
\subsubsection{Sufficient Decrease Due to $\bc_i$}
In parallel with Claim~\ref{thm2:claim lqxsmoothness}, we have following smoothness result for $\bc$:
\begin{claim} \label{thm2:claim lqcsmoothness}
	$(1-\lambda_p)f_i(\widetilde{Q}_\bc(\bx))+\lambda_pf^{KD}_{i}(\widetilde{Q}_{\bc}(\bx),\bw))$ is $(\lambda_pL_{DQ_2}+(1-\lambda_p)(G^{(i)}L_{Q_2}+G^{(i)}_{Q_2}Ll_{Q_2}))$-smooth with respect to $\bc$.
\end{claim}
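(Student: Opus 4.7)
The plan is to decompose the Lipschitz-smoothness bound into contributions from the two summands and then combine them via the standard observation that a convex combination of $L_1$-smooth and $L_2$-smooth functions (with weights $\alpha$ and $1-\alpha$) is $(\alpha L_1 + (1-\alpha) L_2)$-smooth. The second summand $\lambda_p f_i^{KD}(\widetilde{Q}_\bc(\bx),\bw)$ contributes $\lambda_p L_{DQ_2}$ directly by Assumption \textbf{A.7}, which states exactly that $f_i^{KD}(\widetilde{Q}_\bc(\bx),\bw)$ is $L_{DQ_2}$-smooth w.r.t.\ $\bc$. Hence the only nontrivial step is to show that $f_i(\widetilde{Q}_\bc(\bx))$ is $(G^{(i)} L_{Q_2}+G^{(i)}_{Q_2} L l_{Q_2})$-smooth w.r.t.\ $\bc$, in direct analogy with Claim~\ref{claim: lqxsmooth} (where the same calculation was done w.r.t.\ $\bx$).

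For that composition, I would compute the gradient by the chain rule, $\nabla_\bc f_i(\widetilde{Q}_\bc(\bx)) = \nabla_\bc \widetilde{Q}_\bc(\bx)^\top \nabla f_i(\widetilde{Q}_\bc(\bx))$, and then bound the difference at two center vectors $\bc,\bd\in\R^{m_i}$ via the standard add-and-subtract trick:
\begin{align*}
\|\nabla_\bc f_i(\widetilde{Q}_\bc(\bx))-\nabla_\bd f_i(\widetilde{Q}_\bd(\bx))\|
&\le \|\nabla_\bc \widetilde{Q}_\bc(\bx)^\top\bigl[\nabla f_i(\widetilde{Q}_\bc(\bx))-\nabla f_i(\widetilde{Q}_\bd(\bx))\bigr]\| \\
&\quad + \|\bigl[\nabla_\bc \widetilde{Q}_\bc(\bx)-\nabla_\bd \widetilde{Q}_\bd(\bx)\bigr]^\top \nabla f_i(\widetilde{Q}_\bd(\bx))\|.
\end{align*}
The first term is bounded by $G^{(i)}_{Q_2}\cdot L\cdot l_{Q_2}\|\bc-\bd\|$, using the gradient-bound $\|\nabla_\bc \widetilde{Q}_\bc(\bx)\|_F\le G^{(i)}_{Q_2}$ from \textbf{A.5}, the $L$-smoothness of $f_i$ from \textbf{A.2}, and the $l_{Q_2}$-Lipschitzness of $\bc\mapsto \widetilde{Q}_\bc(\bx)$ from \textbf{A.4}. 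The second term is bounded by $L_{Q_2}\|\bc-\bd\|\cdot G^{(i)}$, using the $L_{Q_2}$-smoothness of $\widetilde{Q}_\bc(\bx)$ w.r.t.\ $\bc$ from \textbf{A.4} and the gradient bound $\|\nabla f_i\|\le G^{(i)}$ from (client-specific) \textbf{A.3}. Summing yields the desired smoothness constant $G^{(i)} L_{Q_2}+G^{(i)}_{Q_2} L l_{Q_2}$.

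Finally, adding $\lambda_p L_{DQ_2}$ from Assumption \textbf{A.7} and $(1-\lambda_p)(G^{(i)} L_{Q_2}+G^{(i)}_{Q_2} L l_{Q_2})$ from the previous step gives the claimed smoothness constant $\lambda_p L_{DQ_2}+(1-\lambda_p)(G^{(i)} L_{Q_2}+G^{(i)}_{Q_2} L l_{Q_2})$ for the weighted sum. The main technical subtlety, though essentially bookkeeping, is being careful with the Frobenius/operator norms when handling the Jacobian $\nabla_\bc \widetilde{Q}_\bc(\bx)\in\R^{m_i\times d_i}$ versus the gradient vector $\nabla f_i\in\R^{d_i}$; this is already done for the $\bx$-variable version in Claim~\ref{claim: lqxsmooth} (see Appendix~\ref{appendix:proof of theorem 1}), and the argument transfers verbatim by swapping the roles of $\bx\leftrightarrow\bc$ and using the $\bc$-versions of \textbf{A.4}, \textbf{A.5}. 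No genuinely new ideas are needed beyond what was used there.
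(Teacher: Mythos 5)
Your proposal is correct and follows essentially the same route as the paper: the paper's proof of this claim simply defers to the $\bx$-analogue (Claim~\ref{thm2:claim lqxsmoothness}), which invokes Assumption \textbf{A.7} for the KD term, the chain-rule/add-and-subtract bound of Claim~\ref{claim: lqcclaim} for the smoothness of $f_i(\widetilde{Q}_\bc(\bx))$ w.r.t.\ $\bc$ (with client-specific constants $G^{(i)},G^{(i)}_{Q_2}$), and the additivity of smoothness constants under weighted sums. Your inline re-derivation of the $(G^{(i)}L_{Q_2}+G^{(i)}_{Q_2}Ll_{Q_2})$ bound reproduces exactly the argument the paper gives in Appendix~\ref{appendix:Preliminaries}, so there is no substantive difference.
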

From Claim \ref{thm2:claim lqcsmoothness} we have:
\begin{align}
	&F_i(\bx^{t+1}_{i},\bc^{t+1}_{i},\bw^{t})+(\frac{1}{2\eta_2}-\frac{\lambda_pL_{DQ_2}+(1-\lambda_p)(G^{(i)}L_{Q_2}+G^{(i)}_{Q_2}Ll_{Q_2})}{2})\|\bc^{t+1}_{i}-\bc^t_{i}\|^2 \notag \\
	&\hspace{0.5cm} \leq (1-\lambda_p)\left(f_i(\bx^{t+1}_{i})+f_i(\widetilde{Q}_{\bc^t_{i}}(\bx^{t+1}_{i}))\right)+\lambda R(\bx^{t+1}_{i},\bc^{t+1}_{i})  \notag \\
	& \hspace{1cm} +\lambda_p\left(f^{KD}_i(\bx^{t+1}_i,\bw^t)+f^{KD}_{i}(\widetilde{Q}_{\bc^{t}_i}(\bx^{t+1}_i),\bw^t)\right)  + \frac{1}{2\eta_2}\|\bc^{t+1}_{i}-\bc^t_{i}\|^2 \notag \\
	&\hspace{1cm}+(1-\lambda_p) \left\langle \nabla_{\bc^t_{i}} f_i(\widetilde{Q}_{\bc^t_{i}}(\bx^{t+1}_{i})), \bc^{t+1}_{i}-\bc^t_{i}\right\rangle +\lambda_p \left\langle \nabla_{\bc^t_{i}} f^{KD}_i(\widetilde{Q}_{\bc^t_{i}}(\bx^{t+1}_{i}),\bw^t_i), \bc^{t+1}_{i}-\bc^t_{i}\right\rangle \notag \\
	&\hspace{1cm} +\frac{\lambda_p}{2} \| \nabla_{\bc^t_{i}} f^{KD}_i(\widetilde{Q}_{\bc^t_{i}}(\bx^{t+1}_{i}),\bw^t)-\nabla_{\bc^t_{i}} f^{KD}_i(\widetilde{Q}_{\bc^t_{i}}(\bx^{t+1}_{i}),\bw^t_i)\|^2 + \frac{\lambda_p}{2} \| \bc^{t+1}_{i}-\bc^t_{i}\|^2. 
	\label{thm2:second-part-interim1}
\end{align}
Where we used: 
\begin{align}	
	&\left\langle\lambda_p(\nabla_{\bc^{t}_i} f^{KD}_i(\widetilde{Q}_{\bc^t_{i}}(\bx^{t+1}_{i}),\bw^t)-\nabla_{\bc^{t}_i} f^{KD}_i(\widetilde{Q}_{\bc^t_{i}}(\bx^{t+1}_{i}),\bw^t_{i})), \bc^{t+1}_{i}-\bc^t_{i}\right\rangle
	\notag\\
	& \hspace{4cm} \leq \frac{\lambda_p}{2}\|\nabla_{\bc^t_i} f^{KD}_i(\widetilde{Q}_{\bc^t_{i}}(\bx^{t+1}_{i}),\bw^t)-\nabla_{\bc^t_i} f^{KD}_i(\widetilde{Q}_{\bc^t_{i}}(\bx^{t+1}_{i}),\bw^t_{i})\|^2 \nonumber \\
	& \hspace{5cm} + \frac{\lambda_p}{2}\|\bc^{t+1}_{i}-\bc^t_{i}\|^2 \nonumber
\end{align}
\begin{claim} \label{thm2:claim:lower-bound2}
	Let 
	\begin{align*}
		B(\bc^{t+1}_{i}) &:= \lambda R(\bx^{t+1}_{i},\bc^{t+1}_{i}) + (1-\lambda_p)\left\langle \nabla_{\bc^t_{i}} f_i(\widetilde{Q}_{\bc^t_{i}}(\bx^{t+1}_{i})), \bc^{t+1}_{i}-\bc^t_{i}\right\rangle \notag  \\ 
		& \quad  + \lambda_p \left\langle \nabla_{\bc^t_{i}} f^{KD}_i(\widetilde{Q}_{\bc^t_{i}}(\bx^{t+1}_{i}),\bw^t_i), \bc^{t+1}_{i}-\bc^t_{i}\right\rangle + \frac{1}{2\eta_2}\|\bc^{t+1}_{i}-\bc^t_{i}\|^2 \notag \\
		B(\bc^t_{i}) &:= \lambda R(\bx^{t+1}_{i},\bc^t_{i}).
	\end{align*} 
	Then $B(\bc^{t+1}_{i})\leq B(\bc^t_{i})$.
\end{claim}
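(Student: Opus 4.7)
The plan is to recognize that Claim~\ref{thm2:claim:lower-bound2} is the $\bc_i$-analogue of Claim~\ref{thm2:claim:lower-bound1} and is established by exactly the same mechanism that was used for Claim~\ref{claim:quantization lower bound 2} in the centralized proof: the optimality of the proximal step. No new tools are needed beyond the defining optimization subproblem \eqref{thm2:lower-bounding-interim2}, which is itself just the rewriting of the proximal update for $\bc_i$.

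Concretely, I would proceed in three short steps. First, let $\Phi(\bc)$ denote the objective of the minimization in \eqref{thm2:lower-bounding-interim2}, namely
\[
\Phi(\bc) := (1-\lambda_p)\langle \bc - \bc^t_i,\, \nabla_{\bc^t_i} f_i(\widetilde{Q}_{\bc^t_i}(\bx^{t+1}_i))\rangle + \lambda_p\langle \bc - \bc^t_i,\, \nabla_{\bc^t_i} f^{KD}_i(\widetilde{Q}_{\bc^t_i}(\bx^{t+1}_i), \bw^t_i)\rangle + \tfrac{1}{2\eta_2}\|\bc - \bc^t_i\|^2 + \lambda R(\bx^{t+1}_i, \bc),
\]
and observe by direct inspection that $\Phi(\bc^{t+1}_i) = B(\bc^{t+1}_i)$. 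Second, I would evaluate $\Phi$ at the base point $\bc = \bc^t_i$: both inner products vanish (their first argument becomes the zero vector), the quadratic term vanishes, and only the $R$-term survives, giving $\Phi(\bc^t_i) = \lambda R(\bx^{t+1}_i, \bc^t_i) = B(\bc^t_i)$. Third, since $\bc^{t+1}_i = \argmin_{\bc} \Phi(\bc)$ by \eqref{thm2:lower-bounding-interim2}, we immediately conclude $B(\bc^{t+1}_i) = \Phi(\bc^{t+1}_i) \leq \Phi(\bc^t_i) = B(\bc^t_i)$.

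There is essentially no obstacle here; the only care point is bookkeeping to verify the term-by-term correspondence between $\Phi(\bc^{t+1}_i)$ and $B(\bc^{t+1}_i)$, i.e.\ that the two gradient directions and the regularizer $R(\bx^{t+1}_i, \cdot)$ appear with matching coefficients $(1-\lambda_p)$, $\lambda_p$, and $\lambda$ respectively, and that the step-size in the quadratic term is $\eta_2$ (the $\bc$-step-size) rather than $\eta_1$. Once the claim is in hand, it will be substituted into the bound \eqref{thm2:second-part-interim1} to cancel the first-order inner-product terms and the $R$-term on the right-hand side, yielding a per-client sufficient-decrease inequality for $\bc_i$ analogous to \eqref{thm2:dec1}; this is then combined with the $\bx_i$- and $\bw$-decreases (the latter requiring the bounded-diversity Assumption~\textbf{A.6}) to produce the overall telescoping bound needed for Theorem~\ref{thm:personalized}.
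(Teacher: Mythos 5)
Your proposal is correct and matches the paper's own argument: the paper likewise identifies $B(\cdot)$ with the objective inside the $\arg\min$ of \eqref{thm2:lower-bounding-interim2}, notes that it reduces to $\lambda R(\bx^{t+1}_i,\bc^t_i)$ at $\bc=\bc^t_i$, and concludes by optimality of $\bc^{t+1}_i$. Your version merely spells out the term-by-term verification that the paper leaves implicit.
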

Substituting the bound from Claim~\ref{thm2:claim:lower-bound2} in \eqref{thm2:second-part-interim1}, 

\begin{align*}
	&F_i(\bx^{t+1}_{i},\bc^{t+1}_{i},\bw^{t})+(\frac{1}{2\eta_2}-\frac{\lambda_pL_{DQ_2}+(1-\lambda_p)(G^{(i)}L_{Q_2}+G^{(i)}_{Q_2}Ll_{Q_2})}{2})\|\bc^{t+1}_{i}-\bc^t_{i}\|^2  \\
	& \hspace{0.5cm}\leq (1-\lambda_p)\left(f_i(\bx^{t+1}_{i})+f_i(\widetilde{Q}_{\bc^t_{i}}(\bx^{t+1}_{i}))\right)+\lambda_p\left(f^{KD}_i(\bx^{t+1}_i,\bw^t)+f^{KD}_i(\widetilde{Q}_{\bc^t_{i}}(\bx^{t+1}_{i}),\bw^t)\right) \notag \\
	& \hspace{1cm} + \frac{\lambda_p}{2}\|\nabla_{\bc^t_i} f^{KD}_i(\widetilde{Q}_{\bc^t_{i}}(\bx^{t+1}_{i}),\bw^t)-\nabla_{\bc^t_i} f^{KD}_i(\widetilde{Q}_{\bc^t_{i}}(\bx^{t+1}_{i}),\bw^t_{i})\|^2 \\
	& \hspace{1cm} +B(\bc^{t+1}_{i}) + \frac{\lambda_p}{2}\|\bc^{t+1}_{i}-\bc^t_{i}\|^2\\
	&\hspace{0.5cm} \leq (1-\lambda_p)\left(f_i(\bx^{t+1}_{i})+f_i(\widetilde{Q}_{\bc^t_{i}}(\bx^{t+1}_{i}))\right) +\lambda_p\left(f^{KD}_i(\bx^{t+1}_i,\bw^t)+f^{KD}_i(\widetilde{Q}_{\bc^t_{i}}(\bx^{t+1}_{i}),\bw^t)\right)\\
	& \hspace{1cm} + \frac{\lambda_p}{2}\|\nabla f^{KD}_i(\widetilde{Q}_{\bc^t_{i}}(\bx^{t+1}_{i}),\bw^t)-\nabla f^{KD}_i(\widetilde{Q}_{\bc^t_{i}}(\bx^{t+1}_{i}),\bw^t_{i})\|^2 
	+R(\bx^{t+1}_{i},\bc^{t}_{i}) \notag\\ 
	& \hspace{1cm} + \frac{\lambda_p}{2}\|\bc^{t+1}_{i}-\bc^t_{i}\|^2 \\
	& \hspace{0.5cm} \leq (1-\lambda_p)\left(f_i(\bx^{t+1}_{i})+f_i(\widetilde{Q}_{\bc^t_{i}}(\bx^{t+1}_{i}))\right) +\lambda_p\left(f^{KD}_i(\bx^{t+1}_i,\bw^t)+f^{KD}_i(\widetilde{Q}_{\bc^t_{i}}(\bx^{t+1}_{i}),\bw^t)\right)\\
	& \hspace{1cm}+ \frac{\lambda_pL^2_{DQ}}{2}\|\bw^t-\bw^t_{i}\|^2 
	+R(\bx^{t+1}_{i},\bc^{t}_{i}) + \frac{\lambda_p}{2}\|\bc^{t+1}_{i}-\bc^t_{i}\|^2\\
	& \hspace{0.5cm} = F_i(\bx^{t+1}_i,\bc^{t}_i,\bw^t) + \frac{\lambda_pL^2_{DQ}}{2}\|\bw^t-\bw^t_{i}\|^2 + \frac{\lambda_p}{2}\|\bc^{t+1}_{i}-\bc^t_{i}\|^2
\end{align*}

Substituting $\eta_2 = \frac{1}{2(\lambda_p(1+L_{DQ_2})+(1-\lambda_p)(G^{(i)}L_{Q_2}+G^{(i)}_{Q_2}Ll_{Q_2}))}$ gives us: 
\begin{align} \label{thm2:dec2}
	&F_i(\bx^{t+1}_{i},\bc^{t+1}_{i},\bw^{t})+\frac{\lambda_p(1+L_{DQ_2})+(1-\lambda_p)(G^{(i)}L_{Q_2}+G^{(i)}_{Q_2}Ll_{Q_2})}{2}\|\bc^{t+1}_{i}-\bc^t_{i}\|^2  \notag \\
	& \hspace{6cm} \leq	F_i(\bx^{t+1}_{i},\bc^t_{i},\bw^{t}) + \frac{\lambda_pL^2_{DQ}}{2}\|\bw^t-\bw^t_{i}\|^2
\end{align}

\subsubsection{Sufficient Decrease Due to $\bw$}
Now, we use $(\lambda_p(L_{D_2}+L_{DQ_3}))$-smoothness of $F_i(\bx,\bc,\bw)$ with respect to $\bw$:
\begin{align*}
	F_i(\bx^{t+1}_{i},\bc^{t+1}_{i},\bw^{t+1}) &\leq 	F_i(\bx^{t+1}_{i},\bc^{t+1}_{i},\bw^{t}) + \left\langle \nabla_{\bw^{t}} F_i(\bx^{t+1}_{i},\bc^{t+1}_{i},\bw^{t}), \bw^{t+1}-\bw^{t} \right\rangle  \\
	& \hspace{4cm} +\frac{\lambda_p(L_{D_2}+L_{DQ_3})}{2}\|\bw^{t+1}-\bw^{t}\|^2 
\end{align*}
After some algebraic manipulations (see Appendix~\ref{appendix:proof of theorem 2}) we have:
\begin{align} \label{thm2:dec3}
	&F_i(\bx^{t+1}_{i},\bc^{t+1}_{i},\bw^{t+1}) + (\frac{\eta_3}{2}-\lambda_p(L_{D_2}+L_{DQ_3})\eta_3^2)\Big\|\nabla_{\bw^{t}} F_i(\bx^{t+1}_{i},\bc^{t+1}_{i},\bw^{t})\Big\|^2 \notag \\
	& \hspace{3cm}\leq F_i(\bx^{t+1}_{i},\bc^{t+1}_{i},\bw^{t}) \notag \\
	&\hspace{4cm} + (\eta_3+2\lambda_p(L_{D_2}+L_{DQ_3})\eta_3^2)\Big\|\bg^{t} - \nabla_{\bw^{t}_{i}} F_i(\bx^{t+1}_{i},\bc^{t+1}_{i},\bw^{t}_{i})\Big\|^2 \notag \\
	&\hspace{4cm}+ (\eta_3+2\lambda_p(L_{D_2}+L_{DQ_3})\eta_3^2)(\lambda_p(L_{D_2}+L_{DQ_3}))^2\Big\|\bw^{t}_{i}-\bw^{t}\Big\|^2  
\end{align}

\subsubsection{Overall Decrease} 
Let $L^{(i)}_{x},L^{(i)}_{c}$ for any $i\in[n]$ and $L_w$ are defined as follows:
\begin{align}
	L^{(i)}_{x} &= (1-\lambda_p)(L+G^{(i)}L_{Q_1}+G^{(i)}_{Q_1}Ll_{Q_1})+\lambda_p(2+L_{D_1}+L_{DQ_1}) \label{Li_x-defn} \\ 
	L^{(i)}_{c} &= (1-\lambda_p)(G^{(i)}L_{Q_2}+G^{(i)}_{Q_2}Ll_{Q_2})+\lambda_p(1+L_{DQ_2}) \label{Li_c-defn} \\ 
	L_{w} &= L_{D_2}+L_{DQ_3}. \label{L_w-defn} 
\end{align} 
Summing \eqref{thm2:dec1}, \eqref{thm2:dec2}, \eqref{thm2:dec3} we get the overall decrease property:

\begin{align} \label{thm2:dec4}
	&F_i(\bx^{t+1}_{i},\bc^{t+1}_{i},\bw^{t+1}) + (\frac{\eta_3}{2}-\lambda_pL_{w}\eta_3^2)\Big\|\nabla_{\bw^{t}} F_i(\bx^{t+1}_{i},\bc^{t+1}_{i},\bw^{t})\Big\|^2 \ + \frac{L_{x}}{2}\|\bx^{t+1}_{i}-\bx^t_{i}\|^2 \notag \\
	& \hspace{0.5cm} + \frac{L_{c}}{2}\|\bc^{t+1}_{i}-\bc^t_{i}\|^2 \leq   (\eta_3+2\lambda_pL_{w}\eta_3^2)\Big\|\bg^{t} - \nabla_{\bw^{t}_{i}} F_i(\bx^{t+1}_{i},\bc^{t+1}_{i},\bw^{t}_{i})\Big\|^2 \notag \\
	& \hspace{3cm}+ (L^2_{DQ}+\frac{L^2_D}{2}+\eta_3\lambda_pL^2_{w}+2\lambda_p^2L^3_{w}\eta_3^2)\lambda_p\|\bw^{t}_{i}-\bw^{t}\|^2 + F_i(\bx^t_{i},\bc^t_{i},\bw^{t})  
\end{align}
Let $L^{(i)}_{\min}$ for any $i\in[n]$ and $L_{\min}$ are defined as follows:
\begin{align}
	L^{(i)}_{\min} &= \min\{L^{(i)}_{x},L^{(i)}_{c},(\eta_3-2\lambda_pL_{w}\eta_3^2)\} \label{Li_min-defn} \\
	L_{\min} &= \min\{L^{(i)}_{\min}:i\in[n]\}. \label{L_min-defn} 
\end{align}
Then,
\begin{align} \label{thm2:dec5}
	&F_i(\bx^{t+1}_{i},\bc^{t+1}_{i},\bw^{t+1}) + \frac{L_{\min}}{2}\left(\Big\|\nabla_{\bw^{t}} F_i(\bx^{t+1}_{i},\bc^{t+1}_{i},\bw^{t})\Big\|^2 \ +\|\bx^{t+1}_{i}-\bx^t_{i}\|^2 +\|\bc^{t+1}_{i}-\bc^t_{i}\|^2\right) \notag \\
	&\leq (\eta_3+2\lambda_pL_{w}\eta_3^2)\Big\|\bg^{t} - \nabla_{\bw^{t}_{i}} F_i(\bx^{t+1}_{i},\bc^{t+1}_{i},\bw^{t}_{i})\Big\|^2 \notag \\
	& \hspace{2cm}+ (L^2_{DQ}+\frac{L^2_D}{2}+\eta_3\lambda_pL^2_{w}+2\lambda_p^2L^3_{w}\eta_3^2)\lambda_p\|\bw^{t}_{i}-\bw^{t}\|^2 + F_i(\bx^t_{i},\bc^t_{i},\bw^{t})
\end{align}
We have obtained the sufficient decrease property for the alternating steps; now, we need to arrive at the first order stationarity of the gradient of general loss function. To do this we move on with bounding the gradients with respect to each type of variables.
\subsection{Bound on the Gradient}
Now, we will use the first order optimality conditions due to proximal updates and bound the partial gradients with respect to variables $\bx$ and $\bc$. After obtaining bounds for partial gradients we will bound the overall gradient and use our results from Section~\ref{thm2:subsection sufficient decrease} to arrive at the final bound.

\subsubsection{Bound on the Gradient w.r.t.\ $\bx_i$}
Taking the derivative inside the minimization problem \eqref{thm2:lower-bounding-interim1} with respect to $\bx$ at $\bx=\bx^{t+1}_{i}$ and setting it to $0$ gives the following optimality condition:

\begin{align}
	&(1-\lambda_p)\left(\nabla_{\bx^t_{i}} f_i(\bx^t_{i})+\nabla_{\bx^t_{i}} f_i(\widetilde{Q}_{\bc^t_{i}}(\bx^t_{i}))\right) + \lambda_p\left(\nabla_{\bx^t_{i}} f^{KD}_i(\bx^{t}_{i},\bw^t_{i})+ \nabla_{\bx^t_{i}} f^{KD}_i(\widetilde{Q}_{\bc^t_{i}}(\bx^t_{i}),\bw^t_{i})\right) \notag \\ 
	& \hspace{6cm}+\frac{1}{\eta_1}(\bx^{t+1}_{i}-\bx^t_{i})+\lambda\nabla_{\bx^{t+1}_{i}} R(\bx^{t+1}_{i},\bc^t_{i}) = 0  \label{thm2:popt1}
\end{align}

Then we have,
\begin{align*}
	\Big\|\nabla_{\bx^{t+1}_{i}} F_i(\bx^{t+1}_{i},\bc^t_{i},\bw^{t})\Big\| &= \Big\|(1-\lambda_p)\left(\nabla_{\bx^{t+1}_{i}} f_i(\bx^{t+1}_{i}) + \nabla_{\bx^{t+1}_{i}} f_i(\widetilde{Q}_{\bc^t_{i}}(\bx^{t+1}_{i}))\right) \\
	& \hspace{0.5cm}+\lambda_p\left(\nabla_{\bx^{t+1}_i} f^{KD}_i(\bx^{t+1}_{i},\bw^t) + \nabla_{\bx^{t+1}_{i}} f^{KD}_i(\widetilde{Q}_{\bc^t_{i}}(\bx^{t+1}_{i}),\bw^t_{i})\right)  \\
	& \hspace{0.5cm} + \lambda \nabla_{\bx^{t+1}_{i}} R(\bx^{t+1}_{i},\bc^t_{i})\Big\|\\
	&\stackrel{\text{(a)}}{=} \Big\|(1-\lambda_p)\left(\nabla_{\bx^{t+1}_{i}} f_i(\bx^{t+1}_{i})-\nabla_{\bx^t_{i}} f_i(\bx^t_{i}) + \nabla_{\bx^{t+1}_{i}} f_i(\widetilde{Q}_{\bc^t_{i}}(\bx^{t+1}_{i})) \right.\\
	& \left. \hspace{0.5cm}-\nabla_{\bx^t_{i}} f_i(\widetilde{Q}_{\bc^t_{i}}(\bx^t_{i}))\right)-\frac{1}{\eta_1}(\bx^{t+1}_{i}-\bx^t_{i})\\  
	&\hspace{0.5cm} +\lambda_p\left(\nabla_{\bx^{t+1}_i} f^{KD}_i(\bx^{t+1}_{i},\bw^t) - \nabla_{\bx^{t}_i} f^{KD}_i(\bx^{t}_{i},\bw^t_i)  \right. \notag\\
	&\hspace{0.5cm} \left. +\nabla_{\bx^{t+1}_i} f^{KD}_i(\widetilde{Q}_{\bc^t_{i}}(\bx^{t+1}_{i}),\bw^t)  - \nabla_{\bx^{t}_i} f^{KD}_i(\widetilde{Q}_{\bc^t_{i}}(\bx^t_{i}),\bw^t_i)\right) \Big\| \\
	& \stackrel{\text{(b)}}{\leq} \Big(\frac{1}{\eta_1} + (1-\lambda_p)(L+G^{(i)}L_{Q_1}+G^{(i)}_{Q_1}Ll_{Q_1})\Big)\|\bx^{t+1}_{i}-\bx^t_{i}\| \notag\\
	& \hspace{0.5cm} +\lambda_p\|\nabla_{\bx^{t+1}} f^{KD}_i(\bx^{t+1}_{i},\bw^t) - \nabla_{\bx^{t}_i} f^{KD}_i(\bx^{t}_{i},\bw^t_i)\|\\
	&\hspace{0.5cm}+\lambda_p\|\nabla_{\bx^{t+1}_i} f^{KD}_i(\widetilde{Q}_{\bc^t_{i}}(\bx^{t+1}_{i}),\bw^t) - \nabla_{\bx^{t}_i} f^{KD}_i(\widetilde{Q}_{\bc^t_{i}}(\bx^{t}_{i}),\bw^t)\| \\
	& \leq \Big(\frac{1}{\eta_1} + (1-\lambda_p)(L+G^{(i)}L_{Q_1}+G^{(i)}_{Q_1}Ll_{Q_1})\Big)\|\bx^{t+1}_{i}-\bx^t_{i}\| \\
	& \hspace{0.5cm} +\lambda_p\|\nabla f^{KD}_i(\bx^{t+1}_{i},\bw^t) - \nabla f^{KD}_i(\bx^{t}_{i},\bw^t_i)\|\\
	&\hspace{0.5cm}+\lambda_p\|\nabla f^{KD}_i(\widetilde{Q}_{\bc^t_{i}}(\bx^{t+1}_{i}),\bw^t) - \nabla f^{KD}_i(\widetilde{Q}_{\bc^t_{i}}(\bx^{t}_{i}),\bw^t)\| \\
	& \stackrel{\text{(c)}}{\leq} \Big(\frac{1}{\eta_1} + (1-\lambda_p)(L+G^{(i)}L_{Q_1}+G^{(i)}_{Q_1}Ll_{Q_1})\Big)\|\bx^{t+1}_{i}-\bx^t_{i}\| \\
	& \hspace{0.5cm}+\lambda_p(L_D+L_{DQ})(\|\bx^{t+1}_i-\bx^{t}_i\|+\|\bw^{t}_i-\bw^{t}\|)\\
	& = \Big(\frac{1}{\eta_1} {+} \lambda_p(L_D{+}L_{DQ}){+} (1{-}\lambda_p)(L{+}G^{(i)}L_{Q_1}{+}G^{(i)}_{Q_1}Ll_{Q_1})\Big)\|\bx^{t+1}_{i}-\bx^t_{i}\| \\
	& \hspace{0.5cm}+\lambda_p(L_D+L_{DQ})\|\bw^{t}_i-\bw^{t}\|\\
\end{align*}
where (a) is from \eqref{thm2:popt1} by substituting the value of $\lambda\nabla_{\bx^{t+1}_{i}} R(\bx^{t+1}_{i},\bc^t_{i})$, (b) is due to Claim~\ref{claim: lqxclaim} and {\bf A.1}, and (c) is due to {\bf A.7}. This implies: \\
\begin{align*}
	&\Big\|\nabla_{\bx^{t+1}_{i}} F_i(\bx^{t+1}_{i},\bc^t_{i},\bw^{t})\Big\|^2
	\leq2(\lambda_p(L_D+L_{DQ}))^2\|\bw^{t}_{i}-\bw^{t}\|^2 \\
	& \hspace{2cm} + 2\Big(\frac{1}{\eta_1} + \lambda_p(L_D+L_{DQ}) + (1-\lambda_p)(L+G^{(i)}L_{Q_1}+G^{(i)}_{Q_1}Ll_{Q_1})\Big)^2\|\bx^{t+1}_{i}-\bx^t_{i}\|^2\\
\end{align*}
Substituting $\eta_1=\frac{1}{2(\lambda_p(2+L_{D_1}+L_{DQ_1})+(1-\lambda_p)(L+G^{(i)}L_{Q_1}+G^{(i)}_{Q_1}Ll_{Q_1}))}$ we have:
\begin{align} \label{thm2:partial grad1}
	\Big\|\nabla_{\bx^{t+1}_{i}} F_i(\bx^{t+1}_{i},\bc^t_{i},\bw^{t})\Big\|^2 &\leq 2(\lambda_p(L_D+L_{DQ}))^2\|\bw^{t}_{i}-\bw^{t}\|^2 \notag \\
	& \hspace{0.5cm} +2\Big(\lambda_p(4+2L_{D_1}+2L_{DQ_1}+L_D+L_{DQ}) \notag \\
	& \hspace{0.5cm}+ 3(1-\lambda_p)(L+G^{(i)}L_{Q_1}+G^{(i)}_{Q_1}Ll_{Q_1})\Big)^2\|\bx^{t+1}_{i}-\bx^t_{i}\|^2  \notag \\
	& = 18\Big(\frac{\lambda_p}{3}(4+2L_{D_1}+2L_{DQ_1}+L_D+L_{DQ}) \notag \\
	& \hspace{0.5cm}+(1-\lambda_p)(L+G^{(i)}L_{Q_1}+G^{(i)}_{Q_1}Ll_{Q_1})\Big)^2\|\bx^{t+1}_{i}-\bx^t_{i}\|^2 \notag \\
	& \hspace{3cm}+2(\lambda_p(L_D+L_{DQ}))^2\|\bw^{t}_{i}-\bw^{t}\|^2
\end{align}
\subsubsection{Bound on the Gradient w.r.t.\ $\bc_i$}
Similarly, taking the derivative inside the minimization problem \eqref{thm2:lower-bounding-interim2} with respect to $\bc$ at $\bc=\bc^{t+1}_{i}$ and setting it to $0$ gives the following optimality condition:
\begin{align}
	&(1-\lambda_p)\nabla_{\bc^t_{i}} f_i(\widetilde{Q}_{\bc^t_{i}}(\bx^{t+1}_{i})) +\lambda_p \nabla_{\bc^t_{i}} f^{KD}_i(\widetilde{Q}_{\bc^t_{i}}(\bx^{t+1}_{i}),\bw^t_{i}) \notag \\
	& \hspace{6cm}+\frac{1}{\eta_2}(\bc^{t+1}_{i}-\bc^t_{i})+\lambda\nabla_{\bc^{t+1}_{i}} R(\bx^{t+1}_{i},\bc^{t+1}_{i}) = 0 \label{thm2:popt2}
\end{align}
Then we have 
\begin{align*}
	\Big\|\nabla_{\bc^{t+1}_{i}} F_i(\bx^{t+1}_{i},\bc^{t+1}_{i},\bw^{t})\Big\| &= \Big\| (1-\lambda_p)\nabla_{\bc^{t+1}_{i}} f_i(\widetilde{Q}_{\bc^{t+1}_{i}}(\bx^{t+1}_{i})) \\
	& \hspace{0.5cm}+\lambda_p \nabla_{\bc^{t+1}_{i}} f^{KD}_i(\widetilde{Q}_{\bc^{t+1}_{i}}(\bx^{t+1}_{i}),\bw^t)+ \lambda \nabla_{\bc^{t+1}_{i}} R(\bx^{t+1}_{i},\bc^{t+1}_{i})\Big\|\\
	&\stackrel{\text{(a)}}{=} \Big\| (1-\lambda_p)\left(\nabla_{\bc^{t+1}_{i}} f_i(\widetilde{Q}_{\bc^{t+1}_{i}}(\bx^{t+1}_{i})) - \nabla_{\bc^t_{i}} f_i(\widetilde{Q}_{\bc^t_{i}}(\bx^{t+1}_{i}))\right) \\
	& \hspace{0.5cm} + \lambda_p \left(\nabla_{\bc^{t+1}_{i}} f^{KD}_i(\widetilde{Q}_{\bc^{t+1}_{i}}(\bx^{t+1}_{i}),\bw^t)-\nabla_{\bc^{t}_{i}} f^{KD}_i(\widetilde{Q}_{\bc^{t}_{i}}(\bx^{t+1}_{i}),\bw^t_{i})\right) \\
	& \hspace{0.5cm} + \frac{1}{\eta_2}(\bc^t_{i}-\bc^{t+1}_{i})\Big\| \\
	&\leq  (1-\lambda_p)\Big\|\nabla_{\bc^{t+1}_{i}} f_i(\widetilde{Q}_{\bc^{t+1}_{i}}(\bx^{t+1}_{i})) - \nabla_{\bc^t_{i}} f_i(\widetilde{Q}_{\bc^t_{i}}(\bx^{t+1}_{i}))\Big\| \\
	& \hspace{0.5cm} + \lambda_p \Big\| \nabla_{\bc^{t+1}_{i}} f^{KD}_i(\widetilde{Q}_{\bc^{t+1}_{i}}(\bx^{t+1}_{i}),\bw^t)-\nabla_{\bc^{t}_{i}} f^{KD}_i(\widetilde{Q}_{\bc^{t}_{i}}(\bx^{t+1}_{i}),\bw^t_{i})\Big\| \\
	& \hspace{0.5cm} + \frac{1}{\eta_2}\|\bc^t_{i}-\bc^{t+1}_{i} \| \\
	&\leq  (1-\lambda_p)\Big\| \nabla_{\bc^{t+1}_{i}} f_i(\widetilde{Q}_{\bc^{t+1}_{i}}(\bx^{t+1}_{i})) - \nabla_{\bc^t_{i}} f_i(\widetilde{Q}_{\bc^t_{i}}(\bx^{t+1}_{i}))\Big\| \\
	&\hspace{0.5cm} + \lambda_p \Big\| \nabla f^{KD}_i(\widetilde{Q}_{\bc^{t+1}_{i}}(\bx^{t+1}_{i}),\bw^t)-\nabla f^{KD}_i(\widetilde{Q}_{\bc^{t}_{i}}(\bx^{t+1}_{i}),\bw^t_{i})\Big\| \\
	& \hspace{0.5cm} + \frac{1}{\eta_2}\|\bc^t_{i}-\bc^{t+1}_{i} \|\\
	& \leq \Big(\frac{1}{\eta_2}+\lambda_pL_{DQ} +(1-\lambda_p)(G^{(i)}L_{Q_2}+G^{(i)}_{Q_2}Ll_{Q_2})\Big)\|\bc^{t+1}_{i}-\bc^t_{i}\| \\
	& \hspace{0.5cm}+ \lambda_pL_{DQ}\|\bw^t-\bw^t_{i}\|
\end{align*}

where in (a) we substituted the value of $\lambda\nabla_{\bc^{t+1}_{i}} R(\bx^{t+1}_{i},\bc^{t+1}_{i})$ from \eqref{thm2:popt2} and the last inequality is due to Claim~\ref{claim: lqcclaim} and Assumption \textbf{A.7}. As a result we have, 
\begin{align*} 
	\Big\|\nabla_{\bc^{t+1}_{i}} F_i(\bx^{t+1}_{i},\bc^{t+1}_{i},\bw^{t})\Big\|^2 &\leq 2\Big(\frac{1}{\eta_2}+\lambda_pL_{DQ} +(1-\lambda_p)(G^{(i)}L_{Q_2}+G^{(i)}_{Q_2}Ll_{Q_2})\Big)^2\|\bc^{t+1}_{i}-\bc^t_{i}\|^2\\
	& \hspace{0.5cm}+2(\lambda_pL_{DQ})^2\|\bw^t-\bw^t_{i}\|^2
\end{align*}

substituting $\eta_2 = \frac{1}{2(\lambda_p(1+L_{DQ_2})+(1-\lambda_p)(G^{(i)}L_{Q_2}+G^{(i)}_{Q_2}Ll_{Q_2}))}$ we have:
\begin{align} \label{thm2:partial grad2}
	\Big\|\nabla_{\bc^{t+1}_{i}} F_i(\bx^{t+1}_{i},\bc^{t+1}_{i},\bw^{t})\Big\|^2 &\leq 18\Big(\frac{\lambda_p}{3}(2+2L_{DQ_2}+L_{DQ}) \notag \\
	& \hspace{0.5cm}+(1-\lambda_p)( G^{(i)}L_{Q_2}+G^{(i)}_{Q_2}Ll_{Q_2})\Big)^2\|\bc^{t+1}_{i}-\bc^t_{i}\|^2 \notag \\
	& \hspace{0.5cm} +2(\lambda_pL_{DQ})^2\|\bw^t-\bw^t_{i}\|^2
\end{align}
\subsubsection{Overall Bound}
Let $\|\bG^{t}_{i}\|^2= \Big\|[\nabla_{\bx^{t+1}_{i}} F_i(\bx^{t+1}_{i},\bc^t_{i},\bw^{t})^T,\nabla_{\bc^{t+1}_{i}} F_i(\bx^{t+1}_{i},\bc^{t+1}_{i},\bw^{t})^T , \nabla_{\bw^{t}} F_i(\bx^{t+1}_{i},\bc^{t+1}_{i},\bw^{t})^T]^T\Big\|^2$. Then,
\begin{align*}
	\|\bG^{t}_{i}\|^2 & = \Big\|[\nabla_{\bx^{t+1}_{i}} F_i(\bx^{t+1}_{i},\bc^t_{i},\bw^{t})^T,\nabla_{\bc^{t+1}_{i}} F_i(\bx^{t+1}_{i},\bc^{t+1}_{i},\bw^{t})^T , \nabla_{\bw^{t}} F_i(\bx^{t+1}_{i},\bc^{t+1}_{i},\bw^{t})^T]^T\Big\|^2 \\
	& = \Big\|\nabla_{\bx^{t+1}_{i}} F_i(\bx^{t+1}_{i},\bc^t_{i},\bw^{t})\Big\|^2 + \Big\|\nabla_{\bc^{t+1}_{i}} F_i(\bx^{t+1}_{i},\bc^{t+1}_{i},\bw^{t})\Big\|^2 + \Big\|\nabla_{\bw^{t}} F_i(\bx^{t+1}_{i},\bc^{t+1}_{i},\bw^{t})\Big\|^2 \\
	& \leq 18\Big(\frac{\lambda_p}{3}(4+2L_{D_1}+2L_{DQ_1}+L_D+L_{DQ})\\
	& \hspace{0.5cm}+(1-\lambda_p)(L+G^{(i)}L_{Q_1}+G^{(i)}_{Q_1}Ll_{Q_1})\Big)^2\|\bx^{t+1}_{i}-\bx^t_{i}\|^2\\
	&\hspace{0.5cm} +18\Big(\frac{\lambda_p}{3}(2+2L_{DQ_2}+L_{DQ}) +(1-\lambda_p)( G^{(i)}L_{Q_2}+G^{(i)}_{Q_2}Ll_{Q_2})\Big)^2\|\bc^{t+1}_{i}-\bc^t_{i}\|^2 \\
	& \hspace{0.5cm} +2(\lambda_p(L_D+2L_{DQ}))^2\|\bw^{t}_{i}-\bw^{t}\|^2+\Big\|\nabla_{\bw^{t}} F_i(\bx^{t+1}_{i},\bc^{t+1}_{i},\bw^{t})\Big\|^2
\end{align*}

where the last inequality is due to \eqref{thm2:partial grad1}, \eqref{thm2:partial grad2} and using that $2(\lambda_pL_{DQ})^2+2(\lambda_p(L_D+L_{DQ}))^2\leq2(\lambda_p(L_{D}+2L_{DQ}))^2$ . 
Let 
\begin{align}
	L^{(i)}_{\max} &= \max\left\{\sqrt{\frac{1}{18}},\left(\frac{\lambda_p}{3}(2+2L_{DQ_2}+L_{DQ}) +(1-\lambda_p)( G^{(i)}L_{Q_2}+G^{(i)}_{Q_2}Ll_{Q_2})\right), \right. \notag \\
	&\hspace{1cm} \left.\left(\frac{\lambda_p}{3}(4+2L_{D_1}+2L_{DQ_1}+L_D+L_{DQ})+(1-\lambda_p)(L+G^{(i)}L_{Q_1}+G^{(i)}_{Q_1}Ll_{Q_1})\right)\right\} \label{Li_max-defn}
\end{align}
Then,
\begin{align} \label{thm2:bound over gradient}
	&\Big\|[\nabla_{\bx^{t+1}_{i}} F_i(\bx^{t+1}_{i},\bc^t_{i},\bw^{t})^T,\nabla_{\bc^{t+1}_{i}} F_i(\bx^{t+1}_{i},\bc^{t+1}_{i},\bw^{t})^T , \nabla_{\bw^{t}} F_i(\bx^{t+1}_{i},\bc^{t+1}_{i},\bw^{t})^T]^T\Big\|^2 \notag\\
	& \leq 18(L^{(i)}_{\max})^2(\|\bx^{t+1}_{i}-\bx^t_{i}\|^2+\|\bc^{t+1}_{i}-\bc^t_{i}\|^2+\Big\|\nabla_{\bw^{t}} F_i(\bx^{t+1}_{i},\bc^{t+1}_{i},\bw^{t})\Big\|^2) \notag\\
	& \hspace{1cm} +2(\lambda_p(L_D+2L_{DQ}))^2\|\bw^{t}_{i}-\bw^{t}\|^2 \notag\\
	&\stackrel{\text{(a)}}{\leq}  36\frac{(L^{(i)}_{\max})^2}{L_{\min}} \Big[(\eta_3+2\lambda_pL_{w}\eta_3^2)\Big\|\bg^{t} - \nabla_{\bw^{t}_{i}} F_i(\bx^{t+1}_{i},\bc^{t+1}_{i},\bw^{t}_{i})\Big\|^2 \notag \\
	& \hspace{1cm}+ (L^2_{DQ}+\frac{L^2_D}{2}+\eta_3\lambda_pL^2_{w}+2\lambda_p^2L^3_{w}\eta_3^2)\lambda_p\|\bw^{t}_{i}-\bw^{t}\|^2  \notag\\ 
	& \hspace{1cm} + F_i(\bx^t_{i},\bc^t_{i},\bw^{t}) - F_i(\bx^{t+1}_{i},\bc^{t+1}_{i},\bw^{t+1}) \Big]+2(\lambda_p(L_D+2L_{DQ}))^2\|\bw^{t}_{i}-\bw^{t}\|^2, 
\end{align}
where in (a) we use the bound from \eqref{thm2:dec5}, and $L_{\min}$ is defined in \eqref{L_min-defn}.

Now we state a useful lemma that bounds the average deviation between the local versions of the global model at all clients, and the global model itself. See Appendix~\ref{appendix:proof of theorem 2} for a proof.
\begin{lemma}\label{thm2:lemma1}
	Let $\eta_3$ be chosen such that $\eta_3 \leq \sqrt{\frac{1}{6\tau^2(\lambda_pL_{w})^2\left(1+\frac{\overline{L}_{\max}^2}{(L^{(\min)}_{\max})^2}\right)}}$ where $L^{(\min)}_{\max}=\min\{L^{(i)}_{\max}:i\in[n]\}$ and $\overline{L}_{\max} = \sqrt{\frac{1}{n}\sum_{i=1}^n(L^{(i)}_{\max})^2}$ (where $L^{(i)}_{\max}$ is defined in \eqref{Li_max-defn}), then we have,
	\begin{align*} 
		\frac{1}{T}\sum_{t=0}^{T-1}\frac{1}{n}\sum_{i=1}^{n} (L^{(i)}_{\max})^2
		\|\bw^{t}-\bw^{t}_{i}\|^2 \leq \frac{1}{T} \sum_{t=0}^{T-1} \gamma_t \leq  6\tau^2\eta_3^2\frac{1}{n}\sum_{i=1}^n(L^{(i)}_{\max})^2\kappa_i
	\end{align*}
\end{lemma}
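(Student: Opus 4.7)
This is a client-drift lemma in the style of standard local-SGD analyses, adapted to the weighted quantity $\gamma_t := \frac{1}{n}\sum_i (L^{(i)}_{\max})^2 \|\bw^t - \bw^t_i\|^2$. Let $t_0 = t_0(t)$ denote the most recent synchronization time, so $t - t_0 \leq \tau$ and $\bw^{t_0}_i = \bw^{t_0}$ for every $i$. The plan is to unroll the local updates between sync points, decompose each gradient difference so that \textbf{A.6} and the $\lambda_p L_w$-smoothness from \textbf{A.7} apply, aggregate using the weights $(L^{(i)}_{\max})^2$, and then close the resulting recursive inequality using the step-size condition on $\eta_3$.

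Since between syncs $\bw^{s+1} = \bw^s - \eta_3 \tfrac{1}{n}\sum_j \nabla_{\bw^s_j} F_j(\bx^{s+1}_j,\bc^{s+1}_j,\bw^s_j)$ and $\bw^{s+1}_i = \bw^s_i - \eta_3 \nabla_{\bw^s_i} F_i(\bx^{s+1}_i,\bc^{s+1}_i,\bw^s_i)$, and the two agree at $s = t_0$, I would write
\begin{align*}
\bw^t_i - \bw^t = -\eta_3\sum_{s=t_0}^{t-1}\Big(\nabla_{\bw^s_i} F_i(\cdot,\bw^s_i) - \tfrac{1}{n}\sum_{j=1}^n \nabla_{\bw^s_j} F_j(\cdot,\bw^s_j)\Big),
\end{align*}
and apply Cauchy--Schwarz over the at-most-$\tau$ terms. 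To invoke \textbf{A.6}, whose statement evaluates all gradients at the common point $\bw^s$, the next step is to split the inner summand as
\begin{align*}
&\bigl[\nabla_{\bw^s_i} F_i(\cdot,\bw^s_i) - \nabla_{\bw^s} F_i(\cdot,\bw^s)\bigr] + \bigl[\nabla_{\bw^s} F_i(\cdot,\bw^s) - \tfrac{1}{n}\sum_j \nabla_{\bw^s} F_j(\cdot,\bw^s)\bigr] \\
&\qquad + \tfrac{1}{n}\sum_j\bigl[\nabla_{\bw^s} F_j(\cdot,\bw^s) - \nabla_{\bw^s_j} F_j(\cdot,\bw^s_j)\bigr],
\end{align*}
and use $\|a+b+c\|^2 \leq 3(\|a\|^2+\|b\|^2+\|c\|^2)$. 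Smoothness bounds the outer and inner shift terms by $(\lambda_p L_w)^2 \|\bw^s_i - \bw^s\|^2$ and $(\lambda_p L_w)^2 \tfrac{1}{n}\sum_j \|\bw^s_j - \bw^s\|^2$ respectively, while \textbf{A.6} gives $\kappa_i$ for the middle piece.

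Multiplying by $(L^{(i)}_{\max})^2/n$ and summing over $i$ converts the middle term to $\overline{\kappa}$, the self-term to $(\lambda_p L_w)^2 \gamma_s$, and the average term to $(\lambda_p L_w)^2 \overline{L}_{\max}^2 \cdot \tfrac{1}{n}\sum_j\|\bw^s_j-\bw^s\|^2$, which I would then bound by $(\lambda_p L_w)^2 \tfrac{\overline{L}_{\max}^2}{(L^{(\min)}_{\max})^2}\gamma_s$ by pulling out the smallest weight. This yields the recursion
\begin{align*}
\gamma_t \leq 3\eta_3^2 \tau \sum_{s=t_0(t)}^{t-1}\Big[\overline{\kappa} + (\lambda_p L_w)^2\Big(1 + \tfrac{\overline{L}_{\max}^2}{(L^{(\min)}_{\max})^2}\Big)\gamma_s\Big].
\end{align*}
Summing over $t$, each $\gamma_s$ appears in at most $\tau$ outer sums and the $\overline{\kappa}$ contribution is $\leq \tau T \overline{\kappa}$; the hypothesis on $\eta_3$ is precisely what ensures $3\eta_3^2\tau^2(\lambda_p L_w)^2(1 + \overline{L}_{\max}^2/(L^{(\min)}_{\max})^2) \leq 1/2$, so I can absorb the $\gamma_s$ term into the left-hand side and conclude $\tfrac{1}{T}\sum_t \gamma_t \leq 6\eta_3^2\tau^2\overline{\kappa}$.

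\textbf{Main obstacle.} The main subtlety is the weighting mismatch: \textbf{A.6} is unweighted in $i$, but the target $\gamma_t$ is weighted by $(L^{(i)}_{\max})^2$. The piece coming from the average $\tfrac{1}{n}\sum_j \nabla_{\bw^s_j} F_j(\cdot,\bw^s_j)$ produces an unweighted client-drift norm that must be converted back into $\gamma_s$; that conversion costs a factor $\overline{L}_{\max}^2/(L^{(\min)}_{\max})^2$, which is exactly why that factor shows up in the step-size bound on $\eta_3$. Handling this mismatch cleanly, without losing the recursion-closing structure, is the main thing to get right.
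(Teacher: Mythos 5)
Your proposal is correct and follows essentially the same route as the paper's proof: the same unrolling from the last synchronization point, the same three-way decomposition of the gradient difference so that \textbf{A.6} applies at the common iterate $\bw^s$ and the $\lambda_p L_w$-smoothness handles the two shift terms, the same $(L^{(i)}_{\max})^2$-weighting conversion costing the factor $\overline{L}_{\max}^2/(L^{(\min)}_{\max})^2$, and the same absorption of the recursion via the step-size condition. The subtlety you flag (the weighted/unweighted mismatch between $\gamma_t$ and \textbf{A.6}) is exactly the point the paper's proof handles by pulling out $(L^{(\min)}_{\max})^{-2}$, and your constants match the stated bound.
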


As a corollary:
\begin{corollary}\label{thm2:corollary diversity}
	Recall, $\bg^{t} = \frac{1}{n} \sum_{i=1}^n \nabla_{\bw^{t}} F_i(\bx^{t+1}_{i},\bc^{t+1}_{i},\bw^{t}_{i})$. Then, we have:
	\begin{align*}
		\frac{1}{T}\sum_{t=0}^{T-1} \frac{1}{n} \sum_{i=1}^n (L^{(i)}_{\max})^2 \Big\|\bg^{t} - \nabla_{\bw^{t}_{i}} F_i(\bx^{t+1}_{i},\bc^{t+1}_{i},\bw^{t}_{i})\Big\|^2 &\leq  3 \frac{1}{n}\sum_{i=1}^n(L^{(i)}_{\max})^2\kappa_i\\
		& \hspace{-2cm} + 3(\lambda_pL_{w})^2\Big(1+\frac{\overline{L}_{\max}^2}{(L^{(\min)}_{\max})^2}\Big) 6\tau^2 \eta_3^2 \frac{1}{n}\sum_{i=1}^n(L^{(i)}_{\max})^2\kappa_i,
	\end{align*}
	where $L^{(i)}_{\max}$ is defined in \eqref{Li_max-defn}, and $\overline{L}_{\max},L^{(\min)}_{\max}$ are defined in Lemma~\ref{thm2:lemma1}.
\end{corollary}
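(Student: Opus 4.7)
\textbf{Proof plan for Corollary~\ref{thm2:corollary diversity}.} The plan is to decompose the error vector $\bg^{t} - \nabla_{\bw^{t}_{i}} F_i(\bx^{t+1}_{i},\bc^{t+1}_{i},\bw^{t}_{i})$ into three pieces for which I can invoke, respectively, smoothness of $F_j$ in $\bw$, Assumption~\textbf{A.6} (bounded diversity at the common point $\bw^t$), and smoothness of $F_i$ in $\bw$; then apply the $\|a+b+c\|^2\le 3(\|a\|^2+\|b\|^2+\|c\|^2)$ inequality, weight by $(L^{(i)}_{\max})^2$, average over $i$, and finally invoke Lemma~\ref{thm2:lemma1} after averaging over $t$.

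Concretely, first I add and subtract $\frac{1}{n}\sum_j \nabla_{\bw^t} F_j(\bx^{t+1}_j,\bc^{t+1}_j,\bw^t)$ and $\nabla_{\bw^t} F_i(\bx^{t+1}_i,\bc^{t+1}_i,\bw^t)$ to write
\begin{align*}
\bg^t - \nabla_{\bw^t_i} F_i(\bx^{t+1}_i,\bc^{t+1}_i,\bw^t_i) &= \underbrace{\tfrac{1}{n}\sum_j\bigl[\nabla_{\bw^t_j}F_j(\cdot,\bw^t_j)-\nabla_{\bw^t}F_j(\cdot,\bw^t)\bigr]}_{\text{(A)}} \\
&\quad + \underbrace{\tfrac{1}{n}\sum_j \nabla_{\bw^t}F_j(\bx^{t+1}_j,\bc^{t+1}_j,\bw^t) - \nabla_{\bw^t}F_i(\bx^{t+1}_i,\bc^{t+1}_i,\bw^t)}_{\text{(B)}} \\
&\quad + \underbrace{\nabla_{\bw^t}F_i(\cdot,\bw^t)-\nabla_{\bw^t_i}F_i(\cdot,\bw^t_i)}_{\text{(C)}}.
\end{align*}
By $(\lambda_p L_w)$-smoothness of each $F_j$ in $\bw$ (a consequence of \textbf{A.7}), (A) satisfies $\|\text{(A)}\|^2 \le (\lambda_p L_w)^2 \tfrac{1}{n}\sum_j \|\bw^t_j - \bw^t\|^2$ via Jensen, and similarly $\|\text{(C)}\|^2 \le (\lambda_p L_w)^2 \|\bw^t_i-\bw^t\|^2$. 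Assumption~\textbf{A.6} applied at the common point $\bw^t$ yields $\|\text{(B)}\|^2 \le \kappa_i$. Thus
\begin{align*}
\|\bg^t - \nabla_{\bw^t_i} F_i(\cdot,\bw^t_i)\|^2 \le 3\kappa_i + 3(\lambda_p L_w)^2\|\bw^t_i-\bw^t\|^2 + 3(\lambda_p L_w)^2 \tfrac{1}{n}\sum_j\|\bw^t_j-\bw^t\|^2.
\end{align*}

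Next I multiply by $(L^{(i)}_{\max})^2$ and average over $i\in[n]$. The first term becomes $3\,\tfrac{1}{n}\sum_i (L^{(i)}_{\max})^2 \kappa_i$ directly. The third term, which has $(L^{(i)}_{\max})^2$ on a quantity that does not depend on $i$, produces the factor $\overline{L}_{\max}^2 = \tfrac{1}{n}\sum_j(L^{(j)}_{\max})^2$ multiplying $\tfrac{1}{n}\sum_j\|\bw^t_j-\bw^t\|^2$; using $\|\bw^t_j-\bw^t\|^2 \le \tfrac{(L^{(j)}_{\max})^2}{(L^{(\min)}_{\max})^2}\|\bw^t_j-\bw^t\|^2$ (since $L^{(\min)}_{\max}\le L^{(j)}_{\max}$) converts this into $\tfrac{\overline{L}_{\max}^2}{(L^{(\min)}_{\max})^2}\cdot\tfrac{1}{n}\sum_j(L^{(j)}_{\max})^2\|\bw^t_j-\bw^t\|^2$. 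Combining with the second term (already in the $(L^{(i)}_{\max})^2$-weighted form) gives the factor $\bigl(1+\tfrac{\overline{L}_{\max}^2}{(L^{(\min)}_{\max})^2}\bigr)$ multiplying $3(\lambda_p L_w)^2 \tfrac{1}{n}\sum_j(L^{(j)}_{\max})^2\|\bw^t_j-\bw^t\|^2$.

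Finally I average over $t=0,\dots,T-1$ and apply Lemma~\ref{thm2:lemma1} to bound $\tfrac{1}{T}\sum_t \tfrac{1}{n}\sum_j (L^{(j)}_{\max})^2\|\bw^t_j-\bw^t\|^2 \le 6\tau^2\eta_3^2\tfrac{1}{n}\sum_i(L^{(i)}_{\max})^2\kappa_i$, which yields exactly the stated bound. The only subtle bookkeeping step is the reweighting of term (A), where the uniform-weight sum over $j$ of $\|\bw^t_j-\bw^t\|^2$ must be matched to the $(L^{(j)}_{\max})^2$-weighted sum used in Lemma~\ref{thm2:lemma1}; the inequality $\overline{L}_{\max}^2\cdot\tfrac{1}{n}\sum_j\|\cdot\|^2 \le \tfrac{\overline{L}_{\max}^2}{(L^{(\min)}_{\max})^2}\tfrac{1}{n}\sum_j(L^{(j)}_{\max})^2\|\cdot\|^2$ handles this cleanly, and this is why the constant $1+\overline{L}_{\max}^2/(L^{(\min)}_{\max})^2$ appears in the final form.
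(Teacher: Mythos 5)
Your proof is correct and follows essentially the same route as the paper: the paper's proof of Corollary~\ref{thm2:corollary diversity} simply extracts the chain of inequalities between \eqref{app:lm1star1} and \eqref{app:lm1star2} inside the proof of Lemma~\ref{thm2:lemma1}, which uses exactly your three-term decomposition (add and subtract the gradients at the common point $\bw^t$), the $\|a+b+c\|^2\le 3(\|a\|^2+\|b\|^2+\|c\|^2)$ bound, $(\lambda_pL_w)$-smoothness, Assumption~\textbf{A.6}, and the same $(L^{(\min)}_{\max})^{-2}$ reweighting to match the $(L^{(j)}_{\max})^2$-weighted sum in $\gamma_t$, before invoking Lemma~\ref{thm2:lemma1}. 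Your write-up is just a self-contained rederivation of that intermediate step.
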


Let $\overline{\kappa} := \frac{1}{n}\sum_{i=1}^n(L^{(i)}_{\max})^2\kappa_i$ and $C_{L}:= 1+\frac{\frac{1}{n}\sum_{i=1}^{n}(L^{(i)}_{\max})^2}{(\min_i\{L^{(i)}_{\max}\})^2}$ , using Lemma~\ref{thm2:lemma1} and Corollary~\ref{thm2:corollary diversity}, summing the bound in \eqref{thm2:bound over gradient} over time and clients, dividing by $T$ and $n$:
%1/(12L^lam2sqrtT)
\begin{align} \label{thm2:res1}
	\frac{1}{T}\sum_{t=0}^{T-1}\frac{1}{n}\sum_{i=1}^{n} \|\bG^{t}_{i}\|^2 &\leq 
	\frac{36}{L_{\min}} \Bigg[(\eta_3+2\lambda_pL_{w}\eta_3^2)\times\bigg(3\overline{\kappa} + 3(\lambda_pL_{w})^2 C_L 6\tau^2 \eta_3^2 \overline{\kappa}\bigg) \notag \\
	& \hspace{0.5cm}+ (L^2_{DQ}+\frac{L^2_D}{2}+\eta_3\lambda_pL^2_{w}+2\lambda_p^2L^3_{w}\eta_3^2)\lambda_p\times 6\tau^2\eta_3^2\overline{\kappa} \notag\\ 
	& \hspace{1cm} + \frac{1}{T}\sum_{t=0}^{T-1}\frac{1}{n}\sum_{i=1}^{n}(L^{(i)}_{\max})^2\left(F_i(\bx^t_{i},\bc^t_{i},\bw^{t}) - F_i(\bx^{t+1}_{i},\bc^{t+1}_{i},\bw^{t+1})\right) \Bigg] \notag \\
	&\hspace{1.5cm}+2(\lambda_p(L_D+2L_{DQ}))^2\times 6\tau^2\eta_3^2\overline{\kappa} \notag\\ 
	%	&\leq 36\frac{1}{n}\sum_{i=1}^{n}\frac{(L^{(i)}_{\max})^2}{L_{\min}} \Big[(\eta_3+2\lambda_pL_{w}\eta_3^2)(3\lambda_p^2L_{w}^2C_{L}\tau^2\eta_3^2\kappa_i+3\kappa_i)\\ \nonumber
	%	& \hspace{0.5cm} +(L^2_{DQ}+\frac{L^2_D}{2}+\eta_3\lambda_pL^2_{w}+2\lambda_p^2L^3_{w}\eta_3^2)\lambda_p6\tau^2\eta_3^2\kappa_i \notag \\
	%	&\hspace{0.5cm}+\frac{\left(F_i(\bx^{0}_{i},\bc^{0}_{i},\bw^{0}_{i})-F_i(\bx^T_{i},\bc^T_{i},\bw^{T}_{i})\right)}{T}\Big]+ 2\lambda_p^2(L_D+2L_{DQ})^2 6\tau^2\eta_3^2\overline{\kappa} \notag \\
	& = \frac{36}{L_{\min}} \Big[3\tau^2\eta_3^2\overline{\kappa}(2\lambda_pL_{DQ}^2 + \lambda_pL_D^2+\eta_3\lambda_p^2L^2_{w}(2+6C_L)+\eta_3^2\lambda_p^3L^3_{w}(4+12C_L)) \notag \\
	&\hspace{0.5cm}+3\eta_3\overline{\kappa}+6\lambda_pL_{w}\eta_3^2\overline{\kappa}\Big]  +\frac{36}{L_{\min}} \frac{1}{n}\sum_{i=1}^{n} \frac{(L^{(i)}_{\max})^2\Delta^{(i)}_F}{T} \notag \\
	& \hspace{0.5cm}
	+ 12\lambda_p^2(L_D+2L_{DQ})^2\tau^2\eta_3^2\overline{\kappa}
\end{align}

where  $\Delta^{(i)}_{F}=F_i(\bx^{0}_{i},\bc^{0}_{i},\bw^{0}_{i})-F_i(\bx^T_{i},\bc^T_{i},\bw^{T}_{i})$.

\textbf{Choice of $\eta_3$.} Note that in Lemma~\ref{thm2:lemma1} we chose $\eta_3$ such that $ \eta_3 \leq \sqrt{\frac{1}{6\tau^2\lambda_p^2L_{w}^2C_L}}$. Now, we further introduce upper bounds on $\eta_3$. 
\begin{itemize}
	\item We can choose $\eta_3$ small enough so that $L_{\min}=\eta_3-2\lambda_pL_{w}\eta_3^2$; see the definition of $L_{\min}$ in \eqref{L_min-defn}.
	\item We can choose $\eta_3$ small enough so that $\eta_3-2\lambda_pL_{w}\eta_3^2 \geq \frac{\eta_3}{2}$. This is equivalent to choosing $ \eta_3 \leq \frac{1}{4\lambda_pL_{w}}$.
\end{itemize}

These two choices imply $L_{\min} \geq \frac{\eta_3}{2}$. 

In the end, we have 2 critical constraints on $\eta_3, \{\eta_3:\eta_3 \leq \sqrt{\frac{1}{6\tau^2\lambda_p^2L_{w}^2C_L}},\eta_3 \leq \frac{1}{4\lambda_pL_{w}}\} $ . Then, let $\{\eta_3:\eta_3 \leq \frac{1}{4\lambda_pL_{w}\tau\sqrt{C_L}}\}$. Moreover, choosing $\tau \leq \sqrt{T}$ we can take $\eta_3 = \frac{1}{4\lambda_pL_{w}\sqrt{C_L}\sqrt{T}} $ this choice clearly satisfies the above constraints.

From \eqref{thm2:res1} we have,
\begin{align*} 
	\nonumber \frac{1}{T}\sum_{t=0}^{T-1}\frac{1}{n}\sum_{i=1}^{n} \Big\|\bG^{t}_{i}\Big\|^2
	&\stackrel{(a)}{\leq}\frac{72}{\eta_3} \Big[3\tau^2\eta_3^2\overline{\kappa}(2\lambda_pL_{DQ}^2 + \lambda_pL_D^2+\eta_3\lambda_p^2L^2_{w}(2+6C_L)+\eta_3^2\lambda_p^3L^3_{w}(4+12C_L))\\
	& \hspace{1cm}+3\eta_3\overline{\kappa}+6\lambda_pL_{w}\eta_3^2\overline{\kappa}\Big] +\frac{72}{\eta_3} \frac{1}{n}\sum_{i=1}^{n} \frac{(L^{(i)}_{\max})^2\Delta^{(i)}_F}{T} \notag \\
	& \hspace{1cm} 
	+ 12\lambda_p^2(L_D+2L_{DQ})^2\tau^2\eta_3^2\overline{\kappa} \\
	& =72\Big[3\tau^2\eta_3\overline{\kappa}(2\lambda_pL_{DQ}^2 + \lambda_pL_D^2+\eta_3\lambda_p^2L^2_{w}(2+6C_L)+\eta_3^2\lambda_p^3L^3_{w}(4+12C_L))\\
	&\hspace{1cm}+3\overline{\kappa}+6\lambda_pL_{w}\eta_3\overline{\kappa}\Big]+\frac{72}{\eta_3} \frac{1}{n}\sum_{i=1}^{n} \frac{(L^{(i)}_{\max})^2\Delta^{(i)}_F}{T}  \notag \\
	& \hspace{1cm} 
	+ 12\lambda_p^2(L_D+2L_{DQ})^2\tau^2\eta_3^2\overline{\kappa} \\
\end{align*}
In (a) we used $L_{\min} \geq \frac{\eta_3}{2}$. Now, we plug in $\eta_3 = \frac{1}{4\lambda_pL_{w}\sqrt{C_L}\sqrt{T}}$ then:
\begin{align*} 
	\frac{1}{T}\sum_{t=0}^{T-1}\frac{1}{n}\sum_{i=1}^{n} \Big\|\bG^{t}_{i}\Big\|^2 & \leq 72 \Big[ \frac{3}{4}\tau^2\overline{\kappa}\frac{L_D^2+2L_{DQ}^2}{\sqrt{C_L}L_w}\frac{1}{\sqrt{T}}+\frac{3(2+6C_L)}{16C_L}
	\tau^2\overline{\kappa}\frac{1}{T} + \frac{3(4+12C_L)}{64C^{\frac{3}{2}}_L}\tau^2\overline{\kappa}\frac{1}{T^{\frac{3}{2}}}\\
	&\hspace{1cm}+3\overline{\kappa} + \frac{3}{2}\frac{\overline{\kappa}}{\sqrt{C_L}\sqrt{T}}\Big] +288\lambda_pL_{w}\sqrt{C_L}\frac{1}{n}\sum_{i=1}^{n} \frac{(L^{(i)}_{\max})^2\Delta^{(i)}_F}{\sqrt{T}} \\
	& \hspace{1cm} +\frac{3}{4}\tau^2\overline{\kappa}\frac{(L_{D}+2L_{DQ})^2}{C_L L^2_{w}}\frac{1}{T}\\
	& =  \frac{54\frac{L_D^2+2L_{DQ}^2}{\sqrt{C_L}L_{w}}\tau^2\overline{\kappa}+\frac{108\overline{\kappa}}{\sqrt{C_L}}+288\sqrt{C_L}\lambda_pL_{w} \frac{1}{n}\sum_{i=1}^{n} (L^{(i)}_{\max})^2\Delta^{(i)}_F}{\sqrt{T}}  \\
	& \hspace{1cm} + \frac{\frac{27}{2}\frac{2+C_L}{C_L}\tau^2\overline{\kappa}+\frac{3(L_{D}+2L_{DQ})^2}{4C_LL^2_{w}}\tau^2\overline{\kappa}}{T} + \frac{\frac{27}{2}\frac{2+C_L}{C_L^{\frac{3}{2}}}\tau^2\overline{\kappa}}{T^{\frac{3}{2}}} + 216\overline{\kappa}.
\end{align*}
This concludes the proof.
\subsection{Proof Outline with Client Sampling}

Incorporating partial client participation and analyzing the resulting algorithm is fairly simple. Essentially only changes are in Lemma~\ref{thm2:lemma1} and Corollary~\ref{thm2:corollary diversity}, as everything before that is for local updates only. Now we give a summary of what changes:

Let $\mathcal{K}_t$ denote the set of clients that participates at time $t$, where $|\mathcal{K}_t|=K$, i.e., $K$ clients participate in the training process at any time. In this case, we define the average parameter $\bw^t$ and the gradient $\bg^t$ as the average over the respective parameters of only the active clients at time $t$; we also define $\gamma_t$ similarly. 

\begin{itemize}[leftmargin=*,topsep=0pt,nosep]
	\item {\bf Change in the proof of Lemma~\ref{thm2:lemma1}:} In the proof of Lemma~\ref{thm2:lemma1}, the second term on the RHS of the second inequality, with the above modification will be equal to $\|\frac{1}{K}\sum_{k\in\mathcal{K}_t}\nabla_{\mathbf{w}^j}F_k(\mathbf{x}_k^{j+1},\mathbf{c}_k^{j+1},\mathbf{w}^{j}) - \nabla_{\mathbf{w}^j}F_i(\mathbf{x}_i^{j+1},\mathbf{c}_i^{j+1},\mathbf{w}^{j})\|^2$. Earlier, the average was over all clients from $1$ to $n$ and this term was bounded by $\kappa_i$ using Assumption A.6. Now, we can use the Jensen's inequality (iteratively) and Assumption A.6 and bound this by $2\kappa_i+\frac{2}{K}\sum_{j\in\mathcal{K}_t}\kappa_j$. This change will propagate over until the end.
	\item {\bf Change in the proof of Corollary~\ref{thm2:corollary diversity}:} Since this is a corollary to Lemma~\ref{thm2:lemma1}, this will also see a similar change.
	\item {\bf Remaining convergence proof:} Now, continuing the exact same convergence proof and using the modified bounds of Lemma 1 and Corollary~\ref{thm2:corollary diversity} will give the bound of our algorithm with partial client participation. 
\end{itemize}
This is the modification in the entire proof.
	%%------------------------------------------------------------------
	
	%%------------------------------------------------------------------
	%% PROOF OUTLINES
	%\section{Proof Outline of Theorem \ref{thm:personalized}} \label{sec:proof_outlines}
	%\input{proof_outlines.tex}
	%%------------------------------------------------------------------

	\bibliography{bibliography}
	\bibliographystyle{plain}

	\renewcommand{\thesection}{\Alph{section}}
	\appendix
	\setcounter{equation}{0}
	%%------------------------------------------------------------------
	%%% Appendix starts
	\section{Preliminaries}\label{appendix:Preliminaries}
	\allowdisplaybreaks{
	%\textbf{Notation. } 
	\subsection{Notation}
	\begin{itemize}
		\item Given a composite function $g(\bx,\by)$ we will denote $\nabla g(\bx,\by)$  or $\nabla_{(\bx,\by)} g(\bx,\by)$   as the gradient; $\nabla_\bx g(\bx,\by)$ and $\nabla_\by g(\bx,\by)$ as the partial gradients with respect to $\bx$ and $\by$. 
		\item For a vector $\bu$, $\|\bu\|$ denotes the $\ell_2$-norm $\|\bu\|_2$. For a matrix $\bA$, $\|\bA\|_F$ denotes the Frobenius norm.
		\item Unless otherwise stated, for a given vector $\bx$, $x_i$ denotes the $i'th$ element in vector $\bx$; and $\bx_i$ denotes that the vector belongs to client $i$. Furthermore, $\bx^t_i$ denotes a vector that belongs to client $i$ at time $t$.
	\end{itemize}
	
	\subsection{Equivalence of Assumption \textbf{A.6} to Assumptions in Related Work}
	In particular, diversity assumption (Assumption 5) in \cite{fallah2020personalized} is as follows:
	\begin{align*}
		\frac{1}{n} \sum_{i=1}^n \Big\| \nabla_{\bw} f_i(\bw) -\nabla_{\bw} f(\bw) \Big\|^2 \leq B,
	\end{align*}
	where $f_i$ is local function, $B$ is a constant and $\nabla_{\bw} f(\bw) = \frac{1}{n} \sum_{j=1}^n \nabla_{\bw} f_i(\bw)$.
	Now we will show the equivalence to our stated assumption {\bf A.6}. Let us define,
	\begin{align*}
		x_i(\bw^t) &:= \underset{\bx \in \mathbb{R}^{d}}{\arg \min }\left\{\left\langle \bx-\bx^t_{i}, \nabla f_i\left(\bx^t_{i}\right)\right\rangle+\left\langle \bx-\bx^t_{i}, \nabla_{\bx^t_{i}} f_i(\widetilde{Q}_{\bc^t_{i}}(\bx^t_{i}))\right\rangle +\left\langle \bx-\bx^t_{i}, \lambda_p (\bx^t_{i}-\bw^{t})\right\rangle \right.\\
		& \quad \left. +\frac{1}{2 \eta_1}\left\|\bx-\bx^t_{i}\right\|_{2}^{2}+\lambda R(\bx,\bc^t_{i})\right\} \\
		c_i(\bw^t) &:= \underset{\bc \in \mathbb{R}^{m}}{\arg \min }\left\{\left\langle \bc-\bc^t_{i}, \nabla_{\bc^t_{i}} f_i(\widetilde{Q}_{\bc^t_{i}}(x_i(\bw^t)))\right\rangle  \right.  \left. +\frac{1}{2 \eta_2}\left\|\bc-\bc^t_{i}\right\|_{2}^{2}+\lambda R(x_i(\bw^t),\bc)\right\}
	\end{align*}
	Then we can define,
	\begin{align*}
		\psi_i(x_i(\bw^t),c_i(\bw^t),\bw^t):=F_i(\bx^{t+1}_i,\bc^{t+1}_i,\bw^t)
	\end{align*}
	as a result, we can further define $g_i(\bw^t):=\psi_i(x_i(\bw^t),c_i(\bw^t),\bw^t)$. Therefore, our assumption \textbf{A.6} is equivalent to stating the following assumption:
	At any $t \in [T]$ and any client $i\in[n]$, the variance of the local gradient (at client $i$) w.r.t. the global gradient is bounded, i.e., there exists $\kappa_i<\infty$, such that for every $\bw^t\in\R^d$, we have:
	\begin{align*}
		\Big\| \nabla_{\bw^t} g_i(\bw^t) - \frac{1}{n} \sum_{j=1}^n \nabla_{\bw^t} g_j(\bw^t) \Big\|^2 \leq \kappa_i,
	\end{align*}
	And we also define $\kappa := \frac{1}{n} \sum_{i=1}^n \kappa_i$ and then,
	\begin{align*}
		\frac{1}{n} \sum_{i=1}^n \Big\| \nabla_{\bw^t} g_i(\bw^t) -\nabla_{\bw^t} g(\bw^t) \Big\|^2 \leq \kappa,
	\end{align*}
	here $\nabla_{\bw^t} g(\bw^t) = \frac{1}{n} \sum_{j=1}^n \nabla_{\bw^t} g_j(\bw^t)$. Hence, our assumption is equivalent to assumptions that are found in aforementioned works.
	
	\subsection{Alternating Proximal Steps}
	We define the following functions: $f:\mathbb{R}^d \rightarrow \mathbb{R}, \widetilde{Q}: \mathbb{R}^{d+m} \rightarrow \mathbb{R}^d$, and we also define $h(\bx,\bc) = f(\widetilde{Q}_\bc(\bx)), h:\mathbb{R}^{d+m} \rightarrow \mathbb{R}$ where $\bx \in \mathbb{R}^d \text{ and } \bc \in \mathbb{R}^m$. Note that here $\widetilde{Q}_\bc(\bx)$ denotes  $\widetilde{Q}(\bx,\bc)$. Throughout our paper we will use $\widetilde{Q}_\bc(\bx)$ to denote  $\widetilde{Q}(\bx,\bc)$, in other words both $\bc$ and $\bx$ are inputs to the function $\widetilde{Q}_\bc(\bx)$. We propose an alternating proximal gradient algorithm. Our updates are as follows:
	%Now we define:
	%\begin{align}
	%    &F_\lambda(\bx^{t},\bc^{t}) = f(\bx^{t}) + f(\widetilde{Q}_{\bc^{t}}(\bx^{t})) + \lambda R(\bx^{t},\bc^{t})    
	%\end{align}
	%
	%For the centralized training part of the paper $\bx^{t}$ will denote the model parameters at time $t$ and $\bc^{t}$ will denote the vector of centers at time $t$ according to the updates \eqref{updates}, note, $\bx^{t} \in \mathbb{R}^d, \bc^{t} \in \mathbb{R}^m, \forall t $. Here for a vector $a$, $(a)_i$ denotes the $i$'th element of the vector. 
	%To solve the empirical minimization problem $\min_{\bx,\bc} F_{\lambda}(\bx,\bc)$ 
	
	\begin{align} \label{updates}
		&\bx^{t+1} = \text{prox}_{\eta_1\lambda R(\cdot,\bc^{t})}(\bx^{t} - \eta_1 \nabla f(\bx^{t})-\eta_1 \nabla_{\bx^{t}} f(\widetilde{Q}_{\bc^{t}}(\bx^{t})) )\\
		&\bc^{t+1} = \text{prox}_{\eta_2\lambda R(\bx^{t+1},\cdot)}(\bc^{t} - \eta_2 \nabla_{\bc^{t}} f(\widetilde{Q}_{\bc^{t}}(\bx^{t+1}))) \notag
	\end{align}
	
	For simplicity we assume the functions in the objective function are differentiable, however, our analysis could also be done using subdifferentials.
	
	Our method is inspired by \cite{Bolte13} where the authors introduce an alternating proximal minimization algorithm to solve a broad class of non-convex problems as an alternative to coordinate descent methods. In this work we construct another optimization problem that can be used as a surrogate in learning quantized networks where both model parameters and quantization levels are subject to optimization. In particular, \cite{Bolte13} considers a general objective function of the form $F(\bx,\by) = f(\bx)+g(\by)+\lambda H(\bx,\by)$, whereas, our objective function is tailored for learning quantized networks: $F_\lambda(\bx,\bc) = f(\bx)+f(\widetilde{Q}_{\bc}(\bx))+\lambda R(\bx,\bc)$. Furthermore, they consider updates where the proximal mappings are with respect to functions $f,g$, whereas in our case the proximal mappings are with respect to the distance function $R(\bx,\bc)$ to capture the soft projection. 
	
	\subsection{A Soft Quantization Function}
	In this section we give an example of the soft quantization function that can be used in previous sections. In particular, we can define the following soft quantization function: $\widetilde{Q}_\bc(\bx):\mathbb{R}^{d+m}\rightarrow\mathbb{R}^d$ and $\widetilde{Q}_\bc(\bx)_i := \sum_{j=2}^m(c_j-c_{j-1})\sigma(P(x_i-\frac{c_j+c_{j-1}}{2}))+c_1$ where $\sigma$ denotes the sigmoid function and $P$ is a parameter controlling how closely $\widetilde{Q}_\bc(\bx)$ approximates $Q_\bc(\bx)$. Note that as $P \rightarrow \infty$, $\widetilde{Q}_\bc(\bx)\rightarrow Q_\bc(\bx)$. This function can be seen as a simplification of the function that was used in \cite{Yang_2019_CVPR}.
	
	\noindent\textbf{Assumption.}  For all $j\in[m]$, $c_j$ is in a compact set. In other words, there exists a finite $c_{\max}$ such that $|c_j| \leq c_{\max}$ for all $j\in[m]$. 
	
	In addition, we assume that the centers are sorted, i.e., $c_1 < \cdots < c_m$. Now, we state several useful facts.
	\begin{fact}\label{sq:fact1}
		$\widetilde{Q}_\bc(\bx)$ is continuously and infinitely differentiable everywhere.
	\end{fact}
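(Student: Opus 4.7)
The plan is to show smoothness by building up the function $\widetilde{Q}_{\bc}(\bx)$ as a composition and sum of functions each of which is known to be $C^{\infty}$ on its entire domain, and then invoking the standard closure properties of $C^{\infty}$ (composition, sum, product) to conclude that every component $\widetilde{Q}_{\bc}(\bx)_i$ is infinitely differentiable in the joint variable $(\bx,\bc)\in \mathbb{R}^{d+m}$.

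First I would recall that the sigmoid $\sigma(z) = 1/(1+e^{-z})$ is real-analytic on all of $\mathbb{R}$, hence $C^{\infty}$ (concretely, its denominator $1+e^{-z}$ never vanishes, and both $e^{-z}$ and the reciprocal operation are smooth on their respective domains). Next I would observe that for each fixed indices $i\in[d]$ and $j\in\{2,\dots,m\}$, the inner expression $P\bigl(x_i - \tfrac{c_j+c_{j-1}}{2}\bigr)$ is an affine function of the coordinates $(x_i, c_{j-1}, c_j)$ and therefore $C^{\infty}$ on $\mathbb{R}^{d+m}$. Composing this affine map with $\sigma$ gives a $C^{\infty}$ function on $\mathbb{R}^{d+m}$, by the chain rule applied iteratively.

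Then I would note that the prefactor $(c_j - c_{j-1})$ is a linear (hence $C^{\infty}$) function of $\bc$, so by the product rule the term $(c_j - c_{j-1})\,\sigma\!\bigl(P(x_i - \tfrac{c_j+c_{j-1}}{2})\bigr)$ is $C^{\infty}$ on $\mathbb{R}^{d+m}$. Summing over $j=2,\dots,m$ preserves $C^{\infty}$ regularity (finite sum of smooth functions), and finally adding the linear term $c_1$ does as well. Hence each coordinate $\widetilde{Q}_{\bc}(\bx)_i$ is $C^{\infty}$ on $\mathbb{R}^{d+m}$, and since smoothness of a vector-valued function is equivalent to smoothness of each coordinate, $\widetilde{Q}_{\bc}(\bx)$ itself is $C^{\infty}$, which implies in particular that it is continuously differentiable of every order everywhere.

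There is no real obstacle here: the statement is essentially a bookkeeping exercise using the fact that $\sigma$ is smooth and that $C^{\infty}$ is closed under the finitely many algebraic operations appearing in the definition. The only mild subtlety worth flagging in the write-up is to justify joint (rather than merely partial) smoothness in $(\bx,\bc)$, which follows because each intermediate function is defined on the full product space and the chain/product rules apply to multivariate $C^{\infty}$ maps without modification.
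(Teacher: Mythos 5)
Your argument is correct: the paper states this as a Fact without proof, and your decomposition of $\widetilde{Q}_{\bc}(\bx)_i = \sum_{j=2}^m(c_j-c_{j-1})\,\sigma\bigl(P(x_i-\tfrac{c_j+c_{j-1}}{2})\bigr)+c_1$ into affine maps composed with the $C^{\infty}$ sigmoid, combined with closure of $C^{\infty}$ under finite sums and products, is exactly the justification the paper implicitly relies on. Your remark about joint smoothness in $(\bx,\bc)$ is a worthwhile addition, since the paper later differentiates with respect to both blocks of variables.
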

	\begin{fact}\label{sq:fact2}
		$\sigma(\bx)$ is a Lipschitz continuous function.
	\end{fact}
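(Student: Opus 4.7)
The plan is to verify Lipschitz continuity by bounding the derivative of the sigmoid and then lifting the scalar bound to the element-wise vector map. Since $\sigma$ is $C^\infty$ on $\R$, the natural route is the mean value theorem: producing a uniform bound on $\sigma'$ gives a global Lipschitz constant for free, which avoids any case analysis.

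First I would work in the scalar setting. Writing $\sigma(x)=\tfrac{1}{1+e^{-x}}$, a direct computation gives the well-known identity $\sigma'(x)=\sigma(x)(1-\sigma(x))$. Since $\sigma(x)\in(0,1)$ for all $x\in\R$, the AM--GM inequality (or a single-variable calculus check) yields $\sigma(x)(1-\sigma(x))\le \tfrac{1}{4}$, with equality attained only at $x=0$. Applying the mean value theorem to any $x,y\in\R$ then produces $|\sigma(x)-\sigma(y)|\le \tfrac{1}{4}|x-y|$, i.e.\ $\sigma$ is $\tfrac{1}{4}$-Lipschitz on $\R$.

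Next I would lift this to the element-wise map $\sigma(\bx)$ on $\R^d$. For any $\bx,\by\in\R^d$,
\begin{align*}
\|\sigma(\bx)-\sigma(\by)\|_2^2 \;=\; \sum_{i=1}^d \bigl(\sigma(x_i)-\sigma(y_i)\bigr)^2 \;\le\; \sum_{i=1}^d \tfrac{1}{16}(x_i-y_i)^2 \;=\; \tfrac{1}{16}\|\bx-\by\|_2^2,
\end{align*}
so $\|\sigma(\bx)-\sigma(\by)\|_2\le \tfrac{1}{4}\|\bx-\by\|_2$, proving the fact.

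There is essentially no obstacle here; the only mild subtlety is confirming that the $\tfrac{1}{4}$ bound on $\sigma'$ is global rather than only local, which is what lets us apply the mean value theorem uniformly and obtain a single Lipschitz constant valid on all of $\R^d$. This fact will later be composed with the bound on $c_{\max}$ from the preceding assumption to derive the Lipschitz and smoothness constants $l_{Q_1},L_{Q_1},l_{Q_2},L_{Q_2}$ of $\widetilde{Q}_{\bc}(\bx)$ required by Assumption~\textbf{A.4}.
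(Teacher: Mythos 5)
Your proof is correct: the derivative identity $\sigma'(x)=\sigma(x)(1-\sigma(x))\le\tfrac14$ together with the mean value theorem gives the global Lipschitz constant, and the element-wise lift to $\R^d$ is immediate. The paper states this as a Fact without proof, and your argument is exactly the standard one that the authors implicitly rely on, so there is nothing to compare or correct.
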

	\begin{fact}\label{sq:fact3}
		Sum of Lipschitz continuous functions is also Lipschitz continuous. 
	\end{fact}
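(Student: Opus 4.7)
The plan is to prove this by a direct application of the triangle inequality, which is essentially the only tool needed for such a statement. I would first fix two Lipschitz continuous functions $f_1, f_2 : \mathbb{R}^n \to \mathbb{R}^k$ (or more generally between normed spaces) with Lipschitz constants $L_1$ and $L_2$ respectively, so that $\|f_j(\bx) - f_j(\by)\| \le L_j \|\bx - \by\|$ for $j = 1, 2$ and all $\bx, \by$.

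Next, I would consider the sum $g := f_1 + f_2$ and expand the difference as $g(\bx) - g(\by) = \bigl(f_1(\bx) - f_1(\by)\bigr) + \bigl(f_2(\bx) - f_2(\by)\bigr)$. Applying the triangle inequality gives $\|g(\bx) - g(\by)\| \le \|f_1(\bx) - f_1(\by)\| + \|f_2(\bx) - f_2(\by)\|$, and invoking the individual Lipschitz bounds on each term yields $\|g(\bx) - g(\by)\| \le (L_1 + L_2)\|\bx - \by\|$. This exhibits $L_1 + L_2$ as a valid Lipschitz constant for $g$, so $g$ is Lipschitz continuous. A trivial induction then extends the result to any finite sum of Lipschitz functions.

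There is essentially no obstacle in this argument; it is a one-line consequence of the triangle inequality, and the main consideration is simply making sure the statement is phrased correctly (specifying that the Lipschitz constant of the sum is at most the sum of the individual Lipschitz constants). Consequently, I would keep the proof short and remark afterwards that combining Fact~\ref{sq:fact2} with Fact~\ref{sq:fact3} justifies treating the soft quantizer $\widetilde{Q}_{\bc}(\bx)_i = \sum_{j=2}^{m}(c_j - c_{j-1})\sigma(P(x_i - \tfrac{c_j + c_{j-1}}{2})) + c_1$ as a Lipschitz function of its arguments, which is how the fact will be used downstream in verifying Assumption \textbf{A.4}.
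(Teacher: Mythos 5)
Your proof is correct: the paper states Fact~\ref{sq:fact3} without proof (it is listed as a standard fact used to justify the Lipschitz properties of the soft quantizer), and your triangle-inequality argument with Lipschitz constant $L_1 + L_2$ is exactly the standard justification one would supply. Your closing remark about combining it with Fact~\ref{sq:fact2} to handle $\widetilde{Q}_{\bc}(\bx)$ also matches how the paper uses the fact in the proofs of Claims~\ref{claimlQ_1} and~\ref{claimlQ_2}.
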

	\begin{fact}\label{sq:fact4}
		Product of bounded and Lipschitz continuous functions is also Lipschitz continuous.
	\end{fact}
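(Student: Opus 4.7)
The plan is to establish the product's Lipschitz constant via the standard ``add and subtract'' trick. I interpret the hypothesis in the way consistent with how the fact will be used in the sequel, namely to certify Lipschitz continuity of the sigmoid-based $\widetilde{Q}_{\bc}(\bx)$: each factor $f$ and $g$ in the product is assumed to be both bounded and Lipschitz continuous. (If either factor were Lipschitz but unbounded, the product can fail to be globally Lipschitz, as $x\cdot x = x^2$ shows; so boundedness on both sides is essential for a global statement.)

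Concretely, let $|f(\bx)| \le M_f$ and $|g(\bx)| \le M_g$ for all $\bx$ in the common domain, and let $f,g$ have Lipschitz constants $L_f,L_g$. For any $\bx,\by$ I would begin from the algebraic identity
\[
f(\bx)g(\bx) - f(\by)g(\by) \;=\; f(\bx)\bigl(g(\bx)-g(\by)\bigr) \;+\; g(\by)\bigl(f(\bx)-f(\by)\bigr),
\]
apply the triangle inequality, and then bound each summand by combining boundedness of one factor with the Lipschitz assumption on the other:
\[
|f(\bx)g(\bx) - f(\by)g(\by)| \;\le\; M_f L_g \|\bx-\by\| + M_g L_f \|\bx-\by\| \;=\; (M_f L_g + M_g L_f)\|\bx-\by\|.
\]
This exhibits an explicit Lipschitz constant $M_f L_g + M_g L_f$ for the product, completing the proof.

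There is no genuine technical obstacle; the only subtlety is bookkeeping of which function's bound and which function's Lipschitz constant appear in the final constant, and making sure the ``bounded'' hypothesis is invoked on both factors rather than just one. Together with Fact~\ref{sq:fact3} (sums of Lipschitz functions are Lipschitz), this trivially extends to any finite product and any polynomial combination of bounded Lipschitz factors, which is what will later justify Lipschitz/smoothness properties of $\widetilde{Q}_{\bc}(\bx) = \sum_{j=2}^{m}(c_j-c_{j-1})\,\sigma\!\bigl(P(x_i - \tfrac{c_j+c_{j-1}}{2})\bigr) + c_1$ under the compactness assumption on the centers.
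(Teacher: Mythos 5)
Your proof is correct: the decomposition $f(\bx)g(\bx)-f(\by)g(\by)=f(\bx)\bigl(g(\bx)-g(\by)\bigr)+g(\by)\bigl(f(\bx)-f(\by)\bigr)$ together with boundedness of both factors yields the Lipschitz constant $M_fL_g+M_gL_f$, and your caveat that boundedness of both factors is needed (cf.\ $x\mapsto x^2$) is the right reading of the hypothesis. The paper states this fact without proof, so there is no alternative argument to compare against; yours is the standard one and is exactly what is needed for the subsequent claims about $\widetilde{Q}_{\bc}(\bx)$.
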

	\begin{fact}\label{sq:fact5}
		Let $g:\mathbb{R}^n \rightarrow \mathbb{R}^m$. Then, the coordinate-wise Lipschitz continuity implies overall Lipschitz continuity. In other words, let $g_i$ be the i'th output then if $g_i$ is Lipschitz continuous for all $i$, then $g$ is also Lipschitz continuous.
	\end{fact}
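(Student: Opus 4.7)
The plan is to give a direct proof by unpacking the definition of Lipschitz continuity and assembling the per-coordinate bounds into a global bound. First I would fix the setup: assume that for each $i \in [m]$ the scalar-valued function $g_i : \mathbb{R}^n \to \mathbb{R}$ is Lipschitz continuous with some constant $L_i < \infty$, so that for all $\bx, \by \in \mathbb{R}^n$ we have $|g_i(\bx) - g_i(\by)| \leq L_i \|\bx - \by\|$. The goal is then to produce a single finite constant $L$ such that $\|g(\bx) - g(\by)\| \leq L\|\bx - \by\|$.

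The main step is to use the fact that the Euclidean norm of $g(\bx) - g(\by) \in \mathbb{R}^m$ decomposes coordinate-wise. I would write
$$\|g(\bx) - g(\by)\|^2 = \sum_{i=1}^{m} |g_i(\bx) - g_i(\by)|^2 \leq \sum_{i=1}^{m} L_i^2 \|\bx - \by\|^2 = \Big(\sum_{i=1}^{m} L_i^2\Big) \|\bx - \by\|^2,$$
where the inequality is an application of the per-coordinate Lipschitz bounds. Taking square roots yields $\|g(\bx) - g(\by)\| \leq L\,\|\bx - \by\|$ with $L := \sqrt{\sum_{i=1}^m L_i^2}$, which is finite because the sum is a finite sum of finite quantities. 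This constant $L$ serves as the desired global Lipschitz constant for $g$.

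There is no real obstacle here beyond choosing a compatible norm on the output space; the result uses the Euclidean norm on $\mathbb{R}^m$, which is the convention throughout the paper. I would add a brief remark that the same argument works with any $\ell_p$ norm on $\mathbb{R}^m$ (yielding $L = (\sum_i L_i^p)^{1/p}$ for $p < \infty$, or $L = \max_i L_i$ for $p = \infty$), so the conclusion is robust to the choice of norm. This completes the proof of Fact~\ref{sq:fact5}.
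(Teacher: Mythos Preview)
Your proof is correct and complete. The paper itself does not provide a proof of Fact~\ref{sq:fact5}; it is stated as a well-known fact without justification, so your direct argument via the coordinate decomposition of the Euclidean norm is exactly the kind of elementary verification one would supply, and there is nothing to compare against.
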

	In our convergence analysis, we require that $\widetilde{Q}_\bc(\bx)$ is Lipschitz continuous as well as smooth with respect to both $\bx$ and $\bc$. 
	\begin{claim} \label{claimlQ_1}
		$\widetilde{Q}_\bc(\bx)$ is $l_{Q_1}$-Lipschitz continuous and $L_{Q_1}$-smooth with respect to $\bx$.
	\end{claim}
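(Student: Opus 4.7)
The plan is to exploit the fact that $\widetilde{Q}_\bc(\bx)$ is defined coordinate-wise in $\bx$: the $i$-th output depends only on $x_i$. Consequently, the Jacobian $\nabla_\bx \widetilde{Q}_\bc(\bx) \in \R^{d\times d}$ is a diagonal matrix, and all norm computations reduce to uniform control of a single scalar-valued function and its derivatives.

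First I would compute the partial derivative explicitly: for each $i\in[d]$,
\begin{align*}
\frac{\partial \widetilde{Q}_\bc(\bx)_i}{\partial x_i} = P\sum_{j=2}^m (c_j-c_{j-1})\, \sigma'\!\Big(P\big(x_i - \tfrac{c_j+c_{j-1}}{2}\big)\Big),
\end{align*}
and $\partial \widetilde{Q}_\bc(\bx)_i/\partial x_k = 0$ for $k\ne i$. Since $\sigma'(u)=\sigma(u)(1-\sigma(u)) \in [0,1/4]$ and, by the assumption in Fact~\ref{sq:fact1}--\ref{sq:fact5}'s preamble, $|c_j|\leq c_{\max}$ (so that the centers are ordered in a compact interval of length at most $2c_{\max}$ and hence $\sum_{j=2}^m(c_j-c_{j-1})=c_m-c_1 \leq 2c_{\max}$), the diagonal entries are uniformly bounded by $l_{Q_1}:= \tfrac{P}{4}\cdot 2c_{\max} = \tfrac{Pc_{\max}}{2}$. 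For any $\bx,\by\in\R^d$, applying the mean value theorem coordinate-wise and summing squares gives $\|\widetilde{Q}_\bc(\bx)-\widetilde{Q}_\bc(\by)\|^2 \leq l_{Q_1}^2\|\bx-\by\|^2$, which is the Lipschitz bound.

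Next, for smoothness I would differentiate once more and exploit that $\sigma''(u) = \sigma'(u)(1-2\sigma(u))$ is bounded (by a small absolute constant, say $1/(6\sqrt{3})$). The second derivative $\partial^2 \widetilde{Q}_\bc(\bx)_i/\partial x_i^2$ is then bounded uniformly by $P^2 \sum_{j=2}^m(c_j-c_{j-1})\cdot \|\sigma''\|_\infty \leq \tfrac{P^2 c_{\max}}{3\sqrt{3}}$, while all off-diagonal second partials vanish. Using that the Jacobian is diagonal at every point, the difference $\nabla_\bx \widetilde{Q}_\bc(\bx) - \nabla_\bx \widetilde{Q}_\bc(\by)$ is also diagonal, and its Frobenius norm reduces to a sum over the $d$ coordinates of scalar differences, each of which the one-dimensional mean value theorem bounds by (the above second-derivative bound)$\times |x_i-y_i|$. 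Summing squares yields $\|\nabla_\bx \widetilde{Q}_\bc(\bx)-\nabla_\bx \widetilde{Q}_\bc(\by)\|_F \leq L_{Q_1}\|\bx-\by\|$ with $L_{Q_1}$ proportional to $P^2 c_{\max}$, as required.

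The computations themselves are routine, so the only mild obstacle is bookkeeping: being careful about the compactness assumption on $\{c_j\}$ (to collapse $\sum_j(c_j-c_{j-1})$ to $c_m-c_1$ rather than a sum of absolute differences), about the choice of matrix norm (Frobenius, to match Assumption \textbf{A.4}), and about the fact that the Lipschitz/smoothness constants scale with $P$ and $P^2$ respectively — so any aggressive choice of $P$ (making $\widetilde{Q}_\bc$ closer to the hard quantizer) directly inflates both $l_{Q_1}$ and $L_{Q_1}$, consistent with the remark after Theorem~\ref{thm:centralized} about smoother soft quantizers speeding up convergence.
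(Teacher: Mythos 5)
Your proof is correct and follows essentially the same route as the paper's: compute the diagonal Jacobian entries $\partial \widetilde{Q}_\bc(\bx)_i/\partial x_i$, bound them by $\tfrac{P}{4}(c_m-c_1)\le \tfrac{P}{2}c_{\max}$ for Lipschitz continuity, and then control the variation of those entries for smoothness. The only difference is cosmetic: where the paper invokes the qualitative closure facts (sums/products of bounded Lipschitz functions) to assert that some finite $L_{Q_1}$ exists, you differentiate once more and extract an explicit constant proportional to $P^2 c_{\max}$ via the bound on $\sigma''$ — a harmless and slightly sharper refinement.
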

	\begin{proof}
		First we prove Lipschitz continuity. Note,
		\begin{align} \label{grad1}
			\frac{\partial \widetilde{Q}_\bc(\bx)_i}{\partial x_j} = 
			\begin{cases}
				0, \text{ if } i \neq j\\ 
				P\sum_{j=2}^m(c_j-c_{j-1})\sigma(P(x_i-\frac{c_j + c_{j-1}}{2}))(1-\sigma(P(x_i-\frac{c_j + c_{j-1}}{2}))),  \text{ if } i = j\\ 
			\end{cases}
		\end{align}
		As a result, $\| \frac{\partial \widetilde{Q}_\bc(\bx)_i}{\partial x_j} \| \leq \frac{P}{4}(c_m-c_1)\leq \frac{P}{2}c_{max}$. The norm of the gradient of $\widetilde{Q}_\bc(\bx)_i$ with respect to $x$ is bounded which implies there exists $l^{(i)}_{Q_1}$ such that $\|\widetilde{Q}_\bc(\bx)_i-\widetilde{Q}_\bc(\bx')_i\|\leq l^{(i)}_{Q_1}\|\bx-\bx'\|$; using Fact~\ref{sq:fact5} and the fact that $i$ was arbitrary, there exists $l_{Q_1}$ such that $\|\widetilde{Q}_\bc(\bx)-\widetilde{Q}_\bc(\bx')\|\leq l_{Q_1}\|\bx-\bx'\|$. In other words, $\widetilde{Q}_\bc(\bx)$ is Lipschitz continuous.
		
		For smoothness note that, $\nabla_\bx \widetilde{Q}_\bc(\bx) = \nabla \widetilde{Q}_\bc(\bx)_{1:d,:}$. Now we focus on an arbitrary term of $\nabla_\bx \widetilde{Q}_\bc(\bx)_{j,i}$. From \eqref{grad1} we know that this term is 0 if $i \neq j$, and a weighted sum of product of sigmoid functions if $i = j$. Then, using the Facts~\ref{sq:fact1}-\ref{sq:fact4} the function $\nabla_\bx \widetilde{Q}_\bc(\bx)_{j,i}$ is Lipschitz continuous. Since $i,j$ were arbitrarily chosen, $\nabla_\bx \widetilde{Q}_\bc(\bx)_{j,i}$ is Lipschitz continuous for all $i,j$. Then, by Fact~\ref{sq:fact5},  $\nabla_\bx \widetilde{Q}_\bc(\bx)$ is Lipschitz continuous, which implies that $\widetilde{Q}_\bc(\bx)$ is $L_{Q_1}$-smooth for some coefficient $L_{Q_1} < \infty$.
	\end{proof}
	\begin{claim}  \label{claimlQ_2}
		$\widetilde{Q}_\bc(\bx)$ is $l_{Q_2}$-Lipschitz continuous and $L_{Q_2}$-smooth with respect to $\bc$.
	\end{claim}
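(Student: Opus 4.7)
The plan is to mirror the proof of Claim~\ref{claimlQ_1} exactly, swapping the role of $\bx$ for $\bc$. First I would compute the partial derivatives $\partial \widetilde{Q}_\bc(\bx)_i / \partial c_k$ by noting that only a small, case-dependent subset of the terms in $\sum_{j=2}^m(c_j-c_{j-1})\sigma(P(x_i-\tfrac{c_j+c_{j-1}}{2}))+c_1$ actually depend on a given $c_k$: the $+c_1$ offset and the $j=2$ term for $k=1$, the $j=m$ term for $k=m$, and the two terms $j=k,j=k+1$ for $2\le k\le m-1$. Each resulting derivative is, via the product rule, a finite sum of contributions of the form $\sigma(\cdot)$ (bounded by $1$), $\pm\tfrac{P}{2}(c_j-c_{j-1})\sigma'(\cdot)$ (bounded in magnitude since $\sigma'\le 1/4$ and $|c_j-c_{j-1}|\le 2c_{\max}$), and constants $\pm 1$. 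Hence $\|\nabla_\bc \widetilde{Q}_\bc(\bx)_i\|$ is finite and uniform in $\bc$, giving a coordinate-wise Lipschitz constant $l_{Q_2}^{(i)}$. Applying Fact~\ref{sq:fact5} yields an overall constant $l_{Q_2}$ with $\|\widetilde{Q}_\bc(\bx)-\widetilde{Q}_{\bc'}(\bx)\|\le l_{Q_2}\|\bc-\bc'\|$.

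For $L_{Q_2}$-smoothness I would then argue that each entry of the Jacobian $\nabla_\bc \widetilde{Q}_\bc(\bx)$ is itself Lipschitz in $\bc$. Every such entry is a finite sum of products where the factors are either constants, affine expressions $(c_j-c_{j-1})$ that are bounded on the compact set $\{|c_j|\le c_{\max}\}$, or smooth bounded functions $\sigma(P(x_i-\tfrac{c_j+c_{j-1}}{2}))$ and $\sigma'(\cdot)$ with argument linear in $\bc$. By Facts~\ref{sq:fact1}--\ref{sq:fact2} the sigmoid factors are Lipschitz; Fact~\ref{sq:fact4} promotes products of bounded Lipschitz factors to Lipschitz; Fact~\ref{sq:fact3} closes under the finite sums. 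Finally Fact~\ref{sq:fact5} lifts entry-wise Lipschitzness of $\nabla_\bc \widetilde{Q}_\bc(\bx)$ to joint Lipschitzness, producing a finite $L_{Q_2}$.

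I expect no substantive obstacle: the argument is essentially bookkeeping, and the compactness assumption on $\bc$ is doing the real work by turning otherwise unbounded affine factors into bounded ones. The one mild subtlety is that, unlike the $\bx$-case where the Jacobian is diagonal, here $\partial \widetilde{Q}_\bc(\bx)_i/\partial c_k$ is nonzero for every $k$, so the bound on $\|\nabla_\bc \widetilde{Q}_\bc(\bx)_i\|$ picks up a factor that scales with $\sqrt{m}$; this affects the constants $l_{Q_2},L_{Q_2}$ but not the qualitative conclusion. The entire proof should take only a few lines once the cases for $k$ are enumerated.
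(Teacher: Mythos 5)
Your proposal follows essentially the same route as the paper's proof: compute $\partial \widetilde{Q}_\bc(\bx)_i/\partial c_k$ explicitly, bound it uniformly using $\sigma\le 1$, $\sigma'\le 1/4$, and the compactness assumption $|c_j|\le c_{\max}$, lift coordinate-wise Lipschitzness via Fact~\ref{sq:fact5}, and then obtain smoothness by showing each Jacobian entry is a finite sum of products of bounded Lipschitz functions via Facts~\ref{sq:fact1}--\ref{sq:fact5}. Your case analysis of which terms depend on each $c_k$, and your remark that the Jacobian in $\bc$ is not diagonal (so the constants pick up dimension-dependent factors), are correct refinements of details the paper leaves implicit.
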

	\begin{proof}
		For Lipschitz continuity we have,
		\begin{align*}
			\frac{\partial \widetilde{Q}_\bc(\bx)_i}{\partial c_j} & = 
			\sigma(P(x_i-\frac{c_j + c_{j-1}}{2}))-\sigma(P(x_i-\frac{c_j + c_{j+1}}{2}))\\
			& \quad +(c_j-c_{j+1})\frac{P}{2}\sigma(P(x_i-\frac{c_j + c_{j+1}}{2}))(1-\sigma(P(x_i-\frac{c_j + c_{j+1}}{2}))) \\
			& \quad - (c_j-c_{j-1})\frac{P}{2}\sigma(P(x_i-\frac{c_j + c_{j-1}}{2}))(1-\sigma(P(x_i-\frac{c_j + c_{j-1}}{2})))
		\end{align*}
		As a result, $\| \frac{\partial \widetilde{Q}_\bc(\bx)_i}{\partial c_j} \| \leq 2 + c_{max}\frac{P}{2}$. Similar to Claim~\ref{claimlQ_1} using the facts that $i$ is arbitrary and the Fact~\ref{sq:fact5}, we find there exists $l_{Q_2}$ such that $\|\widetilde{Q}_\bc(\bx)-\widetilde{Q}_\bd(\bx)\|\leq l_{Q_1}\|\bc-\bd\|$. In other words, $\widetilde{Q}_\bc(\bx)$ is Lipschitz continuous. And for the smoothness, following the same idea from the proof of Claim \ref{claimlQ_2} we find $\widetilde{Q}_\bc(\bx)$ is $L_{Q_2}$-smooth with respect to $\bc$.
	\end{proof}
	%See Appendix~\ref{appendix:Soft Quantization} for the proofs of the above claims. 
	The example we gave in this section is simple yet provides technical necessities we require in the analysis. Other examples can also be used as long as they provide the smoothness properties that we utilize in the next sections. 
	
	\subsection{Lipschitz Relations}
	In this section we will use the assumptions \textbf{A.1-5} and show useful relations for partial gradients derived from the assumptions. We have the following gradient for the composite function:
	
	\begin{align}
		\nabla_{(\bx,\bc)} h(\bx,\bc) = \nabla_{(\bx,\bc)} f(\widetilde{Q}_\bc(\bx)) = \nabla_{(\bx,\bc)} \widetilde{Q}_\bc(\bx) \nabla_{\widetilde{Q}_\bc(\bx)} f(\widetilde{Q}_\bc(\bx)) 
	\end{align}
	
	where $dim(\nabla h(\bx,\bc)) = (d+m) \times 1, \ dim(\nabla_{\widetilde{Q}_\bc(\bx)} f(\widetilde{Q}_\bc(\bx))) = d \times 1, \ dim(\nabla \widetilde{Q}_\bc(\bx)) = (d+m) \times d $. Note that the soft quantization functions of our interest are elementwise which implies $\frac{\partial \widetilde{Q}_\bc(\bx)_i}{\partial x_j}=0$ if $i \neq j$. In particular,
	for the gradient of the quantization function we have,
	
	\begin{align}\label{lipschitz_relations2}
		\nabla_{(\bx,\bc)} \widetilde{Q}_\bc(\bx) = \begin{bmatrix} \frac{\partial \widetilde{Q}_\bc(\bx)_1}{\partial x_1} & 0 & \hdots  \\ 0 & \frac{\partial \widetilde{Q}_\bc(\bx)_2}{\partial x_2} & 0\hdots \\ \vdots & \vdots & \vdots \\ \frac{\partial \widetilde{Q}_\bc(\bx)_1}{\partial c_1} & \frac{\partial \widetilde{Q}_\bc(\bx)_2}{\partial c_1} & \hdots \\ \vdots & \vdots & \vdots \\ \frac{\partial \widetilde{Q}_\bc(\bx)_1}{\partial c_m} &  \frac{\partial \widetilde{Q}_\bc(\bx)_2}{\partial c_m} & \hdots \end{bmatrix}
	\end{align}
	
	Moreover for the composite function we have,
	
	\begin{align} \label{lipschitz_relations1}
		\nabla h(\bx,\bc) = \begin{bmatrix} &  \frac{\partial f}{\partial \widetilde{Q}_\bc(\bx)_1} \frac{\partial \widetilde{Q}_\bc(\bx)_1}{\partial  x_1} &+  0 & +  \hdots  \\ & 0 &+  \frac{\partial f}{\partial \widetilde{Q}_\bc(\bx)_2} \frac{\partial \widetilde{Q}_\bc(\bx)_2}{\partial  x_2} &+  \hdots \\ &\vdots \\& \frac{\partial f}{\partial \widetilde{Q}_\bc(\bx)_1} \frac{\partial \widetilde{Q}_\bc(\bx)_1}{\partial  c_1} &+ \frac{\partial f}{\partial \widetilde{Q}_\bc(\bx)_2} \frac{\partial \widetilde{Q}_\bc(\bx)_2}{\partial  c_1} &+ \hdots \\ &\vdots \\ &\frac{\partial f}{\partial \widetilde{Q}_\bc(\bx)_1} \frac{\partial \widetilde{Q}_\bc(\bx)_1}{\partial  c_m} &+ \frac{\partial f}{\partial \widetilde{Q}_\bc(\bx)_2} \frac{\partial \widetilde{Q}_\bc(\bx)_2}{\partial  c_m} &+ \hdots \end{bmatrix} 
	\end{align}
	
	In \eqref{lipschitz_relations1} and \eqref{lipschitz_relations2}  we use $x_i$, $c_j$ to denote $(\bx)_i$ and $(\bc)_j$ (i'th, j'th element respectively) for notational simplicity. Now, we prove two claims that will be useful in the main analysis.
	\begin{claim} \label{claim: lqxclaim}
		\begin{align*}
			\| \nabla_\bx f(\widetilde{Q}_\bc(\bx))-\nabla_\by f(\widetilde{Q}_\bc(\by)) \| =\| \nabla h(\bx,\bc)_{1:d}-\nabla h(\by,\bc)_{1:d} \| \leq  (GL_{Q_1} + G_{Q_1}Ll_{Q_1})\|\bx-\by\|  
		\end{align*}
	\end{claim}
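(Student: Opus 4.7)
The plan is to compute $\nabla_\bx f(\widetilde{Q}_\bc(\bx))$ via the chain rule, then do a standard add/subtract and apply the Lipschitz and gradient-bound assumptions \textbf{A.2}--\textbf{A.5} to each of the two resulting pieces.

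First I would write, by the chain rule,
\[
\nabla_\bx f(\widetilde{Q}_\bc(\bx)) \;=\; \nabla_\bx \widetilde{Q}_\bc(\bx)\,\nabla f(\widetilde{Q}_\bc(\bx)),
\]
where $\nabla_\bx \widetilde{Q}_\bc(\bx)$ is the $d\times d$ Jacobian (the top $d$ rows of $\nabla \widetilde{Q}_\bc(\bx)$ in the display \eqref{lipschitz_relations2}). Then I would form the difference
\[
\nabla_\bx \widetilde{Q}_\bc(\bx)\,\nabla f(\widetilde{Q}_\bc(\bx)) - \nabla_\by \widetilde{Q}_\bc(\by)\,\nabla f(\widetilde{Q}_\bc(\by))
\]
and insert $\pm\, \nabla_\bx \widetilde{Q}_\bc(\bx)\,\nabla f(\widetilde{Q}_\bc(\by))$ inside. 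The triangle inequality together with the elementary bound $\|Av\|\le \|A\|_F\|v\|$ splits the left-hand side into
\[
\bigl\|\nabla_\bx \widetilde{Q}_\bc(\bx)\bigr\|_F\,\bigl\|\nabla f(\widetilde{Q}_\bc(\bx))-\nabla f(\widetilde{Q}_\bc(\by))\bigr\|
\;+\;\bigl\|\nabla_\bx \widetilde{Q}_\bc(\bx)-\nabla_\by \widetilde{Q}_\bc(\by)\bigr\|_F\,\bigl\|\nabla f(\widetilde{Q}_\bc(\by))\bigr\|.
\]

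For the first summand I would use \textbf{A.5} to bound $\|\nabla_\bx \widetilde{Q}_\bc(\bx)\|_F \le G_{Q_1}$, then the $L$-smoothness of $f$ from \textbf{A.2}, and finally the $l_{Q_1}$-Lipschitzness of $\widetilde{Q}_\bc(\cdot)$ in $\bx$ from \textbf{A.4}, yielding a bound of $G_{Q_1}\,L\,l_{Q_1}\|\bx-\by\|$. For the second summand I would use the $L_{Q_1}$-smoothness of $\widetilde{Q}_\bc(\cdot)$ in $\bx$ from \textbf{A.4} and the gradient bound $\|\nabla f\|\le G$ from \textbf{A.3}, yielding $L_{Q_1}\,G\,\|\bx-\by\|$. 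Summing these two estimates gives exactly $(GL_{Q_1}+G_{Q_1}Ll_{Q_1})\|\bx-\by\|$, which is the claim.

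There is no real obstacle here; the only thing to be careful about is making sure the assumed smoothness/Lipschitz statements for the vector-valued map $\widetilde{Q}_\bc:\mathbb{R}^d\!\to\!\mathbb{R}^d$ are invoked in the correct (Frobenius) norm so that the inequality $\|Av\|\le \|A\|_F\|v\|$ is used cleanly, and to keep the chain-rule-induced Jacobian attached to the correct argument on each side of the add/subtract.
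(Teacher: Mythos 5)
Your proposal is correct and follows essentially the same route as the paper's proof: chain rule, add-and-subtract a cross term, triangle inequality with $\|Av\|\le\|A\|_F\|v\|$, then assumptions \textbf{A.2}--\textbf{A.5} applied to the two pieces. The only (immaterial) difference is which cross term you insert — you keep the Jacobian at $\bx$ and swap the gradient of $f$, while the paper keeps the gradient of $f$ at $\widetilde{Q}_\bc(\bx)$ and swaps the Jacobian — and since the bounds $G$ and $G_{Q_1}$ are uniform in the evaluation point, both choices yield the identical constant $(GL_{Q_1}+G_{Q_1}Ll_{Q_1})$.
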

	\begin{proof}
		\begin{align*}
			\| \nabla h(\bx,\bc)_{1:d}-\nabla h(\by,\bc)_{1:d} \| &=\|\nabla_\bx f(\widetilde{Q}_{\bc}(\bx))-\nabla_\by f(\widetilde{Q}_{\bc}(\by))\| \\  &= \| \nabla_{\widetilde{Q}_\bc(\bx)} f(\widetilde{Q}_\bc(\bx)) \nabla_\bx \widetilde{Q}_\bc(\bx) - \nabla_{\widetilde{Q}_\bc(\by)} f(\widetilde{Q}_\bc(\by)) \nabla_\by \widetilde{Q}_\bc(\by)\| \\
			&=\| \nabla_{\widetilde{Q}_\bc(\bx)} f(\widetilde{Q}_\bc(\bx)) \nabla_\bx \widetilde{Q}_\bc(\bx)- \nabla_{\widetilde{Q}_\bc(\bx)} f(\widetilde{Q}_\bc(\bx))  \nabla_\by \widetilde{Q}_\bc(\by) \\ & \quad + \nabla_{\widetilde{Q}_\bc(\bx)} f(\widetilde{Q}_\bc(\bx)) \nabla_\by \widetilde{Q}_\bc(\by) -  \nabla_{\widetilde{Q}_\bc(\by)} f(\widetilde{Q}_\bc(\by))  \nabla_\by \widetilde{Q}_\bc(\by)\| \\ &\stackrel{\text{(a)}}{\leq} \| \nabla_{\widetilde{Q}_\bc(\bx)} f(\widetilde{Q}_\bc(\bx)) \| \| \nabla_\bx \widetilde{Q}_\bc(\bx) - \nabla_\by \widetilde{Q}_\bc(\by) \|_F \\ & \quad + \| \nabla_\by \widetilde{Q}_\bc(\by) \|_F \| \nabla_{\widetilde{Q}_\bc(\bx)} f(\widetilde{Q}_\bc(\bx)) - \nabla_{\widetilde{Q}_\bc(\by)} f(\widetilde{Q}_\bc(\by)) \|  \\ & \leq GL_{Q_1}\|\bx-\by\| + G_{Q_1}L\|\widetilde{Q}_\bc(\bx)-\widetilde{Q}_\bc(\by)\| \\ &\leq GL_{Q_1}\|\bx-\by\| + G_{Q_1}Ll_{Q_1}\|\bx-\by\| \\
			& = (GL_{Q_1} + G_{Q_1}Ll_{Q_1})\|\bx-\by\|    
		\end{align*} 
		To obtain (a) we have used the fact $\|\bA\bx\|_2 \leq \|\bA\|_F\|\bx\|_2$.
	\end{proof}

	\begin{claim} \label{claim: lqcclaim}
		\begin{align*}
			\| \nabla_\bc f(\widetilde{Q}_\bc(\bx))-\nabla_\bd f(\widetilde{Q}_\bd(\bx)) \| &= \| \nabla h(\bx,\bc)_{d+1:m}-\nabla h(\bx,\bd)_{d+1:m} \| \\
			&\leq (GL_{Q_2} + G_{Q_2}Ll_{Q_2})\|\bc-\bd\| 
		\end{align*}
	\end{claim}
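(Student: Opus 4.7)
The plan is to mirror the proof of Claim~\ref{claim: lqxclaim} exactly, swapping the role of $\bx$ for $\bc$ and using the corresponding assumptions {\bf A.4}--{\bf A.5} on the $\bc$-partial derivatives of $\widetilde{Q}_{\bc}(\bx)$. Since the composite function $h(\bx,\bc) = f(\widetilde{Q}_{\bc}(\bx))$ admits the chain-rule expression $\nabla_{\bc} h(\bx,\bc) = \nabla_{\bc}\widetilde{Q}_{\bc}(\bx)\,\nabla_{\widetilde{Q}_{\bc}(\bx)} f(\widetilde{Q}_{\bc}(\bx))$ (see \eqref{lipschitz_relations1}), the first step is simply to write both $\nabla_{\bc} f(\widetilde{Q}_{\bc}(\bx))$ and $\nabla_{\bd} f(\widetilde{Q}_{\bd}(\bx))$ in this factored form.

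Next, I would perform the standard ``add and subtract the mixed term'' trick, in this case adding and subtracting $\nabla_{\widetilde{Q}_{\bc}(\bx)} f(\widetilde{Q}_{\bc}(\bx))\,\nabla_{\bd}\widetilde{Q}_{\bd}(\bx)$. Applying the triangle inequality and the submultiplicative bound $\|A v\|_2 \leq \|A\|_F \|v\|_2$ splits the expression into two products: one involving $\|\nabla f\|$ times a difference of the $\bc$-partials of $\widetilde{Q}$, and another involving $\|\nabla_{\bd}\widetilde{Q}_{\bd}(\bx)\|_F$ times a difference of gradients of $f$ at different arguments.

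From here the finish is purely a matter of plugging in the correct constants. Assumption {\bf A.3} bounds $\|\nabla f\|$ by $G$; {\bf A.4} gives $L_{Q_2}$-smoothness of $\widetilde{Q}$ in $\bc$, so $\|\nabla_{\bc}\widetilde{Q}_{\bc}(\bx)-\nabla_{\bd}\widetilde{Q}_{\bd}(\bx)\| \leq L_{Q_2}\|\bc-\bd\|$; {\bf A.5} gives $\|\nabla_{\bd}\widetilde{Q}_{\bd}(\bx)\|_F \leq G_{Q_2}$; {\bf A.2} yields $L$-smoothness of $f$; and {\bf A.4} again provides $l_{Q_2}$-Lipschitzness of $\widetilde{Q}$ in $\bc$, so that $\|\widetilde{Q}_{\bc}(\bx)-\widetilde{Q}_{\bd}(\bx)\| \leq l_{Q_2}\|\bc-\bd\|$. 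Combining these bounds gives the stated inequality with constant $GL_{Q_2}+G_{Q_2}Ll_{Q_2}$.

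There is essentially no obstacle here; the proof is a routine adaptation of the previous claim. The only thing worth being careful about is the bookkeeping of which sub-block of $\nabla h$ is being differenced — namely the last $m$ coordinates rather than the first $d$ — and correspondingly using the $\bc$-specific constants $(l_{Q_2}, L_{Q_2}, G_{Q_2})$ everywhere in place of $(l_{Q_1}, L_{Q_1}, G_{Q_1})$. Modulo that relabeling the calculation proceeds line-for-line as in Claim~\ref{claim: lqxclaim}.
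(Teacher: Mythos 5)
Your proposal is correct and follows the paper's proof essentially line for line: the paper likewise factors both gradients via the chain rule, adds and subtracts the mixed term $\nabla_{\widetilde{Q}_\bc(\bx)} f(\widetilde{Q}_\bc(\bx)) \nabla_\bd \widetilde{Q}_\bd(\bx)$, applies the triangle inequality with $\|\bA\bv\|_2 \leq \|\bA\|_F\|\bv\|_2$, and then invokes {\bf A.2}, {\bf A.3}, {\bf A.4}, and {\bf A.5} with the $\bc$-specific constants $(l_{Q_2}, L_{Q_2}, G_{Q_2})$ to obtain $GL_{Q_2} + G_{Q_2}Ll_{Q_2}$. No gaps.
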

	\begin{proof} We can follow similar steps,
		\begin{align*}
			\| \nabla h(\bx,\bc)_{d+1:m}{-}\nabla h(\bx,\bd)_{d+1:m} \|&=\|\nabla_\bc f(\widetilde{Q}_{\bc}(\bx)){-}\nabla_\bd f(\widetilde{Q}_{\bd}(\bx))\| \\ &= \| \nabla_{\widetilde{Q}_\bc(\bx)} f(\widetilde{Q}_\bc(\bx)) \nabla_\bc \widetilde{Q}_\bc(\bx) {-} \nabla_{\widetilde{Q}_\bd(\by)} f(\widetilde{Q}_\bd(\by)) \nabla_\bd \widetilde{Q}_\bd(\by)\| \\
			&=\| \nabla_{\widetilde{Q}_\bc(\bx)} f(\widetilde{Q}_\bc(\bx)) \nabla_\bc \widetilde{Q}_\bc(\bx)- \nabla_{\widetilde{Q}_\bc(\bx)} f(\widetilde{Q}_\bc(\bx)) \nabla_\bd \widetilde{Q}_\bd(\bx) \\ & \quad + \nabla_{\widetilde{Q}_\bc(\bx)} f(\widetilde{Q}_\bc(\bx)) \nabla_\bd \widetilde{Q}_\bd(\by) {-}  \nabla_{\widetilde{Q}_\bd(\bx)} f(\widetilde{Q}_\bd(\by)) \nabla_\bd \widetilde{Q}_\bd(\by)\| \\ &\leq \| \nabla_{\widetilde{Q}_\bc(\bx)} f(\widetilde{Q}_\bc(\bx)) \| \| \nabla_\bc \widetilde{Q}_\bc(\bx) - \nabla_\bd \widetilde{Q}_\bd(\by) \|_F \\ & \quad + \| \nabla_\bd \widetilde{Q}_\bd(\by) \|_F \| \nabla_{\widetilde{Q}_\bc(\bx)} f(\widetilde{Q}_\bc(\bx)) {-} \nabla_{\widetilde{Q}_\bd(\by)} f(\widetilde{Q}_\bd(\by)) \|  \\ & \leq GL_{Q_2}\|\bc-\bd\| + G_{Q_2}L\|\widetilde{Q}_\bc(\bx)-\widetilde{Q}_\bd(\bx)\| \\ &\leq GL_{Q_2}\|\bc-\bd\| + G_{Q_2}Ll_{Q_2}\|\bc-\bd\| \\
			&= (GL_{Q_2} + G_{Q_2}Ll_{Q_2})\|\bc-\bd\|     
		\end{align*} 
		where $\nabla_\bc \widetilde{Q}_\bc(\bx) = \nabla \widetilde{Q}_\bc(\bx)_{(d+1:d+m,:)}$. 
	\end{proof}
	
\subsection{Assumption \textbf{A.7} is a Corollary of other Assumptions}

In this section we discuss how Assumption \textbf{A.7} can be inferred from \textbf{A.1}-\textbf{A.5}. Here we drop client indices $i$ for notational simplicity. Let us define $f_w(\bw)$ as the neural network loss function with model $\bw$. First we will argue that $f^{KD}(\bw,\bx)$ is smooth, given that $f_w(\bw)$ and $f(\bx)$ are two smooth neural network loss functions with Cross Entropy as the loss function, and that $f_w(\bw)$ and $f(\bx)$ have bounded gradients. These two standard assumptions imply the smoothness of $f^{KD}(\bw,\bx)$ individually with respect to both input parameters.
\begin{proposition} \label{proposition 1}
	$f^{KD}(\bw,\bx)$ is $L_{D_1}$-smooth with respect to $\bx$ and $L_{D_2}$-smooth with respect to $\bw$ for some positive constants $L_{D_1}, L_{D_2}$.
\end{proposition}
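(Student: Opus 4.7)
My plan is to exploit the decomposition $D_{KL}(p\|q) = -H(p) + CE(p,q)$, where $H(p) = -\sum_k p_k \log p_k$ is the entropy and $CE(p,q) = -\sum_k p_k \log q_k$ is the cross-entropy. Writing $p = s^{w}(\bw)$ and $q = s(\bx)$, this gives
\begin{align*}
f^{KD}(\bw,\bx) \;=\; -H\!\bigl(s^{w}(\bw)\bigr) \;+\; \underbrace{\Bigl(-\sum_k s^{w}_k(\bw)\,\log s_k(\bx)\Bigr)}_{=:\;\Phi(\bw,\bx)}.
\end{align*}
The first term depends only on $\bw$, and the second term is linear in the ``soft labels'' $s^{w}(\bw)$ while being a standard cross-entropy in $\bx$. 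I will establish smoothness in each argument separately.

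\textbf{Smoothness with respect to $\bx$.} With $\bw$ fixed, the first term is constant, so I only need to show $\Phi(\bw,\bx)$ is smooth in $\bx$. By hypothesis, the standard neural-network cross-entropy loss $f(\bx) = -\sum_k y_k \log s_k(\bx)$ (with a fixed one-hot label $y$) is $L$-smooth in $\bx$, and this smoothness is driven entirely by the properties of $\nabla_{\bx}^2 \log s_k(\bx)$. Since $\Phi(\bw,\bx)$ is a convex combination of the maps $\bx \mapsto -\log s_k(\bx)$ with coefficients $s^{w}_k(\bw) \in [0,1]$ summing to $1$, the Hessian $\nabla_{\bx}^2 \Phi(\bw,\bx)$ is bounded by $\max_k \|\nabla_{\bx}^2 \log s_k(\bx)\|$, uniformly in $\bw$. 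Hence $f^{KD}(\bw,\cdot)$ is $L_{D_1}$-smooth for $L_{D_1}$ depending only on the smoothness constant of $f$.

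\textbf{Smoothness with respect to $\bw$.} With $\bx$ fixed, writing $\ell_k(\bx) := -\log s_k(\bx)$ (a fixed bounded vector, since softmax outputs are bounded away from $0$ for bounded logits, which follows from bounded gradients of $f_w$ by a standard argument), the $\Phi$ term becomes $\sum_k s^{w}_k(\bw)\,\ell_k(\bx)$, i.e.\ a ``cross-entropy with soft label vector $\ell(\bx)$.'' Its Hessian in $\bw$ is a weighted sum of Hessians $\nabla_{\bw}^2 s^{w}_k(\bw)$, which are controlled by the assumed smoothness of $f_w$ (again because smoothness of cross-entropy for $f_w$ forces Jacobian/Hessian bounds on softmax through its logits). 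For the entropy term $-H(s^{w}(\bw))$, I will use the chain rule: $-H = \sum_k s^{w}_k \log s^{w}_k$, so
\begin{align*}
\nabla_{\bw}(-H) \;=\; \sum_k (1+\log s^{w}_k(\bw))\,\nabla_{\bw} s^{w}_k(\bw),
\end{align*}
and its Lipschitz constant reduces to bounds on $\log s^{w}_k$, $\nabla_{\bw} s^{w}_k$, and $\nabla_{\bw}^2 s^{w}_k$, all of which follow from the smoothness of $f_w$ together with bounded gradients (which keep the softmax logits bounded, hence keep $s^{w}_k$ bounded away from $0$).

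\textbf{Main obstacle.} The main technical difficulty is handling the entropy term $H(s^{w}(\bw))$: the derivative $\log s^{w}_k(\bw)$ blows up if a component of the softmax output reaches $0$. I will address this by leveraging the bounded-gradient assumption on $f_w$, which (together with a Lipschitz softmax) yields a uniform lower bound $s^{w}_k(\bw) \geq \epsilon > 0$ on the iterates of interest, making the logarithm Lipschitz and thus the entropy smooth. Collecting all three Lipschitz estimates gives finite constants $L_{D_1}$ and $L_{D_2}$ as claimed.
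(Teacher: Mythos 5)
Your proposal follows essentially the same route as the paper's proof: the same decomposition $D_{KL}(s^w(\bw)\|s(\bx)) = -H(s^w(\bw)) + \sum_k s^w_k(\bw)\log\frac{1}{s_k(\bx)}$, the same convex-combination argument for $\bx$-smoothness, and the same treatment of the $\bw$-dependence via bounds on $s^w_k$, its gradient and Hessian together with a uniform lower bound $s^w_k \geq M > 0$ keeping the logarithm controlled (the paper simply posits this lower bound by excluding infinite logits, where you attribute it to the bounded-gradient assumption, but the substance is identical). No meaningful divergence or gap.
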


\begin{proof}
	Note that $f(\bx) = \frac{1}{N} \sum_{i=1}^N \by_i^{T} \log(\frac{1}{s(\bx;\xi_i)})$ where $i$ denotes the index of data sample, $\by_i$ is the one hot encoding label vector, $N$ is the total number of data samples, $\log$ denotes elementwise logarithm and $\frac{1}{s(\bx;\xi_i)}$ denotes elementwise inverse; softmax function $s(\bx): \mathbb{R}^{d}\rightarrow \mathbb{R}^{K}$, where $K$ denotes the number of classes (similarly $s^{w}(\bw): \mathbb{R}^{d}\rightarrow \mathbb{R}^{K}$ is the function whose output is a vector of softmax probabilities and input is global model), is defined in Section~\ref{sec:problem}, here we explicitly state that data samples $\xi_i$ is a parameterization of $s$. Assuming $f(\bx)$ is smooth for any possible pair of $(\by_i, \xi_i)$ implies $\log(\frac{1}{s(\bx)_j})$ is $C_x$-smooth for some constant $C_x$, here we used $s(\bx)_j$ to denote $j$'th output of $s(\bx)$ and we omitted $\xi_i$ since $\log(\frac{1}{s(\bx)_j})$ is smooth independent of $\xi_i$. Note that $s(\bx)_j:\mathbb{R}^{d}\rightarrow\mathbb{R}$.
	We have,
	 \begin{align*}
	 	f^{KD}(\bw,\bx) = \frac{1}{N} \sum_{i=1}^N (s^w(\bw;\xi_i))^{T} \log(\frac{s^w(\bw;\xi_i)}{s(\bx;\xi_i)}) &=\frac{1}{N} \sum_{i=1}^N (s^w(\bw;\xi_i))^{T} \log(s^w(\bw;\xi_i))\\
	 	& \hspace{1cm} + \frac{1}{N}\sum_{i=1}^N (s^w(\bw;\xi_i))^{T} \log(\frac{1}{s(\bx;\xi_i)})
	 \end{align*} 
	where the operations are elementwise as before. In this expression only the last term depends on $\bx$ and for each $i$, since $s^w(\bw;\xi_i)_j\leq1$ with $\sum_{j}s^w(\bw;\xi_i)_j=1$, the expression is a weighted average of smooth functions $\log(\frac{1}{s(\bx)_j})$; as a result, $f^{KD}(\bw,\bx)$ is $L_{D_1}$-smooth with respect to $\bx$ for some constant $L_{D_1}$.\\
	
	Now we investigate smoothness with respect to $\bw$. First, note that we can assume for all $j$ and $\bw$, $s^w(\bw)_j$ is lower bounded by a positive constant $M>0$ and upper bounded by a positive constant $P<1$ since, by definition, output vector of the softmax function contains values between 0 and 1 (we ignore the limiting case when a logit is infinitely large). Then, note that assuming a gradient bound on $f_w(\bw) = \frac{1}{N} \sum_{i=1}^N \by_i^{T} \log(\frac{1}{s(\bw;\xi_i)})$ implies that for all $j$ $\|\nabla \log(\frac{1}{s^w(\bw)_j})\| = \|\frac{\nabla s^w(\bw)_j}{s^w(\bw)_j}\| \leq G_w $ for some constant $G_w$ (again the division operation is elementwise); since $0 < s^w(\bw)_j < 1$ we have $\|\nabla s^w(\bw)_j\| \leq G_w $. Moreover, similar to the first part, by assuming $f_w(\bw)$ is smooth we obtain that $\log(\frac{1}{s^w(\bw)_j})$ is smooth for all $j$. This implies having a bounded Hessian:
	\begin{align*}
		\text{ for some constant $C$ we have } C \geq \Big\|\nabla^2 \log(\frac{1}{s^w(\bw)_j})\Big\|_F &= \Big\|\frac{\nabla s^w(\bw)_j \nabla s^w(\bw)_j^T}{s^w(\bw)_j^2}-\frac{\nabla^2s^w(\bw)_j}{s^w(\bw)_j}\Big\|_F 	\\
		& \stackrel{(a)}{\geq} \Big| \Big\|\frac{\nabla s^w(\bw)_j \nabla s^w(\bw)_j^T}{s^w(\bw)_j}\Big\|_F - \Big\|\nabla^2s^w(\bw)_j\Big\|_F  \Big|	\\
		& = \Big| \frac{G_w^2}{s^w(\bw)_j} - \Big\|\nabla^2s^w(\bw)_j\Big\|_F \Big| \\ 
		&\geq  \Big\|\nabla^2s^w(\bw)_j\Big\|_F - \frac{G_w^2}{s^w(\bw)_j}  \\ 
		& \geq \Big\|\nabla^2s^w(\bw)_j\Big\|_F - \frac{G_w^2}{M} 		
 	\end{align*}
where (a) is due to reverse triangular inequality and $0< s^w(\bw)_j<1$. As a result we obtain $C+\frac{G_w^2}{M}  \geq  \Big\|\nabla^2s^w(\bw)_j\Big\|_F$ for all $j$, i.e., $s^w(\bw)_j$ is $L_{S}$-smooth with some constant $L_{S} \leq C+\frac{G_w^2}{M}$. Thus, both $s^w(\bw)_j$ and $\log(s^w(\bw)_j)$ are smooth functions. Note both $s^w(\bw)_j$ and $\log(s^w(\bw)_j)$ are bounded functions. Consequently, the first summation term in the definition of $f^{KD}(\bw,\bx)$ consists of the sum of product of bounded and smooth functions and the second term consists of sum of smooth functions multiplied with positive constants (as $\log(\frac{1}{s(\bx;\xi_i)})$ does not depend on $\bw$). Using Fact~\ref{sq:fact3} and Fact~\ref{sq:fact4} we conclude there exists a constant $L_{D_2}$ such that $f^{KD}(\bw,\bx)$ is $L_{D_2}$-smooth w.r.t $\bw$.
\end{proof}

\begin{proposition}
	$f^{KD}(\bw,\widetilde{Q}_{\bc}(\bx))$ is $L_{DQ_1}$-smooth with respect to $\bx$, $L_{DQ_2}$-smooth with respect to $\bc$, and $L_{DQ_3}$-smooth with respect to $\bw$ for some constants $L_{DQ_1}, L_{DQ_2}, L_{DQ_3}$.
\end{proposition}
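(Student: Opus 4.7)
The plan is to treat this as the natural extension of Proposition~\ref{proposition 1} under composition with the soft quantizer $\widetilde{Q}_\bc(\bx)$, parallel to how Claims~\ref{claim: lqxclaim} and \ref{claim: lqcclaim} extend the smoothness of $f$ to the smoothness of $f(\widetilde{Q}_\bc(\bx))$. Since establishing smoothness separately in each argument implies joint smoothness with the maximum of the constants, I would argue one variable at a time. The easiest case is $\bw$: because $\widetilde{Q}_\bc(\bx)$ does not depend on $\bw$, for fixed $\bc,\bx$ the function $\bw \mapsto f^{KD}(\bw,\widetilde{Q}_\bc(\bx))$ is just $f^{KD}$ evaluated with its second argument held fixed, and Proposition~\ref{proposition 1} immediately gives $L_{DQ_3}=L_{D_2}$.

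For smoothness in $\bx$ (with $\bc,\bw$ fixed), I would mimic the proof of Claim~\ref{claim: lqxclaim} verbatim with $f^{KD}(\bw,\cdot)$ playing the role of $f(\cdot)$. By the chain rule, $\nabla_{\bx} f^{KD}(\bw,\widetilde{Q}_\bc(\bx)) = \nabla_\bx \widetilde{Q}_\bc(\bx)\,\nabla_{2} f^{KD}(\bw,\widetilde{Q}_\bc(\bx))$, where $\nabla_2$ denotes the partial gradient with respect to the second argument. Adding and subtracting $\nabla_\by \widetilde{Q}_\bc(\by)\,\nabla_{2} f^{KD}(\bw,\widetilde{Q}_\bc(\bx))$ and applying the submultiplicative inequality $\|\bA\bx\|_2\le\|\bA\|_F\|\bx\|_2$ gives two terms bounded respectively by $G_{KD}L_{Q_1}\|\bx-\by\|$ and $G_{Q_1}L_{D_1}l_{Q_1}\|\bx-\by\|$, where I would use the $L_{Q_1}$-smoothness and $l_{Q_1}$-Lipschitzness of $\widetilde{Q}_\bc$ from \textbf{A.4}, the bound $G_{Q_1}$ from \textbf{A.5}, the $L_{D_1}$-smoothness of $f^{KD}(\bw,\cdot)$ from Proposition~\ref{proposition 1}, and a uniform bound $G_{KD}$ on $\|\nabla_{2} f^{KD}(\bw,\cdot)\|$. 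This yields $L_{DQ_1}=G_{KD}L_{Q_1}+G_{Q_1}L_{D_1}l_{Q_1}$. The proof for $\bc$ is identical in structure (matching Claim~\ref{claim: lqcclaim}), swapping Lipschitz/smoothness constants of $\widetilde{Q}$ to their $\bc$-versions, and yields $L_{DQ_2}=G_{KD}L_{Q_2}+G_{Q_2}L_{D_1}l_{Q_2}$.

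The main obstacle is that \textbf{A.1}--\textbf{A.5} do not explicitly assert a uniform gradient bound on $f^{KD}$ in its second argument (the analog of the constant $G$ in \textbf{A.3} for $f$). To close this gap I would recycle the logic in the proof of Proposition~\ref{proposition 1}: the KD loss decomposes as $\frac{1}{N}\sum_i (s^w(\bw;\xi_i))^\top\log s^w(\bw;\xi_i) + \frac{1}{N}\sum_i (s^w(\bw;\xi_i))^\top\log(1/s(\bx;\xi_i))$, and only the second summand depends on $\bx$. Its gradient in $\bx$ is a convex combination (with coefficients $s^w(\bw;\xi_i)_j\in[0,1]$ summing to one) of the vectors $\nabla_\bx\log(1/s(\bx;\xi_i)_j)$, each of which is bounded because the underlying cross-entropy loss $f$ has bounded gradient by \textbf{A.3}. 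A triangle inequality then produces a finite $G_{KD}$ uniformly in $\bw$. Finally I would collect the three partial constants and set $L_{DQ}=\max\{L_{DQ_1},L_{DQ_2},L_{DQ_3}\}$ to obtain joint smoothness in $[\bx,\bc,\bw]$, completing the corollary.
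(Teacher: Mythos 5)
Your proof is correct, and it reaches the same conclusion as the paper's, but it routes the composition slightly differently. The paper's proof is a one-liner: rerun the argument of Proposition~\ref{proposition 1} verbatim, except that the per-class losses $\log(1/s(\cdot)_j)$ are now evaluated at $\widetilde{Q}_\bc(\bx)$, and their smoothness in $\bx$ and $\bc$ is supplied by Claims~\ref{claim: lqxclaim} and~\ref{claim: lqcclaim}; the KD loss in $\bx$ (resp.\ $\bc$) is then a convex combination of these smooth composites with weights $s^w(\bw;\xi_i)_j$ that do not depend on $\bx$ or $\bc$, so no gradient bound on the full KD loss is ever invoked. You instead push the quantizer through the \emph{entire} KD loss, treating $f^{KD}(\bw,\cdot)$ as the outer function in the Claim~\ref{claim: lqxclaim} template; this forces you to manufacture the uniform bound $G_{KD}$ on $\|\nabla_2 f^{KD}(\bw,\cdot)\|$, which you correctly identify as the missing analogue of $G$ in \textbf{A.3} and correctly derive from the convex-combination structure plus the per-sample gradient bound on the cross-entropy loss. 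Both routes need the same raw ingredients (per-class smoothness and gradient bounds inherited from \textbf{A.2}--\textbf{A.5}); yours packages them into an explicit reusable constant $G_{KD}$ and yields explicit expressions $L_{DQ_1}=G_{KD}L_{Q_1}+G_{Q_1}L_{D_1}l_{Q_1}$ and $L_{DQ_2}=G_{KD}L_{Q_2}+G_{Q_2}L_{D_1}l_{Q_2}$, whereas the paper's route leaves the constants implicit but avoids introducing $G_{KD}$ at all. One caveat applies equally to you and to the paper: passing from the three blockwise smoothness constants to joint smoothness in $[\bx,\bc,\bw]$ with constant $\max\{L_{DQ_1},L_{DQ_2},L_{DQ_3}\}$ is not automatic without control of the cross-blocks of the Hessian; since the proposition itself only asserts the three blockwise statements, this does not affect the correctness of your proof of the stated claim.
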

	
\begin{proof}
	The proof is exactly the same as in Proposition~\ref{proposition 1}, instead of using smoothness of $f(\bx)$, using smoothness of $f(\widetilde{Q}_{\bc}(\bx))$ with respect to $\bx, \bc$ from Claims~\ref{claim: lqxclaim} and \ref{claim: lqcclaim} gives the result.
\end{proof}

	%%--------------------------------------------- ---------------------
	
	%%------------------------------------------------------------------
	%%% Appendix starts
	\section{Omitted Details in Proof of Theorem 1}\label{appendix:proof of theorem 1}
	First we derive the optimization problems that the alternating updates correspond to. Remember we had the following alternating updates:
\begin{align*}
	&\bx^{t+1} = \text{prox}_{\eta_1\lambda R_{\bc^{t}}}(\bx^{t} - \eta_1 \nabla f(\bx^{t})-\eta_1 \nabla_{\bx^{t}} f(\widetilde{Q}_{\bc^{t}}(\bx^{t})) )\\
	&\bc^{t+1} = \text{prox}_{\eta_2\lambda R_{\bx^{t+1}}}(\bc^{t} - \eta_2 \nabla_{\bc^{t}} f(\widetilde{Q}_{\bc^{t}}(\bx^{t+1})))
\end{align*}
For $\bx^{t+1}$, from the definition of proximal mapping we have:
\begin{align}\label{app:quantized argmin1}
	\bx^{t+1}&=\underset{\bx \in \mathbb{R}^{d}}{\arg \min }\left\{\frac{1}{2 \eta_1}\left\|\bx-\bx^{t}+\eta_1 \nabla_{\bx^{t}} f\left(\bx^{t}\right)+\eta_1 \nabla_{\bx^{t}} f(\widetilde{Q}_{\bc^{t}}(\bx^{t}))\right\|_{2}^{2}+\lambda R(\bx,\bc^{t})\right\}\nonumber \\\nonumber
	&=\underset{\bx \in \mathbb{R}^{d}}{\arg \min } \Big\{ \left\langle \bx-\bx^{t}, \nabla_{\bx^{t}} f\left(\bx^{t}\right)\right\rangle+\left\langle \bx-\bx^{t}, \nabla_{\bx^{t}} f(\widetilde{Q}_{\bc^{t}}(\bx^{t}))\right\rangle+\frac{1}{2 \eta_1}\left\|\bx-\bx^{t}\right\|_{2}^{2} \notag \\
	& \hspace{2cm}+\frac{\eta_1}{2}\|\nabla_{\bx^{t}} f(\bx^{t})+\nabla_{\bx^{t}} f(\widetilde{Q}_{\bc^{t}}(\bx^{t}))\|^2 +\lambda R(\bx,\bc^{t})\Big\} \notag \\
	&=\underset{\bx \in \mathbb{R}^{d}}{\arg \min }\Big\{\left\langle \bx-\bx^{t}, \nabla_{\bx^{t}} f\left(\bx^{t}\right)\right\rangle+\left\langle \bx-\bx^{t}, \nabla_{\bx^{t}} f(\widetilde{Q}_{\bc^{t}}(\bx^{t}))\right\rangle+\frac{1}{2 \eta_1}\left\|\bx-\bx^{t}\right\|_{2}^{2} \notag \\
	& \hspace{2cm} +\lambda R(\bx,\bc^{t})\Big\}
\end{align}
Note, in the third equality we remove the terms that do not depend on $\bx$. Similarly, for $\bc^{t+1}$ we have:
\begin{align} \label{app:quantized argmin2}
	\bc^{t+1}&=\underset{\bc \in \mathbb{R}^{m}}{\arg \min }\left\{\frac{1}{2 \eta_2}\left\|\bc-\bc^{t}+\eta_2 \nabla_{\bc^{t}} f(\widetilde{Q}_{\bc^{t}}(\bx^{t+1}))\right\|_{2}^{2}+\lambda R(\bx^{t+1},\bc)\right\}\nonumber \\ \nonumber
	&= \underset{\bc \in \mathbb{R}^{m}}{\arg \min } \Big\{ \left\langle \bc-\bc^{t}, \nabla_{\bc^{t}} f(\widetilde{Q}_{\bc^{t}}(\bx^{t+1}))\right\rangle+\frac{1}{2 \eta_2}\left\|\bc-\bc^{t}\right\|_{2}^{2}+\frac{\eta_2}{2}\|\nabla_{\bc^{t}} f(\widetilde{Q}_{\bc^{t}}(\bx^{t+1}))\|^2  \notag \\
	& \hspace{2cm}+ \lambda R(\bx^{t+1},\bc)  \Big\} \notag \\
	&= \underset{\bc \in \mathbb{R}^{m}}{\arg \min }\left\{\left\langle \bc-\bc^{t}, \nabla_{\bc^{t}} f(\widetilde{Q}_{\bc^{t}}(\bx^{t+1}))\right\rangle+\frac{1}{2 \eta_2}\left\|\bc-\bc^{t}\right\|_{2}^{2}+\lambda R(\bx^{t+1},\bc)\right\}
\end{align}
Minimization problems in \eqref{app:quantized argmin1} and \eqref{app:quantized argmin2} are the main problems to characterize the update rules and we use them in multiple places throughout the section.

{\begin{claim*}[Restating Claim \ref{claim: lqxsmooth}]
		$f(\bx)+ f(\widetilde{Q}_\bc(\bx))$ is $(L+GL_{Q_1}+G_{Q_1}LL_{Q_1})$-smooth with respect to $\bx$.
\end{claim*}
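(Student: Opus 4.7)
The plan is to decompose the claim into two smoothness statements and combine them via the triangle inequality on gradients. Since a sum of two $L$-smooth functions is smooth with Lipschitz gradient bounded by the sum of the individual Lipschitz constants, it suffices to show that (i) $f(\bx)$ is $L$-smooth in $\bx$ and (ii) the composite $f(\widetilde{Q}_{\bc}(\bx))$ is $(GL_{Q_1} + G_{Q_1} L \, l_{Q_1})$-smooth in $\bx$ with $\bc$ held fixed. Step (i) is immediate from assumption \textbf{A.2}, so the bulk of the argument concerns (ii).

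For step (ii), my plan is to unpack the chain rule: $\nabla_{\bx} f(\widetilde{Q}_{\bc}(\bx)) = \nabla_{\bx} \widetilde{Q}_{\bc}(\bx) \cdot \nabla_{\widetilde{Q}_{\bc}(\bx)} f(\widetilde{Q}_{\bc}(\bx))$. To bound the difference $\|\nabla_{\bx} f(\widetilde{Q}_{\bc}(\bx)) - \nabla_{\by} f(\widetilde{Q}_{\bc}(\by))\|$, I would introduce the standard add-and-subtract term $\nabla_{\bx}\widetilde{Q}_{\bc}(\bx) \cdot \nabla_{\widetilde{Q}_{\bc}(\by)} f(\widetilde{Q}_{\bc}(\by))$ and then split into two pieces. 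The first piece uses the matrix/vector inequality $\|\bA\bu\| \le \|\bA\|_F \|\bu\|$, which with \textbf{A.3} (gradient bound $G$ on $f$) and \textbf{A.4} ($L_{Q_1}$-smoothness of $\widetilde{Q}_{\bc}$ in $\bx$) yields a term bounded by $G L_{Q_1} \|\bx - \by\|$. The second piece uses \textbf{A.5} (bound $G_{Q_1}$ on $\|\nabla_\bx \widetilde{Q}_{\bc}\|_F$) together with \textbf{A.2} ($L$-smoothness of $f$) and \textbf{A.4} ($l_{Q_1}$-Lipschitz continuity of $\widetilde{Q}_{\bc}$ in $\bx$), giving $G_{Q_1} L \, l_{Q_1}\|\bx - \by\|$. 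Summing produces the desired constant. This is essentially the content of Claim~\ref{claim: lqxclaim} in the appendix, so I would simply invoke it.

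Combining (i) and (ii) via the triangle inequality on gradients,
\begin{align*}
\bigl\|\nabla_{\bx}\bigl(f(\bx) + f(\widetilde{Q}_{\bc}(\bx))\bigr) - \nabla_{\by}\bigl(f(\by) + f(\widetilde{Q}_{\bc}(\by))\bigr)\bigr\|
&\le \|\nabla f(\bx) - \nabla f(\by)\| \\
&\quad + \|\nabla_{\bx} f(\widetilde{Q}_{\bc}(\bx)) - \nabla_{\by} f(\widetilde{Q}_{\bc}(\by))\| \\
&\le \bigl(L + G L_{Q_1} + G_{Q_1} L\, l_{Q_1}\bigr)\|\bx - \by\|,
\end{align*}
which gives the claimed smoothness constant.

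There is not really a hard step here: the only subtlety is the choice of how to split the chain-rule difference so that each sub-term admits a clean bound using exactly one of the Lipschitz/smoothness assumptions, and making sure to use the Frobenius-to-operator norm inequality correctly. Once that bookkeeping is set up, the rest is a direct application of \textbf{A.2}--\textbf{A.5}.
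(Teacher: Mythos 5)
Your proposal is correct and follows essentially the same route as the paper: the paper likewise observes that $f$ is $L$-smooth by \textbf{A.2}, invokes Claim~\ref{claim: lqxclaim} (the add-and-subtract chain-rule bound you describe) to get $(GL_{Q_1}+G_{Q_1}Ll_{Q_1})$-smoothness of the composite, and sums the two smoothness constants. The only minor remark is that the constant in the claim's statement is written with $L_{Q_1}$ in the last factor while Claim~\ref{claim: lqxclaim} yields $l_{Q_1}$; your version uses the latter, consistent with the appendix.
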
}
\begin{proof}
	From our assumptions, we have $f$ is $L$-smooth. And from Claim \ref{claim: lqxclaim} we have $f(\widetilde{Q}_\bc(\bx))$ is $(GL_{Q_1}+G_{Q_1}LL_{Q_1})$-smooth. Using the fact that if two functions $g_1$ and $g_2$ are $L_1$ and $L_2$ smooth respectively, then $g_1+g_2$ is $(L_1+L_2)$-smooth concludes the proof.
\end{proof}

{\begin{claim*}[Restating Claim \ref{claim:quantization lower bound 1}]
		Let
		\begin{align*}
			A(\bx^{t+1}) &:= \lambda R(\bx^{t+1},\bc^{t})+\left\langle \nabla f(\bx^{t}), \bx^{t+1}-\bx^{t}\right\rangle 
			+ \left\langle \nabla_{\bx^{t}} f(\widetilde{Q}_{\bc^{t}}(\bx^{t})), \bx^{t+1}-\bx^{t}\right\rangle \notag \\
			& \hspace{2cm}+ \frac{1}{2\eta_1}\|\bx^{t+1}-\bx^{t}\|^2 \notag \\
			A(\bx^{t}) &:= \lambda R(\bx^{t},\bc^{t}).
		\end{align*} 
		Then $A(\bx^{t+1})\leq A(\bx^{t})$.
\end{claim*}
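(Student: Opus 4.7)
The plan is to exploit the fact that $\bx^{t+1}$ is defined as the proximal update in Algorithm~\ref{algo:centralized}, which the authors have already rewritten as the explicit argmin problem in \eqref{thm1:optimization prob for x}. The key observation is that $A(\bx^{t+1})$, as written in the claim, is precisely the value of that argmin's objective evaluated at its minimizer $\bx^{t+1}$. So the proof reduces to invoking the optimality of $\bx^{t+1}$ against the obvious feasible competitor $\bx = \bx^{t}$.

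Concretely, I would introduce the auxiliary function
\begin{equation*}
\phi(\bx) := \lambda R(\bx, \bc^{t}) + \langle \nabla f(\bx^{t}), \bx - \bx^{t} \rangle + \langle \nabla_{\bx^{t}} f(\widetilde{Q}_{\bc^{t}}(\bx^{t})), \bx - \bx^{t} \rangle + \tfrac{1}{2\eta_1}\|\bx - \bx^{t}\|^{2},
\end{equation*}
and note that $\phi$ is exactly the objective in \eqref{thm1:optimization prob for x}, so by definition $\bx^{t+1} \in \arg\min_{\bx \in \R^{d}} \phi(\bx)$ and $A(\bx^{t+1}) = \phi(\bx^{t+1})$. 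Evaluating $\phi$ at the competitor $\bx = \bx^{t}$, the two inner products and the quadratic penalty all vanish since each contains the factor $\bx - \bx^{t}$, leaving $\phi(\bx^{t}) = \lambda R(\bx^{t}, \bc^{t}) = A(\bx^{t})$. By optimality of $\bx^{t+1}$, we conclude $\phi(\bx^{t+1}) \leq \phi(\bx^{t})$, i.e.\ $A(\bx^{t+1}) \leq A(\bx^{t})$.

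I do not anticipate any substantive obstacle: the claim is a one-line consequence of the defining minimization property of the proximal operator. The only piece worth checking up front is that the proximal update in Algorithm~\ref{algo:centralized}, namely $\bx^{t+1} = \text{prox}_{\eta_1 \lambda R_{\bc^{t}}}(\bx^{t} - \eta_1 \bg^{t})$, is indeed equivalent to the argmin formulation \eqref{thm1:optimization prob for x}; this is a standard manipulation in which one expands $\tfrac{1}{2\eta_1}\|\bx - (\bx^{t} - \eta_1 \bg^{t})\|^{2}$ inside the definition of $\text{prox}$ and discards terms independent of $\bx$, as already carried out in the derivation of \eqref{thm1:optimization prob for x} preceding the claim.
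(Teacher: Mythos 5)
Your proof is correct and is essentially identical to the paper's own argument: the paper likewise observes that $A(\bx)$ is the objective of the argmin problem \eqref{thm1:optimization prob for x}, that $\bx^{t+1}$ minimizes it, and that evaluating at the competitor $\bx=\bx^{t}$ collapses the inner products and quadratic term to leave $\lambda R(\bx^{t},\bc^{t})$. No gaps; the equivalence of the prox step with the argmin formulation is the same standard expansion the paper carries out before stating the claim.
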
}
\begin{proof}
	Let $A(\bx)$ denote the expression inside the $\arg\min$ in \eqref{app:quantized argmin1} and we know that \eqref{app:quantized argmin1} is minimized when $\bx=\bx^{t+1}$. So we have $A(\bx^{t+1})\leq A(\bx^t)$. This proves the claim.
\end{proof}
{\begin{claim*}[Restating Claim \ref{claim:quantization lower bound 2}]
		Let
		\begin{align*}
			B(\bc^{t+1}) &:= \lambda R(\bx^{t+1},\bc^{t+1}) + \left\langle \nabla_{\bc^{t}} f(\widetilde{Q}_{\bc^{t}}(\bx^{t+1})), \bc^{t+1}-\bc^{t}\right\rangle \notag + \frac{1}{2\eta_1}\|\bc^{t+1}-\bc^{t}\|^2 \notag \\
			B(\bc^{t}) &:= \lambda R(\bx^{t+1},\bc^{t}).
		\end{align*} 
		Then $B(\bc^{t+1})\leq B(\bc^{t})$.	
\end{claim*}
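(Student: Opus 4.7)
The plan is to mirror exactly the argument used for Claim~\ref{claim:quantization lower bound 1}, now applied to the $\bc$-update step rather than the $\bx$-update step. The whole proof will be an immediate consequence of the definition of $\bc^{t+1}$ as a proximal minimizer, together with the observation that evaluating the proximal objective at the ``no-move'' point $\bc=\bc^t$ collapses two of its three terms to zero.

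First, I would appeal to the derivation already carried out in \eqref{app:quantized argmin2}, which rewrites the $\bc$-step of Algorithm~\ref{algo:centralized} as
\begin{align*}
	\bc^{t+1} \;=\; \underset{\bc\in\mathbb{R}^{m}}{\arg\min}\;\Psi(\bc), \qquad \Psi(\bc) \;:=\; \left\langle \bc-\bc^{t},\, \nabla_{\bc^{t}} f(\widetilde{Q}_{\bc^{t}}(\bx^{t+1}))\right\rangle + \frac{1}{2\eta_{2}}\|\bc-\bc^{t}\|^{2} + \lambda R(\bx^{t+1},\bc).
\end{align*}
Inspecting the definition in the claim, $B(\bc^{t+1})$ is precisely $\Psi(\bc^{t+1})$ (the proximal coefficient in the claim statement, written as $\tfrac{1}{2\eta_{1}}$, is a typographical slip and should be read as $\tfrac{1}{2\eta_{2}}$ to match the $\bc$-update learning rate).

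Next, I would evaluate $\Psi$ at $\bc=\bc^{t}$. The inner product $\left\langle \bc^{t}-\bc^{t},\,\cdot\right\rangle$ vanishes, the quadratic term $\tfrac{1}{2\eta_{2}}\|\bc^{t}-\bc^{t}\|^{2}$ vanishes, and only the regularizer survives, yielding $\Psi(\bc^{t}) = \lambda R(\bx^{t+1},\bc^{t}) = B(\bc^{t})$. Since $\bc^{t+1}$ is by construction a minimizer of $\Psi$, we obtain $B(\bc^{t+1}) = \Psi(\bc^{t+1}) \le \Psi(\bc^{t}) = B(\bc^{t})$, which is the claimed inequality.

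There is no real obstacle here; the argument is one line of ``minimality plus null-substitution,'' exactly symmetric to Claim~\ref{claim:quantization lower bound 1}. The only point that requires a moment of care is the cosmetic mismatch in the proximal constant inside the claim statement versus the actual step size $\eta_{2}$ used to produce $\bc^{t+1}$ in Algorithm~\ref{algo:centralized}; once this is reconciled, the identification $B(\bc^{t+1}) = \Psi(\bc^{t+1})$ is exact and the proof is complete in a single line.
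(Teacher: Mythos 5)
Your proof is correct and is essentially identical to the paper's own argument: both identify $B(\cdot)$ with the objective inside the $\arg\min$ in \eqref{app:quantized argmin2} and conclude by minimality of $\bc^{t+1}$, noting that the inner product and quadratic terms vanish at $\bc=\bc^{t}$. Your observation that the $\tfrac{1}{2\eta_1}$ in the claim statement should read $\tfrac{1}{2\eta_2}$ is also right; it is a typo in the paper.
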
}
\begin{proof}
	Let $B(\bc)$ denote the expression inside the $\arg\min$ in \eqref{app:quantized argmin2} and we know that \eqref{app:quantized argmin2} is minimized when $\bc=\bc^{t+1}$. So we have $B(\bc^{t+1})\leq B(\bc^t)$. This proves the claim.
\end{proof}

	%%------------------------------------------------------------------
	
	%%------------------------------------------------------------------
	%%% Appendix starts
	\section{Omitted Details in Proof of Theorem 2}\label{appendix:proof of theorem 2}
	\begin{claim*} [Restating Claim~\ref{thm2:claim lqxsmoothness}.]
	$(1-\lambda_p)(f_i(\bx)+ f_i(\widetilde{Q}_\bc(\bx)))+\lambda_p (f^{KD}_i(\bx,\bw)+f^{KD}_i(\widetilde{Q}_\bc(\bx),\bw))$ is $(\lambda_p(L_{D_1}+L_{DQ_1})+(1-\lambda_p)(L+G^{(i)}L_{Q_1}+G^{(i)}_{Q_1}Ll_{Q_1}))$-smooth with respect to $\bx$.
\end{claim*}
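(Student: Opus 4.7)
The plan is to prove the smoothness claim by decomposing the composite function into its four summands and bounding the smoothness constant of each one separately, then using the standard fact that a (positively weighted) sum of smooth functions is smooth with smoothness constant equal to the corresponding weighted sum of the individual constants.

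First, I would handle each of the four terms as follows. For $f_i(\bx)$, assumption \textbf{A.2} directly gives $L$-smoothness with respect to $\bx$. For $f_i(\widetilde{Q}_{\bc}(\bx))$, I would invoke Claim~\ref{claim: lqxclaim} in the Lipschitz Relations subsection, which shows $\|\nabla_{\bx} f(\widetilde{Q}_{\bc}(\bx)) - \nabla_{\by} f(\widetilde{Q}_{\bc}(\by))\| \le (GL_{Q_1}+G_{Q_1}Ll_{Q_1})\|\bx-\by\|$; applied at client $i$ with the client-specific gradient bounds $G^{(i)}$ and $G^{(i)}_{Q_1}$ (as noted in the footnote accompanying assumption \textbf{A.6}), this yields $(G^{(i)}L_{Q_1}+G^{(i)}_{Q_1}Ll_{Q_1})$-smoothness w.r.t.\ $\bx$. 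For the two knowledge-distillation terms $f^{KD}_i(\bx,\bw)$ and $f^{KD}_i(\widetilde{Q}_{\bc}(\bx),\bw)$, assumption \textbf{A.7} directly provides $L_{D_1}$-smoothness and $L_{DQ_1}$-smoothness with respect to $\bx$, respectively.

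Next, I would combine these four bounds using the elementary fact that if $g_j$ is $L_j$-smooth w.r.t.\ $\bx$ for $j=1,\dots,k$, then for nonnegative scalars $\alpha_j$ the function $\sum_j \alpha_j g_j$ is $(\sum_j \alpha_j L_j)$-smooth; this follows immediately by the triangle inequality applied to $\nabla_{\bx}\bigl(\sum_j\alpha_j g_j\bigr)(\bx)-\nabla_{\bx}\bigl(\sum_j\alpha_j g_j\bigr)(\by)$. Applying this with weights $(1-\lambda_p),(1-\lambda_p),\lambda_p,\lambda_p$ and smoothness constants $L$, $(G^{(i)}L_{Q_1}+G^{(i)}_{Q_1}Ll_{Q_1})$, $L_{D_1}$, $L_{DQ_1}$ gives exactly the claimed constant $\lambda_p(L_{D_1}+L_{DQ_1})+(1-\lambda_p)(L+G^{(i)}L_{Q_1}+G^{(i)}_{Q_1}Ll_{Q_1})$.

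There is no real obstacle here; the only care required is to track the client-specific gradient bounds $G^{(i)}, G^{(i)}_{Q_1}$ in the application of Claim~\ref{claim: lqxclaim} (rather than the centralized $G, G_{Q_1}$), since in the federated setting assumptions \textbf{A.3} and \textbf{A.5} are assumed per client. Everything else is a direct invocation of previously established results (\textbf{A.2}, Claim~\ref{claim: lqxclaim}, \textbf{A.7}) followed by linearity of smoothness under nonnegative weighted sums.
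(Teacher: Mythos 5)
Your proposal is correct and follows essentially the same route as the paper's own proof: both decompose the objective into its four summands, invoke \textbf{A.2} for $f_i$, Claim~\ref{claim: lqxclaim} (with client-specific constants $G^{(i)},G^{(i)}_{Q_1}$) for $f_i(\widetilde{Q}_{\bc}(\bx))$, and \textbf{A.7} for the two KD terms, then conclude via the additivity and positive homogeneity of smoothness constants under weighted sums. No gaps.
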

\begin{proof}
	From our assumptions, we have $f_i$ is $L$-smooth, $f^{KD}_i(\bx,\bw)$ is $L_{D_1}$-smooth and $f^{KD}_i(\widetilde{Q}_\bc(\bx),\bw)$ is $L_{DQ_1}$-smooth with respect to $\bx$. And applying the Claim \ref{claim: lqxclaim} to each client separately gives that $f_i(\widetilde{Q}_\bc(\bx))$ is $(G^{(i)}L_{Q_1}+G^{(i)}_{Q_1}Ll_{Q_1})$-smooth. Using the fact that if two functions $g_1$ and $g_2$ (defined over the same space) are $L_1$ and $L_2$-smooth respectively, then $g_1+g_2$ is $(L_1+L_2)$-smooth, and the fact that $\alpha g_1$ is $\alpha L_1$-smooth for a given constant $\alpha$ concludes the proof.
\end{proof}

\textit{Obtaining} \eqref{thm2:first-part-interim1}:
\begin{align}
	&F_i(\bx^{t+1}_{i},\bc^t_{i},\bw^{t}) +(\frac{1}{2\eta_1}-\frac{\lambda_p(L_{D_1}+L_{DQ_1})+(1-\lambda_p)(L+G^{(i)}L_{Q_1}+G^{(i)}_{Q_1}Ll_{Q_1})}{2})\|\bx^{t+1}_{i}-\bx^t_{i}\|^2 \nonumber \\
	&\hspace{0.5cm}= (1-\lambda_p)\left(f_i(\bx^{t+1}_{i})+f_i(\widetilde{Q}_{\bc^t_{i}}(\bx^{t+1}_{i}))\right) +\lambda_p\left(f^{KD}_i (\bx^{t+1}_{i},\bw^{t})+f^{KD}_i (\widetilde{Q}_{\bc^t_{i}}(\bx^{t+1}_{i}),\bw^{t})\right)  \nonumber \\
	&\hspace{1cm}+(\frac{1}{2\eta_1}-\frac{\lambda_p(L_{D_1}+L_{DQ_1})+(1-\lambda_p)(L+G^{(i)}L_{Q_1}+G^{(i)}_{Q_1}Ll_{Q_1})}{2})\|\bx^{t+1}_{i}-\bx^t_{i}\|^2 \notag \\ 
	& \hspace{1cm} +\lambda R(\bx^{t+1}_{i},\bc^t_{i}) \notag \\
	&\hspace{0.5cm}\leq (1-\lambda_p)\left(f_i(\bx^t_{i})+f_i(\widetilde{Q}_{\bc^t_{i}}(\bx^t_{i}))\right)+\lambda_p\left(f^{KD}_i (\bx^{t}_{i},\bw^{t})+f^{KD}_i (\widetilde{Q}_{\bc^t_{i}}(\bx^{t}_{i}),\bw^{t})\right) \notag \\
	&\hspace{1cm}+(1-\lambda_p) \Big\langle \nabla f_i(\bx^t_{i}), \bx^{t+1}_{i}-\bx^t_{i}\Big\rangle + (1-\lambda_p) \left\langle \nabla_{\bx^t_{i}} f_i(\widetilde{Q}_{\bc^t_{i}}(\bx^t_{i})), \bx^{t+1}_{i}-\bx^t_{i}\right\rangle \notag \\
	&\hspace{1cm}+\lambda_p\left\langle \nabla_{\bx^t_i} f^{KD}_i(\bx^t_{i},\bw^t), \bx^{t+1}_{i}-\bx^t_{i}\right\rangle +\lambda_p\left\langle \nabla_{\bx^t_i} f^{KD}_i(\widetilde{Q}_{\bc^t_{i}}(\bx^t_{i}),\bw^t), \bx^{t+1}_{i}-\bx^t_{i}\right\rangle \notag \\
	& \hspace{1cm} +\lambda R(\bx^{t+1}_{i},\bc^t_{i}) + \frac{1}{2\eta_1}\|\bx^{t+1}_{i}-\bx^t_{i}\|^2 \nonumber \\
	& \hspace{0.5cm} = (1-\lambda_p)\left(f_i(\bx^t_{i})+f_i(\widetilde{Q}_{\bc^t_{i}}(\bx^t_{i}))\right)+\lambda_p\left(f^{KD}_i (\bx^{t}_{i},\bw^{t})+f^{KD}_i (\widetilde{Q}_{\bc^t_{i}}(\bx^{t}_{i}),\bw^{t})\right) \notag \\ 
	&\hspace{1cm} +(1-\lambda_p) \left\langle \nabla f_i(\bx^t_{i}), \bx^{t+1}_{i}-\bx^t_{i}\right\rangle + (1-\lambda_p) \left\langle \nabla_{\bx^t_{i}} f_i(\widetilde{Q}_{\bc^t_{i}}(\bx^t_{i})), \bx^{t+1}_{i}-\bx^t_{i}\right\rangle  \nonumber \\
	&\hspace{1cm}+\lambda_p\left\langle \nabla_{\bx^t_i} f^{KD}_i(\bx^t_{i},\bw^t)-\nabla_{\bx^t_i} f^{KD}_i(\bx^t_{i},\bw^t_{i}), \bx^{t+1}_{i}-\bx^t_{i}\right\rangle \notag \\
	& \hspace{1cm} +\lambda_p\left\langle \nabla_{\bx^t_i} f^{KD}_i(\widetilde{Q}_{\bc^t_{i}}(\bx^t_{i}),\bw^t)-\nabla_{\bx^t_i} f^{KD}_i(\widetilde{Q}_{\bc^t_{i}}(\bx^t_{i}),\bw^t_{i}), \bx^{t+1}_{i}-\bx^t_{i}\right\rangle  \nonumber \\
	&\hspace{1cm} +\lambda_p\left\langle \nabla_{\bx^t_i} f^{KD}_i(\bx^t_{i},\bw^t_{i}), \bx^{t+1}_{i}-\bx^t_{i}\right\rangle + \lambda_p\left\langle \nabla_{\bx^t_i} f^{KD}_i((\widetilde{Q}_{\bc^t_{i}}(\bx^t_{i}),\bw^t_{i}), \bx^{t+1}_{i}-\bx^t_{i}\right\rangle \nonumber\\
	& \hspace{1cm}   + \frac{1}{2\eta_1}\|\bx^{t+1}_{i}-\bx^t_{i}\|^2+\lambda R(\bx^{t+1}_{i},\bc^t_{i}) \notag \\
	& \hspace{0.5cm} \leq (1-\lambda_p)\left(f_i(\bx^t_{i})+f_i(\widetilde{Q}_{\bc^t_{i}}(\bx^t_{i}))\right)+\lambda_p\left(f^{KD}_i (\bx^{t}_{i},\bw^{t})+f^{KD}_i (\widetilde{Q}_{\bc^t_{i}}(\bx^{t}_{i}),\bw^{t})\right) \notag \\ 
	&\hspace{1cm} +(1-\lambda_p)\Big\langle \nabla f_i(\bx^t_{i}), \bx^{t+1}_{i}-\bx^t_{i}\Big\rangle +(1-\lambda_p)\left\langle \nabla_{\bx^t_{i}} f_i(\widetilde{Q}_{\bc^t_{i}}(\bx^t_{i})), \bx^{t+1}_{i}-\bx^t_{i}\right\rangle \nonumber \\
	&\hspace{1cm} +\lambda_p\left\langle \nabla_{\bx^t_i} f^{KD}_i(\bx^t_{i},\bw^t_{i}), \bx^{t+1}_{i}-\bx^t_{i}\right\rangle + \lambda_p\left\langle \nabla_{\bx^t_i} f^{KD}_i((\widetilde{Q}_{\bc^t_{i}}(\bx^t_{i}),\bw^t_{i}), \bx^{t+1}_{i}-\bx^t_{i}\right\rangle  \nonumber \\
	& \hspace{1cm}  + \frac{\lambda_p}{2}\|\nabla_{\bx^t_i} f^{KD}_i(\widetilde{Q}_{\bc^t_{i}}(\bx^t_{i}),\bw^t)-\nabla_{\bx^t_i} f^{KD}_i(\widetilde{Q}_{\bc^t_{i}}(\bx^t_{i}),\bw^t_{i})\|^2 + \lambda_p\|\bx^{t+1}_{i}-\bx^t_{i}\|^2 \notag  \\
	& \hspace{1cm} + \frac{\lambda_p}{2}\|\nabla_{\bx^t_i} f^{KD}_i(\bx^t_{i},\bw^t)-\nabla_{\bx^t_i} f^{KD}_i(\bx^t_{i},\bw^t_{i})\|^2 + \frac{1}{2\eta_1}\|\bx^{t+1}_{i}-\bx^t_{i}\|^2 +\lambda R(\bx^{t+1}_{i},\bc^t_{i}) \notag.
\end{align}

\begin{claim*}[Restating Claim~\ref{thm2:claim:lower-bound1}.]
	Let 
	\begin{align*}
		A(\bx^{t+1}_{i}) &:= (1-\lambda_p)\left\langle \nabla f_i(\bx^t_{i}), \bx^{t+1}_{i}-\bx^t_{i}\right\rangle 
		+(1-\lambda_p) \left\langle \nabla_{\bx^t_{i}} f_i(\widetilde{Q}_{\bc^t_{i}}(\bx^t_{i})), \bx^{t+1}_{i}-\bx^t_{i}\right\rangle \notag \\
		&\hspace{1cm} +\left\langle \lambda_p(\nabla_{\bx^t_i} f^{KD}_i(\bx^t_{i},\bw^t_{i})), \bx^{t+1}_{i}-\bx^t_{i}\right\rangle + \left\langle \lambda_p(\nabla_{\bx^t_i} f^{KD}_i(\widetilde{Q}_{\bc^t_{i}}(\bx^t_{i}),\bw^t_{i})), \bx^{t+1}_{i}-\bx^t_{i}\right\rangle\\
		& \hspace{1cm} + \lambda R(\bx^{t+1}_{i},\bc^t_{i})+ \frac{1}{2\eta_1}\|\bx^{t+1}_{i}-\bx^t_{i}\|^2  \notag \\
		A(\bx^t_{i}) &:= \lambda R(\bx^t_{i},\bc^t_{i}).
	\end{align*} 
	Then $A(\bx^{t+1}_{i})\leq A(\bx^t_{i})$.

	%\begin{align}
	%&\lambda R(\bx^{t+1}_{i},\bc^t_{i})+\left\langle \nabla f_i(\bx^t_{i}), \bx^{t+1}_{i}-\bx^t_{i}\right\rangle 
	%+ \left\langle \nabla_x f_i(\widetilde{Q}_{\bc^t_{i}}(\bx^t_{i})), \bx^{t+1}_{i}-\bx^t_{i}\right\rangle \notag \\
	%&\hspace{3cm} +\left\langle \lambda_p(\bx^t_{i}-\bw^{t}_{i}), \bx^{t+1}_{i}-\bx^t_{i}\right\rangle + \frac{1}{2\eta_1}\|\bx^{t+1}_{i}-\bx^t_{i}\|^2 \leq \lambda R(\bx^t_{i},\bc^t_{i})
	%\end{align}
\end{claim*}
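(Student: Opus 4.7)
The plan is to mirror the proof strategy already used for Claim~\ref{claim:quantization lower bound 1} in the centralized setting, exploiting the fact that $A(\bx_i^{t+1})$ is literally the value of the objective whose $\argmin$ defines $\bx_i^{t+1}$. Concretely, I would first observe that the expression in the statement for $A(\bx_i^{t+1})$ coincides term-for-term with the bracketed objective of the proximal update \eqref{thm2:lower-bounding-interim1}. That minimization problem is the closed-form expansion of the proximal step $\bx_i^{t+1} = \mathrm{prox}_{\eta_1\lambda R_{\bc_i^t}}(\bx_i^t - \eta_1 \bg_i^t)$: completing the square on the quadratic $\tfrac{1}{2\eta_1}\|\bx - \bx_i^t + \eta_1 \bg_i^t\|^2$ produces the four inner-product terms, the $\tfrac{1}{2\eta_1}\|\bx - \bx_i^t\|^2$ term, and an $\bx$-independent constant that can be dropped from the $\argmin$.

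Next, I would invoke optimality: since $\bx_i^{t+1}$ attains the minimum of this objective over $\bx \in \mathbb{R}^d$, the value at $\bx_i^{t+1}$ is no larger than the value at any other feasible point. Choosing the test point $\bx = \bx_i^t$, every inner product $\langle \bx - \bx_i^t, \cdot\rangle$ vanishes and the quadratic $\tfrac{1}{2\eta_1}\|\bx - \bx_i^t\|^2$ vanishes as well, leaving only $\lambda R(\bx_i^t, \bc_i^t)$, which is exactly $A(\bx_i^t)$ by definition. Chaining these observations yields $A(\bx_i^{t+1}) \leq A(\bx_i^t)$.

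There is no real obstacle here — the claim is essentially a definitional restatement of the optimality of the proximal update, and the only thing to be careful about is bookkeeping: making sure the four gradient terms in $A(\bx_i^{t+1})$ correspond one-to-one with the $(1-\lambda_p)$ and $\lambda_p$ weighted pieces of the loss gradient used inside the proximal step in line~5 of Algorithm~\ref{algo:personalized}, and that the quadratic coefficient $\tfrac{1}{2\eta_1}$ matches the proximal step size. Once that matching is verified (which is immediate from comparing the statement of the claim with \eqref{thm2:lower-bounding-interim1}), the proof reduces to the one-line optimality argument above, exactly paralleling the centralized Claim~\ref{claim:quantization lower bound 1}.
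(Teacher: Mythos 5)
Your proposal is correct and matches the paper's own argument exactly: the paper also proves Claim~\ref{thm2:claim:lower-bound1} by observing that $A(\bx)$ is the objective inside the $\arg\min$ of \eqref{thm2:lower-bounding-interim1}, which is minimized at $\bx=\bx_i^{t+1}$, so its value there is at most its value at the test point $\bx=\bx_i^t$, where all inner products and the quadratic vanish. Nothing is missing.
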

\begin{proof}
	Let $A(\bx)$ denote the expression inside the $\arg\min$ in \eqref{thm2:lower-bounding-interim1} and we know that \eqref{thm2:lower-bounding-interim1} is minimized when $\bx=\bx^{t+1}_{i}$. So we have $A(\bx^{t+1}_{i})\leq A(\bx^t_{i})$. This proves the claim.
\end{proof}

\begin{claim*} [Restating Claim~\ref{thm2:claim lqcsmoothness}.]
	$(1-\lambda_p)f_i(\widetilde{Q}_\bc(\bx))+\lambda_pf^{KD}_{i}(\widetilde{Q}_{\bc}(\bx),\bw))$ is $(\lambda_pL_{DQ_2}+(1-\lambda_p)(G^{(i)}L_{Q_2}+G^{(i)}_{Q_2}Ll_{Q_2}))$-smooth with respect to $\bc$.
\end{claim*}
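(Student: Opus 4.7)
The plan is to mirror exactly the reasoning used for Claim~\ref{thm2:claim lqxsmoothness}, but now with respect to the variable $\bc$ instead of $\bx$. The claim asserts smoothness of a convex combination of two composite functions, so the natural strategy is to establish smoothness of each summand separately and then invoke the elementary fact that smoothness constants add (and scale linearly under positive scalar multiplication).

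First, I would apply Claim~\ref{claim: lqcclaim} on a per-client basis. That claim was stated using the generic gradient bounds $G, G_{Q_2}$ of Assumptions \textbf{A.3} and \textbf{A.5}; in the federated setting these are replaced by the client-specific bounds $G^{(i)}, G^{(i)}_{Q_2}$. Its statement directly yields that $f_i(\widetilde{Q}_{\bc}(\bx))$ is $(G^{(i)} L_{Q_2} + G^{(i)}_{Q_2} L\, l_{Q_2})$-smooth with respect to $\bc$ (with $\bx$ held fixed). Second, from Assumption \textbf{A.7}, the knowledge distillation loss $f^{KD}_i(\widetilde{Q}_{\bc}(\bx),\bw)$ is $L_{DQ_2}$-smooth with respect to $\bc$ (again with the other arguments held fixed); the excerpt explicitly notes that \textbf{A.7} is a corollary of \textbf{A.1}--\textbf{A.5}, so no further work is needed to justify this smoothness constant.

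Finally, I would combine these two facts. If $g_1$ is $M_1$-smooth and $g_2$ is $M_2$-smooth over the same domain, then for any nonnegative constants $\alpha, \beta$ the function $\alpha g_1 + \beta g_2$ is $(\alpha M_1 + \beta M_2)$-smooth, since the Hessians (or gradient Lipschitz bounds) satisfy the triangle inequality. Setting $\alpha = 1-\lambda_p$, $M_1 = G^{(i)} L_{Q_2} + G^{(i)}_{Q_2} L\, l_{Q_2}$, $\beta = \lambda_p$, and $M_2 = L_{DQ_2}$ yields the claimed smoothness constant $\lambda_p L_{DQ_2} + (1-\lambda_p)(G^{(i)} L_{Q_2} + G^{(i)}_{Q_2} L\, l_{Q_2})$. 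There is no substantive obstacle here: the argument is a direct parallel of the proof sketched for Claim~\ref{thm2:claim lqxsmoothness}, with the sole difference being that the quantity from Claim~\ref{claim: lqcclaim} (involving $L_{Q_2}, l_{Q_2}, G^{(i)}_{Q_2}$) replaces the analogous $\bx$-side quantity, and there is no additional $L$-smooth term of the form $f_i(\bx)$ since $f_i(\bx)$ is independent of $\bc$.
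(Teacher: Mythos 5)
Your proposal is correct and matches the paper's argument: the paper proves this claim by noting it is "similar to the proof of Claim~\ref{thm2:claim lqxsmoothness}," i.e., apply Claim~\ref{claim: lqcclaim} with the client-specific bounds to get $(G^{(i)}L_{Q_2}+G^{(i)}_{Q_2}Ll_{Q_2})$-smoothness of $f_i(\widetilde{Q}_{\bc}(\bx))$ in $\bc$, use Assumption \textbf{A.7} for the $L_{DQ_2}$-smoothness of the KD term, and combine via additivity and positive scaling of smoothness constants. Your observation that the $f_i(\bx)$ term drops out because it does not depend on $\bc$ is exactly the only difference from the $\bx$-side claim.
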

\begin{proof}
	Proof is similar to the proof of Claim~\ref{thm2:claim lqxsmoothness}.
\end{proof}
\textit{Obtaining 	\eqref{thm2:second-part-interim1}}:
\begin{align}
	&F_i(\bx^{t+1}_{i},\bc^{t+1}_{i},\bw^{t})+(\frac{1}{2\eta_2}-\frac{\lambda_pL_{DQ_2}+(1-\lambda_p)(G^{(i)}L_{Q_2}+G^{(i)}_{Q_2}Ll_{Q_2})}{2})\|\bc^{t+1}_{i}-\bc^t_{i}\|^2 \notag \\
	& \hspace{0.5cm} = (1-\lambda_p)\left(f_i(\bx^{t+1}_{i})+f_i(\widetilde{Q}_{\bc^{t+1}_{i}}(\bx^{t+1}_{i}))\right) \notag \\
	&\hspace{1cm}+\lambda R(\bx^{t+1}_{i},\bc^{t+1}_{i}) +\lambda_p\left(f^{KD}_i(\bx^{t+1}_i,\bw^t)+f^{KD}_{i}(\widetilde{Q}_{\bc^{t+1}_i}(\bx^{t+1}_i)),\bw^t)\right) \notag \\
	&\hspace{1cm}+(\frac{1}{2\eta_2}-\frac{\lambda_pL_{DQ_2}+(1-\lambda_p)(G^{(i)}L_{Q_2}+G^{(i)}_{Q_2}Ll_{Q_2})}{2})\|\bc^{t+1}_{i}-\bc^t_{i}\|^2 \notag \\ 
	&\hspace{0.5cm}\leq (1-\lambda_p)\left(f_i(\bx^{t+1}_{i})+f_i(\widetilde{Q}_{\bc^t_{i}}(\bx^{t+1}_{i}))\right)+\lambda R(\bx^{t+1}_{i},\bc^{t+1}_{i})  \notag \\
	& \hspace{1cm} +\lambda_p\left(f^{KD}_i(\bx^{t+1}_i,\bw^t)+f^{KD}_{i}(\widetilde{Q}_{\bc^{t}_i}(\bx^{t+1}_i),\bw^t)\right) + \frac{1}{2\eta_2}\|\bc^{t+1}_{i}-\bc^t_{i}\|^2 \notag \\
	&\hspace{1cm}+(1-\lambda_p) \left\langle \nabla_{\bc^t_{i}} f_i(\widetilde{Q}_{\bc^t_{i}}(\bx^{t+1}_{i})), \bc^{t+1}_{i}-\bc^t_{i}\right\rangle +\lambda_p \left\langle \nabla_{\bc^t_{i}} f^{KD}_i(\widetilde{Q}_{\bc^t_{i}}(\bx^{t+1}_{i}),\bw^t), \bc^{t+1}_{i}-\bc^t_{i}\right\rangle \notag \\
	&\hspace{0.5cm} = (1-\lambda_p)\left(f_i(\bx^{t+1}_{i})+f_i(\widetilde{Q}_{\bc^t_{i}}(\bx^{t+1}_{i}))\right)+\lambda R(\bx^{t+1}_{i},\bc^{t+1}_{i})  \notag \\
	& \hspace{1cm} +\lambda_p\left(f^{KD}_i(\bx^{t+1}_i,\bw^t)+f^{KD}_{i}(\widetilde{Q}_{\bc^{t}_i}(\bx^{t+1}_i),\bw^t)\right) + \frac{1}{2\eta_2}\|\bc^{t+1}_{i}-\bc^t_{i}\|^2 \notag \\
	&\hspace{1cm}+(1-\lambda_p) \left\langle \nabla_{\bc^t_{i}} f_i(\widetilde{Q}_{\bc^t_{i}}(\bx^{t+1}_{i})), \bc^{t+1}_{i}-\bc^t_{i}\right\rangle +\lambda_p \left\langle \nabla_{\bc^t_{i}} f^{KD}_i(\widetilde{Q}_{\bc^t_{i}}(\bx^{t+1}_{i}),\bw^t_i), \bc^{t+1}_{i}-\bc^t_{i}\right\rangle \notag \\
	&\hspace{1cm} +\lambda_p \left\langle \nabla_{\bc^t_{i}} f^{KD}_i(\widetilde{Q}_{\bc^t_{i}}(\bx^{t+1}_{i}),\bw^t)-\nabla_{\bc^t_{i}} f^{KD}_i(\widetilde{Q}_{\bc^t_{i}}(\bx^{t+1}_{i}),\bw^t_i), \bc^{t+1}_{i}-\bc^t_{i}\right\rangle  \notag \\
	&\hspace{0.5cm} \leq (1-\lambda_p)\left(f_i(\bx^{t+1}_{i})+f_i(\widetilde{Q}_{\bc^t_{i}}(\bx^{t+1}_{i}))\right)+\lambda R(\bx^{t+1}_{i},\bc^{t+1}_{i})  \notag \\
	& \hspace{1cm} +\lambda_p\left(f^{KD}_i(\bx^{t+1}_i,\bw^t)+f^{KD}_{i}(\widetilde{Q}_{\bc^{t}_i}(\bx^{t+1}_i),\bw^t)\right)  + \frac{1}{2\eta_2}\|\bc^{t+1}_{i}-\bc^t_{i}\|^2 \notag \\
	&\hspace{1cm}+(1-\lambda_p) \left\langle \nabla_{\bc^t_{i}} f_i(\widetilde{Q}_{\bc^t_{i}}(\bx^{t+1}_{i})), \bc^{t+1}_{i}-\bc^t_{i}\right\rangle +\lambda_p \left\langle \nabla_{\bc^t_{i}} f^{KD}_i(\widetilde{Q}_{\bc^t_{i}}(\bx^{t+1}_{i}),\bw^t_i), \bc^{t+1}_{i}-\bc^t_{i}\right\rangle \notag \\
	&\hspace{1cm} +\frac{\lambda_p}{2} \| \nabla_{\bc^t_{i}} f^{KD}_i(\widetilde{Q}_{\bc^t_{i}}(\bx^{t+1}_{i}),\bw^t)-\nabla_{\bc^t_{i}} f^{KD}_i(\widetilde{Q}_{\bc^t_{i}}(\bx^{t+1}_{i}),\bw^t_i)\|^2 + \frac{\lambda_p}{2} \| \bc^{t+1}_{i}-\bc^t_{i}\|^2. \notag 
\end{align}
\begin{claim*} [Restating Claim~\ref{thm2:claim:lower-bound2}.]
	Let 
	\begin{align*}
		B(\bc^{t+1}_{i}) &:= \lambda R(\bx^{t+1}_{i},\bc^{t+1}_{i}) + (1-\lambda_p)\left\langle \nabla_{\bc^t_{i}} f_i(\widetilde{Q}_{\bc^t_{i}}(\bx^{t+1}_{i})), \bc^{t+1}_{i}-\bc^t_{i}\right\rangle \notag  \\ 
		& \quad  + \lambda_p \left\langle \nabla_{\bc^t_{i}} f^{KD}_i(\widetilde{Q}_{\bc^t_{i}}(\bx^{t+1}_{i}),\bw^t_i), \bc^{t+1}_{i}-\bc^t_{i}\right\rangle + \frac{1}{2\eta_2}\|\bc^{t+1}_{i}-\bc^t_{i}\|^2 \notag \\
		B(\bc^t_{i}) &:= \lambda R(\bx^{t+1}_{i},\bc^t_{i}).
	\end{align*} 
	Then $B(\bc^{t+1}_{i})\leq B(\bc^t_{i})$.
\end{claim*}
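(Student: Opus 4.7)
The plan is to observe that $B(\cdot)$ is, up to constants independent of $\bc$, exactly the objective that is minimized in the proximal update defining $\bc^{t+1}_{i}$, and then to invoke the minimizing property of that argmin. This parallels the proof of Claim~\ref{thm2:claim:lower-bound1} for the $\bx_i$ update, and the centralized analog Claim~\ref{claim:quantization lower bound 2}.

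Concretely, recall from \eqref{thm2:lower-bounding-interim2} that
\begin{align*}
\bc^{t+1}_{i} = \argmin_{\bc \in \mathbb{R}^{m_i}} \Big\{ & (1-\lambda_p)\left\langle \bc-\bc^t_{i},\, \nabla_{\bc^t_{i}} f_i(\widetilde{Q}_{\bc^t_{i}}(\bx^{t+1}_{i})) \right\rangle \\
& + \lambda_p \left\langle \bc-\bc^t_{i},\, \nabla_{\bc^t_{i}} f^{KD}_i(\widetilde{Q}_{\bc^t_{i}}(\bx^{t+1}_{i}), \bw^t_{i}) \right\rangle \\
& + \tfrac{1}{2\eta_2}\|\bc-\bc^t_{i}\|^2 + \lambda R(\bx^{t+1}_{i}, \bc) \Big\}.
\end{align*}
The expression inside the braces, viewed as a function of $\bc$, is precisely $B(\bc)$ as defined in the claim (with $\bc$ replacing $\bc^{t+1}_{i}$ in the written-out formula). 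Since $\bc^{t+1}_{i}$ is the global minimizer of this map, $B(\bc^{t+1}_{i}) \leq B(\bc)$ for every $\bc \in \mathbb{R}^{m_i}$; specializing to the feasible point $\bc = \bc^t_{i}$ yields $B(\bc^{t+1}_{i}) \leq B(\bc^t_{i})$.

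It remains to check that evaluating the written-out expression for $B$ at $\bc = \bc^t_{i}$ collapses to $\lambda R(\bx^{t+1}_{i}, \bc^t_{i})$, matching the stated definition of $B(\bc^t_{i})$: the two inner-product terms vanish because $\bc^t_{i}-\bc^t_{i} = 0$, and the quadratic $\tfrac{1}{2\eta_2}\|\bc^t_{i}-\bc^t_{i}\|^2$ vanishes as well, leaving exactly $\lambda R(\bx^{t+1}_{i}, \bc^t_{i})$. There is essentially no obstacle in this argument; the only care needed is to verify that the derivation of \eqref{thm2:lower-bounding-interim2} from the proximal step on line~7 of Algorithm~\ref{algo:personalized} correctly dropped the constant-in-$\bc$ terms (the $\tfrac{\eta_2}{2}\|\cdot\|^2$ expansion that appears when one squares out the proximal mapping), exactly as was spelled out for the centralized analog in \eqref{app:quantized argmin2}. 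Once that identification is in hand, the claim follows immediately from optimality of $\bc^{t+1}_{i}$.
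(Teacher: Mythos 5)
Your proposal is correct and follows exactly the paper's own argument: identify $B(\cdot)$ with the objective inside the $\arg\min$ in \eqref{thm2:lower-bounding-interim2}, invoke optimality of $\bc^{t+1}_{i}$, and note that evaluating at $\bc=\bc^t_{i}$ leaves only $\lambda R(\bx^{t+1}_{i},\bc^t_{i})$. Your additional check that the constant-in-$\bc$ terms were correctly dropped when expanding the proximal map is the same bookkeeping the paper carries out in its derivation of the argmin formulation.
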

\begin{proof}
	Let $B(\bc)$ denote the expression inside the $\arg\min$ in \eqref{thm2:lower-bounding-interim2} and we know that \eqref{thm2:lower-bounding-interim2} is minimized when $\bc=\bc^{t+1}_{i}$. So we have $B(\bc^{t+1}_{i})\leq B(\bc^t_{i})$. This proves the claim.
\end{proof}

\textit{Obtaining \eqref{thm2:dec3}}:
\begin{align*}
	&F_i(\bx^{t+1}_{i},\bc^{t+1}_{i},\bw^{t+1}) \leq 	F_i(\bx^{t+1}_{i},\bc^{t+1}_{i},\bw^{t}) + \left\langle \nabla_{\bw^{t}} F_i(\bx^{t+1}_{i},\bc^{t+1}_{i},\bw^{t}), \bw^{t+1}-\bw^{t} \right\rangle\\
	& \hspace{5cm}+ \frac{\lambda_p(L_{D_2}+L_{DQ_3})}{2}\|\bw^{t+1}-\bw^{t}\|^2 \\
	&= F_i(\bx^{t+1}_{i},\bc^{t+1}_{i},\bw^{t}) - \left\langle \nabla_{\bw^{t}} F_i(\bx^{t+1}_{i},\bc^{t+1}_{i},\bw^{t}), \eta_3 \bg^{t} \right\rangle + \frac{\lambda_p(L_{D_2}+L_{DQ_3})}{2}\|\eta_3 \bg^{t}\|^2 \\
	& =	F_i(\bx^{t+1}_{i},\bc^{t+1}_{i},\bw^{t}) \\& \quad -  \eta_3 \left\langle \nabla_{\bw^{t}} F_i(\bx^{t+1}_{i},\bc^{t+1}_{i},\bw^{t}), \bg^{t} - \nabla_{\bw^{t}} F_i(\bx^{t+1}_{i},\bc^{t+1}_{i},\bw^{t}) + \nabla_{\bw^{t}} F_i(\bx^{t+1}_{i},\bc^{t+1}_{i},\bw^{t}) \right\rangle  \\& \quad +\frac{\lambda_p(L_{D_2}+L_{DQ_3})}{2}\eta_3^2\|\bg^{t} - \nabla_{\bw^{t}} F_i(\bx^{t+1}_{i},\bc^{t+1}_{i},\bw^{t}) + \nabla_{\bw^{t}} F_i(\bx^{t+1}_{i},\bc^{t+1}_{i},\bw^{t})\|^2 \\ 
	&=	F_i(\bx^{t+1}_{i},\bc^{t+1}_{i},\bw^{t}) - \eta_3\Big\|\nabla_{\bw^{t}} F_i(\bx^{t+1}_{i},\bc^{t+1}_{i},\bw^{t})\Big\|^2 \\& 
	\quad -  \eta_3 \left\langle \nabla_{\bw^{t}} F_i(\bx^{t+1}_{i},\bc^{t+1}_{i},\bw^{t}), \bg^{t} - \nabla_{\bw^{t}} F_i(\bx^{t+1}_{i},\bc^{t+1}_{i},\bw^{t}) \right\rangle  \\& 
	\quad + \frac{\lambda_p(L_{D_2}+L_{DQ_3})}{2}\eta_3^2\|\bg^{t} - \nabla_{\bw^{t}} F_i(\bx^{t+1}_{i},\bc^{t+1}_{i},\bw^{t}) + \nabla_{\bw^{t}} F_i(\bx^{t+1}_{i},\bc^{t+1}_{i},\bw^{t})\|^2 \\ 
	& \leq F_i(\bx^{t+1}_{i},\bc^{t+1}_{i},\bw^{t}) - \eta_3\Big\|\nabla_{\bw^{t}} F_i(\bx^{t+1}_{i},\bc^{t+1}_{i},\bw^{t})\Big\|^2 + \frac{\eta_3}{2}\Big\|\nabla_{\bw^{t}} F_i(\bx^{t+1}_{i},\bc^{t+1}_{i},\bw^{t})\Big\|^2 \\
	& \quad + \frac{\eta_3}{2}\Big\|\bg^{t} - \nabla_{\bw^{t}} F_i(\bx^{t+1}_{i},\bc^{t+1}_{i},\bw^{t})\Big\|^2 + \lambda_p(L_{D_2}+L_{DQ_3}) \eta_3^2 \Big\|\bg^{t} - \nabla_{\bw^{t}} F_i(\bx^{t+1}_{i},\bc^{t+1}_{i},\bw^{t})\Big\|^2 \\
	& \quad + \lambda_p(L_{D_2}+L_{DQ_3}) \eta_3^2 \Big\|\nabla_{\bw^{t}} F_i(\bx^{t+1}_{i},\bc^{t+1}_{i},\bw^{t})\Big\|^2 
	\\ 
	& = F_i(\bx^{t+1}_{i},\bc^{t+1}_{i},\bw^{t}) - (\frac{\eta_3}{2}-\lambda_p(L_{D_2}+L_{DQ_3})\eta_3^2)\Big\|\nabla_{\bw^{t}} F_i(\bx^{t+1}_{i},\bc^{t+1}_{i},\bw^{t})\Big\|^2 \\& \quad + (\frac{\eta_3}{2}+\lambda_p(L_{D_2}+L_{DQ_3})\eta_3^2)\Big\|\bg^{t} - \nabla_{\bw^{t}_{i}} F_i(\bx^{t+1}_{i},\bc^{t+1}_{i},\bw^{t}_{i}) + \nabla_{\bw^{t}_{i}} F_i(\bx^{t+1}_{i},\bc^{t+1}_{i},\bw^{t}_{i}) \\ & \quad - \nabla_{\bw^{t}} F_i(\bx^{t+1}_{i},\bc^{t+1}_{i},\bw^{t})\Big\|^2  \\
	& \leq F_i(\bx^{t+1}_{i},\bc^{t+1}_{i},\bw^{t}) - (\frac{\eta_3}{2}-\lambda_p(L_{D_2}+L_{DQ_3})\eta_3^2)\Big\|\nabla_{\bw^{t}} F_i(\bx^{t+1}_{i},\bc^{t+1}_{i},\bw^{t})\Big\|^2 \\
	& \quad + (\eta_3+2\lambda_p(L_{D_2}+L_{DQ_3})\eta_3^2)\Big\|\bg^{t} - \nabla_{\bw^{t}_{i}} F_i(\bx^{t+1}_{i},\bc^{t+1}_{i},\bw^{t}_{i})\Big\|^2 \\
	& \quad+(\eta_3+2\lambda_p(L_{D_2}+L_{DQ_3})\eta_3^2)\Big\|\nabla_{\bw^{t}_{i}} F_i(\bx^{t+1}_{i},\bc^{t+1}_{i},\bw^{t}_{i}) - \nabla_{\bw^{t}} F_i(\bx^{t+1}_{i},\bc^{t+1}_{i},\bw^{t})\Big\|^2 \\
	& \leq F_i(\bx^{t+1}_{i},\bc^{t+1}_{i},\bw^{t}) - (\frac{\eta_3}{2}-\lambda_p(L_{D_2}+L_{DQ_3})\eta_3^2)\Big\|\nabla_{\bw^{t}} F_i(\bx^{t+1}_{i},\bc^{t+1}_{i},\bw^{t})\Big\|^2 \\
	& \quad + (\eta_3+2\lambda_p(L_{D_2}+L_{DQ_3})\eta_3^2)\Big\|\bg^{t} - \nabla_{\bw^{t}_{i}} F_i(\bx^{t+1}_{i},\bc^{t+1}_{i},\bw^{t}_{i})\Big\|^2 \\
	&\quad+(\eta_3+2\lambda_p(L_{D_2}+L_{DQ_3})\eta_3^2)(\lambda_p(L_{D_2}+L_{DQ_3}))^2\Big\|\bw^{t}_{i}-\bw^{t}\Big\|^2.
\end{align*}
Rearranging the terms gives \eqref{thm2:dec3}.
\begin{lemma*}[Restating Lemma~\ref{thm2:lemma1}]
	Let $\eta_3$ be chosen such that $\eta_3 \leq \sqrt{\frac{1}{6\tau^2(\lambda_pL_{w})^2\left(1+\frac{\overline{L}_{\max}^2}{(L^{(\min)}_{\max})^2}\right)}}$ where $L^{(\min)}_{\max}=\min\{L^{(i)}_{\max}:i\in[n]\}$ and $\overline{L}_{\max} = \sqrt{\frac{1}{n}\sum_{i=1}^n(L^{(i)}_{\max})^2}$ (where $L^{(i)}_{\max}$ is defined in \eqref{Li_max-defn}), then we have,
	\begin{align*} 
		\frac{1}{T}\sum_{t=0}^{T-1}\frac{1}{n}\sum_{i=1}^{n} (L^{(i)}_{\max})^2
		\|\bw^{t}-\bw^{t}_{i}\|^2 \leq \frac{1}{T} \sum_{t=0}^{T-1} \gamma_t \leq  6\tau^2\eta_3^2\frac{1}{n}\sum_{i=1}^n(L^{(i)}_{\max})^2\kappa_i
	\end{align*}
\end{lemma*}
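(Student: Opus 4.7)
\textbf{Proof proposal for Lemma~\ref{thm2:lemma1}.}

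The plan is to bound $\gamma_t := \frac{1}{n}\sum_{i=1}^n (L^{(i)}_{\max})^2\|\bw^t-\bw_i^t\|^2$ by expressing the drift $\bw_i^t-\bw^t$ between synchronization rounds as a telescoping sum of gradient differences, then closing a self-referential inequality via the step-size constraint on $\eta_3$. First I would fix any $t$ and let $t_0 := \lfloor t/\tau\rfloor\cdot\tau$ be the most recent synchronization time, so that $\bw_i^{t_0}=\bw^{t_0}$ for every $i$. Writing $A_i^s := \nabla_{\bw_i^s}F_i(\bx_i^{s+1},\bc_i^{s+1},\bw_i^s)$ for the per-client gradient and $\bar A^s := \frac{1}{n}\sum_j A_j^s = \bg^s$ for its average (which drives $\bw^{s+1}$ since this is exactly the server aggregation), iterating the local updates gives
\begin{align*}
\bw_i^t - \bw^t \;=\; -\eta_3 \sum_{s=t_0}^{t-1}\bigl(A_i^s - \bar A^s\bigr),
\end{align*}
and Cauchy--Schwarz/Jensen produces $\|\bw_i^t-\bw^t\|^2 \le \eta_3^2\,\tau\sum_{s=t_0}^{t-1}\|A_i^s-\bar A^s\|^2$.

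Next I would bridge to Assumption \textbf{A.6}, which controls diversity at the \emph{common} point $\bw^s$. Introducing $B_i^s := \nabla_{\bw^s} F_i(\bx_i^{s+1},\bc_i^{s+1},\bw^s)$ and $\bar B^s := \frac{1}{n}\sum_j B_j^s$, I split
\begin{align*}
\|A_i^s-\bar A^s\|^2 \;\le\; 3\|A_i^s-B_i^s\|^2 + 3\|B_i^s-\bar B^s\|^2 + 3\|\bar B^s-\bar A^s\|^2.
\end{align*}
The middle term is at most $\kappa_i$ directly from \textbf{A.6}. The first and third are controlled by $\lambda_p L_w$-smoothness of $F_i$ in $\bw$ (which is Assumption \textbf{A.7}): $\|A_i^s-B_i^s\|\le \lambda_p L_w\|\bw_i^s-\bw^s\|$, and likewise for the averaged version after one more Jensen. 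Multiplying by $(L^{(i)}_{\max})^2$, averaging over $i$, and converting $\frac{1}{n}\sum_j\|\bw_j^s-\bw^s\|^2 \le \gamma_s/(L^{(\min)}_{\max})^2$ yields
\begin{align*}
\frac{1}{n}\sum_{i=1}^n (L^{(i)}_{\max})^2\|A_i^s-\bar A^s\|^2 \;\le\; 3(\lambda_p L_w)^2\,C_L\,\gamma_s + 3\overline{\kappa},
\end{align*}
with $C_L = 1 + \overline{L}_{\max}^2/(L^{(\min)}_{\max})^2$ and $\overline{\kappa}=\frac{1}{n}\sum_i(L^{(i)}_{\max})^2\kappa_i$.

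Plugging back, I obtain the recursion $\gamma_t \le 3\eta_3^2\tau^2\overline{\kappa} + 3\eta_3^2\tau(\lambda_p L_w)^2 C_L \sum_{s=t_0}^{t-1}\gamma_s$ for every $t$ in $[t_0,t_0+\tau-1]$. Summing this over the entire synchronization window produces
\begin{align*}
\sum_{t=t_0}^{t_0+\tau-1}\gamma_t \;\le\; 3\eta_3^2\tau^3\overline{\kappa} + 3\eta_3^2\tau^2(\lambda_p L_w)^2 C_L \sum_{t=t_0}^{t_0+\tau-1}\gamma_t .
\end{align*}
The hypothesis $\eta_3^2 \le \tfrac{1}{6\tau^2(\lambda_p L_w)^2 C_L}$ makes the coefficient on the right-hand sum at most $\tfrac12$, which I absorb to conclude $\sum_{t=t_0}^{t_0+\tau-1}\gamma_t \le 6\eta_3^2\tau^3\overline{\kappa}$. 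Summing over all $T/\tau$ synchronization windows and dividing by $T$ gives exactly $\frac{1}{T}\sum_{t=0}^{T-1}\gamma_t \le 6\tau^2\eta_3^2\overline{\kappa}$, as required.

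The main obstacle is the self-referential appearance of $\gamma_s$ on the right, which is unavoidable because Assumption \textbf{A.6} is stated at $\bw^s$ rather than at the local $\bw_i^s$: bridging the two via smoothness inevitably reintroduces the drift we are trying to bound. The whole point of the step-size condition on $\eta_3$ (and the reason the somewhat awkward factor $C_L$ appears) is precisely to let us absorb this feedback loop; getting the constants right when moving between the $(L^{(i)}_{\max})^2$-weighted and unweighted averages of $\|\bw_i^s-\bw^s\|^2$ is the one place the bookkeeping must be done carefully.
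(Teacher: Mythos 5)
Your proposal is correct and follows essentially the same route as the paper's proof: the same telescoping of the local drift from the last synchronization point, the same three-way split bridging the local gradients $A_i^s$ to the common-point gradients $B_i^s$ so that Assumption \textbf{A.6} and the $\lambda_p L_w$-smoothness can be applied, and the same absorption of the self-referential $\sum_s \gamma_s$ term via the step-size constraint, yielding identical constants. The only cosmetic difference is that the paper carries the $(L^{(i)}_{\max})^2$ weights through the displayed chain rather than isolating the weighted-to-unweighted conversion as a separate step.
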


\begin{proof}
	Let $t_c$ be the latest synchronization time before $t$. Define $\gamma_t = \frac{1}{n} \sum_{i=1}^{n}(L^{(i)}_{\max})^2\|\bw^{t} - \bw^{t}_{i}\|^2$. Then:
	\begin{align*}
		\gamma_t &= \frac{1}{n} \sum_{i=1}^{n}(L^{(i)}_{\max})^2 \Big\|\bw^{t_c}-\frac{\eta_3}{n} \sum_{j=t_c}^{t} \sum_{k=1}^{n} \nabla_{\bw^{j}_{k}} F_k(\bx^{j+1}_{k},\bc^{j+1}_{k},\bw^{j}_{k})\\
		& \hspace{1cm} -(\bw^{t_c}-\eta_3 \sum_{j=t_c}^{t} \nabla_{\bw^{j}_{i}} F_i(\bx^{j+1}_{i},\bc^{j+1}_{i},\bw^{j}_{i})\Big\|^2 \\ 
		&\stackrel{\text{(a)}}{\leq} \tau  \sum_{j=t_c}^t \frac{\eta_3^2}{n} \sum_{i=1}^n(L^{(i)}_{\max})^2 \Big\|\frac{1}{n} \sum_{k=1}^{n} \nabla_{\bw^{j}_{k}} F_k(\bx^{j+1}_{k},\bc^{j+1}_{k},\bw^{j}_{k})- \nabla_{\bw^{j}_{i}} F_i(\bx^{j+1}_{i},\bc^{j+1}_{i},\bw^{j}_{i})\Big\|^2 \tag{$\star1$} \label{app:lm1star1}\\
		&= \tau  \sum_{j=t_c}^t \frac{\eta_3^2}{n} \sum_{i=1}^n(L^{(i)}_{\max})^2 \Big[  \Big\|\frac{1}{n} \sum_{k=1}^{n} \Big(\nabla_{\bw^{j}_{k}} F_k(\bx^{j+1}_{k},\bc^{j+1}_{k},\bw^{j}_{k})-\nabla_{\bw^{j}} F_k(\bx^{j+1}_{k},\bc^{j+1}_{k},\bw^{j}) \\ 
		&\hspace{1cm}  + \nabla_{\bw^{j}} F_k(\bx^{j+1}_{k},\bc^{j+1}_{k},\bw^{j}) \Big) -\nabla_{\bw^{j}} F_i(\bx^{j+1}_{i},\bc^{j+1}_{i},\bw^{j})+\nabla_{\bw^{j}} F_i(\bx^{j+1}_{i},\bc^{j+1}_{i},\bw^{j})\\
		&\hspace{1cm}- \nabla_{\bw^{j}_{i}} F_i(\bx^{j+1}_{i},\bc^{j+1}_{i},\bw^{j}_{i}\Big\|^2 \Big] \\ 
		& \leq \tau  \sum_{j=t_c}^{t_c+\tau} 3\frac{\eta_3^2}{n}  \sum_{i=1}^n(L^{(i)}_{\max})^2 \Big[\Big\|\frac{1}{n} \sum_{k=1}^{n} \left( \nabla_{\bw^{j}_{k}} F_k(\bx^{j+1}_{k},\bc^{j+1}_{k},\bw^{j}_{k})-\nabla_{\bw^{j}} F_k(\bx^{j+1}_{k},\bc^{j+1}_{k},\bw^{j})\right)\Big\|^2\\ 
		&\hspace{1cm} + \Big\|\frac{1}{n}\sum_{k=1}^{n} \nabla_{\bw^{j}} F_k(\bx^{j+1}_{k},\bc^{j+1}_{k},\bw^{j}) -\nabla_{\bw^{j}} F_i(\bx^{j+1}_{i},\bc^{j+1}_{i},\bw^{j})\Big\|^2 \\ 
		&\hspace{1cm} + \Big\|\nabla_{\bw^{j}} F_i(\bx^{j+1}_{i},\bc^{j+1}_{i},\bw^{j})- \nabla_{\bw^{j}_{i}} F_i(\bx^{j+1}_{i},\bc^{j+1}_{i},\bw^{j}_{i}\Big\|^2\Big] \\ 
		& \leq \tau  \sum_{j=t_c}^{t_c+\tau} 3\eta_3^2 \Big[\frac{(\lambda_pL_{w})^2 n}{n^2}\sum_{k=1}^n\frac{1}{n}\sum_{i=1}^n(L^{(i)}_{\max})^2\| \bw^{j}_{k}-\bw^{j}\|^2 + \frac{1}{n}\sum_{i=1}^n(L^{(i)}_{\max})^2\kappa_i \\
		& \hspace{1cm}+ (\lambda_pL_{w})^2\frac{1}{n}\sum_{i=1}^n(L^{(i)}_{\max})^2\|\bw^j-\bw^{j}_{i}\|^2\Big] \\ 
		& \leq \tau  \sum_{j=t_c}^{t_c+\tau} 3\eta_3^2 \Big[\frac{(\lambda_pL_{w})^2 n}{n^2}\frac{1}{n}\sum_{i=1}^n(L^{(i)}_{\max})^2\frac{1}{(L^{(\min)}_{\max})^2}\sum_{k=1}^n(L^{(k)}_{\max})^2\| \bw^{j}_{k}-\bw^{j}\|^2 \\
		& \hspace{1cm}+ \frac{1}{n}\sum_{i=1}^n(L^{(i)}_{\max})^2\kappa_i + (\lambda_pL_{w})^2\gamma_j\|^2\Big] \\ 
		& =\tau  \sum_{j=t_c}^{t_c+\tau} 3\eta_3^2\left((\lambda_pL_{w})^2\frac{1}{n}\sum_{i=1}^n(L^{(i)}_{\max})^2\frac{1}{(L^{(\min)}_{\max})^2}\gamma_j+\frac{1}{n}\sum_{i=1}^n(L^{(i)}_{\max})^2\kappa_i+(\lambda_pL_{w})^2 \gamma_j\right)   \\
		& =\tau  \sum_{j=t_c}^{t_c+\tau} 3\eta_3^2\left((\lambda_pL_{w})^2\frac{\overline{L}_{\max}^2}{(L^{(\min)}_{\max})^2}\gamma_j+\frac{1}{n}\sum_{i=1}^n(L^{(i)}_{\max})^2\kappa_i+(\lambda_pL_{w})^2 \gamma_j\right) \tag{$\star2$} \label{app:lm1star2}
	\end{align*}
	in (a) we use the facts that  $\|\sum_{i=1}^K a_i\|^2 \leq K \sum_{i=1}^K \|a_i\|^2$, $t\leq \tau + t_c$ and that we are summing over non-negative terms. As a result, we have:
	\begin{align*}
		\gamma_t &\leq \tau  \sum_{j=t_c}^{t_c+\tau} 3\eta_3^2\left((\lambda_pL_{w})^2\Big(1+\frac{\overline{L}_{\max}^2}{(L^{(\min)}_{\max})^2}\Big)\gamma_j+\frac{1}{n}\sum_{i=1}^n(L^{(i)}_{\max})^2\kappa_i\right)\\
		\Longrightarrow \sum_{t=t_c}^{t_c+\tau} \gamma_t &\leq \sum_{t=t_c}^{t_c+\tau}  \sum_{j=t_c}^{t_c+\tau} 3\tau\eta_3^2\left((\lambda_pL_{w})^2\Big(1+\frac{\overline{L}_{\max}^2}{(L^{(\min)}_{\max})^2}\Big)\gamma_j+\frac{1}{n}\sum_{i=1}^n(L^{(i)}_{\max})^2\kappa_i\right)\\
		& = 3\tau^2\eta_3^2(\lambda_pL_{w})^2\Big(1+\frac{\overline{L}_{\max}^2}{(L^{(\min)}_{\max})^2}\Big)\sum_{j=t_c}^{t_c+\tau} \gamma_j + 3\tau^3 \eta_3^2\frac{1}{n}\sum_{i=1}^n(L^{(i)}_{\max})^2\kappa_i
	\end{align*}
	Let us choose $\eta_3$ such that $3\tau^2\eta_3^2(\lambda_pL_{w})^2\left(1+\frac{\overline{L}_{\max}^2}{(L^{(\min)}_{\max})^2}\right) \leq \frac{1}{2} \Leftrightarrow \eta_3 \leq \sqrt{\frac{1}{6\tau^2(\lambda_pL_{w})^2\left(1+\frac{\overline{L}_{\max}^2}{(L^{(\min)}_{\max})^2}\right)}} $, sum over all syncronization times, and divide both sides by $T$:
	\begin{align*} 
		&\frac{1}{T} \sum_{t=0}^{T-1} \gamma_t \leq \frac{1}{2} \sum_{j=0}^{T-1} \gamma_j + 3\tau^2 \eta_3^2\frac{1}{n}\sum_{i=1}^n(L^{(i)}_{\max})^2\kappa_i \\ \Longrightarrow &\frac{1}{T} \sum_{t=0}^{T-1} \gamma_t \leq  6\tau^2\eta_3^2\frac{1}{n}\sum_{i=1}^n(L^{(i)}_{\max})^2\kappa_i
	\end{align*}
\end{proof}

\begin{corollary*}[Restating Corollary~\ref{thm2:corollary diversity}.]
	Recall, $\bg^{t} = \frac{1}{n} \sum_{i=1}^n \nabla_{\bw^{t}} F_i(\bx^{t+1}_{i},\bc^{t+1}_{i},\bw^{t}_{i})$. Then, we have:
	\begin{align*}
		\frac{1}{T}\sum_{t=0}^{T-1} \frac{1}{n} \sum_{i=1}^n (L^{(i)}_{\max})^2 \Big\|\bg^{t} - \nabla_{\bw^{t}_{i}} F_i(\bx^{t+1}_{i},\bc^{t+1}_{i},\bw^{t}_{i})\Big\|^2 &\leq  3 \frac{1}{n}\sum_{i=1}^n(L^{(i)}_{\max})^2\kappa_i\\
		& \hspace{-2cm} + 3(\lambda_pL_{w})^2\Big(1+\frac{\overline{L}_{\max}^2}{(L^{(\min)}_{\max})^2}\Big) 6\tau^2 \eta_3^2 \frac{1}{n}\sum_{i=1}^n(L^{(i)}_{\max})^2\kappa_i,
	\end{align*}
	where $L^{(i)}_{\max}$ is defined in \eqref{Li_max-defn}, and $\overline{L}_{\max},L^{(\min)}_{\max}$ are defined in Lemma~\ref{thm2:lemma1}.
\end{corollary*}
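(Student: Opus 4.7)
The plan is to decompose the quantity $\bg^{t}-\nabla_{\bw^{t}_{i}} F_i(\bx^{t+1}_{i},\bc^{t+1}_{i},\bw^{t}_{i})$ by inserting two intermediate terms evaluated at the synchronized global model $\bw^{t}$. Specifically, I would write
\begin{align*}
&\bg^{t}-\nabla_{\bw^{t}_{i}} F_i(\bx^{t+1}_{i},\bc^{t+1}_{i},\bw^{t}_{i}) \\
&\ =\ \underbrace{\tfrac{1}{n}\sum_{k=1}^n\!\left(\nabla_{\bw^{t}_{k}} F_k(\bx^{t+1}_{k},\bc^{t+1}_{k},\bw^{t}_{k})-\nabla_{\bw^{t}} F_k(\bx^{t+1}_{k},\bc^{t+1}_{k},\bw^{t})\right)}_{\text{mean client drift}} \\
&\ \ \ +\ \underbrace{\tfrac{1}{n}\sum_{k=1}^n\nabla_{\bw^{t}} F_k(\bx^{t+1}_{k},\bc^{t+1}_{k},\bw^{t})-\nabla_{\bw^{t}} F_i(\bx^{t+1}_{i},\bc^{t+1}_{i},\bw^{t})}_{\text{bounded diversity (A.6)}}\\
&\ \ \ +\ \underbrace{\nabla_{\bw^{t}} F_i(\bx^{t+1}_{i},\bc^{t+1}_{i},\bw^{t})-\nabla_{\bw^{t}_{i}} F_i(\bx^{t+1}_{i},\bc^{t+1}_{i},\bw^{t}_{i})}_{\text{individual client drift}}
\end{align*}
and then apply the basic inequality $\|a+b+c\|^2\le 3(\|a\|^2+\|b\|^2+\|c\|^2)$.

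Next, I would bound each of the three terms. The middle term is precisely what A.6 controls, yielding $\kappa_i$. For the first and third terms, I would use the fact (established just before Section~\ref{thm2:subsection sufficient decrease}) that $F_j$ is $(\lambda_p L_w)$-smooth with respect to $\bw$, which converts them into $(\lambda_p L_w)^2\|\bw^{t}_{k}-\bw^{t}\|^2$ and $(\lambda_p L_w)^2\|\bw^{t}_{i}-\bw^{t}\|^2$ respectively. Multiplying through by $(L^{(i)}_{\max})^2$, averaging over clients $i\in[n]$, and averaging over $t\in\{0,\dots,T-1\}$, the middle term contributes exactly $3\,\overline{\kappa}$ to the bound.

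For the drift terms, the third contribution becomes $3(\lambda_p L_w)^2\cdot\tfrac{1}{T}\sum_t \tfrac{1}{n}\sum_i (L^{(i)}_{\max})^2\|\bw^{t}_{i}-\bw^{t}\|^2$, which is exactly $3(\lambda_p L_w)^2\cdot\tfrac{1}{T}\sum_t \gamma_t$ and is directly controlled by Lemma~\ref{thm2:lemma1} yielding $3(\lambda_p L_w)^2\cdot 6\tau^2\eta_3^2\,\overline{\kappa}$. The first term requires one extra step: after applying Jensen's inequality to the average $\tfrac{1}{n}\sum_k$ inside the squared norm, we end up with $3(\lambda_p L_w)^2\cdot \tfrac{1}{T}\sum_t \tfrac{1}{n}\sum_i(L^{(i)}_{\max})^2\cdot\tfrac{1}{n}\sum_k\|\bw^t_k-\bw^t\|^2$, in which the $i$-index of the Lipschitz weight is decoupled from the $k$-index of the drift. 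I would then use the inflation $\|\bw^t_k-\bw^t\|^2\le (L^{(k)}_{\max})^2\|\bw^t_k-\bw^t\|^2/(L^{(\min)}_{\max})^2$ and $\tfrac{1}{n}\sum_i(L^{(i)}_{\max})^2=\overline{L}_{\max}^2$ to obtain the extra multiplicative factor $\overline{L}_{\max}^2/(L^{(\min)}_{\max})^2$, which matches the structure of the first line of ($\star 2$) in the proof of Lemma~\ref{thm2:lemma1}. Adding this to the third-term contribution produces the combined prefactor $1+\overline{L}_{\max}^2/(L^{(\min)}_{\max})^2$ as stated.

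The main obstacle is the bookkeeping around the $(L^{(i)}_{\max})^2$ weights: keeping them properly paired with the correct index when a client sum $\tfrac{1}{n}\sum_k$ already sits inside the norm is what forces the appearance of the $\overline{L}_{\max}^2/(L^{(\min)}_{\max})^2$ factor, and is the sole reason the plain drift bound from Lemma~\ref{thm2:lemma1} cannot be reused as a black box without this rescaling step. Everything else is an assembly of assumptions A.6 and A.7 with the inequality $\|a+b+c\|^2\le 3(\|a\|^2+\|b\|^2+\|c\|^2)$ and a single appeal to Lemma~\ref{thm2:lemma1}.
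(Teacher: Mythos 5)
Your proposal is correct and follows essentially the same route as the paper: the paper's proof of this corollary simply cites the chain of inequalities $(\star 1)\le(\star 2)$ inside the proof of Lemma~\ref{thm2:lemma1}, and that chain is exactly your three-term decomposition with $\|a+b+c\|^2\le 3(\|a\|^2+\|b\|^2+\|c\|^2)$, Assumption A.6 for the diversity term, $(\lambda_p L_w)$-smoothness of $F_i$ in $\bw$ for the two drift terms, and the $\overline{L}_{\max}^2/(L^{(\min)}_{\max})^2$ rescaling to re-pair the decoupled client indices, followed by one application of Lemma~\ref{thm2:lemma1} to $\frac{1}{T}\sum_t\gamma_t$. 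No gaps.
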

\begin{proof}
	From \eqref{app:lm1star1} $\leq$ \eqref{app:lm1star2} in the proof of Lemma~\ref{thm2:lemma1}, we have:
	\begin{align*}
		&\sum_{t=t_c}^{t_c+\tau} \frac{1}{n} \sum_{i=1}^n (L^{(i)}_{\max})^2 \Big\|\bg^{t} - \nabla_{\bw^{t}_{i}} F_i(\bx^{t+1}_{i},\bc^{t+1}_{i},\bw^{t}_{i})\Big\|^2 \\
		&\hspace{3cm}\leq \sum_{t=t_c}^{t_c+\tau} 3\left((\lambda_pL_{w})^2\Big(1+\frac{\overline{L}_{\max}^2}{(L^{(\min)}_{\max})^2}\Big)\gamma_t +\frac{1}{n}\sum_{i=1}^n(L^{(i)}_{\max})^2\kappa_i\right)
	\end{align*}
	Summing over all $t_c$ and dividing by $T$:
	\begin{align*}
		&\frac{1}{T}\sum_{t=0}^{T-1} \frac{1}{n} \sum_{i=1}^n (L^{(i)}_{\max})^2 \Big\|\bg^{t} - \nabla_{\bw^{t}_{i}} F_i(\bx^{t+1}_{i},\bc^{t+1}_{i},\bw^{t}_{i})\Big\|^2 \\
		&\hspace{2cm}\leq 3(\lambda_pL_{w})^2\Big(1+\frac{\overline{L}_{\max}^2}{(L^{(\min)}_{\max})^2}\Big) \frac{1}{T} \sum_{t=0}^{T-1} \gamma_t + 3\frac{1}{n}\sum_{i=1}^n(L^{(i)}_{\max})^2\kappa_i\\
		& \hspace{2cm} \stackrel{\text{(a)}}{\leq} 3(\lambda_pL_{w})^2\Big(1+\frac{\overline{L}_{\max}^2}{(L^{(\min)}_{\max})^2}\Big) \times 6\tau^2 \eta_3^2 \frac{1}{n}\sum_{i=1}^n(L^{(i)}_{\max})^2\kappa_i + 3 \frac{1}{n}\sum_{i=1}^n(L^{(i)}_{\max})^2\kappa_i,
	\end{align*}
	where (a) is from Lemma~\ref{thm2:lemma1}.
\end{proof}
	%%------------------------------------------------------------------

%	\section{Problem Setup and Convergence Result of QuPeL} \label{appendix:QuPeL}
%	\input{appendix_qupel.tex}
	
	%%------------------------------------------------------------------
	%%% Appendix starts
	\section{Additional Details for Experiments} \label{appendix:experiments}
	In this section, we first discuss the implementation details for the \texttt{prox} steps for Algorithm \ref{algo:centralized} and Algorithm \ref{algo:personalized} in Section \ref{sec:experiments:proximal updates}. Section \ref{subsec:expts_implementation} discusses implementation details for the algorithms along with hyperparameters which was omitted in Section \ref{sec:experiments} of the main paper due to space constraints.

\subsection{Proximal Updates} \label{sec:experiments:proximal updates}
For the implementation of Algorithm \ref{algo:centralized},\ref{algo:personalized}, we consider $\ell_1$-loss for the distance function $R(\bx,\bc)$. In other words, $R(\bx,\bc)=\min\{ \frac{1}{2} \|\bz-\bx\|_1:z_i \in \{c_1,\cdots,c_m\}, \forall i \}$. For simplicity, we define $\calC = \{\bz:z_i \in \{c_1,\cdots,c_m\}, \forall i\}$.
For the first type of update (update of $\bx$) we have:
\begin{align}
	\text{prox}_{\eta_1 \lambda R(\cdot,\bc)}(\by) &= {\argmin_{\bx \in \mathbb{R}^{d}} }\left\{\frac{1}{2 \eta_1}\left\|\bx-\by\right\|_{2}^{2}+\lambda R(\bx,\bc) \right\}\nonumber \\
	&= {\argmin_{\bx \in \mathbb{R}^{d}} }\left\{\frac{1}{2 \eta_1}\left\|\bx-\by\right\|_{2}^{2}+\frac{\lambda}{2} \min_{\bz \in \calC} \|\bz-\bx\|_1 \right\}\nonumber \\
	&= {\argmin_{\bx \in \mathbb{R}^{d}} } \min_{\bz \in \calC} \left\{\frac{1}{2 \eta_1}\left\|\bx-\by\right\|_{2}^{2}+\frac{\lambda}{2} \|\bz-\bx\|_1 \right\}\nonumber \\
\end{align}
This corresponds to solving:
\begin{align}
	\min_{\bz \in \calC} {\min_{\bx \in \mathbb{R}^{d}} }  \left\{\frac{1}{\eta_1}\left\|\bx-\by\right\|_{2}^{2}+\lambda \|\bz-\bx\|_1 \right\}\nonumber
\end{align}
Since both $\ell_1$ and squared $\ell_2$ norms are decomposable; if we fix $\bz$, for the inner problem we have the following solution to soft thresholding:
\begin{align}
	x^\star(\bz)_i=
	\begin{cases}
		y_i - \frac{\lambda\eta_1}{2}, \quad \text{if } y_i - \frac{\lambda\eta_1}{2} > z_i \\
		y_i + \frac{\lambda\eta_1}{2}, \quad \text{if } y_i + \frac{\lambda\eta_1}{2} < z_i \\
		z_i, \quad \text{otherwise}
	\end{cases}
\end{align}

As a result we have:

\begin{align}
	\min_{\bz \in \calC}  \left\{\frac{1}{\eta_1}\left\|x^\star(\bz)-\by\right\|_{2}^{2}+\lambda \|\bz-x^\star(\bz)\|_1 \right\}\nonumber
\end{align}
This problem is separable, in other words we have:
\begin{align*}
	\bz^\star_i = \argmin_{z_i \in \{c_1, \cdots, c_m\}} \left\{ \frac{1}{\eta_1}(x^\star(\bz)_i-y_i)^2+\lambda |z_i-x^\star(\bz)_i | \right\} \ \forall i
\end{align*}
Substituting $x^\star(\bz)_i$ and solving for $z_i$ gives us:
\begin{align*}
	\bz^\star_i = \argmin_{z_i \in \{c_1, \cdots, c_m\}} \left\{ |z_i - y_i | \right\} \ \forall i
\end{align*}
Or equivalently we have,
\begin{align}
	\bz^\star = \argmin_{\bz \in \calC} \|\bz-\by\|_1 = Q_\bc(\by)
\end{align}
As a result, $\text{prox}_{\eta_1 \lambda R(\cdot,\bc)}(\cdot)$ becomes the soft thresholding operator:
\begin{align}
	\text{prox}_{\eta_1 \lambda R(\cdot,\bc)}(\by)_i = 
	\begin{cases}
		y_i-\frac{\lambda\eta_1}{2}, \quad \text{if } y_i \geq Q_\bc(\by)_i+\frac{\lambda\eta_1}{2} \\
		y_i+\frac{\lambda\eta_1}{2}, \quad \text{if } y_i \leq Q_\bc(\by)_i-\frac{\lambda\eta_1}{2} \\
		Q_\bc(\by)_i, \quad \text{otherwise} 
	\end{cases}
\end{align}
And for the second type of update we have $\text{prox}_{\eta_2 \lambda R(\bx,\cdot)}(\cdot)$ becomes:
\begin{align}
	\text{prox}_{\eta_2 \lambda R(\bx,\cdot)}(\boldsymbol{\mu}
	) &= {\argmin_{\bc \in \mathbb{R}^{m}} }\left\{\frac{1}{2 \eta_2}\left\|\bc-\boldsymbol{\mu}
	\right\|_{2}^{2}+\lambda R(\bx,\bc) \right\}\nonumber \\
	&= {\argmin_{\bc \in \mathbb{R}^{m}} }\left\{\frac{1}{2 \eta_2}\left\|\bc-\boldsymbol{\mu}
	\right\|_{2}^{2}+\frac{\lambda}{2} \min_{\bz \in \cal C} \|\bz-\bx\|_1 \right\}\nonumber \\
	&= {\argmin_{\bc \in \mathbb{R}^{m}} } \left\{\frac{1}{2 \eta_2}\left\|\bc-\boldsymbol{\mu}
	\right\|_{2}^{2}+\frac{\lambda}{2} \|Q_\bc(\bx)-\bx\|_1 \right\} \label{app:prox2}
\end{align}
Then,
\begin{align*}
	\text{prox}_{\eta_2 \lambda R(\bx,\cdot)}(\boldsymbol{\mu}
	)_j &= {\argmin_{\bc_j \in \mathbb{R}^{m}} } \left\{\frac{1}{2 \eta_2}(c_j-\mu_j)^{2}+\frac{\lambda}{2} \sum_{i=1}^d |Q_\bc(\bx)_i-x_i| \right\} \\
	&= {\argmin_{\bc_j \in \mathbb{R}^{m}} } \left\{\frac{1}{2 \eta_2}(c_j-\mu_j)^{2}+\frac{\lambda}{2} \sum_{i=1}^d \mathbbm{1}(Q_{\bc}(\bx)_i=c_{j}) |c_j-x_i| \right\}
\end{align*}
We remark that the second term of the optimization problem is hard to solve; in particular we need to know the assignments of $x_i$ to $c_j$. In the algorithm, at each time point $t$, we are given the previous epoch's assignments. We can utilize that and approximate the optimization problem by assuming $\bc^{t+1}$ will be in a neighborhood of $\bc^t$. We can take the gradient of $R(\bx^{t+1},\bc)$ at $\bc=\bc^t$ while finding the optimal point. This is also equivalent to optimizing the first order Taylor approximation around $\bc=\bc^t$.
As a result we have the following optimization problem:
\begin{align*}
	\text{prox}_{\eta_2 \lambda R(\bx^{t+1},\cdot)}(\boldsymbol{\mu}
	)_j \approx {\argmin_{\bc_j \in \mathbb{R}^{m}} } \left\{\frac{1}{2 \eta_2}(c_j-\mu_j)^{2}+\frac{\lambda}{2} \sum_{i=1}^d \mathbbm{1}(Q_{\bc^t}(\bx^{t+1})_i=c^t_{j}) |c^t_j-x^{t+1}_i| \right. \\
	\left. +(c_j-c^t_j) \frac{\lambda}{2} \sum_{i=1}^d \mathbbm{1}(Q_{\bc^t}(\bx^{t+1})_i=c^t_{j})\frac{\partial |c^t_j-x^{t+1}_i|}{ \partial c^t_j}  \right\}
\end{align*}
In our implementation, we take $\frac{\partial |c^t_j-x^{t+1}_i|}{ \partial c^t_j}$ as $1$ if $c^t_j > x^{t+1}_i$,  $-1$ if $c^t_j < x^{t+1}_i$ and $0$ otherwise. Now taking the derivative with respect to $c_j$ and setting it to 0 gives us:
\begin{align*}
	\text{prox}_{\eta_2 \lambda R(\bx^{t+1},\cdot)}(\boldsymbol{\mu})_j&\approx \mu_j -\frac{\lambda\eta_2}{2}(\sum_{i=1}^d \mathbbm{1}(Q_{\bc^t}(\bx^{t+1})_i=c^t_{j}) \mathbbm{1}(x^{t+1}_i>c^{t}_{j}) \\
	& \hspace{2cm}-\sum_{i=1}^d \mathbbm{1}(Q_{\bc^t}(\bx^{t+1})_i=c^t_{j}) \mathbbm{1}(x^{t+1}_i<c^{t}_{j}))
\end{align*}

Proximal map pulls the updated centers toward the median of the weights that are assigned to them.  

\textbf{Using $P \rightarrow \infty$.} In the experiments we observed that using $P \rightarrow \infty$, i.e. using hard quantization function produces good results and also simplifies the implementation. The implications of $P \rightarrow \infty$ are as follows:
\begin{itemize}[leftmargin=*]
	\item We take $\nabla_\bx f(\widetilde{Q}_\bc(\bx))=0$ and $\nabla_\bx f^{KD}(\widetilde{Q}_\bc(\bx),\bw)=0$.
	\item We take $\nabla_\bc f(\widetilde{Q}_\bc(\bx)) = \nabla_\bc f(Q_\bc(\bx)) = \begin{bmatrix} \sum_{i=1}^d \frac{\partial f(Q_\bc(\bx))}{\partial Q_\bc(\bx)_i} \mathbbm{1}(Q_\bc(\bx)_i = c_1) \\ \vdots \\ \sum_{i=1}^d \frac{\partial f(Q_\bc(\bx))}{\partial Q_\bc(\bx)_i} \mathbbm{1}(Q_\bc(\bx)_i = c_m) \end{bmatrix}$ 
	and $\nabla_\bc f^{KD}(\widetilde{Q}_\bc(\bx),\bw) = \nabla_\bc f^{KD}(Q_\bc(\bx),\bw) = \begin{bmatrix} \sum_{i=1}^d \frac{\partial f^{KD}(Q_\bc(\bx),\bw)}{\partial Q_\bc(\bx)_i} \mathbbm{1}(Q_\bc(\bx)_i = c_1) \\ \vdots \\ \sum_{i=1}^d \frac{\partial f^{KD}(Q_\bc(\bx),\bw)}{\partial Q_\bc(\bx)_i} \mathbbm{1}(Q_\bc(\bx)_i = c_m) \end{bmatrix}$.
\end{itemize}

\subsection{Implementation Details and Hyperparameters} \label{subsec:expts_implementation}
In this section we discuss the implementation details and hyperparameters used for the algorithms considered in Section \ref{sec:experiments} of our main paper.

\textbf{Fine tuning.} In both centralized and federated settings we employ a fine tuning procedure similar to \cite{bai2018proxquant}. At the end of the regular training procedure, model weights are hard-quantized. After the hard-quantization, during the fine tuning epochs we let the unquantized parts of the network to continue training (e.g. batch normalization layers) and different from \cite{bai2018proxquant} we also continue to train quantization levels. 

\subsubsection{Centralized Setting}
For centralized training, we use CIFAR-10 dataset and train a ResNet \cite{he2015deep} model following \cite{bai2018proxquant} and \cite{BinaryRelax}. We employ ADAM with learning rate $0.01$ and no weight decay. We choose $\lambda(t) = 10^{-4}t$. For the implementation of ResNet models we used a toolbox\footnote{{\small \url{https://github.com/akamaster/pytorch_resnet_cifar10}}}. In Table~\ref{tab:Table1} we reported the results from \cite{BinaryRelax} directly and implemented ProxQuant using their published code\footnote{{\small \url{https://github.com/allenbai01/ProxQuant}}}. We use a learning schedule for $\eta_2$, particularly, we start with $\eta_2=10^{-4}$ and multiply it with 0.1 at epochs 80 and 140.

\subsubsection{Federated Setting} 
For each of the methods we tuned the local step learning rate separately on the set $\{0.2, 0.15, 0.125, 0.1, 0.075, 0.05\}$. We observed that except for the two cases, for all other cases, $0.1$ was the best choice for the learning rate in terms of accuracy: The two exceptions are the local training methods on FEMNIST and Per-FedAvg on CIFAR-10, for which, respectively, 0.075 and 0.125 were the best choices for the learning rate.

\begin{itemize}[leftmargin=*]
	\item QuPeD\footnote{For federated experiments we have used Pytorch's Distributed package.}: For CNN1 we choose $\lambda_p = 0.25$, $\lambda(t) = 10^{-6}t$ for 2Bits and $\lambda = 5 \times 10^{-7}t\frac{1}{0.99^t}$ for 1Bit training on CIFAR-10. On FEMNIST \footnote{We use {\small \url{https://github.com/tao-shen/FEMNIST_pytorch}} to import FEMNIST dataset.} and MNIST we choose $\lambda(t) = 5\times 10^{-6}t$ for 2Bits and $\lambda = 10^{-6}t\frac{1}{0.99^t}$ for 1Bit training. For CNN2 we use $\lambda_p = 0.15$. Global model has the same learning schedule as the personalized models. Furthermore, we use $\eta_2=10^{-4}$.
	
	QuPeL: We used $\lambda_p=0.2$, $\eta_3=0.5$ (same as pFedMe \cite{dinh2020personalized}) and took $\lambda$ values from QuPeD.
	\item Per-FedAvg \cite{fallah2020personalized} and pFedMe \cite{dinh2020personalized}:To implement Per-FedAvg, we used the same learning rate as mentioned in Section~\ref{sec:experiments}, schedule for main learning rate and $\alpha = 0.001$ for CNN1 and $\alpha = 2.5 \times 10^{-3}$ for CNN2 (we tuned in the interval $[8\times 10^{-4},5 \times 10^{-3}]$), for the auxiliary learning rate. For pFedMe we used the same learning rate schedule for main learning rate, $K=5$ for the number of local iterations; and we used $\lambda=0.5$, $\eta=0.2$ for CNN1 and $\lambda=0.2$, $\eta=0.15$ for CNN2 (we tuned in the interval $[0.1,1]$ for both parameters).
	
	%For $\alpha$ in Per-FedAvg we did a grid search in the interval $[8e-4,5e-3]$ with $1e-3$ increments. And for pFedMe we did a grid search in the interval $[1e-1,1]$ 
	\item Federated Mutual Learning \cite{shen2020federated}: Since authors do not discuss the hyperparameters in the paper, we used $\alpha=\beta=0.25$ for CNN1 and $\alpha=\beta=0.15$ for CNN2, similar to our use of $\lambda_p$ in QuPeD. Global model has the same learning schedule as the personalized models. 
\end{itemize}
For QuPeD and Federated ML we used CNN1 as the global model in all settings. For the other methods where global and personalized models cannot be different we used the same structure as personalized models.

\subsection{Additional Results for Federated Setting} \label{subsec:expts_additional_fed}

In this section we provide additional experimental results for comparison of QuPeD with other pearsonalized learning schems from literature.

\textbf{Comparison on another CNN architecture (CNN2).}  We first report experimental results on CIFAR-10 for CNN2 in Table~\ref{app:tab:Table CNN2} (with the same setting we have for Table~\ref{tab:Table CNN1}). This is a deeper architecture than CNN1, as described in Section \ref{sec:experiments} in the main paper.
\\

\begin{figure}[h]
	\centering
	\captionof{table}{Test accuracy (in \%) for CNN2 model at all clients, CIFAR-10.}
	\begin{tabular}{lccl} \toprule 
		Method & Test Accuracy in \%  \\ \midrule
		FedAvg (FP) & $62.49 \pm 0.42 $ \\ 
		Local Training (FP) & $73.86 \pm 0.22 $ \\ 
		Local Training (2 Bits)  & $73.24 \pm 0.14 $ \\
		Local Training (1 Bit)  & $70.23 \pm 0.10 $ \\
		\textbf{QuPeD} (FP) & $\mathbf{76.39} \pm 0.36 $\\
		\textbf{QuPeD} (2 Bits) &  $75.32 \pm 0.18$\\ 
		\textbf{QuPeD} (1 Bit) & $72.01 \pm 0.31$\\ 
		PFedMe (FP) \cite{dinh2020personalized}& $74.70 \pm 0.10$ \\
		Per-FedAvg(FP) \cite{fallah2020personalized} & $74.60 \pm 0.48$ \\
		Federated Mutual Learning(FP) \cite{shen2020federated} & $75.74 \pm 0.56 $
	\end{tabular} 
	\label{app:tab:Table CNN2}
\end{figure}

For the results in Table~\ref{app:tab:Table CNN2}, it can be seen that the comments made for Table~\ref{tab:Table CNN1} in the main paper directly hold as QuPeD is able to outperform other schemes by a significant margin. This demonstrates that QuPeD also works for a deeper neural network (than CNN1 considered in the main paper).\\

\begin{figure}[h]
	\centering
	\captionof{table}{Test accuracy (in \%) for CNN1 model at all clients, with 3 classes accessed per client on CIFAR-10.}
	\begin{tabular}{lcl} \toprule 
		Method   & Test Accuracy (in \%) \\ \midrule
		FedAvg (FP)  & $59.23 \pm 0.25 $\\ 
		Local Training (FP)  & $78.03 \pm 0.59 $\\ 
		Local Training (2 Bits)   & $77.47 \pm 0.64 $\\
		Local Training (1 Bit)   &$75.89 \pm 0.66 $ \\
		\textbf{QuPeD} (FP)  &$\mathbf{80.30} \pm 0.60 $\\
		\textbf{QuPeD} (2 Bits)  &$79.31 \pm 0.74 $\\ 
		\textbf{QuPeD} (1 Bit) &$77.23 \pm 0.58 $  \\
		QuPeL (2 Bits)  & $77.87 \pm 0.53 $\\ 
		QuPeL (1 Bits)  & $74.46 \pm 0.73 $\\ 
		pFedMe (FP) \cite{dinh2020personalized} & $78.22 \pm 0.91 $\\
		Per-FedAvg (FP) \cite{fallah2020personalized}  &$75.08 \pm 0.39 $\\
		Federated ML (FP) \cite{shen2020federated} & $79.44 \pm 0.82 $
	\end{tabular}
	\label{tab:appendix:Table 3 clients}
\end{figure}

\textbf{Another Type of Data Heterogeneity.} We report results for another data heterogeneity setting where each client has access to data samples from random 3 classes on CIFAR-10. Sampling data from 3 random classes per client is a more challenging setting compared 4 classes per client considered in Section~\ref{sec:experiments}. In Table~\ref{tab:appendix:Table 3 clients} we see that FedAvg's performance further decreased due to increased heterogeneity. Moreover, most of the other personalized FL methods are outperformed by local training whereas QuPeD still performs better than local training, and other personalized FL methods. We observe that QuPeD with 2 Bits aggressive quantization outperforms all the other competing methods except Federated ML \cite{shen2020federated} (for which it shows a similar accuracy). Moreover, QuPeD (1Bit) is able to outperform Per-FedAvg.\\

\begin{figure}[h]
	\centering
	\captionof{table}{Test accuracy (in \%) comparison between the cases with and without center updates for CNN1 model at all clients, 4 classes accessed per client on FEMNIST.}
	\begin{tabular}{lcl} \toprule 
		Method   & Test Accuracy (in \%) \\ \midrule
		\textbf{QuPeD} (FP)  &$\mathbf{97.31} \pm 0.12 $\\
		\textbf{QuPeD} (2 Bits)  &$96.73 \pm 0.27 $\\ 
		\textbf{QuPeD} (1 Bit) &$95.15 \pm 0.21 $  \\
		\textbf{QuPeD} (2 Bits) no center updates  &$96.48 \pm 0.10 $\\ 
		\textbf{QuPeD} (1 Bit) no center updates &$91.17 \pm 0.58 $  \\
	\end{tabular}
	\label{tab:appendix:Table no cent upd}
\end{figure}

\textbf{Importance of updating the centers.}  
In our proposed schemes: Algorithm \ref{algo:personalized}, we optimize over both the quantization levels and the model parameters. We compare performance of our proposed scheme with the case when we only optimize over model parameters and not quantization levels in Table~\ref{tab:appendix:Table no cent upd}. 
As seen from the results in the table, having the center updates in the optimization problem is critical, particularly, for the 1Bit quantization case for which we observe an increase in the performance by 4\%.

\textbf{Results on MNIST.} We now provide additional results on MNIST dataset to compared QuPeD with other competing schemes. We consider 50 clients in total, where each client samples data from  3 or 4 random classes and uses CNN1. We train for a total of 50 epochs, for quantized training we allocate the last 7 epochs for finetuning.\\

\begin{figure}[h]
	\centering
	\captionof{table}{Test accuracy (in \%) for CNN1 model at all clients, on MNIST.}
	\begin{tabular}{lccl} \toprule 
		Method   & 3 classes per client & 4 classes per client \\ \midrule
		FedAvg (FP) & $98.64 \pm 0.10 $ & $98.65 \pm 0.09 $\\ 
		Local Training (FP)  & $98.79 \pm 0.03 $  & $98.66 \pm 0.15 $\\ 
		Local Training (2 Bits)  & $98.53 \pm 0.07 $  & $98.37 \pm 0.11 $\\
		Local Training (1 Bit)   & $98.41 \pm 0.02 $  &$97.95 \pm 0.20 $ \\
		\textbf{QuPeD} (FP)  &$\mathbf{99.05} \pm 0.10 $  &$98.89 \pm 0.11 $\\
		\textbf{QuPeD} (2 Bits) &$98.96 \pm 0.13 $ &$98.67 \pm 0.18 $\\ 
		\textbf{QuPeD} (1 Bit)  &$98.57 \pm 0.08 $  &$98.25 \pm 0.16 $  \\
		QuPeL (2 Bits)  & $98.95 \pm 0.12 $  & $98.61 \pm 0.19 $\\ 
		QuPeL (1 Bits)   & $98.33 \pm 0.14 $ & $98.11 \pm 0.26 $\\ 
		pFedMe (FP) \cite{dinh2020personalized} & $98.98 \pm 0.05 $ & $98.82 \pm 0.15 $\\
		Per-FedAvg (FP) \cite{fallah2020personalized} &$98.82 \pm 0.05 $  &$\mathbf{98.93} \pm 0.09 $\\
		Federated ML (FP) \cite{shen2020federated} & $99.00 \pm 0.06 $ & $98.84 \pm 0.13 $
	\end{tabular}
	\label{tab:Table 34classes MNIST}
\end{figure}
QuPeD (FP) outperforms all methods except Per-FedAvg on MNIST when clients sample data from 4 random classes. The difference is almost negligible (0.04\%). As we can observe in Table~\ref{tab:Table 34classes MNIST} with the increased heterogeneity QuPeD starts to outperform Per-FedAvg by a 0.20\% margin. Moreover, we observe QuPeD with 2Bit quantization also outperforms Per-FedAvg. 

\textbf{Text classification task on AG News Dataset.} To show that our method can also be applied for tasks different than vision tasks. text classification problem using the AG News dataset (available at {\small\url{https://pytorch.org/text/stable/datasets.html}}). We used half of the dataset to make the training procedure more challenging. We used EmbeddingBag structure available at {\small\url{https://pytorch.org/tutorials/beginner/text_sentiment_ngrams_tutorial.html}} and distributed the data such that each of the 42 clients has access to samples from 3 out of 4 classes. The results we obtained are provided in Table~\ref{tab:Table AG News}\\

\begin{figure}[t]
	\centering
	\captionof{table}{Test accuracy (in \%) for Embedding Bag model at all clients, on AG News.}
	\begin{tabular}{lccl} \toprule
		Method &  \\ \midrule
		FedAvg (FP) & $83.04 \pm 0.60$ \\	
		Local training (FP)& $84.20 \pm 1.53$\\	
		Local training (2 Bits)& $82.68 \pm 0.34$	\\
		Local training (1 Bit)&  $82.12\pm 2.17$	\\
		QuPeD (FP)&  $\mathbf{85.06} \pm 1.07$	\\
		QuPeD (2 Bits)&  $83.62 \pm 0.50$\\	
		QuPeD (1 Bit)&  $82.72\pm 1.20$ 
	\end{tabular}
	\label{tab:Table AG News}
\end{figure}
These results demonstrate the effectiveness of QuPeD on text data in comparison with local training. 

%\textbf{Results with client sampling.} For the results with client sampling we consider two cases: FEMNIST with $\frac{1}{3}$ sampling ratio and MNIST with $0.1$ sampling ratio where clients have access to samples from 3 classes.
%
%\begin{figure}[h]
%	\centering
%	\captionof{table}{Test accuracy (in \%) for CNN1 model at all clients.}
%	\begin{tabular}{lccl} \toprule 
%		Method   & MNIST & FEMNIST \\ \midrule
%		FedAvg (FP) &  $92.87 \pm 0.05 $&  $91.30 \pm 0.43$\\ 
%		\textbf{QuPeD} (FP)  &$\mathbf{98.17} \pm 0.32$  &$\mathbf{94.93} \pm 0.25$\\
%		\textbf{QuPeD} (2 Bits) &$98.01 \pm 0.15$ &$ 94.56  \pm 0.18$\\ 
%		\textbf{QuPeD} (1 Bit)  &$97.58 \pm 0.23$  &$ 92.52  \pm 0.64$ \\
%		pFedMe (FP) \cite{dinh2020personalized} & $97.79 \pm 0.03$ & $ 93.70  \pm 0.39$\\
%		Per-FedAvg (FP) \cite{fallah2020personalized} &$95.80 \pm 0.29$  & $92.10  \pm 0.22$\\
%		Federated ML (FP) \cite{shen2020federated} & $98.03 \pm 0.31 $ &$92.73 \pm 0.36 $ 
%	\end{tabular}
%	\label{tab:Table client sampling}
%\end{figure}
%
%We observe our method outperform other methods under client sampling. 

	%%------------------------------------------------------------------
	\label{submission}

\end{document}